\newcommand{\totalVariation}{\mathrm{TV}}
\newcommand{\kullbackLeiblerDivergence}{\mathrm{KL}}
\newcommand{\N}{\mathbb{N}}
\newcommand{\R}{\mathbb{R}}
\newcommand{\E}{\mathbb{E}}
\DeclareMathOperator\tr{tr}
\newcommand{\ambientDimension}{d}
\newcommand{\marginExponent}{\alpha}
\newcommand{\marginConstant}{C_{\mathrm{M}}}
\newcommand{\holderExponent}{\beta}
\newcommand{\holderConstant}{C_{\mathrm{S}}}
\newcommand{\Lebesgue}{\mathcal{L}_d}
\newcommand{\sourceDimension}{d_{\sourceDistribution}}
\newcommand{\targetDimension}{d_{\targetDistribution}}
\newcommand{\sourceTailExponent}{\gamma_{\sourceDistribution}}
\newcommand{\targetTailExponent}{\gamma_{\targetDistribution}}
\newcommand{\TailConstant}{C_{\sourceDistribution,\targetDistribution}}
\newcommand{\sourceDensityFunction}{\omega_{\sourceMarginalDistribution,\sourceDimension}}
\newcommand{\thresholdPartitions}{\mathbb{T}}
\newcommand{\expansivityConstant}{\phi}
\newcommand{\J}{\mathcal{J}}
\newcommand{\diam}{\mathrm{diam}}
\newcommand{\distX}{\mathrm{dist}_{\X}}
\newcommand{\firstTerm}{A}
\newcommand{\secondTerm}{B}
\newcommand{\one}{\mathbbm{1}}
\theoremstyle{plain} 
\newtheorem{theorem}{Theorem} 
\newtheorem{proposition}[theorem]{Proposition}
\newtheorem{assumption}{Assumption}
\newtheorem*{proposition*}{Proposition}
\newtheorem{lemma}[theorem]{Lemma}
\newtheorem*{lemma*}{Lemma}
\newtheorem{corollary}[theorem]{Corollary}
\newtheorem{definition}{Definition}
\newtheorem*{definition*}{Definition}
\theoremstyle{definition} 
\newtheorem{remark}{Remark}
\newtheorem*{remark*}{Remark}
\newtheorem*{aside*}{Aside}
\newtheorem{example}{Example}
\newtheorem*{example*}{Example}
\newcommand{\vertiii}[1]{{\left\vert\kern-0.25ex\left\vert\kern-0.25ex\left\vert #1 
    \right\vert\kern-0.25ex\right\vert\kern-0.25ex\right\vert}}
\newcommand{\risk}{\mathcal{R}}
\newcommand{\excessRisk}{\mathcal{E}}
\newcommand{\classifier}{f}
\newcommand{\X}{\mathcal{X}}
\newcommand{\sample}{\mathcal{D}}
\newcommand{\sourceSample}{\sample_{\sourceDistribution}}
\newcommand{\targetSample}{\sample_{\targetDistribution}}
\newcommand{\Prob}{\mathbb{P}}
\newcommand{\sourceDistribution}{P}
\newcommand{\targetDistribution}{Q}
\newcommand{\regressionFunction}{\eta} 
\newcommand{\sourceRegressionFunction}{\regressionFunction_{\sourceDistribution}}
\newcommand{\targetRegressionFunction}{\regressionFunction_{\targetDistribution}}
\newcommand{\marginalDistribution}{\mu}
\newcommand{\sourceMarginalDistribution}{\marginalDistribution_{\sourceDistribution}}
\newcommand{\targetMarginalDistribution}{\marginalDistribution_{\targetDistribution}}
\newcommand{\numSource}{n_{\sourceDistribution}}
\newcommand{\numTarget}{n_{\targetDistribution}}
\newcommand{\numTargetHalf}{\lfloor \numTarget/2 \rfloor}
\newcommand{\setOfDecisionTreeFunctionsL}{\mathcal{H}_L}
\newcommand{\setOfDecisionTreeFunctionsLStar}{\mathcal{H}_{L^*}}
\newcommand{\decisionTreeFunction}{h}
\newcommand{\empiricalMargin}{\hat{m}}
\newcommand{\robustness}{\sigma}
\newcommand{\lepskiChoiceOfKSource}{\hat{k}_{\robustness,\decisionTreeFunction}^{\sourceDistribution}}
\newcommand{\lepskiChoiceOfKTarget}{\tilde{k}_{\robustness}^{\targetDistribution}}
\newcommand{\empiricalMarginSourceDecisionTreeK}{\empiricalMargin_{k,\decisionTreeFunction}^{\sourceDistribution}}
\newcommand{\empiricalMarginTargetK}{\empiricalMargin^{\targetDistribution}_{k}}
\newcommand{\potentialFunctionSet}{\hat{\mathcal{F}}}
\newcommand{\posteriorDriftFunction}{g}
\DeclareMathOperator*{\argmin}{argmin}
\newcommand{\dist}{\rho}
\newcommand{\classifierEst}{\hat{\classifier}}
\newcommand{\bayesClassifier}{\classifier_{\targetDistribution}^*}
\newcommand{\supp}{\mathrm{supp}}
\newcommand{\setOfClassifiers}{\mathcal{F}}
\newcommand{\setofDDClassifiers}{\hat{\setOfClassifiers}_{\numSource,\numTarget}}
\newcommand{\neighboursAreCloseEventNoDelta}{E_1}
\newcommand{\neighboursAreCloseEventDelta}{\neighboursAreCloseEventNoDelta^{\delta}(x)}
\newcommand{\hoeffdingTypeEventNoDelta}{E_2}
\newcommand{\hoeffdingTypeEventDelta}{\hoeffdingTypeEventNoDelta^{\delta}(x)}
\title{Adaptive transfer learning}
\author{Henry W. J. Reeve, Timothy I. Cannings and Richard J. Samworth\\
University of Bristol, University of Edinburgh \\ 
and University of Cambridge}
\date{}
\begin{document}

\maketitle

\begin{abstract}
In transfer learning, we wish to make inference about a target population when we have access to data both from the distribution itself, and from a different but related source distribution.  We introduce a flexible framework for transfer learning in the context of binary classification, allowing for covariate-dependent relationships between the source and target distributions that are not required to preserve the Bayes decision boundary.   Our main contributions are to derive the minimax optimal rates of convergence (up to poly-logarithmic factors) in this problem, and show that the optimal rate can be achieved by an algorithm that adapts to key aspects of the unknown transfer relationship, as well as the smoothness and tail parameters of our distributional classes.  This optimal rate turns out to have several regimes, depending on the interplay between the relative sample sizes and the strength of the transfer relationship, and our algorithm achieves optimality by careful, decision tree-based calibration of local nearest-neighbour procedures.
\end{abstract}

\section{Introduction}

Transfer learning refers to statistical problems in which we wish to make inference about a test data population, but where some (typically, the large majority) of our training data come from a related but distinct distribution.   Such problems arise in many natural, practical settings: for instance, we may wish to understand the effectiveness of a treatment on a particular subgroup of a population, but still wish to exploit information about its efficacy on the wider population under study.  In medical applications, we may be interested in making predictions in a given experimental setting, or using a particular piece of equipment, but also have data obtained under different scenarios or measured with different devices.  Closely related problems have recently been of interest to many communities, sometimes studied under the banner of label noise \citep{Frenay1,blanchard2016classification,cannings2020}, multi-task learning \citep{caruana1997multitask,maurer2016benefit} or distributional robustness \citep{certifying2018sinha,weichwald2020distributional,christiansen2020difficult}.  For recent survey papers on transfer learning, see \citet{pan2009survey}, \citet{storkey2009training} and \citet{weiss2016survey}.

We focus here on transfer learning in the context of binary classification, both due to the latter's fundamental importance as a canonical problem in modern statistics and machine learning, and because, as we shall see, its structure is particularly amenable to algorithms that seek to exploit relationships between the training and test distributions.  To set the scene for our contributions, let $\sourceDistribution$ and $\targetDistribution$ denote two distributions on $\mathbb{R}^d \times \{0,1\}$, with corresponding generic random pairs $(X^{\sourceDistribution},  Y^{\sourceDistribution})$ and $(X^{\targetDistribution},  Y^{\targetDistribution})$ respectively.  We think of $\sourceDistribution$ as a source distribution, from which most of our training data are generated, and $\targetDistribution$ as a target distribution, from which we may have some training data, and about which we wish to make inference.  Let $\sourceRegressionFunction, \targetRegressionFunction:\mathbb{R}^d\rightarrow [0,1]$ denote the source and target regression functions respectively, defined by 
\begin{equation}
\label{Eq:RegressionFunctions}
\sourceRegressionFunction(x) :=\Prob(Y^{\sourceDistribution}=1|X^{\sourceDistribution}=x) \quad \text{and} \quad\targetRegressionFunction(x) :=\Prob(Y^{\targetDistribution}=1|X^{\targetDistribution}=x).
\end{equation}
Our main working assumption on the relationship between $\sourceDistribution$ and $\targetDistribution$ will be that our feature space $\R^d$ can be partitioned into finitely many cells $\X_\ell^*$, and for each cell there exists a \emph{transfer function} $g_\ell:[0,1] \rightarrow [0,1]$ such that $\sourceRegressionFunction$ can be approximated by $\posteriorDriftFunction_\ell \circ \targetRegressionFunction$ on $\X_\ell^*$.  We will further assume that the cells arise from a decision tree partition \citep{breiman1984classification}, and that each transfer function satisfies
\begin{equation}
\label{Eq:gCond}
\frac{\posteriorDriftFunction_\ell(z) - \posteriorDriftFunction_\ell(1/2)}{z - 1/2} \geq \expansivityConstant
\end{equation}
for some $\expansivityConstant > 0$, and all $z \in [0,1/2) \cup (1/2,1]$.  

Thus, in the simplest case where we have just a single cell, $\sourceDistribution$ and $\targetDistribution$ are connected via the fact that the propensity under the source distribution to have a Class 1 label at $x \in \mathbb{R}^d$ only depends on $x$ through $\targetRegressionFunction(x)$, which reflects the propensity under the target distribution to have a Class~1 label at $x$.  Our condition~\eqref{Eq:gCond} is of course satisfied if each $\posteriorDriftFunction_\ell$ is differentiable with $\posteriorDriftFunction_\ell'(z) \geq \expansivityConstant$ for $z \in [0,1]$, and ensures in particular that when $\sourceRegressionFunction = \posteriorDriftFunction_\ell \circ \targetRegressionFunction$ holds exactly on $\X_\ell^*$, we have $\mathrm{sgn}\bigl\{\sourceRegressionFunction(x) - \posteriorDriftFunction_\ell(1/2)\bigr\} = \mathrm{sgn}\bigl\{\targetRegressionFunction(x) - 1/2\bigr\}$ on that cell.  Importantly, though, condition~\eqref{Eq:gCond} does not require that $\posteriorDriftFunction_\ell(1/2) = 1/2$.  

To give an example where such a relationship between $\sourceDistribution$ and $\targetDistribution$ might be expected, suppose that we wish to predict At Risk individuals for a disease (e.g.~breast cancer), on the basis of a set of covariates $x$.  Due to the difficulties and expense of large-scale testing, only a small number of individuals in the general population are assessed (e.g.~via a mammogram), but those displaying symptoms have a much greater propensity to be tested.  In this example, we think of our (large) data set from $\sourceDistribution$ as being a set of individuals for whom we have recorded relevant covariates, and for whom we record a label $Y^\sourceDistribution = 1$ if and only if the individual has both been tested, and has been assessed to be At Risk as a result.  On the other hand, our main interest is in whether individuals are At Risk, regardless of whether or not they have been tested.  Our (small) data set from~$\targetDistribution$, then, is obtained by testing a number of uniformly randomly-chosen individuals from the general population, and we record $Y^\targetDistribution = 1$ if and only if the individual is assessed to be At Risk.  We can think of our training data from both $P$ and $Q$ as being generated from independent and identically distributed triples $(T,X,Y)$, where $T$ is a binary indicator of whether or not a test has been conducted before the start of the study, where $X$ encodes covariates, and where~$Y$ indicates whether or not an individual is At Risk.  However, in our source sample, we only observe $X^\sourceDistribution = X$ and $Y^\sourceDistribution = TY$, while in our target sample, we see $X^\targetDistribution = X$ and $Y^\targetDistribution = Y$.  Thus, in this example, the marginal distributions of $X^\sourceDistribution$ and $X^\targetDistribution$ are the same, while the regression functions $\sourceRegressionFunction$ and $\targetRegressionFunction$ satisfy $\targetRegressionFunction \geq \sourceRegressionFunction$.  In fact, in this formulation, we have
\[
\sourceRegressionFunction(x) = \mathbb{P}(TY = 1|X = x) = \mathbb{P}(T = 1|X = x,Y=1)\targetRegressionFunction(x).
\]
The relationship $\eta_P = g \circ \eta_Q$ then holds if $T$ and $(X,Y)$ are conditionally independent given $\eta_Q(X)$.  More generally in this example, we might construct a decision tree partition based on geographical location and income, for instance, and ask only that this relationship hold approximately for each cell of the partition.  

As another example, in tax fraud detection, most individuals can only be subjected to a simple screening procedure due to the administrative burden.  Hence, in order to assess the reliability of their detection algorithms, a government agency might draw a separate, smaller sample of individuals, chosen uniformly at random from the population, for a more formal audit.  Here, $Y^{\sourceDistribution} = 1$ if the screening flags a potentially fraudulent return,  $Y^{\targetDistribution}=1$ if the audit detects fraud, and $X^{\sourceDistribution} = X^{\targetDistribution} = X$ encodes covariates.  Since
\[
\sourceRegressionFunction(x) = \mathbb{P}(Y^{\sourceDistribution} = 1|X=x,Y^{\targetDistribution}=1)\targetRegressionFunction(x) + \mathbb{P}(Y^{\sourceDistribution} = 1|X=x,Y^{\targetDistribution}=0)\bigl\{1 - \targetRegressionFunction(x)\bigr\},
\]
the modelling assumption $\eta_P = g \circ \eta_Q$ holds if the conditional probabilities above only depend on $x$ through $\targetRegressionFunction(x)$.  In practice, there may be additional dependencies, e.g.~based on profession, income bracket and domicile status, but the modelling relationship may still hold approximately on the cells of a suitable decision tree partition.  Further examples may be found in computer vision, precision medicine, natural language processing and many other areas.  

In line with the above examples, then, we will assume a transfer learning setting with independent data $\mathcal{D}_\sourceDistribution := \bigl((X_1^{\sourceDistribution},Y_1^{\sourceDistribution}),\ldots,(X_{\numSource}^{\sourceDistribution},Y_{\numSource}^{\sourceDistribution})\bigr)$ from~$\sourceDistribution$ and
$\mathcal{D}_\targetDistribution := \bigl((X_1^{\targetDistribution},Y_1^{\targetDistribution}),\ldots,(X_{\numTarget}^{\targetDistribution},Y_{\numTarget}^{\targetDistribution})\bigr)$ from~$\targetDistribution$, and wish to classify a new observation $(X^{\targetDistribution},Y^{\targetDistribution}) \sim \targetDistribution$.  Our first contribution is to formalise the new, decision tree-based transfer framework to incorporate the broad range of relationships between source and target distributions seen in practical applications such as those mentioned above.  In particular, in contrast to most other work in this area, 
our highly flexible form of relationship between $\sourceRegressionFunction$ and $\targetRegressionFunction$ does not require that the Bayes decision boundaries agree for the two populations; we also allow the marginal distributions of $X^\sourceDistribution$ and $X^\targetDistribution$ to differ, and do not assume that these distributions have densities that are bounded away from zero on their respective supports.  The classes of distributions we consider, then, combine local smoothness assumptions on $\sourceRegressionFunction$ and $\targetRegressionFunction$ with tail assumptions on the marginal distribution of $X^\targetDistribution$ and the marginal distribution of $X^\sourceDistribution$.  To understand the fundamental difficulty of the transfer learning problem, we derive a minimax lower bound that comprises several regimes, according to the relative sample sizes and the strength of the transfer relationship, as measured by the distributional parameters of our classes.  The next challenge is to introduce a new method for the transfer learning task; our basic idea is to use $\mathcal{D}_\sourceDistribution$ to construct a local nearest neighbour-based estimate of $\sourceRegressionFunction$, and then perform empirical risk minimisation with $\mathcal{D}_\targetDistribution$  to estimate the underlying decision tree partition and the values of the transfer functions at $1/2$.  We derive a high-probability upper bound for the excess test error of our procedure, which, together with our lower bound, reveals that our algorithm attains the minimax optimal rate, up to a poly-logarithmic factor.  A notable feature of our methodology is that the only inputs required are $\mathcal{D}_\sourceDistribution$ and $\mathcal{D}_\targetDistribution$; in particular, it is adaptive to the unknown transfer relationship in the primary regime of interest, as well as the smoothness and tail parameters of our distributional classes, and the confidence with which the test error bound holds.

Interest in transfer learning has been growing considerably in recent years. One broad line of work considers the setting where the practitioner only has access to labelled data from $\sourceDistribution$, possibly with some additional unlabelled data from $\targetDistribution$. A popular approach in that context is to formulate a measure of discrepancy between the distributions $\sourceDistribution$ and $\targetDistribution$ and to give test error bounds in terms of this discrepancy \citep{ben2010theory,ben2010impossibility,germain2015risk,mansour2009domain,mohri2012new,
cortes2019adaptation}.  This strategy has been shown to yield distribution-free bounds with wide applicability, but whenever the discrepancy is non-zero, the excess error is not guaranteed to converge to zero with the sample size.  In order to achieve consistent classification, we must impose additional structure \citep{ben2010impossibility}, e.g.~by focusing on label shift, covariate shift or label noise, each of which may be viewed as a special case of transfer learning.  In label shift \citep{zhang2013domain,lipton2018detecting}, the marginal distributions of $Y^\sourceDistribution$ and $Y^\targetDistribution$ differ, but the class-conditional covariate distributions are the same for $\sourceDistribution$ and $\targetDistribution$.  Covariate shift \citep{gretton2009covariate,candela2009dataset,
sugiyama2012density} concerns scenarios where the regression functions $\sourceRegressionFunction$ and $\targetRegressionFunction$ are assumed to be equal, but the marginal distributions of $\sourceDistribution$ and $\targetDistribution$ may differ.  In label noise \citep{blanchard2017domain,
reeve2019classification,scott2018generalized,scott2019learning}, $\sourceRegressionFunction$ and $\targetRegressionFunction$ differ.  This does not necessarily preclude consistent classification, even when $n_\targetDistribution = 0$, provided that additional restrictions are met.  For instance, the Bayes classifier may still be the same for $\sourceDistribution$ and $\targetDistribution$, in which case one can sometimes proceed as if there were no label noise \citep{menon2018learning,cannings2020}; alternatively, if the label noise only depends on the true class label, then the label noise parameters may be estimated under certain identifiability assumptions \citep{blanchard2016classification, reeve2019fastLabelNoise}. 

Other related work that considers the current setting where the statistician has access to labelled data from both source and target distributions includes \citet{kpotufe2018marginal} for the covariate shift problem, and \cite{hanneke2019value} and \cite{cai2019transfer} for general transfer learning.   The frameworks of these papers ensure that $\left(\sourceRegressionFunction(x)-1/2\right)\left(\targetRegressionFunction(x)-1/2\right)>0$ whenever $\targetRegressionFunction(x)\neq 1/2$, and hence the Bayes classifiers for $\sourceDistribution$ and $\targetDistribution$ are equal. In our terminology, this corresponds to the special case where $g_\ell(1/2)=1/2$ for all $\ell$.  In each of these works, the authors obtain minimax rates of convergence for the excess error in their respective problems, which in particular reveal that consistent classification is possible with $\mathcal{D}_{\sourceDistribution}$ alone, and the effect of $\mathcal{D}_{\targetDistribution}$ is to improve the rates.  The only work in this context of which we are aware that allows the Bayes classifier for the two distributions to differ is the very recent contribution of \citet*{maity2020minimax}.  These authors consider the label shift problem, so the differences between $\sourceDistribution$ and $\targetDistribution$ are captured through a single parameter governing the similarity of $\Prob(Y^{\sourceDistribution}=1)$ and $\Prob(Y^{\targetDistribution}=1)$.  \citet{maity2020minimax} show how this parameter can be efficiently estimated from the data (which can even be unlabelled), and are therefore also able to obtain minimax rates of convergence for their problem.

One of our main goals in this work is to allow more flexible forms of transfer, to make our framework applicable to the examples discussed above.  The price we pay for this generality is that our rates of convergence are necessarily slower than those of  \citet{kpotufe2018marginal}, \citet{cai2019transfer}, \citet{hanneke2019value} and \citet{maity2020minimax}. Nonetheless, our minimax rates conclusively demonstrate the benefits of transfer learning in a highly flexible setting.

The remainder of this paper is organised as follows: in Section~\ref{Sec:Setting}, we introduce our general transfer learning framework, and state our main minimax optimality result (Theorem~\ref{Thm:Main}).  Section~\ref{Sec:Methodology} gives a formal description of our algorithm, as well as a high-probability upper bound for its excess test error (Theorem~\ref{Thm:UpperBound}), while a conclusion is provided in Section~\ref{Sec:Conclusion}.  The proofs of Theorem~\ref{Thm:UpperBound} and the upper bound in Theorem~\ref{Thm:Main} are given in Section~\ref{Sec:UpperBound}, and the proof of the lower bound in Theorem~\ref{Thm:Main} is provided in Section~\ref{Sec:LowerBound}.  Auxiliary results and illustrative examples are deferred to the Appendix; there we also present the results of a brief simulation study.

We conclude this introduction with some notation used throughout the paper.  Given a set~$A$, we write $|A|$ for its cardinality, and $\mathrm{Par}(A)$ for the set of all finite partitions of $A$, i.e.~the set consisting of elements of the form $\{A_1,\ldots,A_m\}$, with $A_1,\ldots,A_m$ pairwise disjoint and $\cup_{\ell=1}^m A_\ell = A$.  We let $\N_0 := \N \cup \{0\}$, and for $n \in \N$, let $[n] := \{1,\ldots,n\}$.  For $x \in \R^d$, we write $\|x\|$ for the Euclidean norm of $x$, and, given $r > 0$, we write $B_r(x) := \{y \in \R^d:\|y-x\| < r\}$ for the open Euclidean ball of radius $r$ about $x$.  We let $\mathcal{L}_d$ denote Lebesgue measure on $\R^d$, and let $V_d := \mathcal{L}_d\bigl(B_1(0)\bigr) = \pi^{d/2}/\Gamma(1+d/2)$.  For $x \geq 0$, we let $\log_+(x) := \log x$ if $x \geq e$, and $\log_+(x) := 1$ otherwise.  If $\mu, \nu$ are probability measures on $(\mathcal{X},\mathcal{A})$, then we write $\mathrm{TV}(\mu,\nu) := \sup_{A \in \mathcal{A}} |\mu(A) - \nu(A)|$ for their total variation distance, and if $\mu$ is absolutely continuous with respect to $\nu$ with Radon--Nikodym derivative $d\mu/d\nu$, we write $\mathrm{KL}(\mu,\nu) := \int_{\mathcal{X}} \log \bigl(\frac{d\mu}{d\nu}\bigr) \, d\mu$ for the Kullback--Leibler divergence from $\nu$ to~$\mu$.    Finally, the support of a probability measure~$\mu$ on $\R^d$, denoted $\supp(\mu)$, is defined to be the intersection of all closed sets $C \subseteq \R^d$ with $\mu(C) = 1$.

\section{Statistical setting and main result}
\label{Sec:Setting}

\sloppy Let $\sourceDistribution$, $\targetDistribution$ be distributions on $\R^d \times \{0,1\}$ and let 
 $(X^{\sourceDistribution},Y^{\sourceDistribution}) \sim \sourceDistribution$ and $(X^{\targetDistribution},Y^{\targetDistribution}) \sim \targetDistribution$.  We recall the definitions of the regression functions~$\sourceRegressionFunction$ and $\targetRegressionFunction$ from~\eqref{Eq:RegressionFunctions}, and write $\sourceMarginalDistribution$ and $\targetMarginalDistribution$ for the marginal distributions of $X^{\sourceDistribution}$ and $X^{\targetDistribution}$ respectively. 
 
 A \emph{classifier} is a Borel measurable function $\classifier: \mathbb{R}^d \rightarrow \{0,1\}$.  In practice, classifiers are constructed on the basis of training data, and we will assume that for some $\numSource, \numTarget \in \N_0$, we have access to independent pairs $({X}^{\sourceDistribution}_1,{Y}^{\sourceDistribution}_1),\ldots,({X}^{\sourceDistribution}_{\numSource},{Y}^{\sourceDistribution}_{\numSource}) \sim \sourceDistribution$ and $({X}^{\targetDistribution}_1,{Y}^{\targetDistribution}_1),\ldots ,({X}^{\targetDistribution}_{\numTarget},{Y}^{\targetDistribution}_{\numTarget}) \sim \targetDistribution$.  Recall that as shorthand, we denote $\mathcal{D}_\sourceDistribution = \bigl(({X}^{\sourceDistribution}_1,{Y}^{\sourceDistribution}_1),\ldots ,({X}^{\sourceDistribution}_{\numSource},{Y}^{\sourceDistribution}_{\numSource})\bigr)$ and $\mathcal{D}_\targetDistribution = \bigl(({X}^{\targetDistribution}_1,{Y}^{\targetDistribution}_1),\ldots ,({X}^{\targetDistribution}_{\numTarget},{Y}^{\targetDistribution}_{\numTarget})\bigr)$.  A \emph{data-dependent classifier} $\classifierEst$ is a measurable function from $(\mathbb{R}^d\times \{0,1\})^{\numSource} \times \left(\mathbb{R}^d\times \{0,1\}\right)^{\numTarget}\times \mathbb{R}^d$ to $\{0,1\}$, and we let $\setofDDClassifiers$ denote the set of all such data-dependent classifiers.  In this work, the first arguments of $\classifierEst \in \setofDDClassifiers$ will always be $\mathcal{D}_\sourceDistribution$ and $\mathcal{D}_\targetDistribution$, so we will often suppress all but the final argument of $\classifierEst$, noting also that the mapping $x\mapsto \classifierEst(x)$ is a classifier.  Conversely, any classifier may be regarded as a data-dependent classifier that is constant in all but its final argument.  The \emph{test error} of $\classifierEst \in \setofDDClassifiers$ is given by
 \begin{equation}
\label{Eq:TestError}
\mathcal{R}(\classifierEst) := \mathbb{P}\bigl(\classifierEst(X^\targetDistribution) \neq Y^\targetDistribution\bigm| \mathcal{D}_\sourceDistribution,\mathcal{D}_\targetDistribution\bigr),
\end{equation}
where $(X^\targetDistribution,Y^\targetDistribution) \sim \targetDistribution$ is independent of our training data, and is minimised for every $\mathcal{D}_\sourceDistribution$ and $\mathcal{D}_\targetDistribution$ by the \emph{Bayes classifier} $\bayesClassifier$, where $\bayesClassifier(x) := \mathbbm{1}_{\{\targetRegressionFunction(x) \geq 1/2\}}$.  The \emph{excess test error} of $\classifierEst \in \setofDDClassifiers$ is given by 
\begin{align}\label{def:excessRisk}
\excessRisk(\classifierEst):=\risk(\classifierEst)-\risk(\bayesClassifier)= \int_{\{x:\classifierEst(x) \neq \bayesClassifier(x)\}} \left|2\targetRegressionFunction(x)-1\right| \, d\targetMarginalDistribution(x).
\end{align}

In order to provide a formal statement of our key transfer assumption, we first define the notion of a decision tree partition: 
\begin{definition}[Decision tree partitions]\label{thresholdPartitionsDef} Let $\mathbb{T}_1 := \bigl\{\{\mathbb{R}^\ambientDimension\}\bigr\} \subseteq \mathrm{Par}(\R^\ambientDimension)$, and for $L \geq 2$, define the subset of $\mathrm{Par}(\R^\ambientDimension)$ given by 
\[
\mathbb{T}_L := \bigl\{\{\X_1,\ldots, \X_{L-1} \cap H_{j,s},\X_{L-1} \setminus H_{j,s}\}: j \in [d], s \in \R, \{\X_1,\ldots,\X_{L-1}\} \in \mathbb{T}_{L-1}\bigr\},
\]
where $H_{j,s}:=\{(x_t)_{t \in [\ambientDimension]} \in \R^{\ambientDimension}: x_j \geq s\}$ for $j \in [\ambientDimension]$ and $s \in \R$.  The set of all decision tree partitions is $\cup_{L \in \N} \mathbb{T}_L$.
\end{definition}
We illustrate some elements of $\mathbb{T}_1, \mathbb{T}_2, \mathbb{T}_3$ and $\mathbb{T}_4$ when $d=2$ in Figure~\ref{Fig:T}.
\begin{figure}[htbp]
\begin{center}
\includegraphics[width=0.23\textwidth,trim=0 100 0 0]{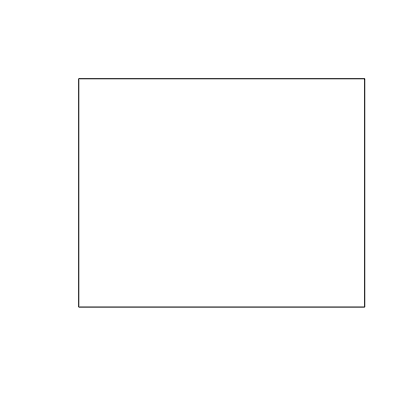}
\includegraphics[width=0.23\textwidth,trim=0 100 0 0]{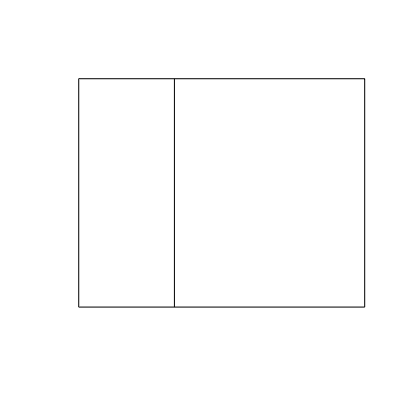}
\includegraphics[width=0.23\textwidth,trim=0 100 0 0]{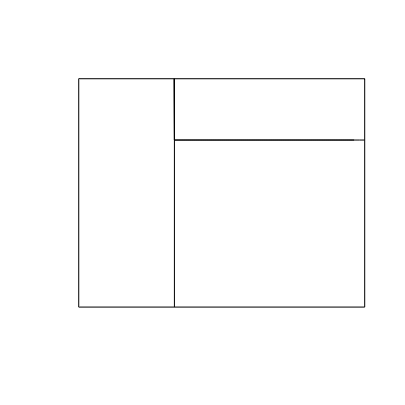}
\includegraphics[width=0.23\textwidth,trim=0 100 0 0]{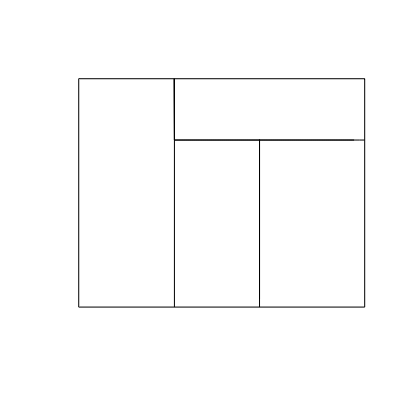}
\end{center}
\caption{\label{Fig:T}Illustration of elements of $\mathbb{T}_1$, $\mathbb{T}_2$, $\mathbb{T}_3$ and $\mathbb{T}_4$.}
\end{figure}
\begin{assumption}[Transfer]\label{transferAssumption} There exist $\{\X_1^*,\ldots,\X_{L^*}^*\} \in \mathbb{T}_{L^*}$, as well as $\Delta \in [0,1)$, $\expansivityConstant \in (0,1)$ and transfer functions $g_1,\ldots,g_{L^*} :[0,1]\rightarrow [0,1]$ such that $\bigl|\sourceRegressionFunction(x)-g_\ell(\targetRegressionFunction(x)\bigr)\bigr| \leq \Delta$ for every $\ell \in [L^*]$ and $x \in \X_{\ell}^*$; moreover,
\begin{equation}
\label{Eq:gell}
\frac{g_\ell(z)-g_\ell(1/2)}{z-1/2} \geq \expansivityConstant
\end{equation}
for every $\ell \in [L^*]$ and $z \in [0,1/2) \cup (1/2,1]$. 
\end{assumption}
To understand this assumption, first consider the case where $\Delta = 0$.  Then our condition states that for the $\X_\ell^*$ cell of our decision tree partition, we have the relationship $\sourceRegressionFunction =    g_\ell  \circ \targetRegressionFunction$, so that within this cell, $\sourceRegressionFunction(x)$ only depends on $x$ through $\targetRegressionFunction(x)$.  Moreover,~\eqref{Eq:gell} asks that each $g_\ell$ is strictly increasing at $1/2$, and is of course satisfied if each $g_\ell$ is differentiable with $g_\ell'(z) \geq \expansivityConstant$ for $z \in [0,1]$.  More generally, for $\Delta > 0$, Assumption~\ref{transferAssumption} only requires that the relationship $\sourceRegressionFunction =    g_\ell  \circ \targetRegressionFunction$ holds to within an error of $\Delta$ on each cell of our decision tree partition.

Our next assumption concerns the mass of the source and target distributions in the tails.  Given a probability distribution $\mu$ on $\R^d$ and $d_0 \in [0,d]$,  we define the \emph{lower density} $\omega_{\mu,d_0}:\mathbb{R}^d \rightarrow [0,1]$ of $\mu$ by
\begin{equation}
\label{Eq:omega}
\omega_{\mu,d_0}(x) := \inf_{r\in (0,1)} \frac{\mu\bigl(B_r(x)\bigr)}{r^{d_0}}. 
\end{equation}
For intuition, when~$\mu$ is absolutely continuous with respect to the volume form on a $d_0$-dimensional, orientable manifold in $\mathbb{R}^d$ with density (Radon--Nikodym derivative) $f_\mu$, and if the infimum in~\eqref{Eq:omega} is replaced with a $\liminf$ as $r \searrow 0$, then $\omega_{\mu,d_0}$ is almost everywhere equal to a constant multiple of $f_\mu$ \citep[][Lemma~4.1.2]{ledrappier1985metric}.  This explains our lower density terminology.  Further properties of this lower density, which can in fact be defined on general separable metric spaces, can be inferred from common assumptions in the classification literature, including an assumption of \emph{regular support} \citep{audibert2007fast} and a \emph{strong minimal mass assumption} \citep{gadat2016classification}; see Lemmas~\ref{regularSuperLevelSetsLemma} and~\ref{Lemma:SMM} for details.  We also note that the definition in~\eqref{Eq:omega} has some similarities with that of a Hardy--Littlewood operator \citep{hardy1930maximal}, though one important difference with the standard definition is that here an infimum replaces a supremum.
\begin{assumption}[Marginals]
\label{sourceDensityAssumption} 
There exist $\targetDimension \in [1,\ambientDimension]$, $ \targetTailExponent  >0$ and $\TailConstant > 1$ such that
\begin{align}
\label{Eq:targetMarginal}
\targetMarginalDistribution\bigl(\bigl\{ x \in \R^d: \omega_{\mu_\targetDistribution,d_\targetDistribution}(x)<\xi\bigr\} \bigr) \leq \TailConstant \cdot \xi^{\targetTailExponent}
\end{align}
for all $\xi>0$.  Moreover, there exist $\sourceDimension \in [\targetDimension, \ambientDimension]$ and $\sourceTailExponent  >0$ such that 
\begin{align}
\label{Eq:sourceMarginal}
\targetMarginalDistribution\bigl(\bigl\{ x \in \R^d: \omega_{\mu_\sourceDistribution,d_\sourceDistribution}(x)<\xi\bigr\} \bigr) \leq \TailConstant \cdot \xi^{\sourceTailExponent}
\end{align}
for all $\xi>0$.  
\end{assumption}
To understand the first part of Assumption~\ref{sourceDensityAssumption}, first consider the case where $\mu_Q$ is absolutely continuous with respect to $\mathcal{L}_d$.  In that case, condition~\eqref{Eq:targetMarginal} can be viewed as similar to other tail conditions in the classification literature that control the $\mu_Q$ measure of the set on which this density is small \citep[e.g.][Assumption~A4]{gadat2016classification}.  Thus,~\eqref{Eq:targetMarginal} is a generalisation of such a tail condition, because we do not require $\mu_Q$ to be absolutely continuous with respect to $\mathcal{L}_d$, and instead work with its lower density $\omega_{\mu_Q,d_Q}$.  The great advantage of this formulation in~\eqref{Eq:targetMarginal} is that it allows us to avoid assuming that this lower density is bounded away from zero on the support of $\mu_Q$; Example~\ref{Ex:gammafamily} provides a simple, univariate parametric family of densities $\{f_\gamma:\gamma > 0\}$ for which $\gamma_Q = \gamma$ is the optimal choice.  

Further intuition about the first part of Assumption~\ref{sourceDensityAssumption} can be gained from several results in the appendix that we now summarise.  In Lemma~\ref{momentTailAssumptionSepMetricSpace}, we show that if $\mu_{\targetDistribution}$ has a finite $\rho$th moment for some $\rho > 0$, then~\eqref{Eq:targetMarginal} holds with $\targetDimension = d$ and $\targetTailExponent = \rho/(\rho+d)$.  The proof relies on Vitali's covering lemma \citep[e.g.][Theorem~1]{evans2015measure}, and we believe the result may find application elsewhere; see Remark~\ref{Rem:Moment} after Lemma~\ref{momentTailAssumptionSepMetricSpace}.  As a consequence of a general result about Weibull-type tails~(Lemma~\ref{exponentialTailsLemma}), Proposition~\ref{Prop:LCGenerald} shows that when $\mu_{\targetDistribution}$ has a log-concave density on $\R^d$ with $d_0$-dimensional support,~\eqref{Eq:targetMarginal} holds with $\targetDimension = d_0$ and any $\targetTailExponent < 1$; in fact, when $d_Q = 1$, we may even take $\targetTailExponent = 1$~(Proposition~\ref{Prop:LCd1}).  Moreover, Proposition~\ref{Lemma:MixtureLC} extends these results to finite mixtures of log-concave distributions, with~$\TailConstant$ depending linearly on the number of mixture components (and not depending on the mixing proportions).  In fact, more generally, Propositions~\ref{lemma:mixtureTails} and~\ref{Prop:Products} provide simple stability results for the property~\eqref{Eq:targetMarginal} under finite mixtures and products respectively.  As additional important examples, whenever~$\mu_Q$ has bounded, $d_0$-dimensional support, we may take $d_Q = d_0$ and $\gamma_Q = 1$ (by Lemma~\ref{boundedSetTailAssumptionSepMetricSpace}); moreover, if~$\mu_Q$ has a density that is bounded away from zero on a $d_0$-dimensional, regular support, then we may take $d_Q = d_0$ and $\gamma_Q$ to be arbitrarily large (by Lemma~\ref{regularSuperLevelSetsLemma}).

The second part of Assumption~\ref{sourceDensityAssumption} relates $\mu_P$ and $\mu_Q$ together: it controls the $\mu_Q$ measure of the set on which the lower density of $\mu_P$ is small, thereby capturing the extent to which the source measure covers the target measure.  For instance, Example~\ref{Ex:Gaussianscale} reveals that when $\mu_Q = N(0,1)$ and $\mu_P = N(0,\sigma^2)$, we may take $\gamma_P = \sigma^2$, while, from Example~\ref{Ex:GaussianLocation}, we see that when $\mu_Q = N(0,1)$ and $\mu_P = N(a,1)$ for some $a \neq 0$, we may take any $\gamma_P < 1$.  We remark that if~\eqref{Eq:targetMarginal} holds and if, in the terminology of \citet{kpotufe2018marginal}, $(P,Q)$ have \emph{transfer-exponent} $\kappa \in [0,\infty]$, then~\eqref{Eq:sourceMarginal} holds for any $\sourceDimension \geq \targetDimension + \kappa$ and with $\sourceTailExponent = \targetTailExponent$; see Lemma~\ref{Lemma:TransferTransferExponents}.  Moreover, Example~\ref{Ex:Prototype} provides a prototypical setting where working with the condition~\eqref{Eq:sourceMarginal} allows us to obtain faster rates of convergence than would be the case if we instead deduced this rate from the corresponding transfer-exponent.

Our next two assumptions are standard margin \citep[e.g.][]{polonik1995measuring,mammen1999} and smoothness assumptions.  We emphasise that these are only imposed on the distribution $\targetDistribution$, and we require no corresponding properties for $\sourceDistribution$.

\begin{assumption}[Margin]\label{marginAssumption} There exist $\marginExponent >0$ and $\marginConstant \geq 1$ such that for all $\zeta>0$ we have $\targetMarginalDistribution\bigl(\bigl\{x \in \R^d:\left|\targetRegressionFunction(x)-1/2\right|<\zeta\bigr\}\bigr) \leq \marginConstant \cdot \zeta^{\marginExponent}$. 
\end{assumption}

\begin{assumption}[Smoothness]\label{holderContinuityAssumption} There exist $\holderExponent \in (0,1]$ and $\holderConstant \geq 1$ such that $\bigl| \targetRegressionFunction(x_0)-\targetRegressionFunction(x_1)\bigr| \leq \holderConstant \cdot \|x_0-x_1\|^{\holderExponent}$ for all $x_0$, $x_1 \in \R^d$.
\end{assumption}

It will be convenient to write $\theta$ for the vector of parameters that appear in Assumptions~2--4, namely $(\targetDimension,\targetTailExponent,\sourceDimension,  \sourceTailExponent,\TailConstant,\marginExponent,\marginConstant,\holderExponent,\holderConstant)$, and to write $\Theta$ for the corresponding parameter space.  We will also make use of an augmented parameter vector that incorporates the additional parameters that appear in Assumption~1, by letting $\theta^\sharp := (\Delta,\expansivityConstant,L^*,\theta)$, with corresponding parameter space $\Theta^\sharp$.  For $\theta^\sharp \in \Theta^\sharp$, we write $\mathcal{P}_{\theta^\sharp}$ for the set of pairs $(P,Q)$ of distributions satisfying Assumptions~1--4 with parameter $\theta^\sharp$.

We are now in a position to state our main result.
\begin{theorem}
\label{Thm:Main}
Fix $\theta^\sharp = (\Delta,\expansivityConstant,L^*,\theta) \in \Theta^\sharp$ with $\holderExponent/(2\holderExponent+\targetDimension)<\targetTailExponent$, $\holderExponent/(2\holderExponent+\sourceDimension)<\sourceTailExponent$, $\marginExponent\holderExponent \leq \targetDimension$, $\sourceTailExponent(1 - \targetTailExponent) \leq \targetTailExponent$ and $\marginConstant\geq 1 + 2^{2\targetDimension/\holderExponent}\targetDimension^{\targetDimension/2}V_{\targetDimension}$.  For $j \in \{\mathrm{L},\mathrm{U}\}$, let
\begin{align*}
\firstTerm_{\numSource,\numTarget}^j &:= \biggl(\frac{a_0^j}{\expansivityConstant^2 \cdot \numSource}\biggr)^{\frac{\holderExponent \sourceTailExponent (1+\marginExponent) }{\sourceTailExponent(2\holderExponent+\sourceDimension)+\marginExponent \holderExponent }} + \min\biggl\{\biggl(  \frac{{L^*} a_1^j}{ \numTarget}\biggr)^{\frac{1+\marginExponent}{2+\marginExponent}},(1-\expansivityConstant)^{1+\marginExponent}\biggr\} + \biggl(\frac{\Delta}{\expansivityConstant}\biggr)^{1+\marginExponent}, \\
\secondTerm_{\numTarget}^j &:= \biggl(\frac{b^j}{\numTarget}\biggr)^{\frac{\holderExponent \targetTailExponent (1+\marginExponent) }{\targetTailExponent(2\holderExponent+\targetDimension)+\marginExponent \holderExponent}},
\end{align*}
where $a_0^{\mathrm{L}} = a_1^{\mathrm{L}} = b^{\mathrm{L}} := 1$, $a_0^\mathrm{U} := \log_+ (\numSource)$, $a_1^{\mathrm{U}} := \log_+\bigl( L^* \ambientDimension (\numSource + \numTarget)\bigr)$ and $b^{\mathrm{U}} := \log_+ (\numTarget)$.  Then there exist $c_\theta,C_\theta > 0$, depending only on $\theta$, such that 
\begin{align}
\label{Eq:Main}
c_\theta \bigl(\firstTerm_{\numSource,\numTarget}^{\mathrm{L}} \wedge \secondTerm_{\numTarget}^{\mathrm{L}} \wedge 1\bigr) \leq \inf_{\classifierEst \in \setofDDClassifiers}\sup_{(P,Q) \in \mathcal{P}_{\theta^\sharp}} \mathbb{E}\bigl\{\excessRisk(\classifierEst)\bigr\} \leq C_\theta\bigl(\firstTerm_{\numSource,\numTarget}^{\mathrm{U}} \wedge\secondTerm_{\numTarget}^{\mathrm{U}} \wedge 1\bigr).
\end{align}
\end{theorem}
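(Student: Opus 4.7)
The theorem consists of matching upper and lower bounds of the form $(\firstTerm \wedge \secondTerm \wedge 1)$, up to poly-logarithmic factors, where $\firstTerm$ is the rate achievable by a transfer scheme built from both samples, $\secondTerm$ is the rate of a target-only classifier, and $1$ is the trivial rate. I would handle the upper and lower bounds separately, and in each case decompose by these three competing strategies.

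\textbf{Upper bound.} The plan is to define the classifier $\classifierEst$ as the better (on held-out target data) of (i)~the decision-tree-calibrated source-based procedure of Section~\ref{Sec:Methodology} and (ii)~a plain $k$-nearest-neighbour classifier trained on $\targetSample$. Applying Theorem~\ref{Thm:UpperBound} with confidence parameter $\delta$ of order $(\numSource+\numTarget)^{-1}$ and integrating the resulting tail against the margin Assumption~\ref{marginAssumption} yields, in expectation, the three summands of $\firstTerm^{\mathrm{U}}_{\numSource,\numTarget}$ for option~(i): (a)~nearest-neighbour estimation of $\sourceRegressionFunction$ from $\numSource$ source samples, with variance inflated by $\expansivityConstant^{-2}$ because~\eqref{Eq:gell} compresses a $\targetRegressionFunction$-margin of size $\zeta$ into an $\sourceRegressionFunction$-margin of only $\expansivityConstant\zeta$; (b)~decision-tree empirical risk minimisation of the partition structure and the $L^*$ values $g_\ell(1/2)$ from $\numTarget$ target samples, giving the classification-parametric rate $\numTarget^{-(1+\marginExponent)/(2+\marginExponent)}$ capped at $(1-\expansivityConstant)^{1+\marginExponent}$ since~\eqref{Eq:gell} forces $|g_\ell(1/2) - 1/2| \leq (1-\expansivityConstant)/2$; and (c)~the irreducible bias $(\Delta/\expansivityConstant)^{1+\marginExponent}$ carried by the $\Delta$-approximate relationship. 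A standard $k$-NN analysis under Assumptions~\ref{sourceDensityAssumption}--\ref{holderContinuityAssumption}, with $k$ chosen along a Lepski-type grid, produces $\secondTerm^{\mathrm{U}}_{\numTarget}$ for option~(ii). Taking the minimum yields the upper bound in~\eqref{Eq:Main}.

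\textbf{Lower bound.} For each sub-rate inside $\firstTerm^{\mathrm{L}} \wedge \secondTerm^{\mathrm{L}} \wedge 1$ I would construct a separate family $\{(\sourceDistribution^{\sigma}, \targetDistribution^{\sigma}): \sigma \in \{0,1\}^m\} \subset \mathcal{P}_{\theta^\sharp}$ and apply Assouad's lemma. For $\secondTerm^{\mathrm{L}}_{\numTarget}$, set $\sourceDistribution = \targetDistribution$ and perturb a constant $\targetRegressionFunction$ by bumps of width $\epsilon$ and height $\epsilon^{\holderExponent}$ placed in a region of $\targetMarginalDistribution$-mass calibrated to $\targetTailExponent$; optimising the bump scale against the Kullback--Leibler budget $\asymp \numTarget\epsilon^{2\holderExponent+\targetDimension}$ reproduces the exponent of $\secondTerm^{\mathrm{L}}_{\numTarget}$. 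The source-NN summand of $\firstTerm^{\mathrm{L}}$ is obtained analogously, with $(\numTarget,\targetDimension,\targetTailExponent)$ replaced by $(\expansivityConstant^2\numSource,\sourceDimension,\sourceTailExponent)$, since per-sample Kullback--Leibler contributions on the source side are compressed by a factor of $\expansivityConstant^2$ relative to the target-side perturbations. For the decision-tree sub-term, I would fix an $L^*$-cell partition and force the learner to recover $L^*$ binary choices $g_\ell(1/2) \in \{1/2 \pm (1-\expansivityConstant)/2\}$ from $\numTarget$ target labels, which gives the $(L^*/\numTarget)^{(1+\marginExponent)/(2+\marginExponent)} \wedge (1-\expansivityConstant)^{1+\marginExponent}$ rate. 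Finally, the bias $(\Delta/\expansivityConstant)^{1+\marginExponent}$ arises from a two-point family in which $\sourceRegressionFunction$ is identical but $\targetRegressionFunction$ differs by $\Delta/\expansivityConstant$ near the decision boundary, so that no amount of source data can distinguish the two members.

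\textbf{Main obstacle.} The hardest step is the lower-bound construction for the source-NN summand of $\firstTerm^{\mathrm{L}}$, where $\sourceMarginalDistribution$ must simultaneously satisfy the relative lower-density condition~\eqref{Eq:sourceMarginal} with exponent $\sourceTailExponent$, place enough mass on the perturbation regions for the family to be informative about $\sourceRegressionFunction$, and keep the aggregate Kullback--Leibler divergence over $\numSource$ source samples bounded by a constant; jointly calibrating these constraints against $\expansivityConstant$, $\Delta$ and $\marginExponent$ without violating~\eqref{Eq:gell} requires a delicate design of both the source marginal and the transfer function. On the upper-bound side, the principal technical burden lies inside Theorem~\ref{Thm:UpperBound}, where the Lepski-type choice of the nearest-neighbour index must adapt simultaneously to unknown $\holderExponent$, $\sourceTailExponent$ and $\targetTailExponent$, and where the absence of any lower bound on the marginal densities forces reliance on the lower-density formalism and its auxiliary properties rather than on pointwise density assumptions.
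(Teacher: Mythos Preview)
Your overall architecture matches the paper's closely, but there is one genuine gap on the upper-bound side and some divergence in the lower-bound constructions worth noting.

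\textbf{Upper bound: the extra $D$ term.} You propose to obtain the expected-risk bound by integrating Theorem~\ref{Thm:UpperBound}. But Theorem~\ref{Thm:UpperBound} controls $\min(\firstTerm_{\numSource,\numTarget,\delta},\secondTerm_{\numTarget,\delta}) + D_{\numSource,\numTarget,\delta}$, not $\min(\firstTerm,\secondTerm)$; the additive $D_{\numSource,\numTarget,\delta} \asymp (\numTarget^{-1}\log_+)^{(1+\marginExponent)/(2+\marginExponent)}$ is the unavoidable price of the final empirical-risk-minimisation step that selects among the candidate classifiers. When $\expansivityConstant$ is very close to $1$, $\numSource$ is large and $\Delta$ is small, $\firstTerm^{\mathrm{U}}$ can be far smaller than $D$, so integrating Theorem~\ref{Thm:UpperBound} yields only $\min(\firstTerm^{\mathrm{U}},\secondTerm^{\mathrm{U}}) + D$, which does \emph{not} reduce to $\firstTerm^{\mathrm{U}} \wedge \secondTerm^{\mathrm{U}} \wedge 1$. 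The paper sidesteps this by \emph{not} using the adaptive estimator $\classifierEst_{\mathrm{ATL}}$ for the upper bound in Theorem~\ref{Thm:Main}: instead it performs a case analysis, and in each regime plugs in a single non-adaptive classifier whose risk is bounded directly by the relevant intermediate result (Proposition~\ref{lemma:lowRegretWithTheRightNumLeavesAndRobustness}, Corollary~\ref{justUseHalfCor}, or Corollary~\ref{knnOnFirstHalfTargetDataCor}) without any ERM penalty. In particular, the $(1-\expansivityConstant)^{1+\marginExponent}$ cap you identify is realised not by learning the tree but by skipping it entirely and thresholding $\sourceRegressionFunction$ at $1/2$ (the $L=0$, $h_0\equiv 1/2$ option), which is exactly Corollary~\ref{justUseHalfCor}. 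Your sketch gestures at the right reason for the cap but does not supply a mechanism that avoids the ERM cost.

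\textbf{Lower bound: fewer, combined constructions.} Your plan calls for four or five separate Assouad families (one per sub-term). The paper instead uses only two constructions, each of which simultaneously delivers several sub-terms as a minimum. Lemma~\ref{lemma:thresholdEstimationLowerBound} handles $(L^*/\numTarget)^{(1+\marginExponent)/(2+\marginExponent)} \wedge \secondTerm^{\mathrm{L}} \wedge (1-\expansivityConstant)^{1+\marginExponent}$ in one go: the source regression function is arranged so that $\sourceRegressionFunction^\sigma(x_t)$ is \emph{independent of $\sigma$} (so $\epsilonSource=0$ and the source sample carries no information), while the transfer functions $g_\ell^\sigma$ toggle between the identity and a fixed piecewise-linear $h$ according to $\sigma_\ell$; this is not quite your ``$g_\ell(1/2)\in\{1/2\pm(1-\expansivityConstant)/2\}$'' picture, though the spirit is the same. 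Lemma~\ref{Lemma:EstimateEtaP} packages the source-estimation rate \emph{and} the $(\Delta/\expansivityConstant)^{1+\marginExponent}$ bias into a single family by setting $\epsilonSource=(\expansivityConstant\epsilonTarget-\Delta)\vee 0$, so that the source becomes progressively less informative as $\Delta$ grows, rather than using a separate two-point construction. Your modular approach would also work, but the paper's combined constructions economise on the delicate marginal calibration you flag as the main obstacle, since the same $\sourceMarginalDistribution$, $\targetMarginalDistribution$ (built in Section~\ref{Sec:Marginal}) serve both lemmas.
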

Theorem~\ref{Thm:Main} establishes the optimal rates of convergence for the excess risk over our classes, up to logarithmic factors.  It is important to note that $c_\theta$ and $C_\theta$ do not depend on $(\Delta,\expansivityConstant,L^*)$ (and nor on $\numSource$ or $\numTarget$); thus the theorem reveals the optimal dependence of the worst-case excess risk on these parameters too.  Moreover, as we will show in Theorem~\ref{Thm:UpperBound}, the minimax rate can be achieved up to a poly-logarithmic factor when $\phi \leq 1- \numTarget^{-1/(2+\marginExponent)}$ by a procedure that is completely adaptive, in the sense that it only takes $\sourceSample$ and $\targetSample$ as inputs (and not any component of $\theta^\sharp$).

The restrictions on the parameters in Theorem~\ref{Thm:Main} are mild.  For instance, by Lemma~\ref{momentTailAssumptionSepMetricSpace}, the conditions $\holderExponent/(2\holderExponent+\targetDimension)<\targetTailExponent$ and $\sourceTailExponent(1 - \targetTailExponent) \leq \targetTailExponent$ hold whenever $\sup_{(P,Q) \in \mathcal{P}_{\theta^\sharp}} \mathbb{E}\bigl(\|X^\targetDistribution\|^{1 \vee \targetDimension \sourceTailExponent}\bigr) < \infty$.  The condition $\marginExponent\holderExponent \leq \targetDimension$ rules out `super-fast rates' (in the terminology of \citet{audibert2007fast}) and is guaranteed to hold whenever there exist $(P,Q) \in \mathcal{P}_{\theta^\sharp}$ and $x_0 \in \R^\ambientDimension$ such that $\targetRegressionFunction(x_0) = 1/2$ and $\omega_{\targetMarginalDistribution,\targetDimension}(x_0) > 0$ (see Lemma~\ref{Lemma:TBD}).  The first two parameter restrictions in Theorem~\ref{Thm:Main} are only required for the upper bound, while the other three are only needed for the lower bound.  We also remark that Theorem~\ref{Thm:Main} holds even when $\numSource$ or $\numTarget$ are zero.  In the former case, the problem reduces to a standard classification problem, while in the latter case, Theorem~\ref{Thm:Main} provides results for relaxations of the covariate shift model in which $\sourceRegressionFunction$ and $\targetRegressionFunction$ are close.

By careful inspection of the proof of Theorem~\ref{Thm:Main}, we see that the first terms $\firstTerm_{\numSource,\numTarget}^{\mathrm{L}}$ and $\firstTerm_{\numSource,\numTarget}^{\mathrm{U}}$ in the bounds are due to the transfer learning error, and comprise three separate contributions.  The first term arises from the error incurred in estimating $\sourceRegressionFunction$.  The second represents the difficulty of identifying the correct decision tree partition $\{\X_1^*,\ldots,\X_{L^*}^*\}$, as well as learning $g_1(1/2),\ldots,g_{L^*}(1/2)$; this term is negligible if $\expansivityConstant$ is sufficiently close to 1.  In fact, as we will see from the proof of Theorem~\ref{Thm:UpperBound} below, it is not necessary to carry out this step when $\expansivityConstant$ is close to 1.  Finally, the third term reflects the extent to which~$\sourceRegressionFunction$ can be approximated by $g_\ell \circ \targetRegressionFunction$ on $\X_\ell^*$.  The $\secondTerm_{\numTarget}^{\mathrm{L}}$ and $\secondTerm_{\numTarget}^{\mathrm{U}}$ terms represent the rate of convergence achievable by ignoring $\sourceSample$ and performing standard classification using $\targetSample$; in the context of transfer learning, our primary interest is in the setting where $\numSource \gg \numTarget$, and where the minima in~\eqref{Eq:Main} are attained by $\firstTerm_{\numSource,\numTarget}^{\mathrm{L}}$ and $\firstTerm_{\numSource,\numTarget}^{\mathrm{U}}$ respectively.

To set the rates $\secondTerm_{\numTarget}^{\mathrm{L}}$ and $\secondTerm_{\numTarget}^{\mathrm{U}}$ in context, it may be helpful to consider the case where $\targetMarginalDistribution$ is absolutely continuous with respect to $\Lebesgue$ with a density that is bounded away from zero on its regular support; see Definition~\ref{Def:Regular}.  In that case, we may take $\targetTailExponent$ to be arbitrarily large and $\targetDimension = \ambientDimension$; notice that setting $\targetTailExponent = \infty$ and $\targetDimension = \ambientDimension$ in $\secondTerm_{\numTarget}^{\mathrm{L}}$ recovers the rate $\numTarget^{-\frac{\holderExponent(1+\marginExponent)}{2\holderExponent+\ambientDimension}}$ for the standard classification problem (with no source data) in \citet{audibert2007fast} under this regular support hypothesis.  Returning to the more general transfer learning setting, if we take $\sourceTailExponent = \targetTailExponent = \infty$ and $\sourceDimension = \targetDimension = \ambientDimension$, then the minimum of the first term in $\firstTerm_{\numSource,\numTarget}^{\mathrm{L}}$ and $\secondTerm_{\numTarget}^{\mathrm{L}}$ matches the rate obtained by \citet{cai2019transfer}.  The second and third terms in $\firstTerm_{\numSource,\numTarget}^{\mathrm{L}}$ represent the necessary additional price for the generality of our framework.

To illustrate Theorem~\ref{Thm:Main}, and ignoring logarithmic factors for simplicity, consider the special case where $\sourceTailExponent = \targetTailExponent$ and $\sourceDimension = \targetDimension$ (which would in particular be the case if the marginal distributions $\sourceMarginalDistribution$ and $\targetMarginalDistribution$ coincide).  Then Theorem~\ref{Thm:Main} reveals that in order for transfer learning to be effective (as opposed to simply constructing a classifier based on $\targetSample$), we require $\expansivityConstant^2 \cdot \numSource \gg \numTarget$.  If we further assume that $\sourceTailExponent = \targetTailExponent = 1$, that $\sourceDimension = \targetDimension = \ambientDimension$, that $\Delta = 0$ and that $\marginExponent = \holderExponent = 1$, then we benefit from transfer learning provided that $\expansivityConstant^2 \cdot \numSource \gg \numTarget$ and $L^* \ll \numTarget^{\ambientDimension/(\ambientDimension+3)}$.  In general, the scope for transfer learning to have an impact increases as $\sourceTailExponent$ and $\expansivityConstant$ increase, and as $\sourceDimension$, $L^*$ and $\Delta$ decrease.

\section{Methodology and upper bound}
\label{Sec:Methodology}

In this section, we introduce our adaptive algorithm for transfer learning and provide a high-probability bound for its excess risk.  To understand the main idea, consider the case where $\Delta = 0$ in Assumption~\ref{transferAssumption}, and where we are told the correct decision tree partition and transfer functions.  In this setting, when $x \in \X_\ell^*$, the sign of $\sourceRegressionFunction(x) - g_{\ell}(1/2)$ agrees with the sign of $\targetRegressionFunction(x) - 1/2$, so we aim to construct a nearest-neighbour based estimate of the former quantity using $\sourceSample$.  In practice, this estimate will depend on a choice of decision tree, but this can be calibrated using a subsample from~$\targetSample$.  Separately, we also construct a standard $k$-nearest neighbour estimate of $\targetRegressionFunction(x) - 1/2$ via the same subsample from~$\targetSample$, and make our final choice between the two data-dependent classifiers using empirical risk minimisation over the held-out data from $\targetSample$.  The independence of the two subsamples from $\targetSample$ allows us to work conditionally on the first subsample at this final step to obtain our final performance guarantees.

In giving a formal description of our algorithm, we will assume that $\numTarget \geq 2$ (when $\numTarget \leq 1$, the upper bound in Theorem~\ref{Thm:Main} is attained by applying a nearest-neighbour method to $\sourceSample$), and it will also be convenient initially to assume that $\numSource > 0$.  For $x \in \R^d$ and $k \in [\numSource]$ we let $X_{(k)}^{\sourceDistribution} \equiv X_{(k)}^{\sourceDistribution}(x)$ denote the $k$th nearest neighbour of $x$ in~$\sourceSample$ in Euclidean norm (where for definiteness, in the case of ties, we preserve the original ordering of the indices), and let 
$Y_{(k)}^{\sourceDistribution} \equiv Y_{(k)}^{\sourceDistribution}(x)$ denote the concomitant label.  We then split $\targetSample$ into two subsamples $\targetSample^0:=\bigl((X_1^{\targetDistribution},Y_1^{\targetDistribution}),\ldots,(X_{\numTargetHalf}^{\targetDistribution},Y_{\numTargetHalf}^{\targetDistribution})\bigr)$ and $\targetSample^1:=\bigl((X_{\numTargetHalf+1}^{\targetDistribution},Y_{\numTargetHalf+1}^{\targetDistribution}),\ldots,(X_{\numTarget}^{\targetDistribution},Y_{\numTarget}^{\targetDistribution})\bigr)$.  For $k \in [\numTargetHalf]$ we let $X_{(k)}^{\targetDistribution} \equiv X_{(k)}^{\targetDistribution}(x)$ denote the $k$th nearest neighbour of $x$ in~$\targetSample^0$ and similarly let
$Y_{(k)}^{\targetDistribution} \equiv Y_{(k)}^{\targetDistribution}(x)$ denote the concomitant label.

Given $L \in \mathbb{N}$ and a decision tree partition $\{\X_1,\ldots,\X_L\} \in \mathbb{T}_L$, we define the \emph{leaf function} $\ell:\R^d \rightarrow [L]$ by $\ell(x) := j$ whenever $x \in \X_j$.  Let $\setOfDecisionTreeFunctionsL$ denote the set of decision tree functions $\decisionTreeFunction:\R^{\ambientDimension} \rightarrow (0,1)$ of the form $x \mapsto \tau_{\ell(x)}$ for some $\{\X_1,\ldots,\X_L\} \in \mathbb{T}_L$ with leaf function $\ell$, and some $(\tau_1,\ldots,\tau_L) \in \{0,1/\numSource,2/\numSource, \ldots, 1\}^L$.  It is also convenient to define $\mathcal{H}_0$ to consist of the single (constant) function that maps $\R^{\ambientDimension}$ to $1/2$ (this will handle the case when $\expansivityConstant$ is very close to 1).  Given $k \in [\numSource]$, $L \in \mathbb{N}_0$ and $\decisionTreeFunction \in \setOfDecisionTreeFunctionsL$, we let 
\begin{align}
  \label{Eq:marginFuncSource}
  \empiricalMarginSourceDecisionTreeK(x) :=\frac{1}{k}\sum_{i=1}^k \bigl\{ Y_{(i)}^{\sourceDistribution}(x)-\decisionTreeFunction\bigl(  X_{(i)}^{\sourceDistribution}(x) \bigr) \bigr\}
\end{align}
denote an empirical estimate of $\sourceRegressionFunction(x) - g_{\ell(x)}(1/2)$.  To choose $k$, we fix a robustness parameter $\robustness \in [\numSource^2]/\numSource = \{1/\numSource,2/\numSource,\ldots,\numSource\}$, and use a Lepski-type procedure to define
\begin{align}
\label{Eq:lepskiKSource}
\hat{k} \equiv \lepskiChoiceOfKSource(x):=   \max\biggl\{ {k \in [\numSource-1]}: \bigl| \hat{m}_{r,h}^P(x)\bigr| \leq  \frac{\robustness}{\sqrt{r}} \text{ for all } r \in [k]\biggr\}+1.
\end{align}
Fixing a confidence level $\delta \in (0,1)$, we will see in Proposition~\ref{prop:lowRegretWithTheRightDecisionTree} that the choice $\robustness^*= \min\bigl\{ \lceil 3 \log_+^{1/2}(\numSource/\delta)\rceil, \numSource\bigr\}$ yields classifiers that perform well with probability at least $1-\delta$.  However, we seek a procedure with simultaneous guarantees across all levels $\delta$, so we will provide a data-dependent choice below.  We now choose $\decisionTreeFunction$ by applying empirical risk minimisation over $\targetSample^0$, so that
\begin{align}\label{ermOverDecisionTreesChoice}
\hat{\decisionTreeFunction} \in \argmin_{\decisionTreeFunction \in \setOfDecisionTreeFunctionsL} \sum_{i=1}^{\numTargetHalf} \Bigl\{Y^{\targetDistribution}_i\mathbbm{1}_{\bigl\{\empiricalMargin^{\sourceDistribution}_{\hat{k},\decisionTreeFunction}(X^{\targetDistribution}_i) < 0\bigr\}} + (1 - Y^{\targetDistribution}_i)\mathbbm{1}_{\bigl\{\empiricalMargin^{\sourceDistribution}_{\hat{k},\decisionTreeFunction}(X^{\targetDistribution}_i) \geq 0\bigr\}}\Bigr\}.
\end{align}
As defined, $\hat{\decisionTreeFunction}$ involves a minimisation over an infinite set of decision tree functions; however, by Lemma~\ref{countingDecisionTreesOnAFiniteSetLemma}, a minimiser can be found by restricting the class $\setOfDecisionTreeFunctionsL$ to a finite set that may in principle be computed from the data.  See Section~\ref{Sec:Empirical} for a discussion of implementational aspects.  Having determined $\hat{\decisionTreeFunction}$, we can now define a family $\potentialFunctionSet^{\sourceDistribution}:= \bigl\{ \classifierEst^{\sourceDistribution}_{\robustness,L}: \robustness \in [\numSource^2]/\numSource, L \in \{0\} \cup [\numTarget]\bigr\} \subseteq \setofDDClassifiers$, where $\classifierEst^{\sourceDistribution}_{\robustness,L}(x) :=\mathbbm{1}_{\{\empiricalMargin^{\sourceDistribution}_{\hat{k},\hat{\decisionTreeFunction}}(x) \geq 0\}}$.  If $\numSource = 0$, then we set $\potentialFunctionSet^{\sourceDistribution}:= \emptyset$.

The second part of our procedure involves applying a $k$-nearest neighbour classifier to~$\targetSample^0$.  More precisely, for $k \in [\numTargetHalf]$, we first define
\begin{align}
  \label{Eq:marginFuncTarget}
  \empiricalMarginTargetK(x) :=\frac{1}{k}\sum_{i=1}^k \biggl\{Y_{(i)}^{\targetDistribution}(x)-\frac{1}{2} \biggr\}.
\end{align}
Given $\robustness \in [\numTarget^2]/\numTarget$, we select a number of neighbours 
\begin{align}
\label{Eq:lepskiKTarget}
\tilde{k} \equiv \lepskiChoiceOfKTarget(x):=  \max\biggl\{ {k \in \bigl[\numTargetHalf-1\bigr]}: \bigl| \hat{m}_r^Q(x)\bigr| \leq  \frac{\robustness}{\sqrt{r}} \text{ for all } r \in [k]\biggr\}+1,
\end{align}
and define another family $\potentialFunctionSet^{\targetDistribution}:= \bigl\{ \classifierEst^{\targetDistribution}_{\robustness}: \robustness \in [\numTarget^2]/\numTarget\bigr\} \subseteq \setofDDClassifiers$ by $\classifierEst^{\targetDistribution}_{\robustness}(x):= \mathbbm{1}_{\{ \empiricalMargin^{\targetDistribution}_{\tilde{k}}(x) \geq 0\}}$.

Our final data-dependent classifier, then, is obtained by empirical risk minimisation over~$\targetSample^1$: we pick
\begin{align*}
\classifierEst_{\mathrm{ATL}}  \in \argmin_{\classifier \in \potentialFunctionSet^{\sourceDistribution}\cup \potentialFunctionSet^{\targetDistribution}} \sum_{i = \numTargetHalf+1}^{\numTarget} \one_{\{\classifier(X^{\targetDistribution}_i)\neq Y^{\targetDistribution}_i\}}. 
\end{align*}
The following theorem provides a high-probability bound on the performance of $\classifierEst_{\mathrm{ATL}}$ over~$\mathcal{P}_{\theta^\sharp}$: 
\begin{theorem}
\label{Thm:UpperBound}
Fix $\theta^\sharp = (\Delta,\expansivityConstant,L^*,\theta) \in \Theta^\sharp$ with $\holderExponent/(2\holderExponent+\sourceDimension)<\sourceTailExponent$ and $\holderExponent/(2\holderExponent+\targetDimension)<\targetTailExponent$. Given $\numSource \in \mathbb{N}_0$, $\numTarget \geq 2$ and $\delta \in (0,1)$, we let
\begin{align*}
\firstTerm_{\numSource,\numTarget,\delta} &:= \biggl(\frac{a_{0,\delta}}{\expansivityConstant^2 \cdot \numSource}\biggr)^{\frac{\holderExponent \sourceTailExponent (1+\marginExponent) }{\sourceTailExponent(2\holderExponent+\sourceDimension)+\marginExponent \holderExponent }} + \min\biggl\{\biggl(  \frac{{L^*} a_{1,\delta}}{ \numTarget}\biggr)^{\frac{1+\marginExponent}{2+\marginExponent}},(1-\expansivityConstant)^{1 + \marginExponent}\biggr\} + \biggl(\frac{\Delta}{\expansivityConstant}\biggr)^{1+\marginExponent}, \\
\secondTerm_{\numTarget,\delta} &:= \biggl(\frac{b_{\delta}}{\numTarget}\biggr)^{\frac{\holderExponent \targetTailExponent (1+\marginExponent) }{\targetTailExponent(2\holderExponent+\targetDimension)+\marginExponent \holderExponent}}, \qquad D_{\numSource,\numTarget,\delta} := \biggl(\frac{d_{\delta}}{\numTarget}\biggr)^{\frac{1+\marginExponent }{2+\marginExponent}},
\end{align*}
where $a_{0,\delta} := \log_+ (\numSource/\delta)$, $a_{1,\delta} := \log_+\bigl( L^* \ambientDimension\numSource/\delta\bigr)$, $b_{\delta} := \log_+ (\numTarget/\delta)$ and $d_{\delta} := \log_+ \bigl((\numSource + \numTarget)/\delta\bigr)$.  Then there exists $C_\theta > 0$, depending only on $\theta$, such that 
\begin{align}
\label{Eq:UpperBound}
\sup_{(P,Q) \in \mathcal{P}_{\theta^\sharp}} \mathbb{P}\biggl\{\excessRisk(\classifierEst_{\mathrm{ATL}}) > C_\theta \cdot \Bigl(\min \bigl( \firstTerm_{\numSource,\numTarget,\delta},\secondTerm_{\numTarget,\delta}\bigr) + D_{\numSource,\numTarget,\delta}\Bigr)\biggr\} \leq \delta.
\end{align}
\end{theorem}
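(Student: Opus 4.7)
The plan is a three-way decomposition: bound $\excessRisk(\classifierEst_{\mathrm{ATL}})$ by the best oracle risk over $\potentialFunctionSet^{\sourceDistribution} \cup \potentialFunctionSet^{\targetDistribution}$ plus a generalisation term arising from the ERM on $\targetSample^1$, and then separately exhibit oracles in each of the two families with excess risks matching the corresponding terms in~\eqref{Eq:UpperBound}. Conditionally on $(\sourceSample,\targetSample^0)$, the pool $\potentialFunctionSet^{\sourceDistribution} \cup \potentialFunctionSet^{\targetDistribution}$ is a deterministic family of at most $O\bigl((\numSource+\numTarget)^3\bigr)$ classifiers, because $(\robustness,L)$ ranges over a polynomial grid. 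Applying a Bernstein-type uniform deviation over this finite class, using the Tsybakov-type variance bound $\Var\bigl[\one_{\{f(X^\targetDistribution)\neq Y^\targetDistribution\}} - \one_{\{\bayesClassifier(X^\targetDistribution)\neq Y^\targetDistribution\}}\bigr] \leq C_\theta\cdot\excessRisk(f)^{\marginExponent/(1+\marginExponent)}$ that follows from Assumption~\ref{marginAssumption}, and then solving the induced quadratic inequality, gives, on an event of probability at least $1-\delta/3$,
\begin{equation*}
\excessRisk(\classifierEst_{\mathrm{ATL}}) \leq 2 \min_{f \in \potentialFunctionSet^{\sourceDistribution} \cup \potentialFunctionSet^{\targetDistribution}} \excessRisk(f) + C_\theta \cdot D_{\numSource,\numTarget,\delta},
\end{equation*}
which accounts for both the presence of $D_{\numSource,\numTarget,\delta}$ and its exponent $(1+\marginExponent)/(2+\marginExponent)$.

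For the target oracle, take $\robustness_\targetDistribution^* \asymp \log_+^{1/2}(\numTarget/\delta)$; the Lepski rule \eqref{Eq:lepskiKTarget} then pointwise selects a $k$ that realises the standard $k$-NN bias/variance balance for $\targetRegressionFunction$ under Hölder smoothness $\holderExponent$ and the target lower-density tail \eqref{Eq:targetMarginal}, and converting to excess risk via the margin gives $\excessRisk(\classifierEst^\targetDistribution_{\robustness_\targetDistribution^*}) \leq C_\theta \cdot \secondTerm_{\numTarget,\delta}$ with probability $\geq 1-\delta/3$. For the source oracle, take $\robustness_\sourceDistribution^* \asymp \log_+^{1/2}(\numSource/\delta)$ and $L = L^*$. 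The key ingredient is Proposition~\ref{prop:lowRegretWithTheRightDecisionTree}: conditionally on any fixed $\decisionTreeFunction \in \setOfDecisionTreeFunctionsL$ whose leaf values agree with $g_\ell(1/2)$ on each cell $\X_\ell^*$ up to some tolerance $\eta$, the Lepski-adapted sign $\one_{\{\empiricalMarginSourceDecisionTreeK(x) \geq 0\}}$ (with $k = \hat{k}$) matches $\bayesClassifier(x)$ except on the set where $|\targetRegressionFunction(x)-1/2| \leq C(\Delta + \eta)/\expansivityConstant$ plus the $k$-NN bias/variance threshold for $\sourceRegressionFunction$. Integrating against $\targetMarginalDistribution$ using the margin and the source lower-density tail \eqref{Eq:sourceMarginal} produces both the first term of $\firstTerm_{\numSource,\numTarget,\delta}$ and the $(\Delta/\expansivityConstant)^{1+\marginExponent}$ contribution. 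The $\targetSample^0$-ERM step~\eqref{ermOverDecisionTreesChoice} then selects an $\hat{\decisionTreeFunction}$ close to one such ideal decision tree: since the VC-type complexity of $\setOfDecisionTreeFunctionsL$ is $O\bigl(L\log(Ld\numSource)\bigr)$ (Lemma~\ref{countingDecisionTreesOnAFiniteSetLemma}), uniform convergence combined with the margin condition converts this into an excess risk contribution of order $\bigl(L^* a_{1,\delta}/\numTarget\bigr)^{(1+\marginExponent)/(2+\marginExponent)}$. The alternative bound $(1-\expansivityConstant)^{1+\marginExponent}$ arises by instead taking $L=0$, so $\decisionTreeFunction \equiv 1/2$: the expansivity~\eqref{Eq:gell} forces $|g_\ell(1/2) - 1/2| \leq (1-\expansivityConstant)/2$, so all sign errors lie in $\{|\targetRegressionFunction - 1/2| \lesssim (1-\expansivityConstant)/\expansivityConstant\}$, whose $\targetMarginalDistribution$-mass the margin controls.

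The main obstacle is the coupling between the two adaptive selections on the source side: $\hat{k}$ depends on the decision tree $\decisionTreeFunction$, while the ERM $\hat{\decisionTreeFunction}$ depends on $\hat{k}$ through the empirical margin $\empiricalMarginSourceDecisionTreeK$. The cleanest route is a two-stage conditioning: first work conditionally on $\sourceSample$ (freezing all the $k$-NN geometry and allowing Proposition~\ref{prop:lowRegretWithTheRightDecisionTree} to be applied \emph{uniformly} in $\decisionTreeFunction$), and then release $\targetSample^0$ to analyse the ERM via a VC-based uniform convergence over $\setOfDecisionTreeFunctionsL$. A final union bound over the three high-probability events — the target Lepski guarantee, the source oracle guarantee, and the $\targetSample^1$ ERM generalisation — combined with the decomposition above delivers the overall claim at confidence $\delta$.
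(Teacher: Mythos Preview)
Your proposal is correct and follows the paper's approach: the final ERM on $\targetSample^1$ yields the $D_{\numSource,\numTarget,\delta}$ term via the margin-adapted Bernstein bound (Proposition~\ref{ermWithMarginCondition}), and the three oracle classifiers you identify (target Lepski, source with $L=L^*$, source with $L=0$) are bounded exactly as in Corollary~\ref{knnOnFirstHalfTargetDataCor}, Proposition~\ref{lemma:lowRegretWithTheRightNumLeavesAndRobustness}, and Corollary~\ref{justUseHalfCor} respectively. One small point of precision in your last paragraph: Proposition~\ref{prop:lowRegretWithTheRightDecisionTree} is not applied \emph{uniformly} in $\decisionTreeFunction$ after conditioning on $\sourceSample$ (its probability is over $\sourceSample$, so that would be vacuous); rather, conditioning on $\sourceSample$ is what makes the class $\{\classifierEst_{\robustness^*,\decisionTreeFunction}^{\sourceDistribution}: \decisionTreeFunction \in \setOfDecisionTreeFunctionsLStar\}$ finite and deterministic, so that the ERM bound over $\targetSample^0$ (Proposition~\ref{ermWithMarginCondition} with Corollary~\ref{countingPossibleKNNsCalibratedByADecisionTree}) reduces to the single oracle $h^*$, and Proposition~\ref{prop:lowRegretWithTheRightDecisionTree} is then applied to $h^*$ alone, unconditionally.
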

An important point to note is that the definition of $\classifierEst_{\mathrm{ATL}}$ does not depend on the confidence level $\delta$, yet the probabilistic guarantee in~\eqref{Eq:UpperBound} holds simultaneously over all such levels.  The terms $\firstTerm_{\numSource,\numTarget,\delta}$ and $\secondTerm_{\numTarget,\delta}$ are very closely related to $\firstTerm_{\numSource,\numTarget}^{\mathrm{U}}$ and $\secondTerm_{\numTarget}^{\mathrm{U}}$ in Theorem~\ref{Thm:Main}; indeed, the only changes are in the logarithmic factor.  Integrating the tail probability bound~\eqref{Eq:UpperBound} over $\delta \in (0,1)$ therefore reveals that in the primary regimes of interest, the upper bound in Theorem~\ref{Thm:Main} can be attained using an algorithm that is agnostic to $\Delta$, $\expansivityConstant$ and $L^*$.  Comparing Theorem~\ref{Thm:UpperBound} with the upper bound in Theorem~\ref{Thm:Main}, we see that there is an additional term $D_{\numSource,\numTarget,\delta}$.  This term only contributes when $\expansivityConstant$ is extremely close to 1 (i.e.~when $1-\expansivityConstant \ll \numTarget^{-1/(2+\marginExponent)}$ up to a logarithmic factor) and $\firstTerm_{\numSource,\numTarget,\delta} \ll \secondTerm_{\numTarget,\delta}$.  In this case, $\sourceRegressionFunction$ is very close to $\targetRegressionFunction$, and the upper bound in Theorem~\ref{Thm:Main} can be attained by applying a standard nearest-neighbour method to $\sourceSample$. 

We recall that the second term $\secondTerm_{\numTarget,\delta}$ in~\eqref{Eq:UpperBound} arises from ignoring $\sourceSample$ and performing  classification using $\targetSample$.  Our analysis here builds on prior work on error rates in $k$-nearest neighbour classification \citep[e.g.][]{kulkarni1995rates,hall2008choice,samworth2012optimal,chaudhuri2014rates,biau2015lectures,gadat2016classification,reeve2017minimax,cannings2020local}; see also the seminal early work by \citet{fix1951discriminatory}, \citet{cover1967nearest} and \citet{stone1977consistent}.  The main novelty in our arguments, however, is in obtaining the $\firstTerm_{\numSource,\numTarget,\delta}$ term, which quantifies the extent to which our algorithm can exploit $\sourceSample$ to classify data from $\targetDistribution$ more accurately than can be done with $\targetSample$ alone.  Here, we combine analyses of nearest neighbour classification (but using $\sourceSample$ instead of $\targetSample$) with a covering number argument for the number of possible decision trees on $\sourceSample$ \citep{scott06ddt,biau2013cellular,wager2015adaptive}, allowing for an approximation error.

\section{Conclusion}
\label{Sec:Conclusion}

In this paper, we have argued that transfer learning has great potential for practitioners in the modern data-rich era.  Frequently, there is an abundance of data that, while not arising from the target population, are still able to provide useful information about inferential questions of interest.  We have introduced a general framework to study this phenomenon in the context of binary classification, and have derived the optimal rates of convergence in this setting.  Moreover, we have shown that these optimal rates are attainable by a fully adaptive algorithm that takes only our source and target data as inputs.

The scope of transfer learning is very wide indeed, encompassing not only other forms of transfer relationship and data acquisition mechanisms, but also alternative learning tasks such as regression, density estimation and clustering.  We therefore look forward to future developments in this field. 

\section{Proofs of Theorem~\ref{Thm:UpperBound} and upper bound in Theorem~\ref{Thm:Main}}
\label{Sec:UpperBound}

The proof of Theorem~\ref{Thm:UpperBound} is split into two subsections: the first controls the contribution to the excess test error of a data-dependent classifier calibrated via a given decision tree, while the second handles the additional error incurred in choosing the decision tree and other tuning parameters via empirical risk minimisation.  Both subsections require several intermediate results.

\subsection{Excess test error of decision tree-calibrated nearest neighbour classifiers}

\newcommand{\optimalDecisionTreeFunction}{h^*}
\newcommand{\classifierWithOptimalDecisionTreeFunction}{\classifierEst_{\robustness^*,\optimalDecisionTreeFunction}^{\sourceDistribution}}
\newcommand{\classifierWithGeneralDecisionTreeFunction}{\classifierEst_{\robustness^*,h}^{\sourceDistribution}}
\newcommand{\classifierWithhzeroDecisionTreeFunction}{\classifierEst_{\robustness^*,h_0}^{\sourceDistribution}}

We introduce some additional terminology.  Given $\sigma > 0$, $L \in \N_0$ and  $\decisionTreeFunction \in \setOfDecisionTreeFunctionsL$, define $\classifierEst_{\robustness,\decisionTreeFunction}^{\sourceDistribution} \in \setofDDClassifiers$ by 
\begin{align}\label{defOfKnnClassifierForSingleDTreeAndRobustness}
\classifierEst_{\robustness,\decisionTreeFunction}^{\sourceDistribution}(x):=\mathbbm{1}_{\bigl\{\empiricalMargin^{\sourceDistribution}_{\hat{k},\decisionTreeFunction}(x) \geq 0\bigr\}},
\end{align}
where $\empiricalMarginSourceDecisionTreeK(\cdot)$ and $\hat{k} \equiv \lepskiChoiceOfKSource(\cdot)$ are defined in~\eqref{Eq:marginFuncSource} and~\eqref{Eq:lepskiKSource} respectively.  Note that $\classifierEst_{\robustness,\decisionTreeFunction}^{\sourceDistribution}(x)$ is measurable with respect to the sigma algebra generated by $\sourceSample$, for every $x \in \R^{\ambientDimension}$. Proposition~\ref{prop:lowRegretWithTheRightDecisionTree} below is the main result of this subsection, and provides a high-probability bound for the excess test error of $\classifierEst_{\robustness,\decisionTreeFunction}^{\sourceDistribution}$ for a particular choice of $\robustness$ and a general decision tree $\decisionTreeFunction$.  It will be applied three times in the proof of Theorem~\ref{Thm:UpperBound}.
\begin{proposition}\label{prop:lowRegretWithTheRightDecisionTree} Let $\numSource \in \N$.  Fix $\theta^\sharp = (\Delta,\expansivityConstant,L^*,\theta) \in \Theta^\sharp$, where $\theta = (\targetDimension,\targetTailExponent,\sourceDimension,  \sourceTailExponent,\TailConstant,\marginExponent,\marginConstant,\holderExponent,\holderConstant)$, with $\holderExponent/(2\holderExponent+\sourceDimension)<\sourceTailExponent$, and $(P,Q) \in \mathcal{P}_{\theta^\sharp}$.  For $\decisionTreeFunction \in \setOfDecisionTreeFunctionsLStar \cup \mathcal{H}_0$, let 
\[
\Delta_h := \Delta + \max_{\ell \in [L^*]} \sup_{x \in \X_\ell^*} |h(x) - g_{\ell}(1/2)|.
\]
Then there exists $\tilde{C}_{\theta} > 0$, depending only on $\theta$, such that for every $\delta \in (0,1)$, if we set $\robustness^*=  \min\bigl\{ \lceil 3 \log_+^{1/2}(\numSource/\delta)\rceil, \numSource\bigr\}$, then 
\begin{align}
\label{Eq:ProbBound}
\mathbb{P}\biggl[\excessRisk\bigl(\classifierWithGeneralDecisionTreeFunction\bigr) > \tilde{C}_{\theta}\biggl\{\biggl(\frac{\log_+ (\numSource/\delta)}{\expansivityConstant^2 \cdot \numSource}\biggr)^{\frac{\holderExponent \sourceTailExponent (1+\marginExponent) }{\marginExponent \holderExponent + \sourceTailExponent(2\holderExponent+\sourceDimension)}}+ \biggl(\frac{\Delta_h}{\expansivityConstant}\biggr)^{1+\marginExponent}\biggr\}\biggr] \leq \delta.
\end{align}
\end{proposition}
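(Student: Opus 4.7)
The plan is to combine a margin-based decomposition of the excess risk with a pointwise high-probability analysis of the Lepski-calibrated estimate $\empiricalMargin^\sourceDistribution_{\hat{k},h}$, exploiting the expansivity condition~\eqref{Eq:gell} to convert the target-side margin into a source-side signal. The key observation is that for every cell $\X_\ell^*$ and every $y\in\X_\ell^*$, Assumption~\ref{transferAssumption} and the definition of $\Delta_h$ give
\[
\sourceRegressionFunction(y)-h(y) = \bigl(g_\ell(\targetRegressionFunction(y))-g_\ell(1/2)\bigr) + \varepsilon(y),\qquad |\varepsilon(y)|\leq 2\Delta_h,
\]
so by~\eqref{Eq:gell} the left-hand side has the same sign as $\targetRegressionFunction(y)-1/2$ with magnitude at least $\expansivityConstant|\targetRegressionFunction(y)-1/2|-2\Delta_h$, \emph{uniformly across cells}. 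In particular, even when the nearest source neighbours of $x\in\X_\ell^*$ lie outside $\X_\ell^*$, the same bound applies with the same $\expansivityConstant$ and $\Delta_h$, so only the H\"older smoothness of $\targetRegressionFunction$ (Assumption~\ref{holderContinuityAssumption}), not of $\sourceRegressionFunction$ or of any $g_\ell$, is needed to compare $\targetRegressionFunction(X^\sourceDistribution_{(i)}(x))$ with $\targetRegressionFunction(x)$.

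First, I would use~\eqref{def:excessRisk} and Assumption~\ref{marginAssumption} to write, for any $t>0$,
\[
\excessRisk(\classifierWithGeneralDecisionTreeFunction) \leq 2\marginConstant\, t^{1+\marginExponent} + 2\int_{\{|\targetRegressionFunction-1/2|>t\}}|\targetRegressionFunction(x)-1/2|\,\one_{\{\classifierWithGeneralDecisionTreeFunction(x)\neq\bayesClassifier(x)\}}\,d\targetMarginalDistribution(x),
\]
and, for each $x$ in the integration domain, introduce two events: $\neighboursAreCloseEventNoDelta(x)$, that the $k$-th nearest source neighbour of $x$ lies within $r_k(x)\asymp(k/(\numSource\omega_{\sourceMarginalDistribution,\sourceDimension}(x)))^{1/\sourceDimension}$ for every $k\in[\numSource]$, which holds up to binomial concentration via Assumption~\ref{sourceDensityAssumption}; and $\hoeffdingTypeEventNoDelta(x)$, that $\bigl|\empiricalMargin^\sourceDistribution_{k,h}(x)-\mathbb{E}[\empiricalMargin^\sourceDistribution_{k,h}(x)\mid X^\sourceDistribution_{(1)}(x),\ldots,X^\sourceDistribution_{(k)}(x)]\bigr|\leq\robustness^*/\sqrt{k}$ for every $k\in[\numSource]$, which holds via Hoeffding and a union bound over $k$ given $\robustness^*\asymp\log_+^{1/2}(\numSource/\delta)$. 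A covering argument over the polynomially many distinct $k$-nearest-neighbour sets elevates these pointwise statements to a uniform event of $\mathbb{P}$-probability at least $1-\delta$.

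Second, on $\neighboursAreCloseEventNoDelta(x)\cap\hoeffdingTypeEventNoDelta(x)$, the bound from the first paragraph combined with Assumption~\ref{holderContinuityAssumption} yields
\[
\bigl|\mathbb{E}[\empiricalMargin^\sourceDistribution_{k,h}(x)\mid X^\sourceDistribution_{(1)}(x),\ldots,X^\sourceDistribution_{(k)}(x)]\bigr|\geq \expansivityConstant\,|\targetRegressionFunction(x)-1/2|-\expansivityConstant\holderConstant\, r_k(x)^\holderExponent-2\Delta_h
\]
with the correct sign. A standard Lepski analysis of~\eqref{Eq:lepskiKSource}, balancing $\robustness^*/\sqrt{k}$ against $\expansivityConstant\holderConstant\, r_k^\holderExponent$ at the ideal bandwidth $k^*(x)\asymp(\numSource\omega_{\sourceMarginalDistribution,\sourceDimension}(x))^{2\holderExponent/(2\holderExponent+\sourceDimension)}(\robustness^*/\expansivityConstant)^{2\sourceDimension/(2\holderExponent+\sourceDimension)}$, gives $\lepskiChoiceOfKSource(x)\geq k^*(x)$ and $|\empiricalMargin^\sourceDistribution_{\hat{k},h}(x)-(\sourceRegressionFunction(x)-h(x))|\lesssim\robustness^*/\sqrt{k^*(x)}$. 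Consequently, on the good events, $\classifierWithGeneralDecisionTreeFunction(x)\neq\bayesClassifier(x)$ forces
\[
\expansivityConstant|\targetRegressionFunction(x)-1/2|\lesssim\Delta_h+\frac{(\robustness^*)^{2\holderExponent/(2\holderExponent+\sourceDimension)}}{\expansivityConstant^{\sourceDimension/(2\holderExponent+\sourceDimension)}\bigl(\numSource\omega_{\sourceMarginalDistribution,\sourceDimension}(x)\bigr)^{\holderExponent/(2\holderExponent+\sourceDimension)}}.
\]

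Finally I would integrate the bad set against $d\targetMarginalDistribution$ via a layer-cake argument: the factor $\omega_{\sourceMarginalDistribution,\sourceDimension}(x)^{-\holderExponent(1+\marginExponent)/(2\holderExponent+\sourceDimension)}$ is integrated against the tail bound~\eqref{Eq:sourceMarginal}, and the hypothesis $\holderExponent/(2\holderExponent+\sourceDimension)<\sourceTailExponent$ secures convergence, producing the exponent $\holderExponent\sourceTailExponent(1+\marginExponent)/(\sourceTailExponent(2\holderExponent+\sourceDimension)+\marginExponent\holderExponent)$ in $\numSource$. Optimising $t$ then delivers the first term of~\eqref{Eq:ProbBound}, while the $(\Delta_h/\expansivityConstant)^{1+\marginExponent}$ contribution arises from the $\Delta_h/\expansivityConstant$ floor on the effective source-side margin. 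The main technical obstacle will be the Lepski step: showing that the data-driven threshold $\robustness^*/\sqrt{r}$ uniformly dominates the stochastic fluctuations while remaining small enough to guarantee $\hat{k}\geq k^*$, simultaneously over $r\in[\numSource]$ and over the location-dependent ideal bandwidth $k^*(x)$, and doing so without losing the crucial multiplicative factor of $\expansivityConstant$ that makes transfer learning effective in this framework.
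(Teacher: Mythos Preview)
Your overall architecture matches the paper's, but two steps need repair. First, the Lepski conclusion is reversed: the rule~\eqref{Eq:lepskiKSource} sets $\hat{k}$ to the \emph{first} index at which $|\hat{m}_{k,h}^{\sourceDistribution}(x)|$ exceeds $\robustness^*/\sqrt{k}$, and on the good events the paper (Lemma~\ref{lemma:gapMustBeLargeToMakeAnError}) shows $\hat{m}_{k^*,h}^{\sourceDistribution}(x)>\robustness^*/\sqrt{k^*}$, forcing $\hat{k}\le k^*$, not $\hat{k}\ge k^*$. Because $\hat{k}$ may be much smaller than $k^*$ when the signal is strong, there is no bound of the form $|\hat{m}_{\hat{k},h}^{\sourceDistribution}(x)-(\sourceRegressionFunction(x)-h(x))|\lesssim\robustness^*/\sqrt{k^*}$: the fluctuation at $\hat{k}$ is of order $\robustness^*/\sqrt{\hat{k}}$, which can be large. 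The paper replaces your estimation-error step with a \emph{sign} argument: it proves $\hat{m}_{k,h}^{\sourceDistribution}(x)>-\robustness^*/\sqrt{k}$ for every $k\le k^*$, so that at $\hat{k}$, where $|\hat{m}_{\hat{k},h}^{\sourceDistribution}(x)|>\robustness^*/\sqrt{\hat{k}}$ by construction, the sign must agree with that of $\targetRegressionFunction(x)-1/2$. The resulting margin threshold reads $|\targetRegressionFunction(x)-1/2|\gtrsim \bigl(\log_+(\numSource/\delta)/(\expansivityConstant^2 \numSource\,\omega_{\sourceMarginalDistribution,\sourceDimension}(x))\bigr)^{\holderExponent/(2\holderExponent+\sourceDimension)}+\Delta_h/\expansivityConstant$; note that $\expansivityConstant^2$ sits inside the bracket, so your displayed $\expansivityConstant$-exponent is off.

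Second, the ``covering argument over polynomially many $k$-NN sets'' does not uniformise the localisation event $E_1$: that event compares $\|X_{(k)}^{\sourceDistribution}(x)-x\|$ with a threshold depending on $\omega_{\sourceMarginalDistribution,\sourceDimension}(x)$, and both sides vary continuously with $x$, so they are not constant on any nearest-neighbour-ordering partition of $\R^d$. The paper sidesteps uniformity entirely: defining $A_\delta(\sourceSample):=\{x:E_1^{\delta'}(x)\cap E_2^{\delta'}(x)\text{ holds}\}$ with $\delta'=\delta/(2\numSource^{1+\marginExponent})$, it uses Markov plus Fubini (Lemma~\ref{lemma:sourceKnnPeformingWellSubsetIsLarge}) to obtain $\Pr\bigl\{\targetMarginalDistribution(A_\delta(\sourceSample)^c)\ge \numSource^{-(1+\marginExponent)}\bigr\}\le\delta$, and the residual mass $\numSource^{-(1+\marginExponent)}$ is absorbed into the final bound. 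With these two fixes in place, your layer-cake integration over levels of $\omega_{\sourceMarginalDistribution,\sourceDimension}$ is exactly the paper's dyadic decomposition into the sets $\mathcal{T}_j$, and the hypothesis $\holderExponent/(2\holderExponent+\sourceDimension)<\sourceTailExponent$ is precisely what makes the resulting geometric sum converge.
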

The first term in the probability bound in~\eqref{Eq:ProbBound} corresponds to the difficulty of estimating~$\sourceRegressionFunction$, while, in the second term, $\Delta_h$ quantifies the approximation error of the decision tree function $h$.  The proof of Proposition~\ref{prop:lowRegretWithTheRightDecisionTree} is given after several preliminary lemmas. 

For $\delta \in (0,1)$ and $x \in \R^d$ with $\sourceDensityFunction(x) > 0$, we define the event 
\begin{align*}
\neighboursAreCloseEventDelta:=\underset{4\log_+(\numSource/\delta)\leq k < \numSource \cdot \sourceDensityFunction(x)/2}{\underset{k \in [\numSource]}{\bigcap}}\biggl\{ \left\|X_{(k)}^{\sourceDistribution}(x)-x\right\|  \leq \left(\frac{2k}{\numSource \cdot \sourceDensityFunction(x)}\right)^{{1}/{\sourceDimension}}\biggr\}.
\end{align*}

\begin{lemma}\label{lemma:neighboursClose} Let $\numSource \in \N$ and $(P,Q) \in \mathcal{P}_{\theta^\sharp}$.  For $x \in \R^{\ambientDimension}$ with $\sourceDensityFunction(x)>0$, we have $\Prob\bigl(\neighboursAreCloseEventDelta^c\bigr) \leq \delta$.
\end{lemma}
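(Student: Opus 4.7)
The plan is to translate the nearest-neighbour event into a binomial tail event and apply a multiplicative Chernoff bound, followed by a union bound. Write $\omega := \sourceDensityFunction(x) = \omega_{\sourceMarginalDistribution,\sourceDimension}(x) > 0$. For each integer $k$ with $4\log_+(\numSource/\delta)\leq k < \numSource\omega/2$, set
\[
r_k := \left(\frac{2k}{\numSource\omega}\right)^{1/\sourceDimension}.
\]
The upper bound on $k$ gives $r_k < 1$, so the definition of the lower density in~\eqref{Eq:omega} yields $\sourceMarginalDistribution\bigl(B_{r_k}(x)\bigr) \geq \omega \cdot r_k^{\sourceDimension} = 2k/\numSource$. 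This is the crucial inequality that lets us relate nearest-neighbour distances to a well-behaved binomial count.

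Next, I would introduce $N_k := \sum_{i=1}^{\numSource} \mathbbm{1}_{\{X_i^{\sourceDistribution} \in B_{r_k}(x)\}}$, which is $\mathrm{Bin}\bigl(\numSource, p_k\bigr)$ with $p_k := \sourceMarginalDistribution\bigl(B_{r_k}(x)\bigr)$, and observe the tautological equivalence
\[
\bigl\{\|X_{(k)}^{\sourceDistribution}(x) - x\| > r_k\bigr\} = \{N_k < k\}.
\]
Since $\mathbb{E}[N_k] = \numSource p_k \geq 2k$, the multiplicative Chernoff bound (with deviation factor $1/2$) gives
\[
\mathbb{P}(N_k < k) \leq \mathbb{P}\bigl(N_k < \mathbb{E}[N_k]/2\bigr) \leq \exp\bigl(-\mathbb{E}[N_k]/8\bigr) \leq e^{-k/4}.
\]

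Finally, a union bound over all admissible $k$ completes the argument. For every $k \geq 4\log_+(\numSource/\delta)$ we have $e^{-k/4} \leq e^{-\log_+(\numSource/\delta)} \leq \delta/\numSource$ (using $\log_+ \geq 1$ and considering separately the cases $\numSource/\delta \geq e$ and $\numSource/\delta < e$, the latter forcing the range of $k$ to be empty when $\numSource$ is small). Summing over the at most $\numSource$ integers in the range gives
\[
\mathbb{P}\bigl(\neighboursAreCloseEventDelta^c\bigr) \leq \sum_{k=\lceil 4\log_+(\numSource/\delta)\rceil}^{\numSource} e^{-k/4} \leq \numSource \cdot \frac{\delta}{\numSource} = \delta,
\]
as required. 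The only genuine subtlety is the bookkeeping in the union-bound step to ensure the constant $4$ in the lower threshold on $k$ is sufficient; everything else is a direct application of the lower density inequality and a standard binomial concentration bound, so no serious obstacle is anticipated.
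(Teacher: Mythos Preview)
Your proof is correct and follows essentially the same approach as the paper: translate the nearest-neighbour event into a binomial lower-tail event via the lower-density bound $\mu_P\bigl(B_{r_k}(x)\bigr)\geq 2k/\numSource$, apply a multiplicative Chernoff bound to obtain $e^{-k/4}\leq \delta/\numSource$, and finish with a union bound over the admissible $k$. The only quibble is that your ``tautological equivalence'' is really the inclusion $\bigl\{\|X_{(k)}^{\sourceDistribution}(x)-x\|>r_k\bigr\}\subseteq\{N_k<k\}$ (equality can fail on a null event if some $X_i^{\sourceDistribution}$ lies exactly at distance $r_k$), but this is the direction you need and does not affect the argument.
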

\begin{proof} Suppose that $k \in [\numSource]$ satisfies $4\log_+(\numSource/\delta)\leq k < \numSource \cdot \sourceDensityFunction(x)/2$, and let $r \equiv r_k := \bigl\{2k/\bigl(\numSource \cdot \sourceDensityFunction(x)\bigr)\bigr\}^{1/\sourceDimension}$. Since $r<1$, we have
\begin{align*}
\sourceMarginalDistribution\bigl(B_r(x)\bigr) \geq \sourceDensityFunction(x) \cdot r^{\sourceDimension} = \frac{2k}{\numSource}.    
\end{align*}
Hence, by the multiplicative Chernoff bound \citep[Theorem 2.3(c)]{mcdiarmid1998concentration}, we have
\begin{align*}
\Prob\biggl\{\left\|X_{(k)}^{\sourceDistribution}(x)-x\right\|  > \left(\frac{2k}{\numSource \cdot \sourceDensityFunction(x)}\right)^{{1}/{\sourceDimension}}\biggr\} & \leq \Prob\biggl\{\sum_{i=1}^{\numSource} \mathbbm{1}_{\{X_i^\sourceDistribution \in B_r(x)\}} <k\biggr\}\\
&\leq \Prob\biggl\{\sum_{i=1}^{\numSource} \mathbbm{1}_{\{X_i^\sourceDistribution \in B_r(x)\}} < \frac{\numSource}{2} \cdot \sourceMarginalDistribution\bigl(B_r(x)\bigr)  \biggr\}\\
&\leq e^{-\numSource \cdot \sourceMarginalDistribution(B_r(x))/8} \leq e^{-k/4} \leq \frac{\delta}{\numSource}. 
\end{align*}
The conclusion of the lemma now follows by a union bound.
\end{proof}

\begin{lemma}\label{lemma:targetRegFunctionAtNeighboursClose} Let $\numSource \in \N$, let $(\sourceDistribution,\targetDistribution) \in \mathcal{P}_{\theta^\sharp}$ and let $x \in \R^{\ambientDimension}$ be such that $\sourceDensityFunction(x)>0$.  On the event $\neighboursAreCloseEventDelta$, we have that 
\begin{align}\label{conclusionOfLemmaTargetRegFunctionAtNeighboursClose}
\max_{i \in [k]}\left|\targetRegressionFunction\left(X_{(i)}^{\sourceDistribution}(x)\right)-\targetRegressionFunction(x)\right|  <\holderConstant \cdot \left(\frac{2\cdot \max\{k,\lceil 4 \log_+(\numSource/\delta)\rceil\}}{\numSource \cdot \sourceDensityFunction(x)}\right)^{{\holderExponent}/{\sourceDimension}}
\end{align}
for all $k \in [\numSource]$. 
\end{lemma}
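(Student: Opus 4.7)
The plan is to combine the Hölder smoothness of $\targetRegressionFunction$ (Assumption~\ref{holderContinuityAssumption}) with the monotonicity of nearest-neighbour distances and the defining bound of $\neighboursAreCloseEventDelta$, applied at an appropriately padded index. Let $k_0 := \lceil 4 \log_+(\numSource/\delta) \rceil$, and for each $k \in [\numSource]$ set $k^* := \max\{k, k_0\}$ and $r^* := \bigl(2k^*/(\numSource \cdot \sourceDensityFunction(x))\bigr)^{1/\sourceDimension}$. By Assumption~\ref{holderContinuityAssumption}, for every $i \in [k]$,
\[
\bigl|\targetRegressionFunction\bigl(X_{(i)}^{\sourceDistribution}(x)\bigr) - \targetRegressionFunction(x)\bigr| \leq \holderConstant \cdot \bigl\|X_{(i)}^{\sourceDistribution}(x) - x\bigr\|^{\holderExponent}.
\]
Since the $j$th nearest-neighbour distances are monotone non-decreasing in $j$ and $i \leq k \leq k^*$, it suffices to show that $\bigl\|X_{(k^*)}^{\sourceDistribution}(x) - x\bigr\| \leq r^*$; raising to the $\holderExponent$ power and multiplying by $\holderConstant$ then gives the claim.

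I would then split into two cases according to whether $r^* < 1$. In the main case $r^* < 1$, which is equivalent to $k^* < \numSource \cdot \sourceDensityFunction(x)/2$, the index $k^*$ satisfies $4 \log_+(\numSource/\delta) \leq k_0 \leq k^* < \numSource \cdot \sourceDensityFunction(x)/2$, so it lies in the range appearing in the definition of $\neighboursAreCloseEventDelta$. On this event one reads off $\bigl\|X_{(k^*)}^{\sourceDistribution}(x) - x\bigr\| \leq r^*$ directly, giving~\hype{conclusionOfLemmaTargetRegFunctionAtNeighboursClose}. In the trivial case $r^* \geq 1$, one uses that $\targetRegressionFunction$ takes values in $[0,1]$ and $\holderConstant \geq 1$ to write $\holderConstant \cdot (r^*)^{\holderExponent} \geq 1 \geq \bigl|\targetRegressionFunction(X_{(i)}^{\sourceDistribution}(x)) - \targetRegressionFunction(x)\bigr|$, so the inequality holds without appealing to $\neighboursAreCloseEventDelta$ at all.

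I do not anticipate any substantive obstacle: the proof is essentially a bookkeeping exercise combining Hölder continuity, nearest-neighbour monotonicity, and the $\neighboursAreCloseEventDelta$ bound. The only wrinkle to be careful with is that $\neighboursAreCloseEventDelta$ only constrains $\bigl\|X_{(k)}^{\sourceDistribution}(x) - x\bigr\|$ for indices above the threshold $4 \log_+(\numSource/\delta)$, which is exactly why one must pad $k$ up to $k^* = \max\{k, k_0\}$ before invoking the event; the asymmetry of the bound around this threshold is what produces the $\max\{k, \lceil 4 \log_+(\numSource/\delta)\rceil\}$ on the right-hand side of~\hype{conclusionOfLemmaTargetRegFunctionAtNeighboursClose}.
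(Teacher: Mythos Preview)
Your proposal is correct and follows essentially the same approach as the paper: combine H\"older continuity with the event $\neighboursAreCloseEventDelta$ applied at the padded index $k^* = \max\{k,\lceil 4\log_+(\numSource/\delta)\rceil\}$, handling the remaining range via the trivial bound $|\targetRegressionFunction(\cdot)-\targetRegressionFunction(x)|\leq 1 \leq \holderConstant (r^*)^{\holderExponent}$. The paper organises this into three cases (large $k$, intermediate $k$, small $k$) rather than your two-case split on whether $r^*<1$, but the content is identical.
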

\begin{proof}
First, if $\numSource \cdot \sourceDensityFunction(x)/2 \leq k \leq \numSource$ or $\lceil 4\log_+(\numSource/\delta)\rceil > \numSource$, then the result follows from the fact that the right-hand side of (\ref{conclusionOfLemmaTargetRegFunctionAtNeighboursClose}) is at least 1.   
Second, if $4\log_+(\numSource/\delta)\leq k < \numSource \cdot \sourceDensityFunction(x)/2$, then~(\ref{conclusionOfLemmaTargetRegFunctionAtNeighboursClose}) follows from the definition of $\neighboursAreCloseEventDelta$ combined with Assumption~\ref{holderContinuityAssumption}. Finally, if $k < \lceil 4\log_+(\numSource/\delta)\rceil \leq \numSource$, then on $\neighboursAreCloseEventDelta$,
\begin{align*}
\max_{i \in [k]} \left|\targetRegressionFunction\left(X_{(i)}^{\sourceDistribution}(x)\right)-\targetRegressionFunction(x)\right|
&\leq \max_{i \in [ \min\{\lceil 4\log_+(\numSource/\delta)\rceil, \numSource\}]}\left|\targetRegressionFunction\left(X_{(i)}^{\sourceDistribution}(x)\right)-\targetRegressionFunction(x)\right| \\
&\leq \holderConstant \cdot \left(\frac{2\cdot \lceil 4 \log_+(\numSource/\delta)\rceil}{\numSource \cdot \sourceDensityFunction(x)}\right)^{{\holderExponent}/{\sourceDimension}}\\
&= \holderConstant \cdot \left(\frac{2\cdot \max\{k,\lceil 4 \log_+(\numSource/\delta)\rceil\}}{\numSource \cdot \sourceDensityFunction(x)}\right)^{{\holderExponent}/{\sourceDimension}},
\end{align*}
where the second inequality follows from the first two cases applied to $\lceil 4\log_+(\numSource/\delta)\rceil$.
\end{proof}

For $\delta \in (0,1)$ and $x \in \R^d$, we now define another event
\begin{align*}
\hoeffdingTypeEventDelta:= \bigcap_{k=1}^{\numSource} \biggl\{ \frac{1}{k}\sum_{i=1}^k \left[Y_{(i)}^{\sourceDistribution}(x)- \sourceRegressionFunction\left(X_{(i)}^{\sourceDistribution}(x)\right)\right] \leq \sqrt{\frac{\log_+(\numSource/\delta)}{2k}}
\biggr\}.
\end{align*}

\begin{lemma}\label{hoeffdingTypeLemmaSourceSample} Let $\numSource \in \N$ and $(\sourceDistribution,\targetDistribution) \in \mathcal{P}_{\theta^\sharp}$.  For every $\delta \in (0,1)$ and $x \in \R^d$, we have $\Prob\bigl(\hoeffdingTypeEventDelta^c\bigr) \leq \delta$.
\end{lemma}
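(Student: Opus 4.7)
The plan is to reduce the statement to Hoeffding's inequality by conditioning on the covariates $X_1^{\sourceDistribution},\ldots,X_{\numSource}^{\sourceDistribution}$ of the source sample, and then to remove the conditioning and apply a union bound over $k \in [\numSource]$.

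First I would observe that the indexing used to define the nearest neighbours $X_{(i)}^{\sourceDistribution}(x)$ is a (possibly tie-breaking) deterministic function of $X_1^{\sourceDistribution},\ldots,X_{\numSource}^{\sourceDistribution}$ and $x$ alone; it does not depend on the labels. Hence there is a (random) permutation $\pi$ of $[\numSource]$, measurable with respect to $\sigma(X_1^{\sourceDistribution},\ldots,X_{\numSource}^{\sourceDistribution})$, such that $X_{(i)}^{\sourceDistribution}(x) = X_{\pi(i)}^{\sourceDistribution}$ and $Y_{(i)}^{\sourceDistribution}(x) = Y_{\pi(i)}^{\sourceDistribution}$ for every $i \in [\numSource]$. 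Since the pairs $(X_i^{\sourceDistribution},Y_i^{\sourceDistribution})$ are independent with $Y_i^{\sourceDistribution} \mid X_i^{\sourceDistribution} \sim \mathrm{Bernoulli}\bigl(\sourceRegressionFunction(X_i^{\sourceDistribution})\bigr)$, it follows that, conditionally on $X_1^{\sourceDistribution},\ldots,X_{\numSource}^{\sourceDistribution}$, the labels $Y_{(1)}^{\sourceDistribution}(x),\ldots,Y_{(\numSource)}^{\sourceDistribution}(x)$ are independent, with $Y_{(i)}^{\sourceDistribution}(x)$ having (conditional) mean $\sourceRegressionFunction\bigl(X_{(i)}^{\sourceDistribution}(x)\bigr)$, and each lying in $[0,1]$.

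Next, fix $k \in [\numSource]$. Conditionally on $X_1^{\sourceDistribution},\ldots,X_{\numSource}^{\sourceDistribution}$, the summands $Y_{(i)}^{\sourceDistribution}(x)-\sourceRegressionFunction\bigl(X_{(i)}^{\sourceDistribution}(x)\bigr)$ for $i \in [k]$ are independent, mean-zero, and take values in a range of length one. Hoeffding's inequality therefore yields
\[
\Prob\biggl\{\frac{1}{k}\sum_{i=1}^k \Bigl[Y_{(i)}^{\sourceDistribution}(x)- \sourceRegressionFunction\bigl(X_{(i)}^{\sourceDistribution}(x)\bigr)\Bigr] > \sqrt{\frac{\log_+(\numSource/\delta)}{2k}} \,\bigg|\, X_1^{\sourceDistribution},\ldots,X_{\numSource}^{\sourceDistribution}\biggr\} \leq e^{-\log_+(\numSource/\delta)} \leq \frac{\delta}{\numSource}.
\]
Taking expectations over $X_1^{\sourceDistribution},\ldots,X_{\numSource}^{\sourceDistribution}$ removes the conditioning, and a union bound over $k \in [\numSource]$ then gives $\Prob\bigl(\hoeffdingTypeEventDelta^c\bigr) \leq \delta$, as required.

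There is essentially no obstacle here: the only subtlety worth flagging is that the nearest-neighbour ordering $\pi$ is a function of the covariates only, so that once we condition on $X_1^{\sourceDistribution},\ldots,X_{\numSource}^{\sourceDistribution}$ the reordered labels remain a collection of independent Bernoulli variables with the correct means, which is exactly what is needed to invoke Hoeffding's inequality cleanly.
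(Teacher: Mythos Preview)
Your proof is correct and follows essentially the same approach as the paper: condition on the source covariates so that the reordered labels are conditionally independent Bernoulli variables, apply Hoeffding's inequality for each $k$, then take expectations and a union bound over $k \in [\numSource]$. Your explicit justification via the permutation $\pi$ measurable with respect to $\sigma(X_1^{\sourceDistribution},\ldots,X_{\numSource}^{\sourceDistribution})$ is a nice clarification of a step the paper states more tersely.
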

\begin{proof} First note that conditional on $(X^{\sourceDistribution}_i)_{i \in [\numSource]}$, the labels $Y_{(1)}^{\sourceDistribution}(x),\ldots,Y_{(\numSource)}^{\sourceDistribution}(x)$ are independent Bernoulli random variables with respective means $\sourceRegressionFunction\bigl(X_{(1)}^{\sourceDistribution}(x)\bigr),\ldots,\sourceRegressionFunction\bigl(X_{(\numSource)}^{\sourceDistribution}(x)\bigr)$. Hence, by Hoeffding's inequality, for each $k \in [\numSource]$,
\begin{align*}
\Prob\biggl\{\frac{1}{k}\sum_{i=1}^k \left[Y_{(i)}^{\sourceDistribution}(x)- \sourceRegressionFunction\left(X_{(i)}^{\sourceDistribution}(x)\right)\right] > \sqrt{\frac{\log_+(\numSource/\delta)}{2k}}\hspace{2mm}\bigg|\hspace{2mm} (X^{\sourceDistribution}_i)_{i \in [\numSource]}\biggr\} \leq \frac{\delta}{\numSource}.
\end{align*}
The conclusion of the lemma follows by taking expectations over $(X^{\sourceDistribution}_i)_{i \in [\numSource]}$, and then a union bound over $k \in [\numSource]$.
\end{proof}

\begin{lemma}\label{lemma:gapMustBeLargeToMakeAnError} Let $\numSource \in \N$ and $(P,Q) \in \mathcal{P}_{\theta^\sharp}$.  Suppose that $x\in \R^{\ambientDimension}$ and $\decisionTreeFunction \in \setOfDecisionTreeFunctionsLStar \cup \mathcal{H}_0$ satisfy
\begin{align*}
\biggl| \targetRegressionFunction(x)-\frac{1}{2}\biggr| \geq 50\cdot \holderConstant  \left(\frac{\log_+(\numSource/\delta)}{\expansivityConstant^2 \cdot \numSource \cdot \sourceDensityFunction(x)}\right)^{\frac{\holderExponent}{2\holderExponent+\sourceDimension}}+\frac{2\Delta_h}{\expansivityConstant},
\end{align*}
and let $\robustness^*=  \min\bigl\{ \lceil 3 \log_+^{1/2}(\numSource/\delta)\rceil, \numSource\bigr\}$.  Then, on the event $\neighboursAreCloseEventDelta \cap \hoeffdingTypeEventDelta$, we have $\classifierWithGeneralDecisionTreeFunction(x)=  \bayesClassifier(x)$.
\end{lemma}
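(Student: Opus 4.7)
The plan is to decompose the empirical margin as $\hat{m}^P_{k,h}(x) = S_k + N_k$, where
\[
S_k := \frac{1}{k}\sum_{i=1}^k \bigl[\sourceRegressionFunction(X_{(i)}^P(x)) - h(X_{(i)}^P(x))\bigr], \quad N_k := \frac{1}{k}\sum_{i=1}^k \bigl[Y_{(i)}^P(x) - \sourceRegressionFunction(X_{(i)}^P(x))\bigr],
\]
and then to argue via Lepski's principle that the selected $\hat{k}$ lies in a range where $S_k$ dominates $N_k$ and the stopping threshold $\sigma^*/\sqrt{k}$, so that $\hat{m}^P_{\hat{k},h}(x)$ inherits the sign of $\targetRegressionFunction(x) - 1/2$. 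By applying Lemma~\ref{hoeffdingTypeLemmaSourceSample} to both $Y_{(i)}^P(x)$ and $1 - Y_{(i)}^P(x)$, I may treat $\hoeffdingTypeEventDelta$ as the two-sided event $|N_k| \leq \sqrt{\log_+(\numSource/\delta)/(2k)}$ for all $k \in [\numSource]$ (adjusting $\delta$ by at most a factor of two). Likewise, by symmetry between $Y$ and $1 - Y$, I may assume $\targetRegressionFunction(x) \geq 1/2$ and set $\Gamma := \targetRegressionFunction(x) - 1/2 \geq 0$; the goal becomes $\hat{m}^P_{\hat{k},h}(x) \geq 0$.

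For the signal, I write, for each $y = X_{(i)}^P(x) \in \X_\ell^*$,
\[
\sourceRegressionFunction(y) - h(y) = \bigl[\sourceRegressionFunction(y) - g_\ell(\targetRegressionFunction(y))\bigr] + \bigl[g_\ell(\targetRegressionFunction(y)) - g_\ell(1/2)\bigr] + \bigl[g_\ell(1/2) - h(y)\bigr].
\]
Assumption~\ref{transferAssumption} and the definition of $\Delta_h$ bound the first and third summands in absolute value by $\Delta$ and $\Delta_h - \Delta$ respectively. On $\neighboursAreCloseEventDelta$, Lemma~\ref{lemma:targetRegFunctionAtNeighboursClose} yields $|\targetRegressionFunction(X_{(i)}^P(x)) - \targetRegressionFunction(x)| \leq \varepsilon_k$, with $\varepsilon_k := \holderConstant\bigl(2\max\{k,\lceil 4\log_+(\numSource/\delta)\rceil\}/(\numSource \sourceDensityFunction(x))\bigr)^{\holderExponent/\sourceDimension}$. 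Whenever $\varepsilon_k < \Gamma$, every neighbour satisfies $\targetRegressionFunction(X_{(i)}^P(x)) \geq 1/2$, so~\eqref{Eq:gell} forces the middle summand to exceed $\expansivityConstant(\Gamma - \varepsilon_k)$, and averaging gives $S_k \geq \expansivityConstant(\Gamma - \varepsilon_k) - \Delta_h$.

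Next I take $k^\dagger$ to be the largest integer in $[\numSource]$ with $\varepsilon_{k^\dagger} \leq \Gamma/4$. Comparing the hypothesis $\Gamma \geq 50\,\holderConstant\bigl(\log_+(\numSource/\delta)/(\expansivityConstant^2 \numSource \sourceDensityFunction(x))\bigr)^{\holderExponent/(2\holderExponent+\sourceDimension)} + 2\Delta_h/\expansivityConstant$ against the balance equation $\expansivityConstant \varepsilon_k \asymp \sqrt{\log_+(\numSource/\delta)/k}$ (whose solution $k^\star$ yields $\sqrt{\log_+(\numSource/\delta)/k^\star} \asymp (\expansivityConstant \holderConstant)^{\sourceDimension/(2\holderExponent+\sourceDimension)}(\log_+(\numSource/\delta)/(\numSource \sourceDensityFunction(x)))^{\holderExponent/(2\holderExponent+\sourceDimension)}$), the constant $50$ provides enough slack to guarantee (i)~$k^\dagger \geq 1$ and $k^\star \leq k^\dagger$, (ii)~$\Delta_h \leq \expansivityConstant \Gamma/2$, and (iii)~both $\sqrt{\log_+(\numSource/\delta)/(2k^\dagger)}$ and $\sigma^*/\sqrt{k^\dagger}$ are much smaller than $\expansivityConstant \Gamma/4$ (here I use $\sigma^* \leq 3\log_+^{1/2}(\numSource/\delta) + 1$). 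Consequently, for every $k \leq k^\dagger$, $S_k \geq \expansivityConstant\Gamma/4$, and so $\hat{m}^P_{k,h}(x) \geq \expansivityConstant\Gamma/4 - \sqrt{\log_+(\numSource/\delta)/(2k)} > -\sigma^*/\sqrt{k}$, while at $k = k^\dagger$ the sharper inequality $\hat{m}^P_{k^\dagger,h}(x) > \sigma^*/\sqrt{k^\dagger}$ holds.

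The last inequality violates the Lepski condition~\eqref{Eq:lepskiKSource} at $r = k^\dagger$, which forces $\hat{k} \leq k^\dagger$. Combined with the uniform lower bound $\hat{m}^P_{\hat{k},h}(x) > -\sigma^*/\sqrt{\hat{k}}$, the stopping requirement $|\hat{m}^P_{\hat{k},h}(x)| > \sigma^*/\sqrt{\hat{k}}$ at $\hat{k}$ can only be achieved with positive sign, giving $\hat{m}^P_{\hat{k},h}(x) > 0$, hence $\classifierWithGeneralDecisionTreeFunction(x) = 1 = \bayesClassifier(x)$. The principal obstacle is reconciling, at a single choice $k^\dagger$, the competing demands that $\varepsilon_{k^\dagger}$ be small (bias $\leq \Gamma/4$) and that $\sigma^*/\sqrt{k^\dagger}$ also be small (so Lepski can detect the signal); the exponent $\holderExponent/(2\holderExponent+\sourceDimension)$ and the generous constant $50$ in the hypothesis are precisely calibrated to make both constraints compatible while absorbing the additive $\Delta_h/\expansivityConstant$ contribution from the decision-tree approximation error.
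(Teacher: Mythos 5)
Your proof follows essentially the same route as the paper's: the signal--noise decomposition of the empirical margin, bias control via Lemma~\ref{lemma:targetRegFunctionAtNeighboursClose} and the transfer-assumption inequality on the middle term, and a Lepski-type argument at a crossover index (your $k^\dagger$, defined by $\varepsilon_{k^\dagger}\le\Gamma/4$, plays the same role as the paper's $k^*$, which is instead pinned down by the bias--noise balance $\phi\varepsilon_k\asymp\sqrt{\log_+(\numSource/\delta)/k}$; under the stated hypothesis both choices yield $\hat{k}\le k^\dagger$ and strict positivity of $\hat m^P_{\hat k,h}(x)$). The one point worth flagging is that the event $\hoeffdingTypeEventDelta$ as written bounds the noise $N_k$ from above, whereas the case $\targetRegressionFunction(x)\ge 1/2+\epsilon$ analysed by both you and the paper requires the noise to be bounded from \emph{below}; your remedy of invoking a two-sided Hoeffding bound (at a factor-of-two cost in $\delta$) is the natural fix and matches how the event is used downstream, but note that this replaces $\hoeffdingTypeEventDelta$ with a strictly smaller event, so the conditional statement obtained is not verbatim the lemma's (which conditions on the one-sided event). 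This appears to reflect a sign-convention slip in the paper's definition of $\hoeffdingTypeEventDelta$ rather than a defect in your argument; aside from that bookkeeping issue, your proof is in order.
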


\newcommand{\smoothnessExponentSource}{\lambda_{\sourceDistribution}}

\begin{proof} For the purpose of the proof, we let
\begin{align*}
\epsilon :=50\cdot \holderConstant  \left(\frac{\log_+(\numSource/\delta)}{\expansivityConstant^2 \cdot \numSource \cdot \sourceDensityFunction(x)}\right)^{\frac{\holderExponent}{2\holderExponent+\sourceDimension}}+\frac{2\Delta_h}{\expansivityConstant},
\end{align*}
so that $|\targetRegressionFunction(x)-1/2|\geq \epsilon$ (in particular, this means that $\epsilon \leq 1/2$).  We consider only the case where $ \targetRegressionFunction(x)-1/2 \geq \epsilon$, since the case where $\targetRegressionFunction(x)-1/2\leq - \epsilon$ follows by symmetry.  We further suppose throughout the proof that the event $\neighboursAreCloseEventDelta \cap \hoeffdingTypeEventDelta$ holds.  Let $\ell^*$ denote the leaf function corresponding to $\{\X_1^*,\ldots,\X_{L^*}^*\} \in \thresholdPartitions_{L^*}$ in Assumption~\ref{transferAssumption}.  Fixing $k \in [\numSource]$, 
it then follows from Assumption~\ref{transferAssumption} and Lemma~\ref{lemma:targetRegFunctionAtNeighboursClose} that for all $i \in [k]$, and $h \in \setOfDecisionTreeFunctionsLStar \cup \mathcal{H}_0$, 
\begin{align*}
\sourceRegressionFunction(X_{(i)}^{\sourceDistribution}) - h(X_{(i)}^{\sourceDistribution}) &= \sourceRegressionFunction(X_{(i)}^{\sourceDistribution}) - g_{\ell^*(X_{(i)}^{\sourceDistribution})}\bigl(\targetRegressionFunction\bigl(X_{(i)}^{\sourceDistribution})\bigr) \\
&\hspace{0.9cm}+ g_{\ell^*(X_{(i)}^{\sourceDistribution})}\bigl(\targetRegressionFunction\bigl(X_{(i)}^{\sourceDistribution})\bigr) - g_{\ell^*(X_{(i)}^{\sourceDistribution})}(1/2) + g_{\ell^*(X_{(i)}^{\sourceDistribution})}(1/2) - h(X_{(i)}^{\sourceDistribution}) \\
&\geq \expansivityConstant \cdot \bigl\{\targetRegressionFunction(X_{(i)}^{\sourceDistribution})-1/2\bigr\} - \Delta - \max_{\ell \in [L^*]} \sup_{x \in \X_\ell^*} |h(x) - g_{\ell}(1/2)| \\
&\geq \expansivityConstant \cdot \bigl\{\targetRegressionFunction(X_{(i)}^{\sourceDistribution})-1/2\bigr\} - \frac{\expansivityConstant \cdot \epsilon}{2} \\
&\geq \expansivityConstant \cdot \biggl\{\frac{\epsilon}{2}-\holderConstant \cdot \left(\frac{2\cdot \max\{k,\lceil 4 \log_+(\numSource/\delta)\rceil\}}{\numSource \cdot \sourceDensityFunction(x)}\right)^{{\holderExponent}/{\sourceDimension}}\biggr\}.
\end{align*}
We deduce that
\begin{align}\label{eq:lowerBoundOnEmpiricalMarginForOptDecisionTreeVariableK}
\empiricalMargin_{k,h}^{\sourceDistribution}(x)&\equiv \frac{1}{k}\sum_{i=1}^k \bigl\{ Y_{(i)}^{\sourceDistribution}(x)- h(X_{(i)}^{\sourceDistribution}) \bigr\} \nonumber\\
& \geq \expansivityConstant \cdot \biggl\{\frac{\epsilon}{2}-\holderConstant \cdot \left(\frac{2\cdot \max\{k,\lceil 4 \log_+(\numSource/\delta)\rceil\}}{\numSource \cdot \sourceDensityFunction(x)}\right)^{{\holderExponent}/{\sourceDimension}}\biggr\} - \sqrt{\frac{\log_+(\numSource/\delta)}{2k}}
\end{align}
for all $k \in [\numSource]$.  Now define
\begin{align*}
k^* &:= \min \biggl\{k \in \N: \expansivityConstant\cdot \holderConstant \cdot \left(\frac{2k}{\numSource \cdot \sourceDensityFunction(x)}\right)^{{\holderExponent}/{\sourceDimension}} \geq  \sqrt{\frac{\log_+(\numSource/\delta)}{k}}\biggr\}.
\end{align*}
Since $\epsilon \leq 1/2$, we have $\numSource \geq \numSource \cdot \sourceDensityFunction(x) \geq (100C_S)^{(2\holderExponent+\sourceDimension)/\holderExponent} \expansivityConstant^{-2} \log_+(\numSource/\delta) \geq (100C_S)^{(2\holderExponent+\sourceDimension)/\holderExponent} \log_+(\numSource/\delta)$, so
\[
\lceil 4 \log_+(\numSource/\delta)\rceil < k^* \leq \bigg\lceil \biggl( \frac{ \log_+(\numSource/\delta)}{\expansivityConstant^2}\biggr)^{\frac{\sourceDimension}{2\holderExponent+\sourceDimension}}\cdot \bigl(\numSource \cdot \sourceDensityFunction(x)\bigr)^{\frac{2\holderExponent}{2\holderExponent+\sourceDimension}}\bigg\rceil \leq \numSource.
\]
Moreover, we also see that $\robustness^*= \min\bigl\{ \lceil 3 \log_+^{1/2}(\numSource/\delta)\rceil, \numSource\bigr\}= \lceil 3 \log_+^{1/2}(\numSource/\delta)\rceil$.  Hence by (\ref{eq:lowerBoundOnEmpiricalMarginForOptDecisionTreeVariableK}), we have
\begin{align}\label{Eq:kHatLeqKStar}
\empiricalMargin_{k^*,h}^{\sourceDistribution}(x) -\frac{\robustness^*}{\sqrt{k^*}} &>  \expansivityConstant \cdot \biggl\{\frac{\epsilon}{2}-\holderConstant \cdot \left(\frac{2k^*}{\numSource \cdot \sourceDensityFunction(x)}\right)^{{\holderExponent}/{\sourceDimension}}\biggr\}- 5\sqrt{\frac{\log_+(\numSource/\delta)}{k^*}} \nonumber\\
&\geq \expansivityConstant \cdot \biggl\{\frac{\epsilon}{2}-6\holderConstant \cdot \left(\frac{2k^*}{\numSource \cdot \sourceDensityFunction(x)}\right)^{{\holderExponent}/{\sourceDimension}}\biggr\} \nonumber\\
& = \expansivityConstant \cdot \biggl\{\frac{\epsilon}{2}-24 \holderConstant \cdot  \left(\frac{\log_+(\numSource/\delta)}{\expansivityConstant^2 \cdot \numSource \cdot \sourceDensityFunction(x)}\right)^{\frac{\holderExponent}{2\holderExponent+\sourceDimension}} \biggr\} > 0.
\end{align}
We conclude that $\hat{k} \equiv \hat{k}_{\robustness^*,h}^{\sourceDistribution}(x) \leq k^*$.  Moreover, by applying (\ref{eq:lowerBoundOnEmpiricalMarginForOptDecisionTreeVariableK}) once again, we have for $k < k^*$ that
\begin{align}\label{eq:analyseKLeqKStar}
\empiricalMargin_{k,h}^{\sourceDistribution}(x)&+\frac{\robustness^*}{\sqrt{k}} \nonumber \\
& \geq  \expansivityConstant \cdot \biggl\{\frac{\epsilon}{2}-\holderConstant \cdot \left(\frac{2\cdot \max\{k,\lceil 4 \log_+(\numSource/\delta)\rceil\}}{\numSource \cdot \sourceDensityFunction(x)}\right)^{{\holderExponent}/{\sourceDimension}}\biggr\}- \sqrt{\frac{\log_+(\numSource/\delta)}{k}}+\frac{\robustness^*}{\sqrt{k}} \nonumber\\
& >  
- \sqrt{\frac{\log_+(\numSource/\delta)}{\max\{k,\lceil 4 \log_+(\numSource/\delta)\rceil\}}} - \sqrt{\frac{\log_+(\numSource/\delta)}{k}} + \frac{\robustness^*}{\sqrt{k}}\geq 0.
\end{align}
But by definition of $\hat{k}$, we have $\bigl|\empiricalMargin_{\hat{k},h}^{\sourceDistribution}(x)\bigr| > \robustness^*/\hat{k}^{1/2}$, so from~\eqref{Eq:kHatLeqKStar} and \eqref{eq:analyseKLeqKStar}, we deduce that $\empiricalMargin_{\hat{k},h}^{\sourceDistribution}(x) > 0$, and hence that $\classifierWithGeneralDecisionTreeFunction(x)= \one_{\{ \empiricalMargin_{\hat{k},h}^{\sourceDistribution}(x) \geq 0\}} =1= \bayesClassifier(x)$, as required. 
\end{proof}

\newcommand{\sourceKnnPeformingWellSubset}{A_{\delta}(\sourceSample)}

We now define a random subset $\sourceKnnPeformingWellSubset$ of $\R^{\ambientDimension}$ by 
\begin{align*}
\sourceKnnPeformingWellSubset:= \Bigl\{ x \in \R^{\ambientDimension}: \neighboursAreCloseEventNoDelta^{\delta/(2\numSource^{1+\marginExponent})}(x) \cap \hoeffdingTypeEventNoDelta^{\delta/(2\numSource^{1+\marginExponent})}(x) \ \text{holds}\Bigr\}.
\end{align*}

\begin{lemma}\label{lemma:sourceKnnPeformingWellSubsetIsLarge} Let $\numSource \in \N$ and $(P,Q) \in \mathcal{P}_{\theta^\sharp}$.  We have $\Prob\bigl\{ \targetMarginalDistribution\bigl( { \sourceKnnPeformingWellSubset}^c\bigr) \geq 1/\numSource^{1+\marginExponent} \bigr\} \leq  \delta$.
\end{lemma}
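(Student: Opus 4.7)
The plan is to use Fubini's theorem together with Markov's inequality to convert a pointwise bound on $\Prob(x \notin \sourceKnnPeformingWellSubset)$ into a tail bound on the $\targetMarginalDistribution$-measure of $\sourceKnnPeformingWellSubset^c$.

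First I would apply Lemma~\ref{lemma:neighboursClose} and Lemma~\ref{hoeffdingTypeLemmaSourceSample} with the confidence level $\delta' := \delta/(2\numSource^{1+\marginExponent})$ in place of $\delta$, obtaining that for any fixed $x \in \R^{\ambientDimension}$ with $\sourceDensityFunction(x) > 0$,
\[
\Prob\bigl(\neighboursAreCloseEventNoDelta^{\delta'}(x)^c\bigr) \leq \frac{\delta}{2\numSource^{1+\marginExponent}}, \qquad \Prob\bigl(\hoeffdingTypeEventNoDelta^{\delta'}(x)^c\bigr) \leq \frac{\delta}{2\numSource^{1+\marginExponent}}.
\]
A union bound then gives $\Prob(x \notin \sourceKnnPeformingWellSubset) \leq \delta/\numSource^{1+\marginExponent}$ for every such $x$. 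For $x$ with $\sourceDensityFunction(x) = 0$, the event $\neighboursAreCloseEventNoDelta^{\delta'}(x)$ is not defined, but this is harmless since by taking $\xi \searrow 0$ in~\eqref{Eq:sourceMarginal} we have $\targetMarginalDistribution\bigl(\{x : \sourceDensityFunction(x) = 0\}\bigr) = 0$, so these points contribute nothing to $\targetMarginalDistribution(\sourceKnnPeformingWellSubset^c)$.

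Next I would apply Fubini's theorem to swap the probability (in the randomness of $\sourceSample$) and the integral over $\targetMarginalDistribution$, yielding
\[
\E\bigl[\targetMarginalDistribution(\sourceKnnPeformingWellSubset^c)\bigr] = \int_{\R^{\ambientDimension}} \Prob(x \notin \sourceKnnPeformingWellSubset) \, d\targetMarginalDistribution(x) \leq \frac{\delta}{\numSource^{1+\marginExponent}}.
\]
The conclusion then follows from Markov's inequality:
\[
\Prob\bigl\{\targetMarginalDistribution(\sourceKnnPeformingWellSubset^c) \geq 1/\numSource^{1+\marginExponent}\bigr\} \leq \numSource^{1+\marginExponent} \cdot \E\bigl[\targetMarginalDistribution(\sourceKnnPeformingWellSubset^c)\bigr] \leq \delta.
\]

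There is no real obstacle here; the only subtlety is verifying measurability of $\sourceKnnPeformingWellSubset$ (so that both Fubini and the outer probability are well-defined) and checking that the $\sourceDensityFunction(x) = 0$ exceptional set is $\targetMarginalDistribution$-null, both of which are routine.
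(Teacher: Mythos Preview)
Your proposal is correct and follows essentially the same approach as the paper: Markov's inequality on $\targetMarginalDistribution(\sourceKnnPeformingWellSubset^c)$, Fubini to swap expectation and $\targetMarginalDistribution$-integration, and then the pointwise bounds from Lemmas~\ref{lemma:neighboursClose} and~\ref{hoeffdingTypeLemmaSourceSample} with confidence level $\delta/(2\numSource^{1+\marginExponent})$. One small remark: the event $\neighboursAreCloseEventNoDelta^{\delta'}(x)$ is in fact defined for all $x$ --- when $\sourceDensityFunction(x)=0$ the index set in the intersection is empty, so the event is the whole sample space and $\Prob\bigl(\neighboursAreCloseEventNoDelta^{\delta'}(x)^c\bigr)=0$ trivially --- so your separate treatment of that case is unnecessary, though harmless.
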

\begin{proof} By Markov's inequality and Fubini's theorem, as well as Lemmas~\ref{lemma:neighboursClose} and~\ref{hoeffdingTypeLemmaSourceSample}, we have
\begin{align*}
\Prob\bigl\{ \targetMarginalDistribution\bigl( { \sourceKnnPeformingWellSubset}^c\bigr) &\geq 1/\numSource^{1+\marginExponent} \bigr\} \leq \numSource^{1+\marginExponent} \cdot \E\bigl\{ \targetMarginalDistribution\bigl( { \sourceKnnPeformingWellSubset}^c\bigr)  \bigr\}\\ &= \numSource^{1+\marginExponent} \cdot \E\int_{\R^{\ambientDimension}} \mathbbm{1}_{\{x \in \sourceKnnPeformingWellSubset^c \}} \, d\targetMarginalDistribution(x) \\
& \leq \numSource^{1+\marginExponent} \int_{\R^{\ambientDimension}} \Bigl[ \Prob\bigl\{\neighboursAreCloseEventNoDelta^{\delta/(2\numSource^{1+\marginExponent})}(x)^c\bigr\} +\Prob\bigl\{\hoeffdingTypeEventNoDelta^{\delta/(2\numSource^{1+\marginExponent})}(x)^c\bigr\}\Bigr] \, d\targetMarginalDistribution(x) \leq \delta,
\end{align*}
as required.
\end{proof}

\newcommand{\tailThresholdForProof}{t_0}

We are now ready to provide the proof of Proposition~ \ref{prop:lowRegretWithTheRightDecisionTree}.
\begin{proof}[Proof of Proposition \ref{prop:lowRegretWithTheRightDecisionTree}] We begin by introducing some further notation for the proof. Let
\begin{align*}
\Lambda_0&:= 100 \cdot \holderConstant \cdot \left(\frac{\log_+(2\numSource^{2+{\marginExponent}}/\delta)}{\expansivityConstant^2 \cdot \numSource}\right)^{\frac{\holderExponent}{2\holderExponent+\sourceDimension}},\\
\tailThresholdForProof&:= \min\biggl\{ \left(\frac{\Lambda_0\expansivityConstant}{4\Delta_h}\right)^{\frac{2\holderExponent+\sourceDimension}{\holderExponent}},\Lambda_0^{\frac{\marginExponent(2\holderExponent+\sourceDimension)}{\marginExponent\holderExponent+\sourceTailExponent(2\holderExponent+\sourceDimension)}} \biggr\}. 
\end{align*}
We then generate a countable partition $\left(\mathcal{T}_j\right)_{j \in \N_0}$ of $\sourceKnnPeformingWellSubset$ by
\begin{align*}
\mathcal{T}_0&:=\left\lbrace x \in \sourceKnnPeformingWellSubset: \sourceDensityFunction(x) \geq t_0\right\rbrace,\\
\mathcal{T}_j&:=\left\lbrace x \in \sourceKnnPeformingWellSubset: 2^{-j} \cdot t_0 \leq \sourceDensityFunction(x) < 2^{-(j-1)}\cdot t_0\right\rbrace,
\end{align*}
for $j \in \N$. By Lemma \ref{lemma:gapMustBeLargeToMakeAnError} for each $j \in \N_0$ we have
\begin{align*}
\left| 2\targetRegressionFunction(x)-1\right| \cdot \one_{\left\lbrace x \in \mathcal{T}_j: \classifierWithGeneralDecisionTreeFunction(x) \neq   \bayesClassifier(x)\right\rbrace} &< \Lambda_0 \cdot (2^{-j} \cdot t_0)^{-\frac{\holderExponent}{2\holderExponent+\sourceDimension}}+\frac{4\Delta_h}{\expansivityConstant} \\
&\leq 2\Lambda_0 \cdot (2^{-j} \cdot t_0)^{-\frac{\holderExponent}{2\holderExponent+\sourceDimension}},
\end{align*}
where the second inequality uses $t_0 \leq \bigl(\frac{\Lambda_0\expansivityConstant}{4\Delta_h}\bigr)^{\frac{2\holderExponent+\sourceDimension}{\holderExponent}}$. By Assumption \ref{marginAssumption} we have
\begin{align*}
\int_{\left\lbrace x \in \mathcal{T}_0: \classifierWithGeneralDecisionTreeFunction(x) \neq   \bayesClassifier(x)\right\rbrace}\left| 2\targetRegressionFunction(x)-1\right| \, &d\targetMarginalDistribution(x) \leq 2^{1+\marginExponent}\marginConstant \cdot {\Lambda_0}^{1+\marginExponent}  \cdot  t_0^{-\frac{\holderExponent({1+\marginExponent})}{2\holderExponent+\sourceDimension}} \\
&\leq 2^{1+\marginExponent}\marginConstant \cdot \max\biggl\{  \left(\frac{4\Delta_h}{\expansivityConstant}\right)^{1+\marginExponent} ,{\Lambda_0}^{\frac{\sourceTailExponent(1+\marginExponent)(2\holderExponent+\sourceDimension)}{\marginExponent \holderExponent+\sourceTailExponent(2\holderExponent+\sourceDimension)}} \biggr\}.
\end{align*}
On the other hand, by Assumption \ref{sourceDensityAssumption} and the assumption that $\sourceTailExponent-\frac{\holderExponent}{2\holderExponent+\sourceDimension} > 0$, for $j \in \N$ we have 
\begin{align*}
\int_{\left\lbrace x \in \mathcal{T}_j: \classifierWithGeneralDecisionTreeFunction(x) \neq   \bayesClassifier(x)\right\rbrace}\left| 2\targetRegressionFunction(x)-1\right| \, & d\targetMarginalDistribution(x) \leq  (2^{1+\sourceTailExponent}\TailConstant) \cdot \Lambda_0 \cdot (2^{-j} \cdot t_0)^{\sourceTailExponent-\frac{\holderExponent}{2\holderExponent+\sourceDimension}}\\
& \leq  (2^{1+\sourceTailExponent}\TailConstant) \cdot {\Lambda_0}^{\frac{\sourceTailExponent(1+\marginExponent)(2\holderExponent+\sourceDimension)}{\marginExponent \holderExponent+\sourceTailExponent(2\holderExponent+\sourceDimension)}} \cdot 2^{-j\bigl(\sourceTailExponent - \frac{\holderExponent}{2\holderExponent+\sourceDimension}\bigr)}.
\end{align*}
Putting the above together, we have
\begin{align*}
\excessRisk\bigl(\classifierWithGeneralDecisionTreeFunction\bigr)&= \int_{\left\lbrace x \in \R^{\ambientDimension}: \classifierWithGeneralDecisionTreeFunction(x) \neq   \bayesClassifier(x)\right\rbrace}\left| 2\targetRegressionFunction(x)-1\right| \, d\targetMarginalDistribution(x) \\& \leq \targetMarginalDistribution\bigl(\sourceKnnPeformingWellSubset^c\bigr) +\sum_{j = 0}^{\infty}\int_{\left\lbrace x \in \mathcal{T}_j: \classifierWithGeneralDecisionTreeFunction(x) \neq   \bayesClassifier(x)\right\rbrace}\left| 2\targetRegressionFunction(x)-1\right| \, d\targetMarginalDistribution(x)\\& \leq \targetMarginalDistribution\bigl(\sourceKnnPeformingWellSubset^c\bigr)+2^{1+\marginExponent}\marginConstant \cdot \left(\frac{4\Delta_h}{\expansivityConstant}\right)^{1+\marginExponent}\\& \hspace{2cm}+\biggl\{2^{1+\marginExponent}\marginConstant +(2^{1+\sourceTailExponent}\TailConstant) \cdot\sum_{j=1}^{\infty}2^{-j\bigl(\sourceTailExponent - \frac{\holderExponent}{2\holderExponent+\sourceDimension}\bigr)}\biggr\}\cdot{\Lambda_0}^{\frac{\sourceTailExponent(1+\marginExponent)(2\holderExponent+\sourceDimension)}{\marginExponent \holderExponent+\sourceTailExponent(2\holderExponent+\sourceDimension)}}\\
& \leq \targetMarginalDistribution\bigl(\sourceKnnPeformingWellSubset^c\bigr) +\frac{\tilde{C}_{\theta}}{2}\biggl\{\biggl(\frac{\log_+ (\numSource/\delta)}{\expansivityConstant^2 \cdot \numSource}\biggr)^{\frac{\holderExponent \sourceTailExponent (1+\marginExponent) }{\marginExponent \holderExponent + \sourceTailExponent(2\holderExponent+\sourceDimension)}}+ \biggl(\frac{\Delta_h}{\expansivityConstant}\biggr)^{1+\marginExponent}\biggr\},
\end{align*}
where $\tilde{C}_{\theta} \geq 2$ depends only on $\theta$. Note that the final inequality again uses the hypothesis that $\sourceTailExponent > \holderExponent/(2\holderExponent+\sourceDimension)$. By Lemma  \ref{lemma:sourceKnnPeformingWellSubsetIsLarge}, it now follows that 
\begin{align*}
\Prob\Biggl[ \excessRisk\bigl(\classifierWithGeneralDecisionTreeFunction\bigr)> \frac{1}{\numSource^{1+\marginExponent}}+\frac{\tilde{C}_{\theta}}{2}\biggl\{\biggl(\frac{\log_+ (\numSource/\delta)}{\expansivityConstant^2 \cdot \numSource}\biggr)^{\frac{\holderExponent \sourceTailExponent (1+\marginExponent) }{\marginExponent \holderExponent + \sourceTailExponent(2\holderExponent+\sourceDimension)}}+ \biggl(\frac{\Delta_h}{\expansivityConstant}\biggr)^{1+\marginExponent}\biggr\}\Biggr] \leq \delta.
\end{align*}
Since ${\frac{\holderExponent \sourceTailExponent}{\marginExponent \holderExponent + \sourceTailExponent(2\holderExponent+\sourceDimension)}} \leq 1$,  the conclusion follows.
\end{proof}
We now give the three different instantiations of Proposition~\ref{prop:lowRegretWithTheRightDecisionTree} mentioned above. 
\begin{corollary}
Let $\numSource \in \N$.  Fix $\theta^\sharp = (\Delta,\expansivityConstant,L^*,\theta) \in \Theta^\sharp$, where $\theta = (\targetDimension,\targetTailExponent,\sourceDimension,  \sourceTailExponent,\TailConstant,\marginExponent,\marginConstant,\holderExponent,\holderConstant)$, with $\holderExponent/(2\holderExponent+\sourceDimension)<\sourceTailExponent$, and $(P,Q) \in \mathcal{P}_{\theta^\sharp}$.  Taking $\tilde{C}_{\theta} \geq 2$ from Proposition~\ref{prop:lowRegretWithTheRightDecisionTree}, there exists $\optimalDecisionTreeFunction \in \setOfDecisionTreeFunctionsLStar$ such that for every $\delta \in (0,1)$, if we set $\robustness^*= \min\bigl\{ \lceil 3 \log_+^{1/2}(\numSource/\delta)\rceil, \numSource\bigr\}$, then
\begin{align*}
\mathbb{P}\biggl[\excessRisk\bigl(\classifierWithOptimalDecisionTreeFunction\bigr) > (2^{\marginExponent} + 1)\tilde{C}_{\theta}\biggl\{\biggl(\frac{\log_+ (\numSource/\delta)}{\expansivityConstant^2 \cdot \numSource}\biggr)^{\frac{\holderExponent \sourceTailExponent (1+\marginExponent) }{\marginExponent \holderExponent + \sourceTailExponent(2\holderExponent+\sourceDimension)}}+ \biggl(\frac{\Delta}{\expansivityConstant}\biggr)^{1+\marginExponent}\biggr\}\biggr] \leq \delta.
\end{align*}
\end{corollary}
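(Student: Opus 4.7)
The plan is to construct a specific $h^* \in \setOfDecisionTreeFunctionsLStar$ by quantising the values $g_\ell(1/2)$ to the available grid, then invoke Proposition~\ref{prop:lowRegretWithTheRightDecisionTree} and absorb the resulting quantisation error into the leading probabilistic term.

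First, I would take the partition $\{\X_1^*,\ldots,\X_{L^*}^*\} \in \thresholdPartitions_{L^*}$ supplied by Assumption~\ref{transferAssumption}, let $\ell^*:\R^\ambientDimension \to [L^*]$ denote its leaf function, and for each $\ell \in [L^*]$ choose $\tau_\ell^* \in \{0,1/\numSource,\ldots,1\}$ to minimise $|\tau_\ell^* - g_\ell(1/2)|$. Since $g_\ell(1/2) \in [0,1]$, this gives $|\tau_\ell^* - g_\ell(1/2)| \leq 1/(2\numSource)$. Setting $\optimalDecisionTreeFunction(x) := \tau_{\ell^*(x)}^*$ defines an element of $\setOfDecisionTreeFunctionsLStar$, and by construction
\[
\Delta_{\optimalDecisionTreeFunction} = \Delta + \max_{\ell \in [L^*]}\sup_{x \in \X_\ell^*}\bigl|\optimalDecisionTreeFunction(x)- g_\ell(1/2)\bigr| \leq \Delta + \frac{1}{2\numSource}.
\]

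Next, I would apply Proposition~\ref{prop:lowRegretWithTheRightDecisionTree} with this particular $\optimalDecisionTreeFunction$ and the stipulated choice of $\robustness^*$. That yields, with probability at least $1-\delta$,
\[
\excessRisk\bigl(\classifierWithOptimalDecisionTreeFunction\bigr) \leq \tilde{C}_\theta\biggl\{\biggl(\frac{\log_+(\numSource/\delta)}{\expansivityConstant^2\cdot \numSource}\biggr)^{\frac{\holderExponent\sourceTailExponent(1+\marginExponent)}{\marginExponent\holderExponent + \sourceTailExponent(2\holderExponent+\sourceDimension)}} + \biggl(\frac{\Delta + 1/(2\numSource)}{\expansivityConstant}\biggr)^{1+\marginExponent}\biggr\}.
\]
Using the elementary inequality $(a+b)^{1+\marginExponent} \leq 2^\marginExponent(a^{1+\marginExponent}+b^{1+\marginExponent})$ (valid because $1+\marginExponent \geq 1$), I would split the second term as
\[
\biggl(\frac{\Delta+1/(2\numSource)}{\expansivityConstant}\biggr)^{1+\marginExponent} \leq 2^\marginExponent\biggl(\frac{\Delta}{\expansivityConstant}\biggr)^{1+\marginExponent} + 2^\marginExponent\biggl(\frac{1}{2\numSource\expansivityConstant}\biggr)^{1+\marginExponent}.
\]

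The last step, which I expect to be the only non-routine part of the argument, is to show that the quantisation remainder $(2\numSource\expansivityConstant)^{-(1+\marginExponent)}$ is dominated by the first term in the bound. If $\numSource\expansivityConstant^2 < 1$ the probabilistic term exceeds $1$ and the overall bound is vacuous, so I may assume $\numSource\expansivityConstant^2 \geq 1$. Writing $p := \holderExponent\sourceTailExponent/\{\marginExponent\holderExponent+\sourceTailExponent(2\holderExponent+\sourceDimension)\} \leq 1$ and using $\expansivityConstant \leq 1$, the identity $1/(\numSource\expansivityConstant) \leq 1/(\numSource\expansivityConstant^2)$ together with $\log_+(\numSource/\delta) \geq 1$ yields
\[
\biggl(\frac{1}{2\numSource\expansivityConstant}\biggr)^{1+\marginExponent} \leq \biggl(\frac{1}{\numSource\expansivityConstant^2}\biggr)^{1+\marginExponent} \leq \biggl(\frac{\log_+(\numSource/\delta)}{\expansivityConstant^2\cdot \numSource}\biggr)^{p(1+\marginExponent)},
\]
since raising a quantity at most $1$ to a larger exponent only decreases it. Folding this into the previous display gives an overall prefactor of $(1 + 2^\marginExponent) = 2^\marginExponent + 1$ in front of $\tilde{C}_\theta$, which matches the constant claimed in the corollary.
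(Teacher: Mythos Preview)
Your proposal is correct and follows essentially the same approach as the paper: construct $h^*$ by quantising each $g_\ell(1/2)$ to the $1/\numSource$-grid, apply Proposition~\ref{prop:lowRegretWithTheRightDecisionTree}, split $(\Delta+1/\numSource)^{1+\marginExponent}$ via the $2^\marginExponent$-inequality, and absorb the quantisation remainder into the leading term using $\expansivityConstant<1$, $\log_+\geq 1$, and the fact that the relevant exponent is below $1$ (handling the case $\expansivityConstant^2\numSource<1$ separately as vacuous). The only cosmetic differences are that the paper uses the coarser bound $1/\numSource$ rather than $1/(2\numSource)$ and routes the remainder through $(\expansivityConstant^2\numSource)^{-(1+\marginExponent)/2}$ instead of $(\expansivityConstant^2\numSource)^{-(1+\marginExponent)}$, but both work.
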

\begin{proof}
The decision tree function $\optimalDecisionTreeFunction \in \setOfDecisionTreeFunctionsLStar$ is constructed as follows. Recall that $\{\X_1^*,\ldots,\X_{L^*}^*\} \in \thresholdPartitions_{L^*}$ denotes the decision tree partition given by Assumption~\ref{transferAssumption}, with corresponding transfer functions $g_1,\ldots,g_{L^*}:[0,1]\rightarrow [0,1]$ and leaf function $\ell^*:\R^d \rightarrow [L^*]$.  Fix $(\tau_1^*,\ldots,\tau_{L^*}^*) \in \{0,1/\numSource,2/\numSource, \ldots, 1\}^{L^*}$ such that $|\tau_{\ell}^*-g_{\ell}(1/2)|\leq 1/\numSource$ for each $\ell \in [L^*]$, and define $\optimalDecisionTreeFunction:\R^{\ambientDimension} \rightarrow (0,1)$ by  $\optimalDecisionTreeFunction(x) := \tau^*_{\ell^*(x)}$.  When $\expansivityConstant^2 \cdot \numSource \geq 1$, the claim now follows from Proposition~\ref{prop:lowRegretWithTheRightDecisionTree}, noting that
\begin{align*}
\Bigl(\frac{\Delta + 1/\numSource}{\expansivityConstant}\Bigr)^{1+\marginExponent} &\leq 2^{\marginExponent}\biggl\{\Bigl(\frac{\Delta}{\expansivityConstant}\Bigr)^{1+\marginExponent} + \Bigl(\frac{1}{\expansivityConstant^2 \cdot \numSource}\Bigr)^{(1+\marginExponent)/2}\biggr\} \\
&\leq 2^{\marginExponent}\Bigl(\frac{\Delta}{\expansivityConstant}\Bigr)^{1+\marginExponent} + 2^{\marginExponent} \biggl(\frac{\log_+ (\numSource/\delta)}{\expansivityConstant^2 \cdot \numSource}\biggr)^{\frac{\holderExponent \sourceTailExponent (1+\marginExponent) }{\marginExponent \holderExponent + \sourceTailExponent(2\holderExponent+\sourceDimension)}}.
\end{align*}
But if $\expansivityConstant^2 \cdot \numSource < 1$, then the claim follows from the fact that $\excessRisk\bigl(\classifierWithOptimalDecisionTreeFunction\bigr) \leq 1$.
\end{proof}
\begin{corollary}\label{justUseHalfCor}
Let $\numSource \in \N$.  Fix $\theta^\sharp = (\Delta,\expansivityConstant,L^*,\theta) \in \Theta^\sharp$, where $\theta = (\targetDimension,\targetTailExponent,\sourceDimension,  \sourceTailExponent,\TailConstant,\marginExponent,\marginConstant,\holderExponent,\holderConstant)$, with $\holderExponent/(2\holderExponent+\sourceDimension)<\sourceTailExponent$, and $(P,Q) \in \mathcal{P}_{\theta^\sharp}$.  Taking $\tilde{C}_{\theta} \geq 2$ from Proposition~\ref{prop:lowRegretWithTheRightDecisionTree}, and writing $h_0$ for the unique element of $\mathcal{H}_0$, for every $\delta \in (0,1)$, if we set $\robustness^*=  \min\bigl\{ \lceil 3 \log_+^{1/2}(\numSource/\delta)\rceil, \numSource\bigr\}$, then
\begin{align*}
\mathbb{P}\biggl[\excessRisk\bigl(\classifierWithhzeroDecisionTreeFunction\bigr) > 2^{\marginExponent}\tilde{C}_{\theta}\biggl\{\biggl(\frac{\log_+ (\numSource/\delta)}{\expansivityConstant^2 \cdot \numSource}\biggr)^{\frac{\holderExponent \sourceTailExponent (1+\marginExponent) }{\marginExponent \holderExponent + \sourceTailExponent(2\holderExponent+\sourceDimension)}}+ \Bigl(\frac{\Delta}{\expansivityConstant}\Bigr)^{1+\marginExponent} + (1 - \expansivityConstant)^{1+\marginExponent}\biggr\}\biggr] \leq \delta.
\end{align*}
\end{corollary}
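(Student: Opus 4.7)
The strategy is to apply Proposition~\ref{prop:lowRegretWithTheRightDecisionTree} directly with the choice $h = h_0$, which lies in $\mathcal{H}_0 \subseteq \setOfDecisionTreeFunctionsLStar \cup \mathcal{H}_0$, so the hypotheses are satisfied. The entire content of the corollary then reduces to controlling the quantity $\Delta_{h_0} := \Delta + \max_{\ell \in [L^*]} \sup_{x \in \X_\ell^*} |h_0(x) - g_\ell(1/2)| = \Delta + \max_{\ell \in [L^*]} |1/2 - g_\ell(1/2)|$ in terms of $\Delta$ and $1-\expansivityConstant$.

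The key observation is that the expansivity condition~\eqref{Eq:gell} forces $g_\ell(1/2)$ to stay close to $1/2$ whenever $\expansivityConstant$ is close to $1$. Specifically, evaluating~\eqref{Eq:gell} at $z=1$ gives $g_\ell(1) - g_\ell(1/2) \geq \expansivityConstant/2$, so $g_\ell(1/2) \leq 1 - \expansivityConstant/2$, while evaluating at $z=0$ gives $g_\ell(1/2) - g_\ell(0) \geq \expansivityConstant/2$, so $g_\ell(1/2) \geq \expansivityConstant/2$. Combining these yields $|g_\ell(1/2) - 1/2| \leq (1-\expansivityConstant)/2$, uniformly in $\ell$. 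Hence $\Delta_{h_0} \leq \Delta + (1-\expansivityConstant)/2$, and the elementary inequality $(a+b)^{1+\marginExponent} \leq 2^{\marginExponent}(a^{1+\marginExponent} + b^{1+\marginExponent})$ delivers
\[
\biggl(\frac{\Delta_{h_0}}{\expansivityConstant}\biggr)^{1+\marginExponent} \leq 2^{\marginExponent}\biggl(\frac{\Delta}{\expansivityConstant}\biggr)^{1+\marginExponent} + 2^{\marginExponent}\biggl(\frac{1-\expansivityConstant}{2\expansivityConstant}\biggr)^{1+\marginExponent}.
\]

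To finish, I need to convert the $\expansivityConstant$ in the denominator of the second term into the form $(1-\expansivityConstant)^{1+\marginExponent}$ claimed in the corollary. The slightly fiddly point—which I expect to be the only real obstacle—is that this bound is only clean when $\expansivityConstant \geq 1/2$: in that regime $(1-\expansivityConstant)/(2\expansivityConstant) \leq 1-\expansivityConstant$, so the second term is bounded by $2^\marginExponent(1-\expansivityConstant)^{1+\marginExponent}$ directly. When $\expansivityConstant < 1/2$, however, the quantity $2^{1+\marginExponent}(1-\expansivityConstant)^{1+\marginExponent} \geq 1$, and since $\excessRisk \leq 1$ always, the claimed inequality becomes trivial after inflating the constant $\tilde{C}_\theta$ by an absolute factor. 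This case split, together with the bound from Proposition~\ref{prop:lowRegretWithTheRightDecisionTree} applied to $h_0$, yields the probability statement of Corollary~\ref{justUseHalfCor} with a new constant depending only on $\theta$ (which can be absorbed into $2^\marginExponent \tilde{C}_\theta$).
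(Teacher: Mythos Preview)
Your proof is correct and follows essentially the same route as the paper: bound $|g_\ell(1/2)-1/2|\le (1-\expansivityConstant)/2$ by evaluating~\eqref{Eq:gell} at $z=0,1$, hence $\Delta_{h_0}\le \Delta+(1-\expansivityConstant)/2$, then split into the cases $\expansivityConstant\ge 1/2$ (apply Proposition~\ref{prop:lowRegretWithTheRightDecisionTree} and the power-sum inequality) and $\expansivityConstant<1/2$ (the bound is vacuous since $\excessRisk\le 1$). The only cosmetic point is that no ``inflation'' of the constant is needed in the second case: the hypothesis $\tilde{C}_\theta\ge 2$ already gives $2^{\marginExponent}\tilde{C}_\theta(1-\expansivityConstant)^{1+\marginExponent}\ge 2^{1+\marginExponent}(1/2)^{1+\marginExponent}=1$ directly, which is exactly how the paper handles it.
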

\begin{proof} 
Observe that, by Assumption~\ref{transferAssumption}, for any $\ell \in [L^*]$,
\begin{align*}
g_\ell(1/2) - 1/2 &\leq - \{g_\ell(1) - g_\ell(1/2)\} + 1/2 \leq (1-\expansivityConstant)/2, \\
1/2 - g_\ell(1/2) &\leq -\{g_\ell(1/2) - g_\ell(0)\} + 1/2 \leq (1-\expansivityConstant)/2.
\end{align*}
Hence
\[
\Delta_{h_0} = \Delta + \max_{\ell \in [L^*]} |g_\ell(1/2) - 1/2| \leq \Delta+\frac{1-\expansivityConstant}{2}.
\]
We assume without loss of generality that $\expansivityConstant\geq 1/2$, since otherwise the conclusion follows from the facts that $\excessRisk\bigl(\classifierWithhzeroDecisionTreeFunction\bigr) \leq 1$ and $\tilde{C}_{\theta} \geq 2$.
The result then follows by Proposition \ref{prop:lowRegretWithTheRightDecisionTree}.
\end{proof}

Now, for $\theta^\flat = (\targetDimension,\targetTailExponent,\TailConstant,\marginExponent,\marginConstant,\holderExponent,\holderConstant)$, let $\mathcal{Q}_{\theta^\flat}$ denote the set of distributions $Q$ on $\mathbb{R}^{\ambientDimension} \times \{0,1\}$ that satisfy Assumptions~\ref{marginAssumption},~\ref{holderContinuityAssumption} and the first part of Assumption~\ref{sourceDensityAssumption} with parameter~$\theta^\flat$.
\begin{corollary}\label{knnOnFirstHalfTargetDataCor}
Let $\numTarget \in \N$.  Fix $\theta^\flat = (\targetDimension,\targetTailExponent,\TailConstant,\marginExponent,\marginConstant,\holderExponent,\holderConstant)$ with $\holderExponent/(2\holderExponent+\targetDimension)<\targetTailExponent$, and $Q \in \mathcal{Q}_{\theta^\flat}$.  There exists $\tilde{C}_{\theta^\flat} \geq 2$, depending only on $\theta^\flat$, such that for every $\delta \in (0,1)$, if we set $\tilde{\robustness}=  \min\bigl\{ \lceil 3 \log_+^{1/2}(\numTarget/\delta)\rceil, \numTarget\bigr\}$, then
\begin{align*}
\mathbb{P}\biggl\{\excessRisk\bigl(\classifierEst^{\targetDistribution}_{\tilde{\robustness}}\bigr) > \tilde{C}_{\theta^\flat}\biggl(\frac{\log_+ (\numTarget/\delta)}{\numTarget}\biggr)^{\frac{\holderExponent \targetTailExponent (1+\marginExponent) }{\marginExponent \holderExponent + \targetTailExponent(2\holderExponent+\targetDimension)}}\biggr\} \leq \delta.
\end{align*}
\end{corollary}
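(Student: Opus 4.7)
The plan is to deduce Corollary~\ref{knnOnFirstHalfTargetDataCor} from Corollary~\ref{justUseHalfCor} by specialising to the degenerate transfer setting in which the source and target distributions coincide. Given $Q \in \mathcal{Q}_{\theta^\flat}$, I set $\sourceDistribution := \targetDistribution$, $L^* := 1$, $g_1(z) := z$, $\Delta := 0$ and $\expansivityConstant := 1$, so that Assumption~\ref{transferAssumption} holds vacuously. Choosing $\sourceDimension := \targetDimension$ and $\sourceTailExponent := \targetTailExponent$, the second part of Assumption~\ref{sourceDensityAssumption} collapses onto the first, and the hypothesis $\holderExponent/(2\holderExponent+\targetDimension) < \targetTailExponent$ of the present corollary is identified with the hypothesis $\holderExponent/(2\holderExponent+\sourceDimension) < \sourceTailExponent$ of Corollary~\ref{justUseHalfCor}. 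In particular, $(\sourceDistribution,\targetDistribution) \in \mathcal{P}_{\theta^\sharp}$ for the augmented parameter $\theta^\sharp := (0, 1, 1, \theta)$.

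Next, I identify the classifiers. Replacing $\sourceSample$ by $\targetSample^0$ (so that $\numSource$ is replaced by $\numTargetHalf$) throughout the constructions of Section~\ref{Sec:Methodology}, and using the constant decision tree function $h_0 \equiv 1/2$, the empirical margin $\empiricalMargin^{\sourceDistribution}_{k,h_0}(x)$ reduces to $\frac{1}{k} \sum_{i=1}^k \bigl(Y^{\targetDistribution}_{(i)}(x) - 1/2\bigr) = \empiricalMarginTargetK(x)$. Consequently the Lepski selector $\hat{k}^{\sourceDistribution}_{\robustness,h_0}$ coincides with $\lepskiChoiceOfKTarget$, and hence $\classifierEst^{\sourceDistribution}_{\robustness,h_0} \equiv \classifierEst^{\targetDistribution}_{\robustness}$ as elements of $\setofDDClassifiers$, for every value of $\robustness$.

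With $\Delta = 0$ and $\expansivityConstant = 1$, the second and third terms in the bound of Corollary~\ref{justUseHalfCor} both vanish, leaving only the target-only nearest-neighbour rate in terms of the effective sample size $\numTargetHalf$. The only remaining bookkeeping is to reconcile the two Lepski prescriptions: inspecting the proof of Proposition~\ref{prop:lowRegretWithTheRightDecisionTree}, the choice of $\robustness^*$ is used only via the lower bound $\robustness^* \geq 3 \log_+^{1/2}(\numTargetHalf/\delta)$, which is implied by $\tilde\robustness \geq 3 \log_+^{1/2}(\numTarget/\delta)$ since $\numTarget \geq \numTargetHalf$; furthermore, $\numTargetHalf \geq \numTarget/3$ for $\numTarget \geq 2$, so replacing $\numTargetHalf$ by $\numTarget$ in the final rate inflates the constant by at most a factor depending on $\theta^\flat$, which can be absorbed into $\tilde{C}_{\theta^\flat}$.

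I expect the main obstacle to be nothing more than this routine bookkeeping; conceptually, the result is an immediate specialisation of the decision-tree-calibrated source-side analysis with the transfer mechanism made trivial, and no new ideas are required.
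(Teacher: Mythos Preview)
Your proposal is correct and takes essentially the same approach as the paper: the paper's proof is a single sentence stating that the result follows from Corollary~\ref{justUseHalfCor} with $\targetDistribution$ in place of $\sourceDistribution$, $\targetSample^0$ in place of $\sourceSample$, and $\Delta=0$, $\expansivityConstant=1$, which is exactly your specialisation. One minor inaccuracy: you assert that the proof of Proposition~\ref{prop:lowRegretWithTheRightDecisionTree} uses $\robustness^*$ \emph{only} via a lower bound, but inspection of display~\eqref{Eq:kHatLeqKStar} in Lemma~\ref{lemma:gapMustBeLargeToMakeAnError} shows an upper bound on $\robustness^*$ is also used (to absorb $\robustness^*/\sqrt{k^*}$ into the $5\sqrt{\log_+(\numSource/\delta)/k^*}$ term); however, since $\log_+(\numTarget/\delta)\le \log_+(3\numTargetHalf/\delta)$ the discrepancy is at most a constant factor and your absorption-into-$\tilde C_{\theta^\flat}$ argument still goes through with trivially adjusted numerics.
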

\begin{proof}
The result follows from Corollary~\ref{justUseHalfCor}, with $\targetDistribution$ in place of $\sourceDistribution$, with $\targetSample^0$ in place of $\sourceSample$, and with $\Delta = 0$ and $\phi = 1$.
\end{proof}

\subsection{Empirical risk minimisation}

\newcommand{\ermFiniteHypothesisSet}{\setOfClassifiers}
\newcommand{\optimalHypothesis}{f^{*}}
\newcommand{\ermSample}{\sample}
\newcommand{\ermSampleSize}{n}
\newcommand{\ermHypothesis}{\hat{f}_{\ermSampleSize}}

In this section we control the additional error incurred by selecting our decision tree $\decisionTreeFunction$ and robustness parameter $\robustness$ by empirical risk minimisation.  Our analysis will make use of the following result, similar versions of which are well known \citep[e.g.,][]{tsybakov2004optimal}, but we include a proof for completeness.
\begin{proposition}\label{ermWithMarginCondition} Suppose that Assumption \ref{marginAssumption} holds and let $\ermFiniteHypothesisSet$ be a non-empty, finite set of classifiers and let $\optimalHypothesis \in \argmin_{f \in \ermFiniteHypothesisSet} \risk(f)$. Let $(X_1,Y_1),\ldots,(X_{\ermSampleSize},Y_{\ermSampleSize})$ be independent pairs with distribution~$\targetDistribution$, and let $\ermHypothesis \in \argmin_{f \in \ermFiniteHypothesisSet} \sum_{i=1}^{\ermSampleSize} \mathbbm{1}_{\{ f(X_i) \neq Y_i\}}$.  Then, for any $\delta \in (0,1]$,
\begin{align*}
\Prob\biggl\{ \excessRisk(\ermHypothesis)>2\excessRisk( \optimalHypothesis)+ 64\marginConstant^{\frac{1}{2+\marginExponent}} \biggl(\frac{\log(2\left|\ermFiniteHypothesisSet\right|/\delta)}{\ermSampleSize} \biggr)^{\frac{1+\marginExponent}{2+\marginExponent}}\biggr\} \leq \delta.
\end{align*}
\end{proposition}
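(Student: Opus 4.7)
The plan is to prove Proposition~\ref{ermWithMarginCondition} by the standard Bernstein-plus-margin approach, working with the centred comparison variables
\[
W_i(f) := \mathbbm{1}_{\{f(X_i) \neq Y_i\}} - \mathbbm{1}_{\{\optimalHypothesis(X_i) \neq Y_i\}}, \qquad i \in [\ermSampleSize],\ f \in \ermFiniteHypothesisSet.
\]
These satisfy $|W_i(f)| \leq 1$, $\Expt W_1(f) = \risk(f) - \risk(\optimalHypothesis) = \excessRisk(f) - \excessRisk(\optimalHypothesis) \geq 0$, and $W_1(f)^2 = \mathbbm{1}_{\{f(X_1) \neq \optimalHypothesis(X_1)\}}$, so $\Var(W_1(f)) \leq \Pr(f(X_1) \neq \optimalHypothesis(X_1))$. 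The empirical risk minimisation property immediately gives $\sum_{i=1}^{\ermSampleSize} W_i(\ermHypothesis) \leq 0$.

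Next, I would apply Bernstein's inequality to $\sum_i \{W_i(f) - \Expt W_1(f)\}$ for each fixed $f \in \ermFiniteHypothesisSet$ and take a union bound over the finite class. This produces an event of probability at least $1 - \delta$ on which, simultaneously for every $f \in \ermFiniteHypothesisSet$,
\[
\Abs{\frac{1}{\ermSampleSize}\sum_{i=1}^{\ermSampleSize}\{W_i(f) - \Expt W_1(f)\}} \leq \sqrt{\frac{2\Var(W_1(f)) \log(2\abs{\ermFiniteHypothesisSet}/\delta)}{\ermSampleSize}} + \frac{2\log(2\abs{\ermFiniteHypothesisSet}/\delta)}{3\ermSampleSize}.
\]
Combining this with $\sum_i W_i(\ermHypothesis) \leq 0$ and the identity $\Expt W_1(\ermHypothesis) = \excessRisk(\ermHypothesis) - \excessRisk(\optimalHypothesis)$ yields an implicit inequality for $\excessRisk(\ermHypothesis) - \excessRisk(\optimalHypothesis)$ in terms of $\Var(W_1(\ermHypothesis))$.

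To convert the variance bound into something useful, I would use the margin condition. Writing $B = \bayesClassifier$ and noting that $\excessRisk(f) = \int_{\{f \neq B\}} |2\targetRegressionFunction - 1|\,d\targetMarginalDistribution$, for any $t > 0$ Assumption~\ref{marginAssumption} gives
\[
\Pr(f(X) \neq B(X)) \leq \Pr\bigl(|2\targetRegressionFunction(X) - 1| < t\bigr) + \excessRisk(f)/t \leq \marginConstant t^{\marginExponent} + \excessRisk(f)/t.
\]
Optimising in $t$ yields $\Pr(f \neq B) \leq 2 \marginConstant^{1/(1+\marginExponent)} \excessRisk(f)^{\marginExponent/(1+\marginExponent)}$, and the triangle inequality for the symmetric difference then gives
\[
\Var(W_1(\ermHypothesis)) \leq 2\marginConstant^{1/(1+\marginExponent)}\bigl\{\excessRisk(\ermHypothesis)^{\marginExponent/(1+\marginExponent)} + \excessRisk(\optimalHypothesis)^{\marginExponent/(1+\marginExponent)}\bigr\}.
\]

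Substituting this into the Bernstein bound above gives $\excessRisk(\ermHypothesis)$ bounded by $\excessRisk(\optimalHypothesis)$ plus a term involving $\excessRisk(\ermHypothesis)^{\marginExponent/(2(1+\marginExponent))}$ and $\excessRisk(\optimalHypothesis)^{\marginExponent/(2(1+\marginExponent))}$ times a power of $\log(2\abs{\ermFiniteHypothesisSet}/\delta)/\ermSampleSize$. The main obstacle, and the only non-routine part, is then the algebraic step that decouples $\excessRisk(\ermHypothesis)$ from the right-hand side. I would handle this with Young's inequality $ab \leq a^p/p + b^q/q$ using the conjugate exponents $p = 2(1+\marginExponent)/\marginExponent$ and $q = 2(1+\marginExponent)/(2+\marginExponent)$, which is chosen precisely so that $\excessRisk(\ermHypothesis)^{\marginExponent/(2(1+\marginExponent))}$ converts to a small constant multiple of $\excessRisk(\ermHypothesis)$ (absorbable on the left) plus a term of the form $\marginConstant^{1/(2+\marginExponent)} \{\log(2\abs{\ermFiniteHypothesisSet}/\delta)/\ermSampleSize\}^{(1+\marginExponent)/(2+\marginExponent)}$. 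The analogous term in $\excessRisk(\optimalHypothesis)$ is bounded by $\excessRisk(\optimalHypothesis)$ plus the same rate quantity via the same Young inequality, which is where the factor of~$2$ in front of $\excessRisk(\optimalHypothesis)$ arises. Tracking the constants carefully through this manipulation yields the stated coefficient $64\marginConstant^{1/(2+\marginExponent)}$, completing the proof.
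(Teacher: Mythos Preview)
Your proposal is correct and follows essentially the same Bernstein-plus-margin argument as the paper. The only cosmetic differences are that the paper centres the comparison variables at the Bayes classifier rather than at $\optimalHypothesis$ (so the variance is bounded directly by $\Pr(f\neq\bayesClassifier)$ without the triangle-inequality step), and it resolves the implicit inequality for $\excessRisk(\ermHypothesis)$ by a two-case split on whether $\excessRisk(\ermHypothesis)\leq 2\excessRisk(\optimalHypothesis)$ rather than by Young's inequality; both variants lead to the same bound.
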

\begin{proof}
By Assumption~\ref{marginAssumption}, for every $\epsilon>0$ and $f \in \ermFiniteHypothesisSet$,
\begin{align*}
\excessRisk(\classifier) &= \int_{\{x \in \R^{\ambientDimension}: \classifier(x) \neq \bayesClassifier(x)\}} |2 \targetRegressionFunction(x) - 1| \, d\targetMarginalDistribution(x) \\
&\geq 2\epsilon \cdot \targetMarginalDistribution\left( \left\lbrace x \in \R^{\ambientDimension}: \classifier(x) \neq \bayesClassifier(x) \text{ and }\left|\targetRegressionFunction(x) - {1}/{2}\right| \geq \epsilon \right\rbrace \right) \\ 
&\geq 2\epsilon \cdot \Bigl\{\targetMarginalDistribution\left( \left\lbrace x \in \R^{\ambientDimension}: \classifier(x) \neq \bayesClassifier(x) \right\rbrace\right)-\targetMarginalDistribution\left( \left\lbrace x \in \R^{\ambientDimension}:\left|\targetRegressionFunction(x)- {1}/{2}\right| <\epsilon \right\rbrace \right)\Bigr\} \\ &\geq 2 \epsilon \cdot \Bigl\{\targetMarginalDistribution\bigl( \bigl\{ x \in \R^{\ambientDimension}: \classifier(x) \neq \bayesClassifier(x) \bigr\}\bigr)-\marginConstant \cdot \epsilon^{\marginExponent}\Bigr\}.
\end{align*}
In particular, taking $\epsilon= \bigl\{\targetMarginalDistribution\bigl( \bigl\{ x \in \R^{\ambientDimension}: \classifier(x) \neq \bayesClassifier(x) \bigr\}\bigr)/(2\marginConstant)\bigr\}^{1/\marginExponent}$, we deduce that 
\begin{equation}
\label{Eq:muQbound}
\targetMarginalDistribution\left( \left\lbrace x \in \R^{\ambientDimension}: \classifier(x) \neq \bayesClassifier(x) \right\rbrace \right) \leq  (2 \marginConstant)^{\frac{1}{1+\marginExponent}}\cdot \excessRisk(\classifier)^{\frac{\marginExponent}{1+\marginExponent}}.   
\end{equation}
Now, for each $f \in \ermFiniteHypothesisSet$, let $Z_i^{f}:= \mathbbm{1}_{\{f(X_i)\neq Y_i\}} - \mathbbm{1}_{\{\bayesClassifier(X_i)\neq Y_i\}}$ for $i \in [n]$, noting that $\E(Z_i^{f})=\excessRisk(f)$ and $\E\bigl\{(Z_i^{f})^2\bigr\}\leq (2 \marginConstant)^{\frac{1}{1+\marginExponent}}\cdot \excessRisk(\classifier)^{\frac{\marginExponent}{1+\marginExponent}}$ by~\eqref{Eq:muQbound}.  Define the event 
\begin{align*}
E^{\delta}_3:=\bigcap_{f \in \ermFiniteHypothesisSet}\Biggl\{  \biggl|\frac{1}{\ermSampleSize}\sum_{i=1}^{\ermSampleSize}Z_i^f-\excessRisk(f)\biggr| \leq \sqrt{\frac{2(2 \marginConstant)^{\frac{1}{1+\marginExponent}}\excessRisk(\classifier)^{\frac{\marginExponent}{1+\marginExponent}}\log(2\left|\ermFiniteHypothesisSet\right|/\delta)}{\ermSampleSize}}+\frac{2\log(2\left|\ermFiniteHypothesisSet\right|/\delta)}{3\ermSampleSize} \Biggr\}.
\end{align*}
Note that $|Z_i^{f}|\leq 1$, so by Bernstein's inequality \citep{bernstein1924modification} combined with a union bound, we have $\Prob\bigl\{\left(E^{\delta}_3\right)^c\bigr\} \leq \delta$. Hence, on the event $E^{\delta}_3$, we have
\begin{align*}
\excessRisk&(\ermHypothesis)-\excessRisk( \optimalHypothesis) \leq \frac{1}{\ermSampleSize}\sum_{i=1}^{\ermSampleSize}Z_i^{\ermHypothesis} - \frac{1}{\ermSampleSize}\sum_{i=1}^{\ermSampleSize}Z_i^{\optimalHypothesis} +\biggl|\frac{1}{\ermSampleSize}\sum_{i=1}^{\ermSampleSize}Z_i^{\ermHypothesis}-\excessRisk(\ermHypothesis)\biggr|+ \biggl|\frac{1}{\ermSampleSize}\sum_{i=1}^{\ermSampleSize}Z_i^{\optimalHypothesis}-\excessRisk(\optimalHypothesis)\biggr| \\
& \leq 2\sqrt{\frac{\marginConstant^{\frac{1}{1+\marginExponent}} \excessRisk(\ermHypothesis)^{\frac{\marginExponent}{1+\marginExponent}} \log(2\left|\ermFiniteHypothesisSet\right|/\delta)}{\ermSampleSize}}+2\sqrt{\frac{\marginConstant^{\frac{1}{1+\marginExponent}} \excessRisk(\optimalHypothesis)^{\frac{\marginExponent}{1+\marginExponent}} \log(2\left|\ermFiniteHypothesisSet\right|/\delta)}{\ermSampleSize}}+\frac{4\log(2\left|\ermFiniteHypothesisSet\right|/\delta)}{3\ermSampleSize}\\
& \leq 4\sqrt{\frac{\marginConstant^{\frac{1}{1+\marginExponent}} \excessRisk(\ermHypothesis)^{\frac{\marginExponent}{1+\marginExponent}} \log(2\left|\ermFiniteHypothesisSet\right|/\delta)}{\ermSampleSize}}+\frac{4\log(2\left|\ermFiniteHypothesisSet\right|/\delta)}{3\ermSampleSize}.
\end{align*}
Thus, by considering separately the cases $\excessRisk(\ermHypothesis)\leq 2\excessRisk( \optimalHypothesis)$ and $\excessRisk(\ermHypothesis) > 2\excessRisk( \optimalHypothesis)$, we see that on the event $E^{\delta}_3$,
\begin{align*}
\excessRisk(\ermHypothesis)\leq 2\excessRisk( \optimalHypothesis)+ 64\marginConstant^{\frac{1}{2+\marginExponent}} \left(\frac{\log(2\left|\ermFiniteHypothesisSet\right|/\delta)}{\ermSampleSize} \right)^{\frac{1+\marginExponent}{2+\marginExponent}},
\end{align*}
as required.
\end{proof}
In order to apply Proposition~\ref{ermWithMarginCondition}, we will first derive a bound on the number of possible decision tree functions over $\sourceSample$.   Recall for $L \in \N$ that $\setOfDecisionTreeFunctionsL$ denotes the set of decision tree functions $\decisionTreeFunction:\R^{\ambientDimension} \rightarrow (0,1)$ to be those of the form $x \mapsto \tau_{\ell(x)}$ for some $\{\X_1,\ldots,\X_L\} \in \mathbb{T}_L$ with leaf function $\ell$, and some $(\tau_1,\ldots,\tau_L) \in \{0,1/\numSource,2/\numSource, \ldots, 1\}^L$. Given a set $\mathcal{S}\subseteq \R^{\ambientDimension}$, we let $h|_{\mathcal{S}}:\mathcal{S} \rightarrow (0,1)$ denote the restriction of $h$ to $\mathcal{S}$.

\begin{lemma}\label{countingDecisionTreesOnAFiniteSetLemma} Let $\mathcal{S} \subseteq \R^{\ambientDimension}$ be a set of cardinality at most $\numSource$, and let $L \in \N$.  Then the set $\{h|_{\mathcal{S}}: h \in \setOfDecisionTreeFunctionsL\}$ has cardinality at most $\bigl\{L\ambientDimension (\numSource+1)\bigr\}^{2L}$.
\end{lemma}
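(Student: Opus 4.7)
My plan is to count how many ways a decision tree function in $\mathcal{H}_L$ can act on the finite set $\mathcal{S}$, by separately bounding the number of effective partitions and the number of label vectors. For the partition, I will induct on $L$ using the recursive definition of $\mathbb{T}_L$: every element of $\mathbb{T}_L$ is obtained from one of $\mathbb{T}_{L-1}$ by splitting a single cell by a hyperplane $H_{j,s}$ with $j \in [\ambientDimension]$ and $s \in \R$. Since partitions are sets, the cell to split is effectively chosen among the at most $L-1$ cells present at that stage.

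The key geometric observation is that on a set $\mathcal{S}$ with $|\mathcal{S}| \leq \numSource$, the hyperplane $H_{j,s}$ induces at most $\numSource+1$ different partitions of $\mathcal{S}$ as $s$ ranges over $\R$ (obtained by sorting the $j$th coordinates of the points of $\mathcal{S}$ and placing $s$ between consecutive values). Consequently, each split contributes at most $\ambientDimension(\numSource+1)$ distinct ``types'' when restricted to $\mathcal{S}$. Building a partition of $\mathbb{T}_L$ requires $L-1$ successive splits; at the $k$th such split we choose one of at most $k$ cells and a split type, so
\[
\bigl|\{P|_\mathcal{S}: P \in \mathbb{T}_L\}\bigr| \leq \prod_{k=1}^{L-1} k\cdot\ambientDimension(\numSource+1) \leq L^{L-1} \bigl\{\ambientDimension(\numSource+1)\bigr\}^{L-1}.
\]

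For the leaf labels, any $h \in \setOfDecisionTreeFunctionsL$ is determined (up to restriction to $\mathcal{S}$) by its underlying partition together with $(\tau_1,\ldots,\tau_L) \in \{0,1/\numSource,\ldots,1\}^L$, which contributes a factor of $(\numSource+1)^L$. Multiplying the two counts gives
\[
\bigl|\{h|_{\mathcal{S}}: h \in \setOfDecisionTreeFunctionsL\}\bigr| \leq L^{L-1}\ambientDimension^{L-1}(\numSource+1)^{2L-1} \leq \bigl\{L\ambientDimension(\numSource+1)\bigr\}^{2L},
\]
which is the desired bound. The only step that needs a little care is the inductive counting of partitions — specifically, justifying that one may associate a canonical ordering of the cells in $\{\X_1,\ldots,\X_{L-1}\}$ so that the recursion yields the correct count without overcounting the cell-selection choice; this is a routine bookkeeping argument that contributes no more than the crude factor $\prod k \leq L^{L-1}$ used above, so no sharper combinatorial argument is needed.
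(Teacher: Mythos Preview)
Your proof is correct and follows essentially the same approach as the paper: count the restricted partitions (or, equivalently, leaf functions) via the recursive construction of $\mathbb{T}_L$ by noting that each of the $L-1$ splits is determined on $\mathcal{S}$ by a choice of cell, a coordinate $j \in [\ambientDimension]$, and one of at most $\numSource+1$ effective thresholds, then multiply by the $(\numSource+1)^L$ choices of leaf values. The only cosmetic difference is that the paper uses the cruder factor $L-1$ for the cell choice at every step (working with leaf functions rather than unordered partitions), whereas you use $k$ at step $k$ and then bound $\prod_{k=1}^{L-1}k$ by $L^{L-1}$; both lead to the same final estimate.
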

\begin{proof} For the proof, let $\mathbb{L}_L$ denote the set of leaf functions $\ell: \R^{\ambientDimension} \rightarrow \{1,\ldots,L\}$ corresponding to decision tree partitions $\{\X_1,\ldots,\X_L\} \in \mathbb{T}_L$. We begin by bounding the cardinality of the set of restricted leaf functions $\{\ell|_{\mathcal{S}}:\ell \in \mathbb{L}_L\}$. Observe that each restricted leaf function $\ell_{\mathcal{S}}$ may be constructed recursively by a sequence of $L-1$ splits. Each split point may be specified by choosing 
\begin{enumerate}
    \item[(a)] one of at most $L-1$ existing leaf nodes;
    \item[(b)] one of $\ambientDimension$ dimensions to split along;
    \item[(c)] one of at most $\numSource+1$ possible split points.
\end{enumerate}
Hence, $|\{\ell|_{\mathcal{S}}:\ell \in \mathbb{L}_L\}| \leq \bigl\{(L-1)\ambientDimension (\numSource+1)\bigr\}^{L-1}$. Moreover, there are at most $(\numSource+1)^L$ possible choices for  $(\tau_1,\ldots,\tau_L) \in \{0,1/\numSource,2/\numSource, \ldots, 1\}^L$. Since each $h|_{\mathcal{S}}$ is of the form $x\mapsto \tau_{\ell|_{\mathcal{S}}(x)}$ for some $\ell|_{\mathcal{S}}$ with $\ell \in \mathbb{L}_L$ and $(\tau_1,\ldots,\tau_L) \in \{0,1/\numSource,2/\numSource, \ldots, 1\}^L$, the result follows.
\end{proof}

\begin{corollary}\label{countingPossibleKNNsCalibratedByADecisionTree} Fix $\mathcal{D}_\sourceDistribution = \bigl((X_1^{\sourceDistribution},Y_1^{\sourceDistribution}),\ldots,(X_{\numSource}^{\sourceDistribution},Y_{\numSource}^{\sourceDistribution})\bigr) \in (\mathbb{R}^d \times \{0,1\})^{n_P}$.  For every $L \in \N$ and $\sigma>0$, we have $|\{\classifierEst_{\robustness,\decisionTreeFunction}^{\sourceDistribution}:\decisionTreeFunction \in \setOfDecisionTreeFunctionsL\}| \leq \bigl\{L\ambientDimension (\numSource+1)\bigr\}^{2L}$, where $\classifierEst_{\robustness,\decisionTreeFunction}^{\sourceDistribution}$ is defined in (\ref{defOfKnnClassifierForSingleDTreeAndRobustness}).
\end{corollary}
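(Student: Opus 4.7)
The plan is to reduce the counting problem on classifiers to the counting problem on restricted decision tree functions handled by Lemma~\ref{countingDecisionTreesOnAFiniteSetLemma}. The key observation is that, for fixed $\sourceSample$, the classifier $\classifierEst_{\robustness,\decisionTreeFunction}^{\sourceDistribution}$ depends on $\decisionTreeFunction$ \emph{only} through its values on the finite set $\mathcal{S} := \{X_1^{\sourceDistribution},\ldots,X_{\numSource}^{\sourceDistribution}\}$. Indeed, inspecting the definition~\eqref{Eq:marginFuncSource} of $\empiricalMarginSourceDecisionTreeK(x)$, every appearance of $\decisionTreeFunction$ is of the form $\decisionTreeFunction(X_{(i)}^{\sourceDistribution}(x))$ for some $i \in [\numSource]$, and these evaluation points all lie in $\mathcal{S}$. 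Consequently, for each $x \in \R^{\ambientDimension}$ and each $k \in [\numSource]$, the quantity $\empiricalMarginSourceDecisionTreeK(x)$, and hence also the Lepski-type index $\hat{k} \equiv \lepskiChoiceOfKSource(x)$ from~\eqref{Eq:lepskiKSource} and the classifier value $\classifierEst_{\robustness,\decisionTreeFunction}^{\sourceDistribution}(x) = \one_{\{\empiricalMargin_{\hat{k},\decisionTreeFunction}^{\sourceDistribution}(x) \geq 0\}}$, are determined by $\decisionTreeFunction|_{\mathcal{S}}$ alone.

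Given this reduction, the main step is simply to argue: if $\decisionTreeFunction_1,\decisionTreeFunction_2 \in \setOfDecisionTreeFunctionsL$ satisfy $\decisionTreeFunction_1|_{\mathcal{S}} = \decisionTreeFunction_2|_{\mathcal{S}}$, then $\classifierEst_{\robustness,\decisionTreeFunction_1}^{\sourceDistribution} = \classifierEst_{\robustness,\decisionTreeFunction_2}^{\sourceDistribution}$ as elements of $\setofDDClassifiers$. This gives a surjection from $\{\decisionTreeFunction|_{\mathcal{S}}:\decisionTreeFunction \in \setOfDecisionTreeFunctionsL\}$ onto $\{\classifierEst_{\robustness,\decisionTreeFunction}^{\sourceDistribution}:\decisionTreeFunction \in \setOfDecisionTreeFunctionsL\}$, so
\[
\bigl|\{\classifierEst_{\robustness,\decisionTreeFunction}^{\sourceDistribution}:\decisionTreeFunction \in \setOfDecisionTreeFunctionsL\}\bigr| \;\leq\; \bigl|\{\decisionTreeFunction|_{\mathcal{S}}:\decisionTreeFunction \in \setOfDecisionTreeFunctionsL\}\bigr|.
\]
Since $|\mathcal{S}| \leq \numSource$, Lemma~\ref{countingDecisionTreesOnAFiniteSetLemma} bounds the right-hand side by $\bigl\{L\ambientDimension(\numSource+1)\bigr\}^{2L}$, completing the proof.

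There is essentially no obstacle: the entire argument is the one-line observation that the classifier is a function of $\decisionTreeFunction|_{\mathcal{S}}$, combined with the preceding lemma. The only thing worth being mildly careful about is to note that the sample size $|\mathcal{S}|$ in Lemma~\ref{countingDecisionTreesOnAFiniteSetLemma} can be strictly less than $\numSource$ if there are coincidences among the $X_i^{\sourceDistribution}$, but the bound there is monotonically increasing in the size of $\mathcal{S}$, so using $\numSource$ as an upper bound is legitimate.
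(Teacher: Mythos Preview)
Your proposal is correct and follows essentially the same approach as the paper: define $\mathcal{S} = \{X_i^{\sourceDistribution}\}_{i=1}^{\numSource}$, observe that $\empiricalMarginSourceDecisionTreeK$, $\lepskiChoiceOfKSource$ and hence $\classifierEst_{\robustness,\decisionTreeFunction}^{\sourceDistribution}$ depend on $\decisionTreeFunction$ only through $\decisionTreeFunction|_{\mathcal{S}}$, and then apply Lemma~\ref{countingDecisionTreesOnAFiniteSetLemma}. Your additional remark that $|\mathcal{S}|$ may be strictly less than $\numSource$ is a nice touch, and is already accommodated by the ``cardinality at most $\numSource$'' hypothesis of Lemma~\ref{countingDecisionTreesOnAFiniteSetLemma}.
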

\begin{proof}
Let $\mathcal{S} = \{X_i^{\sourceDistribution}\}_{i=1}^{\numSource}$.  Observe from the definition~\eqref{Eq:marginFuncSource} that $\empiricalMargin_{k,h_0}^{\sourceDistribution} = \empiricalMargin_{k,h_1}^{\sourceDistribution}$ whenever $h_0|_\mathcal{S} = h_1|_\mathcal{S}$. Hence, by \eqref{Eq:lepskiKSource} and \eqref{defOfKnnClassifierForSingleDTreeAndRobustness}  the same is true of $\hat{k} \equiv \lepskiChoiceOfKSource(\cdot)$ and $\classifierEst_{\robustness,\decisionTreeFunction}^{\sourceDistribution}$. Thus, by Lemma \ref{countingDecisionTreesOnAFiniteSetLemma}, we have
\begin{align*}
|\{\classifierEst_{\robustness,\decisionTreeFunction}^{\sourceDistribution}:\decisionTreeFunction \in \setOfDecisionTreeFunctionsL\}| = \bigl|\{\classifierEst^{\sourceDistribution}_{\robustness,\decisionTreeFunction|_{\mathcal{S}}}:\decisionTreeFunction \in \setOfDecisionTreeFunctionsL\}\bigr| \leq \bigl|\{h|_{\mathcal{S}}: h \in \setOfDecisionTreeFunctionsL\}\bigr| \leq \bigl\{L\ambientDimension (\numSource+1)\bigr\}^{2L},
\end{align*}
as required.
\end{proof}
Recall from~\eqref{defOfKnnClassifierForSingleDTreeAndRobustness} that $\classifierEst^{\sourceDistribution}_{\robustness,L}(\cdot) =\mathbbm{1}_{\{\empiricalMargin^{\sourceDistribution}_{\hat{k},\hat{\decisionTreeFunction}}(\cdot) \geq 0\}}$, where $\hat{k} \equiv \hat{k}_{\robustness,\hat{\decisionTreeFunction}}^{\sourceDistribution}(\cdot)$ is defined in~\eqref{Eq:lepskiKSource}, and where $\hat{\decisionTreeFunction} \in \mathcal{H}_L$ is selected by empirical risk minimisation over $\targetSample^0$ as in~\eqref{ermOverDecisionTreesChoice}.  We are now in position to apply Proposition~\ref{ermWithMarginCondition} to obtain the main conclusion of this subsection.
\begin{proposition}\label{lemma:lowRegretWithTheRightNumLeavesAndRobustness} 
Fix $\theta^\sharp = (\Delta,\expansivityConstant,L^*,\theta) \in \Theta^\sharp$, where $\theta = (\targetDimension,\targetTailExponent,\sourceDimension,  \sourceTailExponent,\TailConstant,\marginExponent,\marginConstant,\holderExponent,\holderConstant)$, with $\holderExponent/(2\holderExponent+\sourceDimension)<\sourceTailExponent$, and $(P,Q) \in \mathcal{P}_{\theta^\sharp}$.  There exists ${C}_{\theta}' > 0$, depending only on $\theta$, such that for every $\delta \in (0,1)$, if we set $\robustness^* =  \min\bigl\{ \lceil 3 \log_+^{1/2}(\numSource/\delta)\rceil, \numSource\bigr\}$, then with probability at least $1 - 2\delta$, we have 
\begin{align*}
\excessRisk\bigl(\classifierEst^{\sourceDistribution}_{\robustness^*,L^*}\bigr) \leq {C}_{\theta}'\biggl\{\biggl(\frac{\log_+ (\numSource/\delta)}{\expansivityConstant^2 \cdot \numSource}\biggr)^{\frac{\holderExponent \sourceTailExponent (1+\marginExponent) }{\marginExponent \holderExponent + \sourceTailExponent(2\holderExponent+\sourceDimension)}} \! \! + \biggl(\frac{\Delta}{\expansivityConstant}\biggr)^{1+\marginExponent} \! \! \! +\biggl(\frac{L^*\log_+ (L^* \ambientDimension \numSource/\delta)}{ \numTarget}\biggr)^{\frac{1+\marginExponent}{2+\marginExponent}}\biggr\}.
\end{align*}
\end{proposition}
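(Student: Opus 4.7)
The strategy is to apply the ERM-with-margin bound of Proposition~\ref{ermWithMarginCondition} conditionally on $\sourceSample$ to the finite class of $\sourceSample$-dependent classifiers $\hat{\mathcal{F}} := \{\classifierEst^{\sourceDistribution}_{\robustness^*, h} : h \in \setOfDecisionTreeFunctionsLStar\}$, with the oracle classifier supplied by the first corollary of Proposition~\ref{prop:lowRegretWithTheRightDecisionTree} (which furnishes a specific $\optimalDecisionTreeFunction \in \setOfDecisionTreeFunctionsLStar$ whose nearest-neighbour-based classifier $\classifierWithOptimalDecisionTreeFunction$ already achieves the transfer rate).

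First I would observe that $\classifierEst^{\sourceDistribution}_{\robustness^*, L^*}$ is by construction the empirical risk minimiser over $\hat{\mathcal{F}}$ based on $\targetSample^0$: the optimisation~\eqref{ermOverDecisionTreesChoice} defining $\hat{\decisionTreeFunction}$ minimises exactly the empirical $0$--$1$ loss of the candidate classifiers $\classifierEst^{\sourceDistribution}_{\robustness^*, h}$. Conditional on $\sourceSample$, the set $\hat{\mathcal{F}}$ is deterministic, $\targetSample^0$ remains i.i.d.\ from $\targetDistribution$, and by Corollary~\ref{countingPossibleKNNsCalibratedByADecisionTree} we have the uniform cardinality bound $|\hat{\mathcal{F}}| \leq \{L^* \ambientDimension (\numSource+1)\}^{2L^*}$. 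Applying Proposition~\ref{ermWithMarginCondition} conditionally on $\sourceSample$, with $\ermFiniteHypothesisSet = \hat{\mathcal{F}}$ and $\ermSampleSize = \numTargetHalf$, then yields (conditionally, with probability at least $1-\delta$)
\[
\excessRisk\bigl(\classifierEst^{\sourceDistribution}_{\robustness^*, L^*}\bigr) \leq 2 \min_{f \in \hat{\mathcal{F}}} \excessRisk(f) + 64\, \marginConstant^{1/(2+\marginExponent)} \biggl(\frac{\log(2 |\hat{\mathcal{F}}|/\delta)}{\numTargetHalf}\biggr)^{(1+\marginExponent)/(2+\marginExponent)},
\]
and bounding $\log|\hat{\mathcal{F}}| \leq 2L^* \log(L^* \ambientDimension (\numSource+1))$ and $\numTargetHalf \geq \numTarget/4$ (for $\numTarget \geq 2$) converts the remainder into the claimed form $\bigl(L^* \log_+(L^* \ambientDimension \numSource/\delta)/\numTarget\bigr)^{(1+\marginExponent)/(2+\marginExponent)}$, up to a constant depending only on $\theta$.

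Next I would bound $\min_{f \in \hat{\mathcal{F}}} \excessRisk(f)$. Since $\optimalDecisionTreeFunction \in \setOfDecisionTreeFunctionsLStar$, the classifier $\classifierWithOptimalDecisionTreeFunction$ belongs to $\hat{\mathcal{F}}$, so the minimum is at most $\excessRisk(\classifierWithOptimalDecisionTreeFunction)$, and the first corollary of Proposition~\ref{prop:lowRegretWithTheRightDecisionTree} controls the latter with probability at least $1-\delta$ by a constant multiple of
\[
\biggl(\frac{\log_+(\numSource/\delta)}{\expansivityConstant^2 \numSource}\biggr)^{\frac{\holderExponent \sourceTailExponent(1+\marginExponent)}{\marginExponent\holderExponent + \sourceTailExponent(2\holderExponent+\sourceDimension)}} + \biggl(\frac{\Delta}{\expansivityConstant}\biggr)^{1+\marginExponent}.
\]
A union bound over the failure events of the ERM step (unconditioned via the tower property, since the bound holds with the required conditional probability for every realisation of $\sourceSample$) and of the oracle step then delivers the claimed inequality with probability at least $1-2\delta$, with the constant $C'_\theta$ absorbing $\tilde{C}_\theta$, $\marginConstant$, and the factor $2$ from the ERM bound.

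The main obstacle is really careful bookkeeping rather than a new technical idea: one must check that $\hat{\mathcal{F}}$ is measurable with respect to $\sourceSample$ (so that conditioning is legitimate and the deterministic cardinality bound of Corollary~\ref{countingPossibleKNNsCalibratedByADecisionTree} applies), and that the optimiser in~\eqref{ermOverDecisionTreesChoice} genuinely realises the empirical $0$--$1$ risk minimiser over $\hat{\mathcal{F}}$ rather than merely over $\setOfDecisionTreeFunctionsLStar$; the latter is immediate because distinct $h \in \setOfDecisionTreeFunctionsLStar$ inducing the same classifier also produce identical empirical risk. No further smoothness, margin, or nearest-neighbour concentration arguments are required beyond those already encapsulated in Proposition~\ref{prop:lowRegretWithTheRightDecisionTree} and Proposition~\ref{ermWithMarginCondition}.
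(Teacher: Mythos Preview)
Your proposal is correct and follows essentially the same route as the paper: apply Proposition~\ref{ermWithMarginCondition} conditionally on $\sourceSample$ to the finite class $\{\classifierEst^{\sourceDistribution}_{\robustness^*,h}:h\in\setOfDecisionTreeFunctionsLStar\}$, bound its cardinality via Corollary~\ref{countingPossibleKNNsCalibratedByADecisionTree}, use $\optimalDecisionTreeFunction$ from the first corollary of Proposition~\ref{prop:lowRegretWithTheRightDecisionTree} as the oracle comparator, and combine the two high-probability events by a union bound. Your write-up is in fact a bit more explicit than the paper's about the conditioning and the bookkeeping, but the argument is the same.
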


\begin{proof} Recalling $\tilde{C}_{\theta}$ from Proposition~\ref{prop:lowRegretWithTheRightDecisionTree}, by Proposition~\ref{ermWithMarginCondition} combined with Corollary~\ref{countingPossibleKNNsCalibratedByADecisionTree}, we can find ${C}_{\theta}' \geq 2\tilde{C}_{\theta}$, depending only on $\theta$, such that 
\begin{align*}
\Prob\biggl\{ \excessRisk\bigl(\classifierEst^{\sourceDistribution}_{\robustness^*,L^*}\bigr) > 2 \excessRisk\bigl(\classifierEst^{\sourceDistribution}_{\robustness^*,h^*}\bigr) +{C}_{\theta}' \biggl(\frac{L^*\log_+ (L^* \ambientDimension \numSource/\delta)}{ \numTarget}\biggr)^{\frac{1+\marginExponent}{2+\marginExponent}} \biggm| \sourceSample\biggr\} \leq \delta.
\end{align*}
Moreover, by Proposition~\ref{prop:lowRegretWithTheRightDecisionTree},
\begin{align*}
\Prob\biggl[ \excessRisk\bigl(\classifierEst^{\sourceDistribution}_{\robustness^*,h^*}\bigr)  >\tilde{C}_{\theta}\biggl\{\biggl(\frac{\log_+ (\numSource/\delta)}{\expansivityConstant^2 \cdot \numSource}\biggr)^{\frac{\holderExponent \sourceTailExponent (1+\marginExponent) }{\marginExponent \holderExponent + \sourceTailExponent(2\holderExponent+\sourceDimension)}}+ \biggl(\frac{\Delta}{\expansivityConstant}\biggr)^{1+\marginExponent}\biggr\}\biggr] \leq \delta.
\end{align*}
The result follows.
\end{proof}

\subsection{Completion of the proofs of Theorem \ref{Thm:UpperBound} and upper bound in Theorem~\ref{Thm:Main}}

\begin{proof}[Proof of Theorem \ref{Thm:UpperBound}]  
Since $\hat{\mathcal{F}}^{\sourceDistribution}$ and $\hat{\mathcal{F}}^{\targetDistribution}$ were constructed using only $\sourceSample \cup \targetSample^0$ (and not $\targetSample^1$), we may apply Proposition \ref{ermWithMarginCondition} conditionally on $\sourceSample \cup \targetSample^0$ and take expectations to obtain that with probability at least $1 - \delta/4$, we have
\[
\excessRisk(\hat{f}_{\mathrm{ATL}}) \leq 2 \min\bigl\{\excessRisk(\hat{f}_{\robustness^*,0}^{\sourceDistribution}),\excessRisk(\hat{f}_{\robustness^*,L^* \wedge \numTarget}^{\sourceDistribution}),\excessRisk(\hat{f}_{\tilde{\robustness}}^{\targetDistribution})\bigr\}+ 64\marginConstant^{\frac{1}{2+\marginExponent}} \biggl(\frac{\log(8|\hat{\mathcal{F}}^{\sourceDistribution} \cup \hat{\mathcal{F}}^{\targetDistribution}|/\delta)}{\lceil\numTarget/2\rceil} \biggr)^{\frac{1+\marginExponent}{2+\marginExponent}},
\]
where $\robustness^* = \min\bigl\{ \lceil 3 \log_+^{1/2}(\numSource/\delta)\rceil, \numSource\bigr\}$ and $\tilde{\robustness} = \min\bigl\{ \lceil 3 \log_+^{1/2}(\numTarget/\delta)\rceil, \numTarget\bigr\}$.  Now $|\hat{\mathcal{F}}^{\sourceDistribution} \cup \hat{\mathcal{F}}^{\targetDistribution}| \leq \numSource^2(\numTarget + 1) + \numTarget^2$, so the result follows from Corollary~\ref{justUseHalfCor}, Proposition~\ref{lemma:lowRegretWithTheRightNumLeavesAndRobustness} and Corollary~\ref{knnOnFirstHalfTargetDataCor}, which give the required high-probability bounds for $\excessRisk(\hat{f}_{\robustness^*,0}^{\sourceDistribution})$, $\excessRisk(\hat{f}_{\robustness^*,L^* \wedge \numTarget}^{\sourceDistribution})$ and $\excessRisk(\hat{f}_{\tilde{\robustness}}^{\targetDistribution})$ respectively.
\end{proof}
\begin{proof}[Proof of upper bound in Theorem~\ref{Thm:Main}]
We consider four cases.  First, if $\firstTerm_{\numSource,\numTarget}^{\mathrm{U}} = \min\bigl\{\firstTerm_{\numSource,\numTarget}^{\mathrm{U}},\secondTerm_{\numTarget}^{\mathrm{U}},1\bigr\}$ and $\bigl({L^*} a_1^{\mathrm{U}}/\numTarget\bigr)^{\frac{1+\marginExponent}{2+\marginExponent}} \leq (1-\expansivityConstant)^{1+\marginExponent}$, then the result follows by taking $\delta = \numTarget^{-\frac{1+\marginExponent}{2+\marginExponent}}$ in  Proposition~\ref{lemma:lowRegretWithTheRightNumLeavesAndRobustness}.  Second, if $\firstTerm_{\numSource,\numTarget}^{\mathrm{U}} = \min\bigl\{\firstTerm_{\numSource,\numTarget}^{\mathrm{U}},\secondTerm_{\numTarget}^{\mathrm{U}},1\bigr\}$ and $\bigl({L^*} a_1^{\mathrm{U}}/\numTarget\bigr)^{\frac{1+\marginExponent}{2+\marginExponent}} > (1-\expansivityConstant)^{1+\marginExponent}$, then the result follows by taking $\delta = \numSource^{-\frac{1+\marginExponent}{2+\marginExponent}}$ in Corollary~\ref{justUseHalfCor}.  Third, if $\secondTerm_{\numTarget}^{\mathrm{U}} = \min\bigl\{\firstTerm_{\numSource,\numTarget}^{\mathrm{U}},\secondTerm_{\numTarget}^{\mathrm{U}},1\bigr\}$, then the result follows by taking $\delta = \numTarget^{-\frac{1+\marginExponent}{2+\marginExponent}}$ in Corollary~\ref{knnOnFirstHalfTargetDataCor}.  Finally, $\min\bigl\{\firstTerm_{\numSource,\numTarget}^{\mathrm{U}},\secondTerm_{\numTarget}^{\mathrm{U}}\bigr\} > 1$, then the result follows from the fact that the excess risk of any data-dependent classifier is at most 1.
\end{proof}

\section{Proof of the lower bound in Theorem~\ref{Thm:Main}}
\label{Sec:LowerBound}

The proof of the lower bound in Theorem~\ref{Thm:Main} begins with a version of Assouad's lemma for transfer learning (Section~\ref{Sec:Assouad}) that translates the problem into one of constructing an appropriate family of distributions indexed by a hypercube.  To apply this lemma, we first construct the respective marginal distributions (Section~\ref{Sec:Marginal}) and then the corresponding regression functions (Section~\ref{Sec:RegressionFunctions}).  The lower bound is finally obtained via two applications of these results, reflecting the different challenges of estimating the decision tree function (Section~\ref{Sec:Thresholds}) and the source regression function (Section~\ref{Sec:SRF}).

\subsection{Assouad's lemma for transfer learning}
\label{Sec:Assouad}

\newcommand{\epsilonSource}{\epsilon_{\sourceDistribution}}
\newcommand{\epsilonTarget}{\epsilon_{\targetDistribution}}
\newcommand{\wSource}{w_{\sourceDistribution}}
\newcommand{\wTarget}{w_{\targetDistribution}}
\newcommand{\uSource}{u_{\sourceDistribution}}
\newcommand{\uTarget}{u_{\targetDistribution}}
\newcommand{\vSource}{v_{\sourceDistribution}}
\newcommand{\vTarget}{v_{\targetDistribution}}

The following result is a variant of Assouad's lemma (e.g.~\citet[Lemma 2]{yu1997assouad}, \citet[Section 3.12]{kimobtaining}), adapted to our setting.
\begin{lemma}\label{lemma:assouadTypeLemmaForTransferClassification} Let $\mathcal{P}$ be a set of pairs of distributions $(\sourceDistribution,\targetDistribution)$, each on $\R^d \times \{0,1\}$. Let $\numSource$, $\numTarget \in \N_0$, $m \in \N$, $\Sigma=\{-1,1\}^m$, $(x_t)_{t \in [m]} \in (\R^{\ambientDimension})^m$, $\epsilonSource$, $\epsilonTarget$ $ \in [0,1/4]$, $\uSource$, $\uTarget \in [0,1/m]$, $\vSource$, $\vTarget \in [0,1]$ and $\bigl\{(\sourceDistribution^{\sigma},\targetDistribution^{\sigma}):\sigma \in \Sigma\bigr\} \subseteq \mathcal{P}$ with respective regression functions $\sourceRegressionFunction^{\sigma}:\R^d \rightarrow [0,1]$, $\targetRegressionFunction^{\sigma}:\R^d \rightarrow [0,1]$ and marginals $\sourceMarginalDistribution$, $\targetMarginalDistribution$ on $\R^d$ satisfy:
\begin{enumerate}[(i)]
\item  $2^5(\numSource \uSource \epsilonSource^2+\numTarget \uTarget \epsilonTarget^2) \leq 1$;
\item  $\epsilonSource(2\vSource-1)=\epsilonTarget(2\vTarget-1)=0$;
\item for $t \in [m]$, we have $\sourceMarginalDistribution(\{x_t\}) = \uSource$ and $\targetMarginalDistribution(\{x_t\})=\uTarget$; 
    \item for $\sigma = (\sigma_1,\ldots,\sigma_m) \in \Sigma$ and $t \in [m]$, we have  $\sourceRegressionFunction^{\sigma}(x_t)=\vSource+\sigma_t \cdot \epsilonSource$ and $\targetRegressionFunction^{\sigma}(x_t)=\vTarget+\sigma_t \cdot \epsilonTarget$;
    \item for $\sigma$, $\sigma' \in \Sigma$, $x \in \supp(\sourceMarginalDistribution) \setminus \{x_t\}_{t \in [m]}$, we have $\sourceRegressionFunction^{\sigma}(x)= \sourceRegressionFunction^{\sigma'}(x)$; moreover, for $x \in \supp(\targetMarginalDistribution) \setminus \{x_t\}_{t \in [m]}$, we have $\targetRegressionFunction^{\sigma}(x)= \targetRegressionFunction^{\sigma'}(x)$.
\end{enumerate}
Then 
\[
\inf_{\classifierEst \in \setofDDClassifiers}\sup_{(P,Q) \in \mathcal{P}} \mathbb{E}\bigl\{\excessRisk(\classifierEst)\bigr\} \geq  \frac{m\uTarget \epsilonTarget}{2}.
\]
\end{lemma}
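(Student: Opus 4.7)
The plan is to adapt the standard Assouad two-point coupling to the transfer learning setting, where the data consist of both source and target samples. First I would lower bound $\inf_{\classifierEst} \sup_{(P,Q) \in \mathcal{P}} \E\{\excessRisk(\classifierEst)\}$ by the uniform average over the finite subfamily $\{(\sourceDistribution^\sigma, \targetDistribution^\sigma):\sigma \in \Sigma\}$, reducing everything to distinguishing bit flips of $\sigma$ from the training data. Hypothesis (ii) forces $\vTarget = 1/2$ whenever $\epsilonTarget > 0$ (otherwise the claim is trivial), so by (iv) the Bayes classifier for $\targetDistribution^\sigma$ satisfies $\bayesClassifier^\sigma(x_t) = \one\{\sigma_t = 1\}$ with $|2\targetRegressionFunction^\sigma(x_t) - 1| = 2\epsilonTarget$. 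Combining this with (iii) and (v) yields the pointwise identity
\[
\excessRisk(\classifierEst) \geq 2\uTarget \epsilonTarget \sum_{t=1}^m \one\bigl\{\classifierEst(x_t) \neq \bayesClassifier^\sigma(x_t)\bigr\}.
\]

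For each $t \in [m]$ I would pair $\sigma$ with the bit flip $\sigma^{(t)}$, so that the two Bayes classifiers disagree at $x_t$ and the standard two-point bound gives
\[
\Pr_\sigma\bigl(\classifierEst(x_t) \neq \bayesClassifier^\sigma(x_t)\bigr) + \Pr_{\sigma^{(t)}}\bigl(\classifierEst(x_t) \neq \bayesClassifier^{\sigma^{(t)}}(x_t)\bigr) \geq 1 - \totalVariation\bigl(\Pr_\sigma, \Pr_{\sigma^{(t)}}\bigr),
\]
where $\Pr_\sigma$ denotes the joint law of $(\sourceSample, \targetSample)$ under $(\sourceDistribution^\sigma, \targetDistribution^\sigma)$. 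By (iii)--(v), these two product measures differ only in the conditional law of $Y$ at the single atom $x_t$, in both the $\numSource$ source and $\numTarget$ target copies, so tensorisation of KL together with Pinsker's inequality yield
\[
\totalVariation\bigl(\Pr_\sigma, \Pr_{\sigma^{(t)}}\bigr)^2 \leq \tfrac{1}{2}\bigl\{\numSource\uSource\,\kullbackLeiblerDivergence(\Bin(1, \tfrac{1}{2} + \epsilonSource), \Bin(1, \tfrac{1}{2} - \epsilonSource)) + \numTarget\uTarget\,\kullbackLeiblerDivergence(\Bin(1, \tfrac{1}{2} + \epsilonTarget), \Bin(1, \tfrac{1}{2} - \epsilonTarget))\bigr\},
\]
where (ii) has been used to set $\vSource = \vTarget = 1/2$ in the nontrivial cases.

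The elementary calculation $\kullbackLeiblerDivergence(\Bin(1, 1/2 + \epsilon), \Bin(1, 1/2 - \epsilon)) = 2\epsilon \log\bigl(\frac{1/2 + \epsilon}{1/2 - \epsilon}\bigr) \leq 16 \epsilon^2$ valid for $\epsilon \in [0, 1/4]$, together with hypothesis (i), then gives $\totalVariation^2 \leq 8(\numSource\uSource\epsilonSource^2 + \numTarget\uTarget\epsilonTarget^2) \leq 1/4$ and hence $\totalVariation \leq 1/2$. Summing over the $|\Sigma|/2$ coordinate pairs yields $|\Sigma|^{-1}\sum_\sigma \Pr_\sigma(\classifierEst(x_t) \neq \bayesClassifier^\sigma(x_t)) \geq 1/4$ for each $t$, and a further sum over $t \in [m]$ delivers the claimed bound $m\uTarget\epsilonTarget/2$. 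The only point requiring care is matching up numerical constants (the $2^5$ in (i), the $16$ in the Bernoulli KL estimate, and the $1/2$ in Pinsker) so that the total variation stays bounded by $1/2$; condition (ii) is there precisely to centre the Bernoulli parameters at $1/2$, which is what simultaneously makes the closed-form KL estimate available and gives the clean form of the excess risk.
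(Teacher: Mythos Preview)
Your proposal is correct and follows essentially the same route as the paper: a preliminary KL/Pinsker calculation showing $\totalVariation\bigl((\sourceDistribution^\sigma)^{\numSource}\times(\targetDistribution^\sigma)^{\numTarget},(\sourceDistribution^{\sigma^{(t)}})^{\numSource}\times(\targetDistribution^{\sigma^{(t)}})^{\numTarget}\bigr)\leq 1/2$ via the Bernoulli KL bound $2\epsilon\log\frac{1+2\epsilon}{1-2\epsilon}\leq 16\epsilon^2$ and condition~(i), followed by the Assouad pairing $\sigma\leftrightarrow\sigma^{(t)}$ applied to the excess-risk decomposition at the atoms $x_t$. The constants, the use of (ii) to centre at $1/2$, and the final summation all match the paper's argument.
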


To prove Lemma~\ref{lemma:assouadTypeLemmaForTransferClassification}, we introduce some additional notation and provide a preliminary lemma.  For $\sigma \in \Sigma$, let $\nu^{\sigma}$ denote the product measure $(\sourceDistribution^{\sigma})^{\numSource}\times (\targetDistribution^{\sigma})^{\numTarget}$. In addition, given $\sigma=(\sigma_1,\ldots,\sigma_m) \in \Sigma$ and $t \in [m]$, we define ${\sigma^{t}}=(\sigma^{t}_1,\ldots,\sigma^t_m) \in \Sigma$ by $\sigma_t^{t}:=-\sigma_t$ and $\sigma^{t}_{t'}:=\sigma_{t'}$ for $t' \in [m]\setminus \{t\}$.

\begin{lemma}\label{lemma:boundTV} In the setting of Lemma~\ref{lemma:assouadTypeLemmaForTransferClassification}, we have $\totalVariation\bigl(\nu^{\sigma},\nu^{{\sigma^t}}\bigr) \leq 1/2$ for every $\sigma \in \Sigma $ and $t \in [m]$.
\end{lemma}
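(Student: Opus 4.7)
The plan is to apply Pinsker's inequality to reduce the total variation bound to a Kullback--Leibler divergence computation, then exploit the product structure together with the local nature of the perturbation $\sigma \leadsto \sigma^t$.

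First, I would tensorise: since $\nu^\sigma = (P^\sigma)^{n_P} \otimes (Q^\sigma)^{n_Q}$ and similarly for $\sigma^t$, the chain rule for KL divergence gives
\begin{equation*}
\mathrm{KL}(\nu^\sigma, \nu^{\sigma^t}) = n_P \cdot \mathrm{KL}(P^\sigma, P^{\sigma^t}) + n_Q \cdot \mathrm{KL}(Q^\sigma, Q^{\sigma^t}).
\end{equation*}
Next, by hypotheses (iii) and (v), the distributions $P^\sigma$ and $P^{\sigma^t}$ share the same marginal $\mu_P$ and have identical conditional distributions $\mathbb{P}(Y=1 \mid X=x)$ on $\mathrm{supp}(\mu_P) \setminus \{x_t\}$; they differ only at the atom $x_t$, where by (iv) the Bernoulli parameter is $v_P + \sigma_t \epsilon_P$ versus $v_P - \sigma_t \epsilon_P$. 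Writing out the definition of KL divergence, a direct calculation therefore yields
\begin{equation*}
\mathrm{KL}(P^\sigma, P^{\sigma^t}) = u_P \cdot \mathrm{KL}\bigl(\mathrm{Ber}(v_P + \sigma_t \epsilon_P), \mathrm{Ber}(v_P - \sigma_t \epsilon_P)\bigr),
\end{equation*}
and likewise for $Q$.

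Now I invoke assumption (ii): $\epsilon_P(2v_P - 1) = 0$ forces either $\epsilon_P = 0$ (making the KL trivially zero) or $v_P = 1/2$. In the latter case, the Bernoulli KL above simplifies to $2 \epsilon_P \log\frac{1+2\epsilon_P}{1-2\epsilon_P}$. Since $\epsilon_P \in [0,1/4]$ so that $x := 2\epsilon_P \in [0,1/2]$, the elementary inequality $\log\frac{1+x}{1-x} \leq 4x$ on $[0,1/2]$ (which follows since the derivative of $4x - \log\frac{1+x}{1-x}$ equals $4 - \frac{2}{1-x^2} \geq 0$ on this range) gives
\begin{equation*}
\mathrm{KL}\bigl(\mathrm{Ber}(1/2 + \epsilon_P), \mathrm{Ber}(1/2 - \epsilon_P)\bigr) \leq 16 \epsilon_P^2.
\end{equation*}
Either way, we conclude $\mathrm{KL}(P^\sigma, P^{\sigma^t}) \leq 16 u_P \epsilon_P^2$, and similarly $\mathrm{KL}(Q^\sigma, Q^{\sigma^t}) \leq 16 u_Q \epsilon_Q^2$.

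Combining with tensorisation and the assumed budget (i) finishes the job:
\begin{equation*}
\mathrm{KL}(\nu^\sigma, \nu^{\sigma^t}) \leq 16 \bigl(n_P u_P \epsilon_P^2 + n_Q u_Q \epsilon_Q^2\bigr) \leq 16 \cdot 2^{-5} = 1/2,
\end{equation*}
so Pinsker's inequality yields $\mathrm{TV}(\nu^\sigma,\nu^{\sigma^t}) \leq \sqrt{\mathrm{KL}(\nu^\sigma,\nu^{\sigma^t})/2} \leq 1/2$. The only mildly delicate step is verifying the elementary logarithmic inequality and handling the dichotomy in hypothesis (ii); everything else is standard bookkeeping.
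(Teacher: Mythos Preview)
Your proposal is correct and follows essentially the same approach as the paper: Pinsker's inequality, tensorisation of KL, reduction to a single Bernoulli KL at the atom $x_t$ via hypotheses (iii)--(v), and the dichotomy from (ii). The only cosmetic difference is that the paper bounds $\log\frac{1+2\epsilon_P}{1-2\epsilon_P}$ via $\log a \leq a-1$ rather than your derivative argument, but both yield the identical constant $16$ and the proof concludes identically.
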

\begin{proof} We first show that $\kullbackLeiblerDivergence(\sourceDistribution^{\sigma},\sourceDistribution^{{\sigma^t}}) \leq 16\uSource\epsilonSource^2$. Without loss of generality, we assume that $\epsilonSource > 0$, since otherwise $\sourceDistribution^{{\sigma^t}} = \sourceDistribution^{\sigma}$. Thus, $\sourceRegressionFunction^{{\sigma^t}}(x_t)=1- \sourceRegressionFunction^{\sigma}(x_t)=1/2-\epsilonSource \cdot \sigma_t$ and $\sourceRegressionFunction^{{\sigma^t}}(x)= \sourceRegressionFunction^{\sigma}(x)$ for all $x \in \supp(\sourceMarginalDistribution)\setminus \{x_t\}$. Hence,
\begin{align*}
\kullbackLeiblerDivergence\bigl(\sourceDistribution^{\sigma},\sourceDistribution^{{\sigma^t}}\bigr)&=\int_{\R^{\ambientDimension} \times \{0,1\}}\log\Bigl(\frac{d\sourceDistribution^{\sigma}}{d \sourceDistribution^{{\sigma^t}}}\Bigr) \, d\sourceDistribution^{\sigma}\\
&=\int_{\R^{\ambientDimension}} \biggl\{ \sourceRegressionFunction^{\sigma}(x)\log\biggl(\frac{\sourceRegressionFunction^{\sigma}(x)}{\sourceRegressionFunction^{{\sigma^t}}(x)}\biggr)+\bigl(1-\sourceRegressionFunction^{\sigma}(x)\bigr)\log\biggl(\frac{1-\sourceRegressionFunction^{\sigma}(x)}{1-\sourceRegressionFunction^{{\sigma^t}}(x)}\biggr)\biggr\} \, d\sourceMarginalDistribution(x)\\
&= \bigl(2\sourceRegressionFunction^{\sigma}(x_t)-1\bigr)\log\biggl(\frac{\sourceRegressionFunction^{\sigma}(x_t)}{1-\sourceRegressionFunction^{\sigma}(x_t)}\biggr) \cdot \sourceMarginalDistribution(\{x_t\})\\
&= 2\epsilonSource \sigma_t\cdot\log\biggl(\frac{1+2\epsilonSource \sigma_t}{1-2\epsilonSource \sigma_t}\biggr) \cdot \uSource = 2\uSource\epsilonSource\cdot\log\biggl(\frac{1+2\epsilonSource}{1-2\epsilonSource}\biggr)\\&\leq \frac{8\uSource\epsilonSource^2}{1-2\epsilonSource}\leq 16\uSource\epsilonSource^2,
\end{align*}
where the penultimate inequality uses the inequality $\log a \leq a-1$ for $a\geq 1$. By the same argument, we also have $\kullbackLeiblerDivergence(\targetDistribution^{\sigma},\targetDistribution^{{\sigma^t}}) \leq 16\uTarget\epsilonTarget^2$.  By the additive property of Kullback--Leibler divergence for product measures, we conclude that
\[
\kullbackLeiblerDivergence\bigl(\nu^{\sigma},\nu^{{\sigma^t}}\bigr) = \numSource \kullbackLeiblerDivergence\bigl(\sourceDistribution^{\sigma},\sourceDistribution^{{\sigma^t}}\bigr) + \numTarget \kullbackLeiblerDivergence\bigl(\targetDistribution^{\sigma},\targetDistribution^{{\sigma^t}}\bigr) \leq  16(\numSource\uSource\epsilonSource^2 +\numTarget \uTarget \epsilonTarget^2) \leq \frac{1}{2}.
\]
Thus, by Pinsker's inequality \citep[e.g.,][Lemma 2.5]{tsybakov2009introduction},
\[
\totalVariation\bigl(\nu^{\sigma},\nu^{\sigma^t}\bigr)  \leq \sqrt{\kullbackLeiblerDivergence\bigl(\nu^{\sigma},\nu^{\sigma^t}\bigr)/2} \leq 1/2,
\]
as required.
\end{proof}

We now return to the proof of Lemma \ref{lemma:assouadTypeLemmaForTransferClassification}.
\begin{proof}[Proof of Lemma \ref{lemma:assouadTypeLemmaForTransferClassification}] 
Without loss of generality, we assume that $\epsilonTarget >0$, so $\vTarget=1/2$. Fix $\classifierEst \in \setofDDClassifiers$. Given $z\in \mathcal{Z} :=  \left(\R^{\ambientDimension} \times \{0,1\}\right)^{\numSource} \times \left(\R^{\ambientDimension} \times \{0,1\}\right)^{\numTarget}$, let $\classifierEst_z:\R^{\ambientDimension}\rightarrow \{0,1\}$ denote the  mapping obtained by taking $z$ as the first argument in $\classifierEst$. Then
\begin{align*}
\sup_{(P,Q) \in \mathcal{P}} \mathbb{E}&\bigl\{\excessRisk(\classifierEst)\bigr\} \geq \max_{\sigma \in \Sigma} \int_{\mathcal{Z}} \int_{\{ x \in \R^{\ambientDimension}:\classifierEst_z(x) \neq f^*_{\targetDistribution^{\sigma}}(x)\}}\bigl|2\targetRegressionFunction^{\sigma}(x)-1\bigr| \, d\targetMarginalDistribution(x) \, d\nu^{\sigma}(z)\\
&\geq \frac{1}{2^m}\sum_{\sigma \in \Sigma} \int_{\mathcal{Z}} \sum_{t=1}^m\bigl|2\targetRegressionFunction^{\sigma}(x_t)-1\bigr| \cdot \mathbbm{1}_{\{\classifierEst_z(x_t) \neq f^*_{\targetDistribution^{\sigma}}(x_t)\}} \, \targetMarginalDistribution(\{x_t\}) \, d\nu^{\sigma}(z)\\
&= \frac{\uTarget \epsilonTarget}{2^{m-1}}\sum_{t=1}^m \sum_{\sigma \in \Sigma} \nu^{\sigma}\bigl(\bigl\{ \classifierEst_z(x_t) \neq f^*_{\targetDistribution^{\sigma}}(x_t)\bigr\} \bigr)\\
&= \frac{\uTarget \epsilonTarget}{2^{m}}\sum_{t=1}^m \sum_{\sigma \in \Sigma} \Bigl\{ \nu^{\sigma}\bigl(\bigl\{ \classifierEst_z(x_t) \neq f^*_{\targetDistribution^{\sigma}}(x_t)\bigr\} \bigr)+\nu^{\sigma^{t}}\bigl(\bigl\{ \classifierEst_z(x_t) \neq f^*_{\targetDistribution^{\sigma^{t}}}(x_t)\bigr\} \bigr)\Bigr\} \\
&\geq \frac{\uTarget \epsilonTarget}{2^{m}}\sum_{t=1}^m\sum_{\sigma \in \Sigma} \bigl\{ 1-\totalVariation( \nu^{\sigma},\nu^{\sigma^{t}})\bigr\} \geq \frac{m\uTarget \epsilonTarget}{2},
\end{align*}
where the penultimate inequality uses the fact that $f^*_{\targetDistribution^{{\sigma}^{t}}}(x_t)=1-f^*_{\targetDistribution^{\sigma}}(x_t)$ and the final inequality follows from Lemma~\ref{lemma:boundTV}.
\end{proof}

\subsection{Marginal construction}
\label{Sec:Marginal}

\newcommand{\contractionKappaTarget}{\kappa_{\targetDistribution}}
\newcommand{\contractionKappaSource}{\kappa_{\sourceDistribution}}

The marginal distributions $\sourceMarginalDistribution$ and $\targetMarginalDistribution$ in our lower bound construction will not vary with the vertices of our hypercube $\Sigma$ in Lemma~\ref{lemma:assouadTypeLemmaForTransferClassification}.  An interesting consequence of this fact is that our lower bound in Theorem~\ref{Thm:Main} will continue to hold, even if these marginal distributions were known (or equivalently, if we were also provided with an infinite sample of unlabelled training data from either distribution).  These measures will consist of a mixture of a discrete uniform distribution on a lattice of points in the non-negative orthant in $\R^d$ and a component consisting of a uniform distribution on a $d_Q$-dimensional hyper-rectangle in the opposite orthant, as illustrated in Figure~\ref{Fig:LowerBoundPlot}.  Moreover, the lattice component of the support of $\mu_Q$ will be a $d_Q$-dimensional slice within the $d_P$-dimensional lattice component of the support of $\mu_P$.  The structure of these marginals is designed to put as much probability mass as possible on the lattice points, as these will be the points that are difficult to classify, and will maximise the lower bound in Lemma~\ref{lemma:assouadTypeLemmaForTransferClassification}.  On the other hand, Condition~(i) of Lemma~\ref{lemma:assouadTypeLemmaForTransferClassification} constrains us to have a sufficiently large lattice that no individual point provides too much information to the learner.  The component supported on the hyper-rectangle is used to ensure that the margin condition (Assumption~\ref{marginAssumption}) is satisfied. 
\begin{figure}[htbp!]
  \begin{center}
    \includegraphics[width=0.6\textwidth]{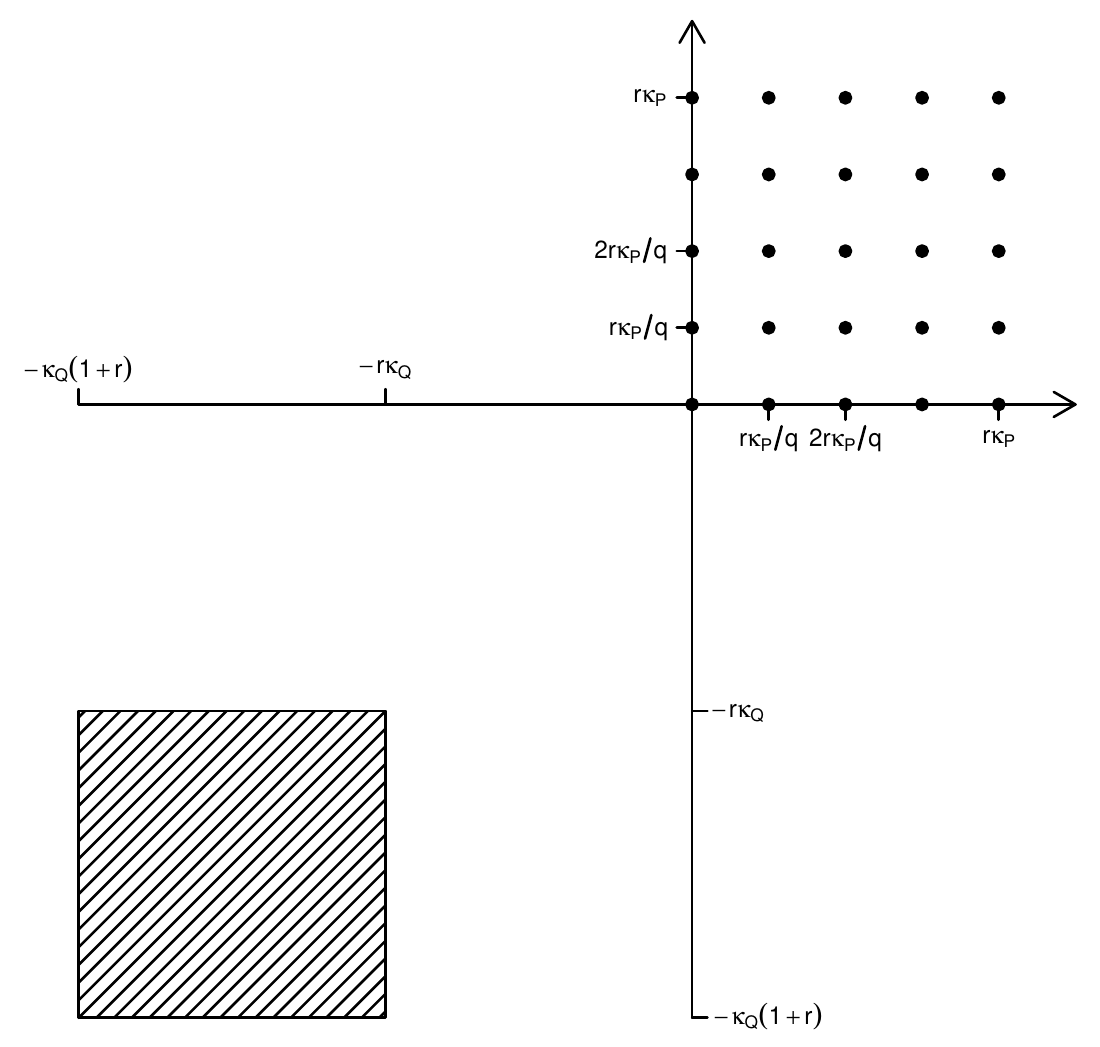}
  \end{center}
\caption{\label{Fig:LowerBoundPlot}Illustration of the support of the measure $\mu_{q,r,w,d_0}$ in~\eqref{Eq:GeneralMarginal}.}
\end{figure}

To describe the construction more formally, define $\contractionKappaSource:=1/(2\sourceDimension^{1/2})$ and $\contractionKappaTarget:=1/(2\targetDimension^{1/2})$.  For $q \in \mathbb{N}$ and $d_0 \in \{\sourceDimension,\targetDimension\}$, let $\widetilde{\mathcal{T}}_{q,d_0} := \{0,1, \ldots, q-1\}^{d_0}\times \{0\}^{d-d_0} \subseteq \R^{\ambientDimension}$.  Now let $({{\tilde{x}^{q}}_t})_{t=1}^{q^{\targetDimension}}$ be an enumeration of the set $\widetilde{\mathcal{T}}_{q,\targetDimension}$ and let $({{\tilde{x}^{q}}_t})_{t=q^{\targetDimension}+1}^{q^{\sourceDimension}}$ be an enumeration of $\widetilde{\mathcal{T}}_{q,\sourceDimension} \setminus \widetilde{\mathcal{T}}_{q,\targetDimension}$.  For each $q \in \N$, $r >0$, $t \in [q^{\sourceDimension}]$ and $d_0 \in \{\sourceDimension,\targetDimension\}$, we let $x^{q,r}_t:=(r/q) \cdot \contractionKappaSource \cdot \tilde{x}^{q}_t$ and $\mathcal{T}_{q,r,d_0} := \bigl\{x^{q,r}_t: t \in [q^{d_0}]\bigr\}$.  For a Borel subset $A$ of $\mathbb{R}^d$, let $A_{\targetDimension} := \bigl\{(x_1,\ldots,x_{\targetDimension}):(x_1,\ldots,x_{\targetDimension},0,\ldots,0) \in A\bigr\}$ and let $A_{\targetDimension,\sourceDimension} := A \cap \bigl(\R^{\targetDimension} \times [0,1)^{\sourceDimension - \targetDimension} \times \{0\}^{\ambientDimension - \sourceDimension}\bigr)$.  Given $q \in \N$,  $w\in [0,1/2]$, $r>0$, $d_0 \in \{\sourceDimension,\targetDimension\}$, we define a probability measure $\mu_{q,r,w,d_0}$ on $\R^{\ambientDimension}$ by
\begin{align}
\label{Eq:GeneralMarginal}
\mu_{q,r,w,d_0}(A) &:= \frac{(1-w)}{\contractionKappaTarget^{\targetDimension}} \mathcal{L}_{\targetDimension}\Bigl(A_{\targetDimension} \cap \bigl[-\contractionKappaTarget(1+r),-r\contractionKappaTarget\bigr]^{\targetDimension}\Bigr) +\frac{w}{N_{q,r,d_0}}\sum_{t=1}^{q^{d_0}}\mathbbm{1}_{\{x^{q,r}_t \in A_{\targetDimension,\sourceDimension}\}},
\end{align}
for Borel subsets $A$ of $\R^{\ambientDimension}$, where $N_{q,r,d_0} := q^{\targetDimension}\min\bigl\{\lceil q/(r \contractionKappaSource)\rceil,q\bigr\}^{d_0 - \targetDimension} = \bigl|(\mathcal{T}_{q,r,d_0})_{\targetDimension,\sourceDimension}\bigr|$.  Our marginal measures $\mu_P$ and $\mu_Q$ will be chosen as instances of $\mu_{q,r,w,d_0}$ for particular choices of $q$, $r$, $w$ and $d_0$; see Corollary~\ref{lemma:constructedMuMeasureSatisfiesTailAssumption}.

We begin with a couple of preliminary lemmas, before presenting the main properties of these marginal distributions in Lemma~\ref{lemma:constructedMuMeasureOmega} and Corollary~\ref{lemma:constructedMuMeasureSatisfiesTailAssumption} below.  This latter result provides  sufficient conditions for the marginals to satisfy Assumption~\ref{sourceDensityAssumption}.

\begin{lemma}\label{lemma:cubeBallIntersectLemma} For $\kappa \in (0,\targetDimension^{-1/2})$ and $x \in [0,\kappa]^{\targetDimension}$, we have $ \mathcal{L}_{\targetDimension}\bigl(\{y \in \R^{\targetDimension}:\|y-x\| < s\} \cap [0,\kappa]^{\targetDimension}\bigr) \geq  (\kappa s)^{\targetDimension}$ for all $s \in [0,1]$, and $ \mathcal{L}_{\targetDimension}\bigl(\{y \in \R^{\targetDimension}:\|y-x\| < s\}\cap [0,\kappa]^{\targetDimension}\bigr) \leq V_{\targetDimension}\cdot s^{\targetDimension}$ for all $s>0$.
\end{lemma}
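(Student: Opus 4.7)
The plan is to handle the upper bound immediately by monotonicity of Lebesgue measure---the intersection is a subset of the open Euclidean ball $\{y : \|y - x\| < s\}$, whose measure is $V_{\targetDimension}\,s^{\targetDimension}$---and then, for the lower bound, to exhibit an explicit axis-aligned sub-box of the intersection whose volume is at least $(\kappa s)^{\targetDimension}$.

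For this construction, for each coordinate $i \in [\targetDimension]$ I will set $a_i := \max\{0,\, x_i - s/\sqrt{\targetDimension}\}$ and $b_i := \min\{\kappa,\, x_i + s/\sqrt{\targetDimension}\}$, and consider the box $R := \prod_{i=1}^{\targetDimension}[a_i,b_i]$. By construction $R \subseteq [0,\kappa]^{\targetDimension}$, and for any $y \in R$ the bound $|y_i - x_i| \leq s/\sqrt{\targetDimension}$ for every $i$ yields $\|y - x\|^2 \leq s^2$; since the sphere $\{y : \|y - x\| = s\}$ has $\mathcal{L}_{\targetDimension}$-measure zero, the set $R \setminus \{y : \|y - x\| < s\}$ is $\mathcal{L}_{\targetDimension}$-null, and therefore $\mathcal{L}_{\targetDimension}(R) \leq \mathcal{L}_{\targetDimension}\bigl(R \cap \{y : \|y - x\| < s\}\bigr)$.

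The remaining step is to lower-bound the side lengths $b_i - a_i$. A short case analysis on whether the symmetric interval $[x_i - s/\sqrt{\targetDimension},\, x_i + s/\sqrt{\targetDimension}]$ is truncated at $0$, at $\kappa$, at both, or at neither---using only $x_i \in [0,\kappa]$---shows that $b_i - a_i \geq \min\{s/\sqrt{\targetDimension},\, \kappa\}$ in every case. The hypothesis $\kappa < \targetDimension^{-1/2}$ then gives $s/\sqrt{\targetDimension} \geq \kappa s$, while $s \leq 1$ gives $\kappa \geq \kappa s$, so $b_i - a_i \geq \kappa s$ for every $i$ and hence $\mathcal{L}_{\targetDimension}(R) \geq (\kappa s)^{\targetDimension}$.

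I do not anticipate any real obstacle: the construction is entirely elementary, and the two standing hypotheses $\kappa < \targetDimension^{-1/2}$ and $s \leq 1$ are each used precisely once, in the final side-length comparison. The only thing to watch for is to dispose of the degenerate case $s = 0$ at the outset, where both sides of the inequality vanish trivially.
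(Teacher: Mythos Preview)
Your proof is correct. The upper bound is the same as the paper's. For the lower bound, however, the paper takes a different (and somewhat slicker) route: rather than building an axis-aligned box of side $\min\{s/\sqrt{\targetDimension},\kappa\}$ and then arguing that this exceeds $\kappa s$, it applies the contraction $\psi_{x,s}(z):=s(z-x)+x$ to the entire cube $[0,\kappa]^{\targetDimension}$. The image $\psi_{x,s}([0,\kappa]^{\targetDimension})$ is a cube of side exactly $s\kappa$; it lies in $[0,\kappa]^{\targetDimension}$ by convexity (since $x$ is fixed and $s\in[0,1]$), and it lies in the open ball $B_s(x)$ because its diameter is $s\kappa\sqrt{\targetDimension}<s$. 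This gives the bound $(\kappa s)^{\targetDimension}$ in one step, without a case analysis on truncation and without the measure-zero-boundary argument. Your approach has the advantage of being the first thing one would try; the paper's approach has the advantage of hitting the target volume exactly and using the hypothesis $\kappa<\targetDimension^{-1/2}$ directly in the diameter computation rather than in a separate comparison.
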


\begin{proof} To prove the lower bound, we take $x \in [0,\kappa]^{\targetDimension}$, $s \in [0,1]$, and consider the map $\psi_{x,s}:z\mapsto s\cdot (z-x)+x$ on $\R^{\targetDimension}$. Observe that  $\psi_{x,s}([0,\kappa]^{\targetDimension})\subseteq  [0,\kappa]^{\targetDimension}$. On the other hand, since $x \in \psi_{x,s}([0,\kappa]^{\targetDimension})$ and $\diam(\psi_{x,s}([0,\kappa]^{\targetDimension})) \leq s\kappa \targetDimension^{1/2} <s$, we also have $\psi_{x,s}([0,\kappa]^{\targetDimension})\subseteq \{y \in \R^{\targetDimension}:\|y-x\| < s\}$. Hence,
\begin{align*}
\mathcal{L}_{\targetDimension}\bigl(\{y \in \R^{\targetDimension}:\|y-x\| < s\}\cap [0,\kappa]^{\targetDimension}\bigr) \geq \mathcal{L}_{\targetDimension}\left(\psi_{x,s}([0,\kappa]^{\targetDimension})\right) \geq (\kappa s)^{\targetDimension}.
\end{align*}
The upper bound follows from the fact that $\{y \in \R^{\targetDimension}:\|y-x\| < s\} \cap [0,1]^{\targetDimension} \subseteq \{y \in \R^{\targetDimension}:\|y-x\| < s\}$.
\end{proof}

\begin{lemma}\label{lemma:countPointsInBallAndGrid} For $q \in\N$, $d_0 \in \{\sourceDimension,\targetDimension\}$, $x \in \widetilde{\mathcal{T}}_{q,d_0}$ and $s \leq 4q\sourceDimension^{1/2}$, we have $\bigl|\widetilde{\mathcal{T}}_{q,d_0} \cap B_s(x)\bigr|\geq \{s/(2^4\sourceDimension^{1/2})\}^{d_0}$.
\end{lemma}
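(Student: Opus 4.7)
The plan is to exhibit an explicit Cartesian product of integer intervals sitting inside $\widetilde{\mathcal{T}}_{q,d_0} \cap B_s(x)$. First I would dispose of the small-$s$ regime: if $s < 16 \sourceDimension^{1/2}$ then $(s/(2^4\sourceDimension^{1/2}))^{d_0} < 1$, so the conclusion follows immediately from the trivial fact that $x \in \widetilde{\mathcal{T}}_{q,d_0} \cap B_s(x)$ whenever $s>0$ (the case $s=0$ being vacuous).

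Henceforth assume $s \geq 16 \sourceDimension^{1/2}$ and set $M := \min\bigl(\lfloor s/(8\sourceDimension^{1/2})\rfloor,\, q\bigr)$. Two elementary inequalities need to be checked. First, $M \geq s/(16\sourceDimension^{1/2})$: if $M$ equals the floor, this uses $\lfloor t \rfloor \geq t/2$ for $t \geq 2$ applied with $t = s/(8\sourceDimension^{1/2}) \geq 2$; otherwise $M = q$ and the hypothesis $s \leq 4q\sourceDimension^{1/2}$ directly gives $s/(16\sourceDimension^{1/2}) \leq q/4 \leq M$. Second, $M\sourceDimension^{1/2} \leq s/8$, which is immediate from $M \leq \lfloor s/(8\sourceDimension^{1/2}) \rfloor \leq s/(8\sourceDimension^{1/2})$. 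In view of the first bound, it is enough to exhibit a subset of $\widetilde{\mathcal{T}}_{q,d_0} \cap B_s(x)$ of cardinality at least $M^{d_0}$.

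For each $i \in [d_0]$ I would take $a_i := \max\bigl(0,\, \min(q - M,\, x_i - \lfloor M/2\rfloor)\bigr)$ and $S_i := \{a_i, a_i+1, \ldots, a_i + M - 1\}$. Since $M \leq q$, a short case analysis (according as $a_i$ is $0$, $q - M$, or $x_i - \lfloor M/2\rfloor$) shows $S_i \subseteq \{0,\ldots, q - 1\}$, $x_i \in S_i$, and $|k - x_i| \leq M - 1$ for every $k \in S_i$. Setting $S := S_1 \times \cdots \times S_{d_0} \times \{0\}^{\ambientDimension - d_0}$, we have $|S| = M^{d_0}$ and $S \subseteq \widetilde{\mathcal{T}}_{q, d_0}$. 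Using $d_0 \leq \sourceDimension$, for every $y \in S$,
\[
\|y - x\|_2 \leq (M-1)\, d_0^{1/2} \leq M \sourceDimension^{1/2} \leq s/8 < s,
\]
so $S \subseteq B_s(x)$ and the lemma follows from $M^{d_0} \geq \bigl(s/(2^4\sourceDimension^{1/2})\bigr)^{d_0}$.

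The only real (minor) obstacle is the boundary bookkeeping in defining $a_i$ and confirming the stated properties of $S_i$ when $x_i$ lies within distance $\lfloor M/2\rfloor$ of either end of $\{0, \ldots, q-1\}$; the rest of the argument is bookkeeping of product cardinalities and the coordinate-wise reduction from Euclidean to $\ell^\infty$ norm.
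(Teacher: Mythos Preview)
Your proof is correct and follows essentially the same strategy as the paper's: exhibit an axis-aligned box of lattice points containing $x$ whose diameter is at most $s$, then count. The paper splits into $q=1$ versus $q\geq 2$ and takes $x$ as a \emph{vertex} of a cube of side length $s/(2^4\sourceDimension^{1/2})$, whereas you split on $s$ small versus large and (approximately) centre the box at $x$; your version is more explicit about the boundary bookkeeping, but the underlying idea is identical.
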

\begin{proof} First observe that if $q=1$, then $\bigl|\widetilde{\mathcal{T}}_{q,d_0} \cap B_s(x)\bigr| = |\{0\}| =1\geq \{s/(2^4 \sourceDimension^{1/2})\}^{d_0}$.  For $q \geq 2$, we have $s/(2^4\sourceDimension^{1/2}) \leq (q-1)/2$. Hence, for each $x \in \widetilde{\mathcal{T}}_{q,d_0}$, we can find a $\sourceDimension$-dimensional, axis-aligned cube $A$ with vertex $x$ and side length $s/(2^4\sourceDimension^{1/2})$ containing at least $\lceil s/(2^4\sourceDimension^{1/2}) \rceil^{d_0}$ elements of  $\widetilde{\mathcal{T}}_{q,d_0}$. Thus, $\bigl|\widetilde{\mathcal{T}}_{q,d_0}\cap B_s(x)\bigr|\geq \bigl|\widetilde{\mathcal{T}}_{q,d_0}\cap A\bigr|\geq \{s/(2^4\sourceDimension^{1/2})\}^{d_0}$.
\end{proof}

\begin{lemma}\label{lemma:constructedMuMeasureOmega} Let $q \in \N$, $r >0$, $w \in [0,1/2]$ and $d_0 \in \{\sourceDimension,\targetDimension\}$. We have
\begin{enumerate}[(i)]
    \item $\omega_{\mu_{q,r,w,d_0},d_0}(x) \geq 1-w$ for all $x \in \left[-\contractionKappaTarget(1+r),-r\contractionKappaTarget\right]^{\targetDimension}\times\{0\}^{\ambientDimension-\targetDimension}$;
    \item $\omega_{\mu_{q,r,w,d_0},d_0}(x) \geq 2^{-3d_0} \cdot \min\bigl\{1, w \cdot q^{d_0} \cdot N_{q,r,d_0}^{-1} \cdot r^{-d_0}\bigr\}$ for all $x \in \mathcal{T}_{q,r,\targetDimension}$.
\end{enumerate}
\end{lemma}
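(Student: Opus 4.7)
For part~(i), the approach is to discard the atomic component and lower-bound $\mu_{q,r,w,d_0}(B_s(x))$ using only the Lebesgue part. Translating so that the hyperrectangle $[-\contractionKappaTarget(1+r),-r\contractionKappaTarget]^{\targetDimension}$ becomes $[0,\contractionKappaTarget]^{\targetDimension}$ (noting that $\contractionKappaTarget=1/(2\sqrt{\targetDimension})<\targetDimension^{-1/2}$ satisfies the hypothesis of Lemma~\ref{lemma:cubeBallIntersectLemma}), that lemma yields $\mathcal{L}_{\targetDimension}\bigl(B_s(x)_{\targetDimension}\cap[-\contractionKappaTarget(1+r),-r\contractionKappaTarget]^{\targetDimension}\bigr)\ge(\contractionKappaTarget s)^{\targetDimension}$ for every $s\in[0,1]$. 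Multiplying by the Lebesgue density $(1-w)/\contractionKappaTarget^{\targetDimension}$ gives $\mu_{q,r,w,d_0}(B_s(x))\ge(1-w)s^{\targetDimension}$, and since $\targetDimension\le d_0$ and $s<1$, dividing by $s^{d_0}$ and taking an infimum proves~(i).

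For part~(ii), the atoms of $\mu_{q,r,w,d_0}$ correspond to admissible points of $\widetilde{\mathcal{T}}_{q,d_0}$, each carrying mass $w/N_{q,r,d_0}=w/(q^{\targetDimension}M^{d_0-\targetDimension})$, where $M:=\min\{q,\lceil q/(r\contractionKappaSource)\rceil\}$ and the lattice spacing is $\rho:=(r/q)\contractionKappaSource$. Atoms lying in $B_s(x)$ correspond bijectively to admissible $\tilde{x}\in\widetilde{\mathcal{T}}_{q,d_0}$ with $\|\tilde{x}-\tilde{x}^{q}_{t_0}\|<s/\rho=2sq\sqrt{\sourceDimension}/r$. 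The plan is to split the analysis into two regimes, using the discrete contribution for small $s$ and a Lebesgue contribution for large $s$.

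When $s\le 2r$ (so $s/\rho\le 4q\sqrt{\sourceDimension}$), I would revisit the axis-aligned cube construction from the proof of Lemma~\ref{lemma:countPointsInBallAndGrid}, anchored at $\tilde{x}^{q}_{t_0}$. Because the last $d_0-\targetDimension$ active coordinates of $\tilde{x}^{q}_{t_0}$ vanish, the cube can be oriented to extend into the admissible half-space in those coordinates; a product-counting argument shows it contains at least $\lceil sq/(2^3 r)\rceil^{\targetDimension}\cdot\min\bigl\{\lceil sq/(2^3 r)\rceil,M\bigr\}^{d_0-\targetDimension}$ admissible lattice points. In the principal sub-regime $sq/(2^3r)\le M$, this collapses to $(sq/(2^3r))^{d_0}$, and combining with the per-atom mass $w/N_{q,r,d_0}$ together with $\contractionKappaSource=1/(2\sqrt{\sourceDimension})$ gives $\mu_{q,r,w,d_0}(B_s(x))/s^{d_0}\ge 2^{-3d_0}wq^{d_0}/(N_{q,r,d_0}r^{d_0})$ directly. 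In the opposite sub-regime, or when $s>2r$, I would invoke the Lebesgue component instead: a coordinate-wise calculation using $\contractionKappaSource\le\contractionKappaTarget$ shows the closest point $y$ of the hyperrectangle to $x$ satisfies $\|x-y\|\le r$, so $B_{s-\|x-y\|}(y)\subseteq B_s(x)$. Applying Lemma~\ref{lemma:cubeBallIntersectLemma} to a translated cube yields a Lebesgue contribution of at least $(1-w)(s-\|x-y\|)^{\targetDimension}$, which after dividing by $s^{d_0}\le 1$ and using $w\le 1/2$ is at least $2^{-(\targetDimension+1)}\ge 2^{-3d_0}$, dominating the target $2^{-3d_0}\min\{1,\cdot\}$.

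The main obstacle is securing a genuine $d_0$-dimensional discrete count in part~(ii) when $d_0=\sourceDimension>\targetDimension$: naively restricting to $\widetilde{\mathcal{T}}_{q,\targetDimension}$ loses a factor of $q^{d_0-\targetDimension}$ and only delivers a $\targetDimension$-dimensional count, whereas the target exponent $q^{d_0}/N_{q,r,d_0}$ requires a genuinely $d_0$-dimensional bound. Threading the product-counting argument through both coordinate blocks while respecting the admissibility cap $M$, and then patching the resulting discrete bound together with the Lebesgue bound across the boundary $sq/(2^3r)\asymp M$, will require careful bookkeeping of the constants.
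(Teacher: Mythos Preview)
Your approach coincides with the paper's: part~(i) via Lemma~\ref{lemma:cubeBallIntersectLemma}, and part~(ii) via the split $s\le(2r)\wedge1$ (discrete count through Lemma~\ref{lemma:countPointsInBallAndGrid}) versus $s\in(2r,1]$ (reduce to part~(i) at the corner $z_r=(-r\contractionKappaTarget,\ldots,-r\contractionKappaTarget,0,\ldots,0)$, using $\|x\|+\|z_r\|\le r<s/2$).

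The ``main obstacle'' you flag, however, is illusory. Because the centre $x\in\mathcal{T}_{q,r,\targetDimension}$ has vanishing coordinates $\targetDimension+1,\ldots,\ambientDimension$, every point of $\mathcal{T}_{q,r,d_0}\cap B_s(x)$ has its coordinates $\targetDimension+1,\ldots,\sourceDimension$ in $[0,s)\subseteq[0,1)$ and therefore already lies in $(B_s(x))_{\targetDimension,\sourceDimension}$: admissibility comes for free whenever $s<1$. Consequently Lemma~\ref{lemma:countPointsInBallAndGrid} applies to $\widetilde{\mathcal{T}}_{q,d_0}$ directly, with no product-counting adaptation, and immediately delivers the full $d_0$-dimensional count $\bigl\{sq/(2^3r)\bigr\}^{d_0}$. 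In particular, within the regime $s\le(2r)\wedge1$ one always has $sq/(2^3r)\le\min\{q,\,q/(r\contractionKappaSource)\}\le M$ (the first from $s\le 2r$, the second from $s<1\le 2^3/\contractionKappaSource$), so your ``opposite sub-regime'' is empty and no patching across $sq/(2^3r)\asymp M$ is required.
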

\begin{proof} To prove (i), we take $x \in \left[-\contractionKappaTarget(1+r),-r\contractionKappaTarget\right]^{\targetDimension}\times\{0\}^{\ambientDimension-\targetDimension}$ and $s \in (0,1)$.  As shorthand, we write $B := B_s(x)$, so that $B_{\targetDimension} = \bigl\{(x_1,\ldots,x_{\targetDimension}):(x_1,\ldots,x_{\targetDimension},0,\ldots,0) \in B\bigr\}$.  By Lemma~\ref{lemma:cubeBallIntersectLemma} combined with the translation invariance of Lebesgue measure, we have
\begin{align*}
\mu_{q,r,w,d_0}(B) \geq (1-w)\cdot \contractionKappaTarget^{-\targetDimension} \cdot \mathcal{L}_{\targetDimension}\bigl(B_{\targetDimension} \cap \left[-\contractionKappaTarget(1+r),-r\contractionKappaTarget\right]^{\targetDimension}\bigr) \geq (1-w) \cdot s^{d_0}.
\end{align*}
The claim~(i) follows.  

To prove (ii), we take $x =x_{t}^{q,r} \in \mathcal{T}_{q,r,\targetDimension}$. If $s \in \bigl(0,(2r) \wedge 1\bigr]$, then $\tilde{s}:=\{q/(r\contractionKappaSource)\}\cdot s \leq \min\bigl\{4q\sourceDimension^{1/2},q/(r\contractionKappaSource)\bigr\}$, so by Lemma \ref{lemma:countPointsInBallAndGrid} we have
\begin{align*}
\mu_{q,r,w,d_0}\bigl(B_s(x)\bigr) &\geq \frac{w}{N_{q,r,d_0}} \cdot \bigl|\mathcal{T}_{q,r,d_0}\cap B_s(x^{q,r}_t)\bigr|= \frac{w}{N_{q,r,d_0}} \cdot \bigl|\widetilde{\mathcal{T}}_{q,d_0}\cap B_{\tilde{s}}(\tilde{x}^{q}_t)\bigr|\\
&\geq \frac{w}{N_{q,r,d_0}} \cdot \bigl\{\tilde{s}/(2^4\sourceDimension^{1/2})\bigr\}^{d_0} = \frac{w}{N_{q,r,d_0}} \cdot \bigl\{qs/(2^4r\contractionKappaSource \sourceDimension^{1/2})\bigr\}^{d_0} \\
&= 2^{-3d_0} \cdot w \cdot r^{-d_0} \cdot \frac{q^{d_0}}{N_{q,r,d_0}} \cdot s^{d_0}.
\end{align*}
On the other hand, if $s \in ( 2r,1]$ then with $z_r:=(\overbrace{-r\contractionKappaTarget,\ldots,-r\contractionKappaTarget}^{\targetDimension},\overbrace{0,\ldots,0}^{\ambientDimension-\targetDimension}) \in \R^d$, we have $\|x-z_r\| \leq \|x\| + \|z_r\| \leq r/2 + r/2 < s/2$.  Hence, by~(i), we have $\mu_{q,r,w,d_0}\bigl(B_s(x)\bigr)\geq \mu_{q,r,w,d_0}\bigl(B_{s/2}(z_r)\bigr)\geq (1-w)\cdot (s/2)^{d_0} \geq 2^{-(d_0+1)}s^{d_0} \geq 2^{-3d_0} s^{d_0}$, and the conclusion follows.
\end{proof}

\begin{corollary}\label{lemma:constructedMuMeasureSatisfiesTailAssumption}  Take $\TailConstant > 1$, $\targetDimension \in [1,\ambientDimension]$, $\sourceDimension \in [\targetDimension, \ambientDimension]$ and $ \sourceTailExponent$, $\targetTailExponent  >0$. Suppose that $q \in \N$, $r>0$, and $\wSource$, $\wTarget \in [0,2^{-3\sourceDimension(\sourceTailExponent \vee \targetTailExponent)}\wedge (1-\TailConstant^{-1/(\sourceTailExponent \wedge\targetTailExponent)})]$ satisfy $\wTarget (\wSource \cdot q^{\sourceDimension} \cdot N_{q,r,\sourceDimension}^{-1})^{-\sourceTailExponent}r^{\sourceDimension\sourceTailExponent} \leq 2^{-3\sourceDimension\sourceTailExponent}$ and $\wTarget^{1-\targetTailExponent} r^{\targetDimension \targetTailExponent} \leq 2^{-3\targetDimension\targetTailExponent}$. Then Assumption \ref{sourceDensityAssumption} is satisfied for $\sourceMarginalDistribution=\mu_{q,r,\wSource,\sourceDimension}$ and $\targetMarginalDistribution=\mu_{q,r,\wTarget,\targetDimension}$.  
\end{corollary}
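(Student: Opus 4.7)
The plan is to verify directly each of the two inequalities in Assumption~\ref{sourceDensityAssumption}. The support of $\targetMarginalDistribution$ is the disjoint union of the rectangle $\mathcal{R} := \bigl[-\contractionKappaTarget(1+r),-r\contractionKappaTarget\bigr]^{\targetDimension} \times \{0\}^{\ambientDimension-\targetDimension}$ and the lattice $\mathcal{T}_{q,r,\targetDimension}$, with $\targetMarginalDistribution(\mathcal{R}) = 1-\wTarget$ and $\targetMarginalDistribution(\mathcal{T}_{q,r,\targetDimension}) = \wTarget$ (using $N_{q,r,\targetDimension} = q^{\targetDimension}$, since $d_0 = \targetDimension$ there). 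Applying Lemma~\ref{lemma:constructedMuMeasureOmega} to $\targetMarginalDistribution = \mu_{q,r,\wTarget,\targetDimension}$ and $\sourceMarginalDistribution = \mu_{q,r,\wSource,\sourceDimension}$ in turn yields uniform pointwise lower bounds on $\omega_{\targetMarginalDistribution,\targetDimension}$ and $\omega_{\sourceMarginalDistribution,\sourceDimension}$ on each piece: $1-\wTarget$ and $1-\wSource$ on $\mathcal{R}$, and $2^{-3\targetDimension}\min\{1,\wTarget/r^{\targetDimension}\}$ and $2^{-3\sourceDimension}\min\{1,\wSource q^{\sourceDimension} N_{q,r,\sourceDimension}^{-1} r^{-\sourceDimension}\}$ on $\mathcal{T}_{q,r,\targetDimension}$.

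Fix $\xi > 0$, and let $(\gamma,w) \in \{(\targetTailExponent,\wTarget),(\sourceTailExponent,\wSource)\}$ denote the pair appropriate to the tail inequality being verified. The super-level set $\{x:\omega(x) < \xi\}$ meets $\supp(\targetMarginalDistribution)$ only on those pieces whose uniform lower bound drops below $\xi$, so
\begin{align*}
\targetMarginalDistribution\bigl(\{x:\omega(x) < \xi\}\bigr) \leq (1-\wTarget)\mathbbm{1}_{\{\xi > 1-w\}} + \wTarget \cdot \mathbbm{1}_{\{\xi > \tau\}},
\end{align*}
where $\tau$ is the corresponding lattice lower bound recorded above. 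When the first indicator fires, the hypothesis $w \leq 1-\TailConstant^{-1/(\sourceTailExponent \wedge \targetTailExponent)}$ forces $\xi > \TailConstant^{-1/\gamma}$, and hence $\TailConstant\xi^{\gamma} > 1 \geq 1-\wTarget$. When the second indicator fires, split on which argument attains the minimum in $\tau$: if it is the constant $1$ (an effectively dense lattice), then $\xi > 2^{-3d}$ for the appropriate $d \in \{\sourceDimension,\targetDimension\}$, and the assumption $\wTarget \leq 2^{-3\sourceDimension(\sourceTailExponent \vee \targetTailExponent)} \leq 2^{-3d\gamma}$ gives $\wTarget \leq \TailConstant\xi^{\gamma}$; if it is the other term, a one-line rearrangement reduces the desired inequality to precisely the hypothesis $\wTarget(\wSource q^{\sourceDimension}N_{q,r,\sourceDimension}^{-1})^{-\sourceTailExponent}r^{\sourceDimension\sourceTailExponent} \leq 2^{-3\sourceDimension\sourceTailExponent}$ or $\wTarget^{1-\targetTailExponent}r^{\targetDimension\targetTailExponent} \leq 2^{-3\targetDimension\targetTailExponent}$, respectively. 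The case where both indicators fire is already absorbed, because the hyper-rectangle condition alone then supplies $\TailConstant\xi^{\gamma} > 1$, dominating the total mass.

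The argument contains no analytic subtlety beyond invoking Lemma~\ref{lemma:constructedMuMeasureOmega} and a short case analysis. The only real bookkeeping is aligning the degenerate situations ($\wTarget = 0$, $\wSource = 0$, or the minimum in $\tau$ attaining $1$) with the compactly stated hypotheses, and checking that the common upper bound $2^{-3\sourceDimension(\sourceTailExponent \vee \targetTailExponent)}$ on $\wTarget$ and $\wSource$ does indeed dominate each of the exponentials $2^{-3d\gamma}$ appearing in the two tail bounds simultaneously.
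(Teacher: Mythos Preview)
Your proposal is correct and takes essentially the same approach as the paper: both invoke Lemma~\ref{lemma:constructedMuMeasureOmega} to obtain the pointwise lower bounds $1-w$ on the rectangle and $2^{-3d_0}\min\{1,\cdot\}$ on the lattice, then perform the same three-way case analysis on $\xi$ (below the lattice threshold, between the lattice and rectangle thresholds, above the rectangle threshold). Your indicator formulation is just a compact repackaging of the paper's explicit interval split, and your verification that the hypotheses $\wTarget \leq 2^{-3\sourceDimension(\sourceTailExponent\vee\targetTailExponent)}$ and $w \leq 1-\TailConstant^{-1/(\sourceTailExponent\wedge\targetTailExponent)}$ simultaneously handle all four $(d,\gamma)$ combinations matches the paper's reasoning exactly.
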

\begin{proof}
For the first condition of Assumption~\ref{sourceDensityAssumption} consider initially $\xi \in \bigl(0,2^{-3\targetDimension} \cdot \min\{1, \wTarget \cdot r^{-\targetDimension}\}\bigr]$.  Then by Lemma~\ref{lemma:constructedMuMeasureOmega},
\[
\targetMarginalDistribution\bigl(\bigl\{ x \in \R^d: \omega_{\mu_\targetDistribution,d_\targetDistribution}(x)<\xi\bigr\} \bigr) = 0 \leq \TailConstant \cdot \xi^{\targetTailExponent}.
\]
If $\xi \in \bigl(2^{-3\targetDimension} \cdot \min\{1, \wTarget \cdot r^{-\targetDimension}\},1 - \wTarget\bigr]$, then by Lemma~\ref{lemma:constructedMuMeasureOmega} again,
\[
\targetMarginalDistribution\bigl(\bigl\{ x \in \R^d: \omega_{\mu_\targetDistribution,d_\targetDistribution}(x)<\xi\bigr\} \bigr) = \wTarget \leq \TailConstant \cdot 2^{-3\targetDimension \targetTailExponent}(1 \wedge \wTarget^{\targetTailExponent} r^{-\targetDimension \targetTailExponent}) \leq \TailConstant\cdot \xi^{\targetTailExponent}.
\]
Finally, if $\xi \in (1-\wTarget,\infty)$, then
\[
\targetMarginalDistribution\bigl(\bigl\{ x \in \R^d: \omega_{\mu_\targetDistribution,d_\targetDistribution}(x)<\xi\bigr\} \bigr) = 1 \leq \TailConstant (1 - \wTarget)^{\targetTailExponent} \leq \TailConstant\cdot \xi^{\targetTailExponent},
\]
as required.

For the second condition of Assumption~\ref{sourceDensityAssumption} let  $\xi \in \bigl(0,2^{-3\sourceDimension} \cdot \min\{1, \wSource \cdot q^{\sourceDimension} \cdot N_{q,r,\sourceDimension}^{-1} \cdot r^{-\sourceDimension}\}\bigr]$.  Then by Lemma~\ref{lemma:constructedMuMeasureOmega},
\[
\targetMarginalDistribution\bigl(\bigl\{ x \in \R^d: \omega_{\mu_\sourceDistribution,\sourceDimension}(x)<\xi\bigr\} \bigr) = 0 \leq \TailConstant \cdot \xi^{\sourceTailExponent}.
\]
If $\xi \in \bigl(2^{-3\sourceDimension} \cdot \min\bigl\{1, \wSource \cdot q^{\sourceDimension} \cdot N_{q,r,\sourceDimension}^{-1} \cdot r^{-\sourceDimension}\bigr\},1 - \wSource\bigr]$, then by Lemma~\ref{lemma:constructedMuMeasureOmega} again,
\begin{align*}
\targetMarginalDistribution\bigl(\bigl\{ x \in \R^d: \omega_{\mu_\sourceDistribution,\sourceDimension}(x)<\xi\bigr\} \bigr) = \wTarget &\leq \TailConstant \cdot 2^{-3\sourceDimension \sourceTailExponent}\bigl\{1 \wedge \bigl(\wSource \cdot q^{\sourceDimension} \cdot N_{q,r,\sourceDimension}^{-1} \cdot r^{-\sourceDimension}\bigr)^{\sourceTailExponent}\bigr\} \\ 
&\leq \TailConstant\cdot \xi^{\sourceTailExponent}.
\end{align*}
Finally, if $\xi \in (1-\wSource,\infty)$, then
\[
\targetMarginalDistribution\bigl(\bigl\{ x \in \R^d: \omega_{\mu_\sourceDistribution,\sourceDimension}(x)<\xi\bigr\} \bigr) = 1 \leq \TailConstant (1 - \wSource)^{\sourceTailExponent} \leq \TailConstant\cdot \xi^{\sourceTailExponent},
\]
as required.
\end{proof}

\subsection{Target regression function construction}
\label{Sec:RegressionFunctions}

\newcommand{\constructedRegressionFunction}{\eta_{\epsilon,q,r,\sigma}}

We now describe a construction of a family of target regression functions that are indexed by the vertices of a hypercube as in Lemma~\ref{lemma:assouadTypeLemmaForTransferClassification}.  We begin by defining the restrictions of the elements of this family to the support of $\mu_P$; on this set, these restrictions will be perturbations of the uninformative regression function that takes the constant value $1/2$.  The perturbations should be as large as possible, to maximise the quantity $\epsilon_Q$ in Lemma~\ref{lemma:assouadTypeLemmaForTransferClassification} and to ensure that the margin condition (Assumption~\ref{marginAssumption}) holds, but need to be small enough that the restrictions can be extended to functions on $\R^d$ that satisfy the H\"older continuity condition (Assumption~\ref{holderContinuityAssumption}).

Given $\epsilon \in (0,1/8]$, $q \in \N$, $r >0$, $\sigma =(\sigma_t)_{t=1}^{q^{\targetDimension}} \in \{-1,1\}^{q^{\targetDimension}}$, we first define $\constructedRegressionFunction^{\circ}: \left[-\contractionKappaTarget(1+r),-r\contractionKappaTarget\right]^{\targetDimension}\times\{0\}^{\ambientDimension-\targetDimension}\cup \mathcal{T}_{q,r,\sourceDimension} \rightarrow \R$ by
\begin{align*}
\constructedRegressionFunction^{\circ}(x):=\begin{cases}\frac{1}{2}-2\epsilon -\frac{1}{4}\|x-z_r\|^{\holderExponent}&\text{ if } x\in \left[-\contractionKappaTarget(1+r),-r\contractionKappaTarget\right]^{\targetDimension}\times\{0\}^{\ambientDimension-\targetDimension}\\
\frac{1}{2}+\sigma_t\cdot \epsilon &\text{ if }x=x^{q,r}_t\text{ with }t\leq q^{\targetDimension}\\
\frac{1}{2}-2\epsilon &\text{ if }x=x^{q,r}_t\text{ with } q^{\targetDimension}<t\leq q^{\sourceDimension},
\end{cases}
\end{align*}
where $z_r:=(\overbrace{-r\contractionKappaTarget,\ldots,-r\contractionKappaTarget}^{\targetDimension},\overbrace{0,\ldots,0}^{\ambientDimension-\targetDimension})$.  The main results of this subsection (Corollary~\ref{lemma:existenceConstructedRegressionFunctionIsHolder} and Lemma~\ref{checkingTheMarginConditionForConstructionLemma}) provide sufficient conditions for an extension $\constructedRegressionFunction$ of $\constructedRegressionFunction^{\circ}$ to the whole of~$\R^d$ to satisfy Assumptions~\ref{holderContinuityAssumption} and~\ref{marginAssumption} respectively.  Recalling that $\contractionKappaSource=1/(2\sourceDimension^{1/2})$ and $\contractionKappaTarget=1/(2\targetDimension^{1/2})$, we first present a basic property of $\constructedRegressionFunction^{\circ}$.  
\begin{lemma}\label{lemma:constructedRegressionFunctionIsHolder} Let $q \in \N$, $r>0$, $\holderExponent \in (0,1]$, $\sigma =(\sigma_t)_{t=1}^{q^{\targetDimension}} \in \{-1,1\}^{q^{\targetDimension}}$ and $\epsilon \in \bigl(0,1/8 \wedge  (1/6)\cdot(r\cdot \contractionKappaSource / q)^{\holderExponent}\bigr]$. Then $|\constructedRegressionFunction^{\circ}(x)-\constructedRegressionFunction^{\circ}(x')| \leq \|x-x'\|^{\holderExponent}$ for all $x$, $x' \in \left[-\contractionKappaTarget(1+r),-r\contractionKappaTarget\right]^{\targetDimension}\times\{0\}^{\ambientDimension-\targetDimension}\cup \mathcal{T}_{q,r,\sourceDimension}$. Moreover, $\constructedRegressionFunction^{\circ}(x) \in [0,1]$ for all $x \in \left[-\contractionKappaTarget(1+r),-r\contractionKappaTarget\right]^{\targetDimension}\times\{0\}^{\ambientDimension-\targetDimension}\cup \mathcal{T}_{q,r,\sourceDimension}$.
\end{lemma}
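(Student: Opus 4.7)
The plan is to establish the H\"older bound via case analysis on the locations of $x,x'$ in the domain $R \cup \mathcal{T}_{q,r,\sourceDimension}$, where $R := \bigl[-\contractionKappaTarget(1+r),-r\contractionKappaTarget\bigr]^{\targetDimension}\times\{0\}^{\ambientDimension-\targetDimension}$, and then to verify the range claim separately. The range claim follows quickly: on $\mathcal{T}_{q,r,\sourceDimension}$ the values $1/2\pm\epsilon$ and $1/2-2\epsilon$ lie in $[1/4,5/8]$ because $\epsilon \leq 1/8$, while on $R$ we have $\|x-z_r\| \leq \targetDimension^{1/2}\contractionKappaTarget = 1/2 \leq 1$, so $(1/4)\|x-z_r\|^{\holderExponent} \leq 1/4$ and hence $\constructedRegressionFunction^{\circ}(x) \in [1/4-2\epsilon,\,1/2-2\epsilon]\subseteq[0,1]$.

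For the H\"older inequality, the easy cases are (i) both $x,x'\in R$ and (ii) both in $\mathcal{T}_{q,r,\sourceDimension}$. In case (i), the value difference equals $(1/4)\bigl|\|x-z_r\|^{\holderExponent} - \|x'-z_r\|^{\holderExponent}\bigr|$, which the reverse triangle inequality combined with subadditivity of $t \mapsto t^{\holderExponent}$ on $[0,\infty)$ bounds by $(1/4)\|x-x'\|^{\holderExponent}$. In case (ii), two distinct lattice points are separated by at least $r\contractionKappaSource/q$, while the values of $\constructedRegressionFunction^{\circ}$ at lattice points lie in $\{1/2+\epsilon,1/2-\epsilon,1/2-2\epsilon\}$, giving a maximal difference of $3\epsilon$; the hypothesis $\epsilon \leq (1/6)(r\contractionKappaSource/q)^{\holderExponent}$ then yields $3\epsilon \leq (1/2)\|x-x'\|^{\holderExponent}$.

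The main obstacle is the mixed case (iii) in which $x\in R$ and $x'\in\mathcal{T}_{q,r,\sourceDimension}$, because the two formulas defining $\constructedRegressionFunction^{\circ}$ must be reconciled across the gap. The key geometric claim, which I will establish coordinate-wise, is that $\|x-z_r\|\leq\|x-x'\|$: for $j\in[\targetDimension]$, from $x_j\leq -r\contractionKappaTarget = z_{r,j}\leq 0\leq x'_j$ one obtains $|x_j-z_{r,j}| = -x_j - r\contractionKappaTarget\leq -x_j \leq x'_j - x_j = |x_j-x'_j|$, while for $j>\targetDimension$ the $j$th coordinates of $x$ and $z_r$ both vanish and the inequality is trivial. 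Since $z_r\in R$, triangulating through $z_r$ yields
\[
|\constructedRegressionFunction^{\circ}(x)-\constructedRegressionFunction^{\circ}(x')| \leq \tfrac{1}{4}\|x-z_r\|^{\holderExponent} + 3\epsilon \leq \tfrac{1}{4}\|x-x'\|^{\holderExponent} + 3\epsilon.
\]
The residual $3\epsilon$ is absorbed using $\|x-x'\|\geq\targetDimension^{1/2}\cdot r\contractionKappaTarget = r/2$ together with $q\sourceDimension^{1/2}\geq 1$: the hypothesis rewrites as $\epsilon\leq(1/6)\bigl(r/(2q\sourceDimension^{1/2})\bigr)^{\holderExponent}\leq(1/6)(r/2)^{\holderExponent}$, so $3\epsilon\leq(1/2)(r/2)^{\holderExponent}\leq(3/4)\|x-x'\|^{\holderExponent}$, which closes the bound.
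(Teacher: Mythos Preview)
Your proof is correct and follows the same three-case structure as the paper's. The only noteworthy difference is in the mixed case (iii): the paper combines the two pieces by first bounding $(1/4)\|x-z_r\|^{\holderExponent}+3\epsilon \leq \{\|x-z_r\|+r\contractionKappaTarget\}^{\holderExponent}$ and then uses a projection argument (introducing the point $x_r$ where the segment from $x$ to $0$ meets $[-r\contractionKappaTarget,\infty)^{\targetDimension}\times\{0\}^{\ambientDimension-\targetDimension}$) to obtain $\|x-z_r\|+r\contractionKappaTarget\leq\|x\|\leq\|x-x'\|$, whereas you handle the two summands separately via the elementary coordinate-wise inequality $\|x-z_r\|\leq\|x-x'\|$ and the lower bound $\|x-x'\|\geq r/2$. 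Your route is slightly more direct and avoids the auxiliary point $x_r$; the paper's route packages the two contributions together and yields the marginally sharper intermediate inequality $\|x-z_r\|+r\contractionKappaTarget\leq\|x-x'\|$, though this extra sharpness is not needed here.
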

\begin{proof} To prove the first part of the lemma, we consider three cases. First, if $x$, $x' \in \left[-\contractionKappaTarget(1+r),-r\contractionKappaTarget\right]^{\targetDimension}\times\{0\}^{\ambientDimension-\targetDimension}$, then by Minkowski's inequality
\begin{align*}
|\constructedRegressionFunction^{\circ}(x)-\constructedRegressionFunction^{\circ}(x')| = \frac{1}{4}\left|\|x-z_r\|^{\holderExponent} -\|x'-z_r\|^{\holderExponent}\right| \leq \frac{1}{4}\|x-x'\|^{\holderExponent}.    
\end{align*}
Second, if $x \in \left[-\contractionKappaTarget(1+r),-r\contractionKappaTarget\right]^{\targetDimension}\times\{0\}^{\ambientDimension-\targetDimension}$ and $x' \in \mathcal{T}_{q,r,\sourceDimension}$, then 
\begin{align}
\label{Eq:SecondCase}
|\constructedRegressionFunction^{\circ}(x)-\constructedRegressionFunction^{\circ}(x')| &\leq \frac{1}{4}\|x-z_r\|^{\holderExponent} +3\epsilon \nonumber \\
&\leq \frac{1}{2} \bigl\{\|x-z_r\|^{\holderExponent}+(r\cdot \contractionKappaSource)^{\holderExponent} \bigr\} \leq \bigl\{\|x-z_r\|+(r\cdot \contractionKappaTarget) \bigr\}^{\holderExponent}.
\end{align}
Now let $x_r \in \R^\ambientDimension$ denote the point where the line segment joining $x$ and $0$ meets the boundary of the convex set $\mathcal{C}_r := [-r\contractionKappaTarget,\infty)^{\targetDimension} \times \{0\}^{\ambientDimension-\targetDimension} \subseteq \R^\ambientDimension$, and note that $\|x_r\| \geq r \cdot \contractionKappaTarget$.  Observe that $z_r$ is the Euclidean projection of $x$ onto $\mathcal{C}_r$.  Hence
\begin{equation}
\label{Eq:Projection}
\|x-z_r\|+ r\cdot \contractionKappaTarget \leq \|x-x_r\| + \|x_r\| = \|x\| \leq \|x-x'\|.
\end{equation}
The combination of~\eqref{Eq:SecondCase} and~\eqref{Eq:Projection} establishes the desired property in the second case. 

Finally, if $x$, $x' \in \mathcal{T}_{q,r,\sourceDimension}$ with $x \neq x'$, then 
\begin{align*}
|\constructedRegressionFunction^{\circ}(x)-\constructedRegressionFunction^{\circ}(x')| \leq 3\epsilon \leq \biggl( \frac{r \cdot \contractionKappaSource}{q}\biggr)^{\holderExponent}\leq \|x-x'\|^{\holderExponent}.
\end{align*}
To prove the second part of the lemma, suppose first that $x \in \left[-\contractionKappaTarget(1+r),-r\contractionKappaTarget\right]^{\targetDimension}\times\{0\}^{\ambientDimension-\targetDimension}$.  Then, since $\|x-z_r\|\leq 1$ and $\epsilon \in (0, 1/8]$, we must have $\constructedRegressionFunction^{\circ}(x) \in[ 0,1]$. On the other hand, if $x \in \mathcal{T}_{q,r,\sourceDimension}$, then $\constructedRegressionFunction^{\circ}(x) \in \{1/2-2\epsilon,1/2-\epsilon,1/2+\epsilon\} \subseteq [0,1]$.
\end{proof}

\begin{corollary}\label{lemma:existenceConstructedRegressionFunctionIsHolder} Let $q \in \N$, $r>0$, $\holderExponent \in (0,1]$, $\sigma =(\sigma_t)_{t=1}^{q^{\targetDimension}} \in \{-1,1\}^{q^{\targetDimension}}$ and $\epsilon \in \bigl(0,1/8 \wedge  (1/6)\cdot(r\cdot \contractionKappaSource / q)^{\holderExponent}\bigr]$. Then there exists a function $\constructedRegressionFunction:\R^{\ambientDimension} \rightarrow [0,1]$ such that 
\begin{align}\label{eq:defOfConstructedRegressionFunction}
\constructedRegressionFunction(x):=\begin{cases}\frac{1}{2}-2\epsilon -\frac{1}{4}\|x-z_r\|^{\holderExponent}&\text{ if } x\in \left[-\contractionKappaTarget(1+r),-r\contractionKappaTarget\right]^{\targetDimension}\times\{0\}^{\ambientDimension-\targetDimension}\\
\frac{1}{2}+\sigma_t\cdot \epsilon &\text{ if }x=x^{q,r}_t\text{ with }t\leq q^{\targetDimension}\\
\frac{1}{2}-2\epsilon &\text{ if }x=x^{q,r}_t\text{ with } q^{\targetDimension}<t\leq q^{\sourceDimension},
\end{cases}
\end{align}
and $|\constructedRegressionFunction(x)-\constructedRegressionFunction(x')| \leq \|x-x'\|^{\holderExponent}$ for all $x$, $x' \in \R^{\ambientDimension}$. In particular, Assumption \ref{holderContinuityAssumption} holds for the regression function $\targetRegressionFunction=\constructedRegressionFunction$ with $\holderConstant=1$. 
\end{corollary}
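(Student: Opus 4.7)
The plan is to apply a standard Hölder extension theorem (a Hölder analogue of the classical McShane–Whitney Lipschitz extension) to $\constructedRegressionFunction^{\circ}$. Lemma~\ref{lemma:constructedRegressionFunctionIsHolder} has already done most of the work: it establishes that, under the parameter restrictions assumed here, $\constructedRegressionFunction^{\circ}$ is $\holderExponent$-Hölder with constant $1$ on its domain
\[
A := \bigl[-\contractionKappaTarget(1+r),-r\contractionKappaTarget\bigr]^{\targetDimension}\times\{0\}^{\ambientDimension-\targetDimension}\cup \mathcal{T}_{q,r,\sourceDimension},
\]
and takes values in $[0,1]$ there. So the remaining task is purely to extend $\constructedRegressionFunction^{\circ}$ to all of $\R^{\ambientDimension}$ while preserving both the Hölder constant and the range.

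For the extension step, first set
\[
\tilde{\eta}(x) := \inf_{y \in A}\bigl\{\constructedRegressionFunction^{\circ}(y) + \|x-y\|^{\holderExponent}\bigr\},
\]
which is well-defined since $\constructedRegressionFunction^{\circ} \geq 0$ and $A$ is non-empty. Agreement with $\constructedRegressionFunction^{\circ}$ on $A$ is immediate in both directions: the choice $y = x$ gives $\tilde{\eta}(x) \leq \constructedRegressionFunction^{\circ}(x)$, while the Hölder bound from Lemma~\ref{lemma:constructedRegressionFunctionIsHolder} yields $\constructedRegressionFunction^{\circ}(y) + \|x-y\|^{\holderExponent} \geq \constructedRegressionFunction^{\circ}(x)$ for every $y \in A$. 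For the Hölder property of $\tilde{\eta}$ on all of $\R^{\ambientDimension}$, the key ingredient is subadditivity of $t \mapsto t^{\holderExponent}$ (valid for $\holderExponent \in (0,1]$), giving $\|x-y\|^{\holderExponent} \leq \|x'-y\|^{\holderExponent} + \|x-x'\|^{\holderExponent}$; taking the infimum over $y \in A$ and then swapping the roles of $x$ and $x'$ delivers $|\tilde{\eta}(x) - \tilde{\eta}(x')| \leq \|x-x'\|^{\holderExponent}$.

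Finally, to enforce the range $[0,1]$, define
\[
\constructedRegressionFunction(x) := \max\bigl\{0,\min\{1,\tilde{\eta}(x)\}\bigr\}.
\]
Capping is a $1$-Lipschitz operation, so the Hölder constant is not worsened, and the modification does not affect values on $A$ because $\constructedRegressionFunction^{\circ}(A) \subseteq [0,1]$. The Hölder bound with $\holderConstant = 1$ is precisely Assumption~\ref{holderContinuityAssumption}, completing the argument. The construction is essentially routine; the only minor subtlety is the need to cap, since $A$ is bounded and $\tilde{\eta}$ alone can exceed $1$ at points far from $A$. No deep obstacle is anticipated.
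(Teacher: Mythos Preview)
Your proof is correct and takes essentially the same approach as the paper: the paper invokes McShane's extension theorem \cite[Corollary 1]{mcshane1934extension} to produce a H\"older extension $\constructedRegressionFunction'$ and then caps via $\constructedRegressionFunction := (\constructedRegressionFunction' \vee 0)\wedge 1$, whereas you write out the McShane formula $\tilde{\eta}(x) = \inf_{y \in A}\{\constructedRegressionFunction^{\circ}(y) + \|x-y\|^{\holderExponent}\}$ explicitly and verify its properties before capping. The two arguments are identical in substance.
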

\begin{proof} By Lemma~\ref{lemma:constructedRegressionFunctionIsHolder}, the function $\constructedRegressionFunction^{\circ}: \left[-\contractionKappaTarget(1+r),-r\contractionKappaTarget\right]^{\targetDimension}\times\{0\}^{\ambientDimension-\targetDimension}\cup \mathcal{T}_{q,r,\sourceDimension} \rightarrow [0,1]$ is H\"{o}lder continuous with exponent $\holderExponent$ and constant $1$ on its domain. By McShane's extension theorem \cite[Corollary 1]{mcshane1934extension}, there exists an extension ${\constructedRegressionFunction'}: \R^{\ambientDimension} \rightarrow \R$ which is H\"{o}lder continuous with exponent $\holderExponent$ and constant $1$, and satisfies  $\constructedRegressionFunction'(x)= \constructedRegressionFunction^{\circ}$ for $x \in \left[-\contractionKappaTarget(1+r),-r\contractionKappaTarget\right]^{\targetDimension}\times\{0\}^{\ambientDimension-\targetDimension}\cup \mathcal{T}_{q,r,\sourceDimension}$.  The function $\constructedRegressionFunction:\R^{\ambientDimension} \rightarrow [0,1]$ given by $\constructedRegressionFunction(x):= \bigl\{\constructedRegressionFunction'(x) \vee 0\bigr\} \wedge 1$ has the desired properties.
\end{proof}

\begin{lemma}\label{checkingTheMarginConditionForConstructionLemma} Let $q \in \N$, $r>0$, $\holderExponent \in (0,1]$, $\sigma =(\sigma_t)_{t=1}^{q^{\targetDimension}} \in \{-1,1\}^{q^{\targetDimension}}$, $\epsilon \in \bigl(0,1/8 \wedge  (1/6)\cdot(r\cdot \contractionKappaSource / q)^{\holderExponent}\bigr]$, $\marginConstant\geq 1 + 2^{2\targetDimension/\holderExponent}\targetDimension^{\targetDimension/2}V_{\targetDimension}$, $\marginExponent \in [0,\targetDimension/\holderExponent]$ and $\wTarget \in \bigl[0,(1/2) \wedge\epsilon^{\marginExponent}\bigr]$. Then Assumption~\ref{marginAssumption} holds whenever $\targetDistribution$ has marginal  $\targetMarginalDistribution=\mu_{q,r,\wTarget,\targetDimension}$ and regression function $\targetRegressionFunction=\constructedRegressionFunction$.
\end{lemma}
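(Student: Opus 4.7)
The plan is to bound $\targetMarginalDistribution\bigl(\{x:|\targetRegressionFunction(x)-1/2|<\zeta\}\bigr)$ by splitting into cases based on the size of $\zeta$ relative to $\epsilon$, then combining the mass contributions from the two components of $\targetMarginalDistribution=\mu_{q,r,\wTarget,\targetDimension}$ (the uniform hyper-rectangle part and the uniform-on-a-lattice part). Note first that $\supp(\targetMarginalDistribution)$ consists of $\left[-\contractionKappaTarget(1+r),-r\contractionKappaTarget\right]^{\targetDimension}\times\{0\}^{\ambientDimension-\targetDimension}$ together with $\mathcal{T}_{q,r,\targetDimension}=\{x_t^{q,r}:t\in[q^{\targetDimension}]\}$. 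On the former, the definition~\eqref{eq:defOfConstructedRegressionFunction} gives $|\constructedRegressionFunction(x)-1/2|=2\epsilon+(1/4)\|x-z_r\|^{\holderExponent}$, and on the latter it gives $|\constructedRegressionFunction(x_t^{q,r})-1/2|=\epsilon$.

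For $\zeta\leq\epsilon$ the set $\{|\constructedRegressionFunction-1/2|<\zeta\}$ intersects $\supp(\targetMarginalDistribution)$ trivially, so its $\targetMarginalDistribution$-mass is zero. For $\epsilon<\zeta\leq 2\epsilon$, only the lattice component contributes, contributing exactly $\wTarget\leq\epsilon^{\marginExponent}\leq\zeta^{\marginExponent}$. For $2\epsilon<\zeta\leq 1$, the lattice again contributes $\wTarget\leq\zeta^{\marginExponent}$, and from the hyper-rectangle we get the points satisfying $\|x-z_r\|<s$, where $s:=\bigl(4(\zeta-2\epsilon)\bigr)^{1/\holderExponent}\leq(4\zeta)^{1/\holderExponent}$. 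The key geometric observation is that $z_r$ is a \emph{corner} of the hyper-rectangle $[-\contractionKappaTarget(1+r),-r\contractionKappaTarget]^{\targetDimension}$, so only one of the $2^{\targetDimension}$ coordinate orthants around $z_r$ lies inside the hyper-rectangle, giving
\[
\mathcal{L}_{\targetDimension}\Bigl(\{y:\|y-z_r\|<s\}\cap[-\contractionKappaTarget(1+r),-r\contractionKappaTarget]^{\targetDimension}\Bigr)\leq V_{\targetDimension}s^{\targetDimension}/2^{\targetDimension}.
\]
Multiplying by the density $(1-\wTarget)/\contractionKappaTarget^{\targetDimension}=(1-\wTarget)\cdot 2^{\targetDimension}\targetDimension^{\targetDimension/2}$ of the hyper-rectangle part, the mass from this component is at most $\targetDimension^{\targetDimension/2}V_{\targetDimension}\cdot 2^{2\targetDimension/\holderExponent}\zeta^{\targetDimension/\holderExponent}$. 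Using $\marginExponent\holderExponent\leq\targetDimension$ and $\zeta\leq 1$ yields $\zeta^{\targetDimension/\holderExponent}\leq\zeta^{\marginExponent}$, and the two contributions sum to at most $\bigl(1+2^{2\targetDimension/\holderExponent}\targetDimension^{\targetDimension/2}V_{\targetDimension}\bigr)\zeta^{\marginExponent}\leq\marginConstant\cdot\zeta^{\marginExponent}$ by the hypothesis on $\marginConstant$. Finally, for $\zeta>1$ we trivially have $\targetMarginalDistribution(\cdot)\leq 1\leq\marginConstant\cdot\zeta^{\marginExponent}$.

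The only nonroutine step is the corner-volume factor of $2^{-\targetDimension}$, which is what allows the constant $\marginConstant$ to avoid an extra $2^{\targetDimension}$ multiplicative factor and thus match the prescribed bound. Everything else is direct book-keeping.
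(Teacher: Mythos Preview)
Your proof is correct and follows essentially the same approach as the paper: split on $\zeta$ versus $\epsilon$, bound the lattice contribution by $\wTarget\leq\epsilon^{\marginExponent}\leq\zeta^{\marginExponent}$, and bound the hyper-rectangle contribution via the corner observation that only a $2^{-\targetDimension}$ fraction of the ball around $z_r$ lies in $[-\contractionKappaTarget(1+r),-r\contractionKappaTarget]^{\targetDimension}$. The paper collapses your two sub-cases $\epsilon<\zeta\leq 2\epsilon$ and $2\epsilon<\zeta\leq 1$ into one by using the slightly looser radius $(4\zeta)^{1/\holderExponent}$ directly rather than $(4(\zeta-2\epsilon))^{1/\holderExponent}$, but this is cosmetic.
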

\begin{proof} Without loss of generality, take $\zeta < 1$.  First suppose $\zeta \geq \epsilon$.  By~\eqref{eq:defOfConstructedRegressionFunction}, if $x\in \supp(\targetMarginalDistribution)\setminus \mathcal{T}_{q,r,\targetDimension}$ and $|\targetRegressionFunction(x) - 1/2| < \zeta$, then $\|x-z_r\|\leq (4\zeta)^{1/\holderExponent}$. As shorthand, we write $B := B_{(4\zeta)^{1/\holderExponent}}(z_r)$, so that $B_{\targetDimension} = \bigl\{(x_1,\ldots,x_{\targetDimension}):(x_1,\ldots,x_{\targetDimension},0,\ldots,0) \in B\bigr\}$. Hence,
\begin{align*}
\targetMarginalDistribution\bigl(\bigl\{x \in \R^d:\left|\targetRegressionFunction(x)-1/2\right|<\zeta\bigr\}\bigr) &\leq \targetMarginalDistribution\bigl(\mathcal{T}_{q,r,\targetDimension}\bigr)+ \targetMarginalDistribution(B)\\
&\leq \wTarget+\contractionKappaTarget^{-\targetDimension} \cdot \mathcal{L}_{\targetDimension}\bigl(B_{\targetDimension} \cap \left[-\contractionKappaTarget(1+r),-r\contractionKappaTarget\right]^{\targetDimension}\bigr)\\
& \leq \epsilon^{\marginExponent}+(2\contractionKappaTarget)^{-\targetDimension} V_{\targetDimension} (4\zeta)^{\targetDimension/\holderExponent} \leq \marginConstant \cdot \zeta^{\marginExponent}.
\end{align*}
On the other hand, if $\zeta<\epsilon$, then $\targetMarginalDistribution\bigl(\bigl\{x \in \R^d:\left|\targetRegressionFunction(x)-1/2\right|<\zeta\bigr\}\bigr)=0 \leq \marginConstant \cdot \zeta^{\marginExponent}$, as required.
\end{proof}

\subsection{Difficulty of estimating the decision tree function}
\label{Sec:Thresholds}

Lemma~\ref{lemma:thresholdEstimationLowerBound} below provides an initial minimax lower bound that arises from the difficulty of estimating the decision tree function.  The proof will involve the marginal distributions $\mu_P$ and $\mu_Q$ constructed in Section~\ref{Sec:Marginal}, the family of target regression functions constructed in Section~\ref{Sec:RegressionFunctions}, and will contain a description of the construction of the corresponding family of source regression functions that is appropriate for this lower bound.  Recall the definition of $B_{\numTarget}^{\mathrm{L}}$ from Theorem~\ref{Thm:Main}.
\begin{lemma}
\label{lemma:thresholdEstimationLowerBound}
Fix $\theta^\sharp = (\Delta,\expansivityConstant,L^*,\theta) \in \Theta^\sharp$ with $\marginExponent\holderExponent \leq \targetDimension$, $\sourceTailExponent(1 - \targetTailExponent) \leq \targetTailExponent$ and $\marginConstant\geq 1 + 2^{2\targetDimension/\holderExponent}\targetDimension^{\targetDimension/2}V_{\targetDimension}$.  Then there exists $c_{\theta,0} > 0$, depending only on $\theta$, such that 
\begin{align}
\label{Eq:ThresholdEstLB}
 \inf_{\classifierEst \in \setofDDClassifiers}\sup_{(P,Q) \in \mathcal{P}_{\theta^\sharp}} \mathbb{E}\bigl\{\excessRisk(\classifierEst)\bigr\} \geq  c_{\theta,0}\biggl\{ \biggl(  \frac{{L^*}}{ \numTarget}\biggr)^{\frac{1+\marginExponent}{2+\marginExponent}} \wedge  B_{\numTarget}^{\mathrm{L}} \wedge (1-\expansivityConstant)^{1+\marginExponent}\biggr\}.
\end{align}
\end{lemma}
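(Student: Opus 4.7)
The plan is to prove the lower bound by a single application of the transfer-learning Assouad lemma (Lemma~\ref{lemma:assouadTypeLemmaForTransferClassification}) to a family of distributions $\{(\sourceDistribution^\sigma,\targetDistribution^\sigma) : \sigma \in \{-1,1\}^{m}\}$ indexed by a hypercube of dimension $m \asymp L^*$.  The guiding idea is to render $\sourceSample$ completely uninformative about $\sigma$ (so that $\epsilonSource = 0$ in Assouad's notation), thereby concentrating the entire statistical difficulty in the identification of the $L^*$ threshold values $g_\ell(1/2)$ from $\targetSample$ alone.

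For the construction, first invoke Corollary~\ref{lemma:constructedMuMeasureSatisfiesTailAssumption} to produce marginals $\sourceMarginalDistribution = \mu_{q,r,w_\sourceDistribution,\sourceDimension}$ and $\targetMarginalDistribution = \mu_{q,r,w_\targetDistribution,\targetDimension}$ whose common $\targetDimension$-dimensional lattice portion $\mathcal{T}_{q,r,\targetDimension}$ contains $q^{\targetDimension} = m \asymp L^*$ special points $x_1,\dots,x_m$; the feasibility constraints in that corollary (absorbing the hypotheses $\sourceTailExponent(1-\targetTailExponent) \leq \targetTailExponent$ and $\marginExponent\holderExponent \leq \targetDimension$) determine admissible $w_\sourceDistribution,w_\targetDistribution$ and produce point masses $u_\targetDistribution = w_\targetDistribution/m$.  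For each $\sigma$, take $\targetRegressionFunction^\sigma := \eta_{\epsilon_\targetDistribution,q,r,\sigma}$ from Corollary~\ref{lemma:existenceConstructedRegressionFunctionIsHolder}, so that $\targetRegressionFunction^\sigma(x_t) = 1/2 + \sigma_t \epsilon_\targetDistribution$; Assumption~\ref{holderContinuityAssumption} is then built in and Assumption~\ref{marginAssumption} follows from Lemma~\ref{checkingTheMarginConditionForConstructionLemma}.  Now choose an axis-aligned decision tree partition with $L^*$ cells, putting each $x_t$ in its own cell $\X_t^*$ and confining both the $\sourceMarginalDistribution$- and $\targetMarginalDistribution$-continuous pieces together with any excess $\sourceMarginalDistribution$-lattice mass (outside the $\targetDimension$-slice) into the single remaining cell $\X_{L^*}^*$.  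On cell $\X_t^*$ with $t \leq m$, let $g_t^\sigma$ be the affine map of slope exactly $\expansivityConstant$ anchored so that $g_t^\sigma(1/2 + \sigma_t \epsilon_\targetDistribution) = v_\sourceDistribution$ for a single common constant $v_\sourceDistribution \in (0,1)$, and on $\X_{L^*}^*$ let $g_{L^*}$ be a single $\sigma$-independent affine map of slope $\expansivityConstant$; then $\sourceRegressionFunction^\sigma := g_{\ell(\cdot)}^\sigma \circ \targetRegressionFunction^\sigma$ satisfies Assumption~\ref{transferAssumption} with transfer error $0 \leq \Delta$, equals $v_\sourceDistribution$ at every $x_t$ (so $\epsilonSource = 0$), and is $\sigma$-independent elsewhere on $\supp(\sourceMarginalDistribution)$.

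Assouad's condition~(i) now reduces to $2^5\numTarget u_\targetDistribution \epsilon_\targetDistribution^2 \leq 1$, while the $[0,1]$-range constraint on each $g_t^\sigma$ forces $\epsilon_\targetDistribution \leq (1-\expansivityConstant)/(2\expansivityConstant)$.  Saturating the margin bound at $m u_\targetDistribution \leq \marginConstant\,\epsilon_\targetDistribution^{\marginExponent}$ and setting
\[
\epsilon_\targetDistribution := \min\Bigl\{\tfrac{1}{4},\ \bigl(\tfrac{L^*}{c_1 \numTarget}\bigr)^{1/(2+\marginExponent)},\ \tfrac{1-\expansivityConstant}{2\expansivityConstant}\Bigr\},\qquad u_\targetDistribution := \tfrac{\marginConstant}{2L^*}\,\epsilon_\targetDistribution^{\marginExponent},
\]
for a suitable $c_1 > 0$, the conclusion of Lemma~\ref{lemma:assouadTypeLemmaForTransferClassification} delivers
\[
\inf_{\classifierEst \in \setofDDClassifiers}\sup_{(\sourceDistribution,\targetDistribution) \in \mathcal{P}_{\theta^\sharp}}\E\bigl\{\excessRisk(\classifierEst)\bigr\} \geq \tfrac{1}{2}\,m u_\targetDistribution \epsilon_\targetDistribution \gtrsim \min\Bigl\{\bigl(\tfrac{L^*}{\numTarget}\bigr)^{\frac{1+\marginExponent}{2+\marginExponent}},\,(1-\expansivityConstant)^{1+\marginExponent}\Bigr\},
\]
which is in turn at least $c_{\theta,0}$ times the three-way minimum in~\eqref{Eq:ThresholdEstLB} (the $\secondTerm_{\numTarget}^{\mathrm{L}}$ slot of that minimum can only shrink the right-hand side further).

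The main obstacle is enforcing Assouad's invariance condition~(v): since the slope-$\expansivityConstant$ maps $g_t^\sigma$ and $g_t^{\sigma'}$ differ globally on $[0,1]$ whenever $\sigma_t \neq \sigma_t'$, a naive $\sourceRegressionFunction^\sigma = g_\ell^\sigma \circ \targetRegressionFunction^\sigma$ would be $\sigma$-dependent throughout $\supp(\sourceMarginalDistribution)$, not just at $\{x_t\}$.  The resolution is to quarantine the $\sigma$-varying transfer functions inside the lattice cells $\X_t^*$, which by the choice of partition carry no $\sourceMarginalDistribution$-mass beyond $x_t$, while routing the $\sourceMarginalDistribution$-continuous portion (and any excess $\sourceMarginalDistribution$-lattice mass from the $d_P$-slice outside the $d_Q$-slice) through the dedicated cell $\X_{L^*}^*$ governed by a single common $g_{L^*}$.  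Verifying that an axis-aligned binary tree with at most $L^*$ cells can implement this partition (which costs only a constant factor, since $m \leq L^* - 2$ suffices) and that the affine $g_t^\sigma$'s indeed stay in $[0,1]$ --- the step from which the $(1-\expansivityConstant)/(2\expansivityConstant)$ cap originates --- constitutes the principal technical verification.
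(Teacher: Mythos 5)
The core idea (Assouad with $\epsilonSource=0$, the Section~\ref{Sec:Marginal} marginals, the Section~\ref{Sec:RegressionFunctions} regression functions, and a decision tree whose leaves localise the transfer functions) is the same as the paper's.  But your choice of transfer function --- a \emph{globally} affine map of slope $\expansivityConstant$ through $(1/2+\sigma_t\epsilon_\targetDistribution,v_\sourceDistribution)$ --- creates a gap that the paper's construction is deliberately engineered to avoid.  A slope-$\expansivityConstant$ affine $g_t^{(+)}$ and $g_t^{(-)}$ agree \emph{only} at the anchor, so Assouad's condition~(v) can hold only if the cell $\X_t^*$ contains no $\sourceMarginalDistribution$- or $\targetMarginalDistribution$-support mass other than $x_t$ itself.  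That requires a decision tree which perfectly isolates each of the $m$ grid points \emph{and} sends the continuous hyper-rectangle and the $d_\sourceDistribution$-dimensional excess lattice to dedicated leaves.  Counting binary splits, this costs at least $m + 1 + (\sourceDimension - \targetDimension)$ leaves, not $m+2$; and for $L^*$ below that threshold --- in particular $L^*=1$, for which the lemma must still deliver the $\min\bigl\{n_\targetDistribution^{-(1+\marginExponent)/(2+\marginExponent)},B_{\numTarget}^{\mathrm{L}},(1-\expansivityConstant)^{1+\marginExponent}\bigr\}$ rate --- no such tree exists, and a one-cell argument with a single $\sigma$-independent monotone $g$ would force $\epsilonSource\geq\expansivityConstant\epsilonTarget>0$, re-introducing $\numSource$ into condition~(i).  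The paper sidesteps all of this by using the piecewise-linear $h$, which equals the identity on $[0,1/2-2\epsilon]$: since $\constructedRegressionFunction$ takes values in $[0,1/2-2\epsilon]$ on all of $\supp(\sourceMarginalDistribution)\cup\supp(\targetMarginalDistribution)$ except at the special grid points $x_1,\dots,x_m$, the $\sigma$-dependence of $g_\ell^\sigma$ is invisible off those points even when $\X_\ell^*$ carries abundant other mass.  This is exactly what lets the paper take $m\leq L^*$ (including $m=L^*=1$) with cells defined only by $\X_\ell^*\cap\mathcal{T}_{q,r,\targetDimension}=\{x_\ell^{q,r}\}$.  To rescue your argument you would either need to clip your affine map so that it coincides with a common function on $[0,1/2-2\epsilon]$ (which is essentially the paper's $h$), or else handle all $L^*$ below a dimension-dependent threshold by a separate argument that does not isolate the special points.

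Two further, smaller, points.  First, the admissible range of $w_\targetDistribution$ produced by Lemma~\ref{checkingTheMarginConditionForConstructionLemma} is $w_\targetDistribution\leq\epsilon_\targetDistribution^{\marginExponent}$, not $w_\targetDistribution\leq\marginConstant\epsilon_\targetDistribution^{\marginExponent}$; your $u_\targetDistribution=\tfrac{\marginConstant}{2L^*}\epsilon_\targetDistribution^{\marginExponent}$ gives $w_\targetDistribution\asymp(\marginConstant/2)\epsilon_\targetDistribution^{\marginExponent}>\epsilon_\targetDistribution^{\marginExponent}$ and so fails that lemma's hypothesis --- this is easily fixed by taking $u_\targetDistribution=\epsilon_\targetDistribution^{\marginExponent}/m$ as the paper does.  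Second, the cap $\epsilon_\targetDistribution\leq(1-\expansivityConstant)/(2\expansivityConstant)$ you derive from forcing $g_t^\sigma([0,1])\subseteq[0,1]$ is of the correct order only for $\expansivityConstant$ bounded away from $0$; when the $(1-\expansivityConstant)$ factor is the binding term this is fine, but it is worth noting that the paper's $\epsilon\leq(1-\expansivityConstant)/4$ comes from a different source (the slope bound on $h$ in~\eqref{Eq:hlowerbound}), not from a range constraint, and applies for all $\expansivityConstant\in(0,1)$.
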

\begin{proof}
Our goal is to define a particular instantiation of the construction in Lemma~\ref{lemma:assouadTypeLemmaForTransferClassification}, which requires us to specify $m \in \N$, $(x_t)_{t \in [m]} \in (\R^{\ambientDimension})^m$, $\epsilonSource$, $\epsilonTarget$ $ \in [0,1/4]$, $\uSource$, $\uTarget$ $\in [0,1/m]$, $\vSource$, $\vTarget \in [0,1]$, regression functions $\sourceRegressionFunction^{\sigma}:\R^d \rightarrow [0,1]$, $\targetRegressionFunction^{\sigma}:\R^d \rightarrow [0,1]$ for $\sigma \in \Sigma = \{-1,1\}^m$, and marginals $\sourceMarginalDistribution$, $\targetMarginalDistribution$ on $\R^d$.

To this end, we first define some intermediate quantities that depend only on $\theta$.   Let 
\begin{align*}
\rho \equiv \rho_\theta &:= \frac{\targetTailExponent(\targetDimension-\marginExponent \holderExponent) + \marginExponent \holderExponent}{\targetTailExponent(2\holderExponent + \targetDimension) + \marginExponent \holderExponent}; \quad a_1 \equiv a_{1,\theta} := 2^{-3\sourceDimension(\sourceTailExponent \vee \targetTailExponent)}\wedge (1-\TailConstant^{-1/(\sourceTailExponent \wedge\targetTailExponent)}); \nonumber \\
\rho_1 \equiv \rho_{1,\theta} &:= \frac{\targetDimension}{\holderExponent(2+\marginExponent)} + \frac{\marginExponent}{\sourceTailExponent(2+\marginExponent)} + 1; \quad b_1 \equiv b_{1,\theta} := \frac{2^{5\rho_1}\contractionKappaSource^{\sourceDimension}}{8^{\sourceDimension}\cdot 6^{\targetDimension/\holderExponent} \cdot 2^{5 + \sourceDimension - \targetDimension}}; \\ 
\lambda \equiv \lambda_\theta &:= \frac{\marginExponent + 2\targetTailExponent + \targetDimension \targetTailExponent/\holderExponent}{2+\marginExponent}; \quad a_2 \equiv a_{2,\theta} := 2^{5(\lambda-\targetTailExponent)} \cdot 2^{-3\targetDimension\targetTailExponent} \cdot \contractionKappaSource^{\targetDimension\targetTailExponent} \cdot 6^{-\targetDimension\targetTailExponent/\holderExponent}.
\end{align*}
Now let $a \equiv a_{\theta} := \min\bigl\{(a_1b_1)^{1/\rho_1}  , 2^5 a_1^{(2+\marginExponent)/\marginExponent},a_2^{1/\lambda},2^{-(1+3\alpha)},2^{4 - 2/\alpha}\bigr\} > 0$.
This allows us to define 
\[
q =\big\lfloor \min\bigl(a\numTarget^{\rho},L^*\bigr)^{1/\targetDimension}\big\rfloor.
\]
Observe that $q \geq 1$ whenever $\numTarget \geq a^{-1/\rho}$, and we will therefore first prove the desired lower bound in this case.  Now let $m = q^{\targetDimension}$, let $\epsilon_{\sourceDistribution} = 0$, let 
\begin{equation}
\label{Eq:EpsilonQ}
\epsilon \equiv \epsilon_{\targetDistribution} = \min\biggl\{\Bigl(\frac{m}{2^5\numTarget}\Bigr)^{1/(2+\marginExponent)},\frac{1-\expansivityConstant}{4}\biggr\},
\end{equation}
let $\wTarget = \epsilon^{\marginExponent}$, let $\uTarget = \wTarget/m$, let $r = (6\epsilon)^{1/\holderExponent} q/\contractionKappaSource$, let $\wSource = (8r)^{\sourceDimension}N_{q,r,\sourceDimension}q^{-\sourceDimension}\wTarget^{1/\sourceTailExponent}$ and let $\uSource = \wSource/N_{q,r,\sourceDimension}$.   Set  $x_t = x_t^{q,r}$ for $t \in [m]$, where $x_t^{q,r}$ is defined at the beginning of Section~\ref{Sec:Marginal}.  Further, let $\vSource = 1/2 + \epsilon$ and $\vTarget = 1/2$.  Recalling~\eqref{Eq:GeneralMarginal}, we will take the marginal distributions to be $\sourceMarginalDistribution=\mu_{q,r,\wSource,\sourceDimension}$ and $\targetMarginalDistribution=\mu_{q,r,\wTarget,\targetDimension}$, noting that by our choice of the first three terms in the minimum defining $a$, the conditions of Corollary~\ref{lemma:constructedMuMeasureSatisfiesTailAssumption} hold (this uses the hypothesis that $\sourceTailExponent(1 - \targetTailExponent) \leq \targetTailExponent$), and this corollary then tells us that Assumption~\ref{sourceDensityAssumption} is satisfied.  For $\sigma \in \Sigma$, let $\targetRegressionFunction^\sigma = \regressionFunction_{\epsilon,q,r,\sigma}$ as defined in Corollary~\ref{lemma:existenceConstructedRegressionFunctionIsHolder}, noting that the fourth term in the minimum defining $a$ ensures that the conditions of this corollary hold, and therefore that each $\targetRegressionFunction^\sigma$ satisfies Assumption~\ref{holderContinuityAssumption}.  Moreover, the final term in the minimum defining~$a$, together with the hypotheses of the current lemma, guarantees that the conditions of Lemma~\ref{checkingTheMarginConditionForConstructionLemma} hold, so the distribution $\targetDistribution^\sigma$ on $\R^\ambientDimension \times \{0,1\}$ with marginal distribution $\targetMarginalDistribution$ and regression function $\targetRegressionFunction^\sigma$ satisfies Assumption~\ref{marginAssumption}.

It remains to define $\sourceRegressionFunction^\sigma$ for $\sigma \in \Sigma$, and to do this, we first define a decision tree partition and a family of transfer functions.  Recalling the definition of $\mathcal{T}_{q,r,\targetDimension}$ from the beginning of Section~\ref{Sec:Marginal}, let $\{\X_1^*,\ldots,\X_{L^*}^*\} \in \thresholdPartitions_{L^*}$ be such that $\X_{\ell}^* \cap \mathcal{T}_{q,r,\targetDimension} = \{x_{\ell}^{q,r}\}$ for each $\ell \in [m] \subseteq [L^*]$ (the fact that $m \leq L^*$ follows from our definition of $q$).  Define $h:[0,1]\rightarrow [0,1]$ by
\begin{align*}
h(z) :=\begin{cases}z&\text{ if }z \in \left[0,1/2-2\epsilon\right]\\3z+4\epsilon-1
&\text{ if }z \in \left[1/2-2\epsilon,1/2-\epsilon\right]\\\frac{(1-2\epsilon)z+4\epsilon}{1+2\epsilon}
&\text{ if }z \in \left[1/2-\epsilon,1\right].
\end{cases}
\end{align*}
Observe that 
\begin{equation}
\label{Eq:hlowerbound}
\frac{h(z) - 1/2}{z - 1/2} \geq \frac{1-2\epsilon}{1+2\epsilon} \geq 1 - 4\epsilon \geq \phi
\end{equation}
for $z \in [0,1] \setminus {1/2}$, where the final bound follows from the second term in the minimum defining~$\epsilon$.  For $\sigma =(\sigma_1,\ldots,\sigma_m) \in \Sigma$ and $\ell \in [m]$, define $g_{\ell}^\sigma: [0,1] \rightarrow [0,1]$ by
\[
g_\ell^\sigma(z) := \left\{ \begin{array}{ll} z & \quad \mbox{if $\sigma_\ell=1$} \\
h(z) & \quad \mbox{if $\sigma_\ell=-1$,} \end{array}  \right.
\]
and for $\ell \in \{m+1,\ldots,L^*\}$, let $g_\ell^\sigma(z) := z$.  We can now set $\sourceRegressionFunction^\sigma = g_\ell^\sigma \circ \targetRegressionFunction^\sigma$ on $\X_\ell^*$, and note that by~\eqref{Eq:hlowerbound}, Assumption~\ref{transferAssumption} holds for each transfer function $g_\ell^\sigma$.  

We are now in a position to verify that our constructed marginals and family of source and target regression functions satisfy the conditions of Lemma~\ref{lemma:assouadTypeLemmaForTransferClassification} with $\mathcal{P} = \mathcal{P}_{\theta^\sharp}$.  Condition~(i) holds because $\epsilon_P = 0$ and $2^5\numTarget \uTarget \epsilonTarget^2 = 2^5\numTarget \epsilon_Q^{2+\alpha} \leq 1$ by definition of $\epsilon_Q$ in~\eqref{Eq:EpsilonQ}.  The verification of Condition~(ii) again uses the fact that $\epsilon_P = 0$, and also that $v_Q = 1/2$.  Condition~(iii) follows immediately by definition of $\mu_P$, $\mu_Q$, $x_t$, $u_P$ and $u_Q$.  The second part of Condition~(iv) holds by definition of $\eta_Q^\sigma$, together with the definitions of $\eta_{\epsilon,q,r,\sigma}$ in~\eqref{lemma:existenceConstructedRegressionFunctionIsHolder}, $v_Q$ and $\epsilon_Q$.  The first part of this condition uses this second part, together with the facts that $v_P = 1/2 + \epsilon$ and $h(1/2-\epsilon)=1/2+\epsilon$.  Finally, Condition~(v) holds because the restriction of $\eta_{\epsilon,q,r,\sigma}$ in~\eqref{eq:defOfConstructedRegressionFunction} to $[-\kappa_Q(1+r),-r\kappa_Q]^{d_Q} \times \{0\}^{d-d_Q}$ does not depend on $\sigma$, and because~$g_\ell^\sigma$ is the identity function for $\ell \in \{m+1,\ldots,L^*\}$.

Writing $c_{\theta,0}' := a^{\frac{1+\marginExponent}{2+\marginExponent}}/2^{(6 + \targetDimension)(1+\marginExponent)}$, we conclude from Lemma~\ref{lemma:assouadTypeLemmaForTransferClassification} that 
\begin{equation}
\label{Eq:nlarge2}
\inf_{\classifierEst \in \setofDDClassifiers}\sup_{(P,Q) \in \mathcal{P}} \mathbb{E}\bigl\{\excessRisk(\classifierEst)\bigr\} \geq  \frac{m\uTarget \epsilonTarget}{2} \geq c_{\theta,0}' \biggl\{ \biggl(  \frac{{L^*}}{ \numTarget}\biggr)^{\frac{1+\marginExponent}{2+\marginExponent}} \wedge  B_{\numTarget}^{\mathrm{L}} \wedge (1-\expansivityConstant)^{1+\marginExponent}\biggr\}
\end{equation}
for $\numTarget \geq a^{-1/\rho}$.  But the left-hand side of~\eqref{Eq:nlarge2} is decreasing in $\numTarget$, so the full result holds on setting $c_{\theta,0} := c_{\theta,0}' \cdot 2^{-(1+\marginExponent)}a^{\frac{1+\marginExponent}{\rho(2+\marginExponent)}}$.
\end{proof}

\subsection{Difficulty of estimating the source regression function and completion of the proof of the lower bound in Theorem~\ref{Thm:Main}}
\label{Sec:SRF}

\begin{lemma}
\label{Lemma:EstimateEtaP}
Fix $\theta^\sharp = (\Delta,\expansivityConstant,L^*,\theta) \in \Theta^\sharp$ with $\marginExponent\holderExponent \leq \targetDimension$, $\sourceTailExponent(1 - \targetTailExponent) \leq \targetTailExponent$ and $\marginConstant\geq 1 + 2^{2\targetDimension/\holderExponent}\targetDimension^{\targetDimension/2}V_{\targetDimension}$.  Then there exists $c_{\theta,1} > 0$, depending only on $\theta$, such that 
\[
 \inf_{\classifierEst \in \setofDDClassifiers}\sup_{(P,Q) \in \mathcal{P}_{\theta^\sharp}} \mathbb{E}\bigl\{\excessRisk(\classifierEst)\bigr\} \geq  c_{\theta,1}\min\biggl\{\biggl(\frac{1}{\expansivityConstant^2 \cdot \numSource}\biggr)^{\frac{\holderExponent \sourceTailExponent (1+\marginExponent) }{\sourceTailExponent(2\holderExponent+\sourceDimension)+\marginExponent \holderExponent }} + \biggl(\frac{\Delta}{\expansivityConstant}\biggr)^{1+\marginExponent},B_{\numTarget}^{\mathrm{L}},1\biggr\}.
\]
\end{lemma}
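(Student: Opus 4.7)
The strategy is to apply Lemma~\ref{lemma:assouadTypeLemmaForTransferClassification} with a construction that mirrors the proof of Lemma~\ref{lemma:thresholdEstimationLowerBound}, except that the Assouad information constraint is now saturated on the source side (by $\numSource$ rather than $\numTarget$), and the target-perturbation level $\epsilonTarget$ exploits both the source-sample size $\numSource$ and the $\Delta$-approximation slack. I would take $L^* = 1$ with a single transfer function $\posteriorDriftFunction:[0,1]\to[0,1]$ extending the affine map $z \mapsto 1/2 + \expansivityConstant(z-1/2)$ so as to satisfy~\eqref{Eq:gell}, use the marginals $\sourceMarginalDistribution = \mu_{q,r,\wSource,\sourceDimension}$ and $\targetMarginalDistribution = \mu_{q,r,\wTarget,\targetDimension}$ from Section~\ref{Sec:Marginal}, and take the target regression functions $\targetRegressionFunction^\sigma = \regressionFunction_{\epsilon,q,r,\sigma}$ from Corollary~\ref{lemma:existenceConstructedRegressionFunctionIsHolder} indexed by $\sigma \in \{-1,1\}^m$ with $m = q^{\targetDimension}$ and perturbation $\epsilonTarget = \epsilon$.

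The key new step is the definition of the source regression function: set $\sourceRegressionFunction^\sigma(x_t) = 1/2 + \sigma_t (\expansivityConstant \epsilon - \Delta)^+$ for the target lattice points $x_t$, $t \in [m]$, and $\sourceRegressionFunction^\sigma = \posteriorDriftFunction \circ \targetRegressionFunction^\sigma$ elsewhere on $\supp(\sourceMarginalDistribution)$. By affinity of $\posteriorDriftFunction$ near $1/2$, $\posteriorDriftFunction(\targetRegressionFunction^\sigma(x_t)) = 1/2 + \sigma_t \expansivityConstant \epsilon$, so $|\sourceRegressionFunction^\sigma - \posteriorDriftFunction \circ \targetRegressionFunction^\sigma| \leq \Delta$ holds at every point, verifying Assumption~\ref{transferAssumption}. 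This gives $\epsilonSource = (\expansivityConstant\epsilon - \Delta)^+$. With $\wTarget = \epsilon^{\marginExponent}$ (for the margin condition via Lemma~\ref{checkingTheMarginConditionForConstructionLemma}), $r = (6\epsilon)^{1/\holderExponent} q/\contractionKappaSource$ (for H\"older continuity via Corollary~\ref{lemma:existenceConstructedRegressionFunctionIsHolder}), and $\wSource$, $q$ chosen to satisfy the tail hypotheses of Corollary~\ref{lemma:constructedMuMeasureSatisfiesTailAssumption}, the Assouad constraint $2^5(\numSource \uSource \epsilonSource^2 + \numTarget \uTarget \epsilonTarget^2) \leq 1$ permits $\epsilon \lesssim \min\bigl\{\Delta/\expansivityConstant + 1/(\expansivityConstant \sqrt{\numSource \uSource}),\; 1/\sqrt{\numTarget \uTarget}\bigr\}$. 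Optimising $\uSource = \wSource/N_{q,r,\sourceDimension}$ subject to the source-side tail constraint yields $1/(\expansivityConstant\sqrt{\numSource\uSource}) \sim (\expansivityConstant^2 \numSource)^{-\holderExponent\sourceTailExponent/\{\sourceTailExponent(2\holderExponent+\sourceDimension)+\marginExponent\holderExponent\}}$, while optimising $\uTarget$ subject to the target-side tail constraint yields $1/\sqrt{\numTarget\uTarget} \sim (1/\numTarget)^{\holderExponent\targetTailExponent/\{\targetTailExponent(2\holderExponent+\targetDimension)+\marginExponent\holderExponent\}}$. The Assouad conclusion $m \uTarget \epsilonTarget /2 = \wTarget \epsilon /2 = \epsilon^{1+\marginExponent}/2$, combined with the elementary inequality $(a+b)^{1+\marginExponent} \geq a^{1+\marginExponent} + b^{1+\marginExponent}$ for $a, b \geq 0$, delivers the lower bound $c_{\theta,1}\min\bigl\{(\expansivityConstant^2 \numSource)^{-e} + (\Delta/\expansivityConstant)^{1+\marginExponent},\; B_{\numTarget}^{\mathrm{L}}\bigr\}$ with $e = \holderExponent\sourceTailExponent(1+\marginExponent)/\{\sourceTailExponent(2\holderExponent+\sourceDimension)+\marginExponent\holderExponent\}$; the truncation at $1$ is then handled by monotonicity of the infimum in $\numSource$, as in the concluding lines of the proof of Lemma~\ref{lemma:thresholdEstimationLowerBound}.

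The principal obstacle will be the joint satisfaction of both Assouad information constraints with the tail conditions in Corollary~\ref{lemma:constructedMuMeasureSatisfiesTailAssumption}, the margin condition (Lemma~\ref{checkingTheMarginConditionForConstructionLemma}), and H\"older continuity (Corollary~\ref{lemma:existenceConstructedRegressionFunctionIsHolder}), all of which couple the free parameters $(\epsilon, q, r, \wSource, \wTarget)$. Once $\wTarget = \epsilon^{\marginExponent}$ and $r = (6\epsilon)^{1/\holderExponent}q/\contractionKappaSource$ are fixed, the source-side tail condition $\wTarget (\wSource q^{\sourceDimension}/N_{q,r,\sourceDimension})^{-\sourceTailExponent} r^{\sourceDimension\sourceTailExponent} \leq 2^{-3\sourceDimension\sourceTailExponent}$ forces $\uSource$ to exceed a constant multiple of $\epsilon^{\marginExponent/\sourceTailExponent + \sourceDimension/\holderExponent}$, and it is this scaling that produces the source exponent $\holderExponent\sourceTailExponent/\{\sourceTailExponent(2\holderExponent+\sourceDimension) + \marginExponent\holderExponent\}$. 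The hypothesis $\sourceTailExponent(1-\targetTailExponent) \leq \targetTailExponent$ in the lemma statement is what ensures that the target-side tail condition can simultaneously be met, exactly as in the proof of Lemma~\ref{lemma:thresholdEstimationLowerBound}.
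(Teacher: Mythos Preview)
Your approach is essentially the same as the paper's: the paper also uses the marginals $\mu_{q,r,\wSource,\sourceDimension}$ and $\mu_{q,r,\wTarget,\targetDimension}$, the target regressions $\regressionFunction_{\epsilon,q,r,\sigma}$, a single transfer function $g_\ell(z)=1/2+\expansivityConstant(z-1/2)$ for all $\ell\in[L^*]$, and the source perturbation $\epsilonSource=(\expansivityConstant\epsilon-\Delta)\vee 0$, with the same optimisation of $\epsilon$ against the two Assouad constraints. Two minor points: the paper defines $\sourceRegressionFunction^\sigma$ globally as $h_{\expansivityConstant,\Delta}\circ\targetRegressionFunction^\sigma$ for a single piecewise-linear $h_{\expansivityConstant,\Delta}$ satisfying $\|h_{\expansivityConstant,\Delta}-g\|_\infty\le\Delta$ (cleaner than your pointwise definition at lattice points, and automatically verifying Assumption~\ref{transferAssumption} on all of $\R^d$), and the concluding monotonicity argument is in $\numTarget$, not $\numSource$.
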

\begin{proof} Recalling the definition of $ a_1 = 2^{-3\sourceDimension(\sourceTailExponent \vee \targetTailExponent)}\wedge (1-\TailConstant^{-1/(\sourceTailExponent \wedge\targetTailExponent)})$ from the proof of Lemma~\ref{lemma:thresholdEstimationLowerBound}, we define $a_3:=a_1^{1/\targetDimension} \cdot 6^{-1/\holderExponent} \cdot (\contractionKappaSource/16)^{\sourceDimension/\targetDimension}$ and let
\begin{align*}
{a}_4&:=\min\biggl\{ a_3^{\frac{\holderExponent\targetDimension \targetTailExponent}{\targetTailExponent(\targetDimension-\marginExponent \holderExponent)+\marginExponent\holderExponent}},\frac{1}{8},a_1^{1/\marginExponent},2^{-1/\marginExponent},\left(\contractionKappaSource^{\sourceDimension} \cdot 2^{-(6+3\sourceDimension)}\cdot 6^{-\sourceDimension/\holderExponent} \right)^{\frac{\holderExponent \sourceTailExponent }{\sourceTailExponent(2\holderExponent+\sourceDimension)+\marginExponent \holderExponent }}, \nonumber \\
&\hspace{8cm} (a_3^{\targetDimension}\cdot 2^{-(6+\targetDimension)})^{\frac{\holderExponent \targetTailExponent}{\targetTailExponent(2\holderExponent+\targetDimension)+\marginExponent \holderExponent}} \biggr\}, \nonumber \\
\epsilon\equiv \epsilonTarget &:= {a}_4\cdot \min\biggl(\max\biggl\{\biggl(\frac{1}{\expansivityConstant^2 \cdot \numSource}\biggr)^{\frac{\holderExponent \sourceTailExponent }{\sourceTailExponent(2\holderExponent+\sourceDimension)+\marginExponent \holderExponent }} , \frac{\Delta}{\expansivityConstant}\biggr\},\biggl(\frac{1}{\numTarget}\biggr)^{\frac{\holderExponent \targetTailExponent}{\targetTailExponent(2\holderExponent+\targetDimension)+\marginExponent \holderExponent}}\biggr).
\end{align*}
Take $q:=\Big\lfloor a_3 \cdot \epsilon^{-\frac{\targetTailExponent(\targetDimension-\marginExponent \holderExponent) + \marginExponent \holderExponent}{\holderExponent \targetDimension \targetTailExponent}}\Bigr\rfloor$.  We will initially assume that $\numTarget \geq 1$, which means that $\epsilon \leq a_3^{\frac{\holderExponent\targetDimension \targetTailExponent}{\targetTailExponent(\targetDimension-\marginExponent \holderExponent)+\marginExponent\holderExponent}}$ so $q \geq 1$. Further define $m = q^{\targetDimension}$, let $\epsilonSource=(\expansivityConstant \cdot \epsilon-\Delta)\vee 0$, let $\wTarget = \epsilon^{\marginExponent}$, let $\uTarget = \wTarget/m$, let $r = (6\epsilon)^{1/\holderExponent} q/\contractionKappaSource$, let $\wSource = (8r)^{\sourceDimension}N_{q,r,\sourceDimension}q^{-\sourceDimension}\wTarget^{1/\sourceTailExponent}$, let $\uSource = \wSource/N_{q,r,\sourceDimension}$ and let $\vSource =\vTarget = 1/2$.   Set  $x_t = x_t^{q,r}$ for $t \in [m]$, where $x_t^{q,r}$ is defined at the beginning of Section~\ref{Sec:Marginal}.  We will take the marginal distributions to be $\sourceMarginalDistribution=\mu_{q,r,\wSource,\sourceDimension}$ and $\targetMarginalDistribution=\mu_{q,r,\wTarget,\targetDimension}$, which, as in the proof of Lemma~\ref{lemma:thresholdEstimationLowerBound}, satisfy the conditions of Corollary~\ref{lemma:constructedMuMeasureSatisfiesTailAssumption}, and hence Assumption~\ref{sourceDensityAssumption}.  For $\sigma \in \Sigma = \{-1,1\}^m$, let $\targetRegressionFunction^\sigma = \regressionFunction_{\epsilon,q,r,\sigma}$.  Then, as in the proof of Lemma~\ref{lemma:thresholdEstimationLowerBound}, the conditions of Corollary~\ref{lemma:existenceConstructedRegressionFunctionIsHolder} and Lemma~\ref{checkingTheMarginConditionForConstructionLemma} hold, so Assumptions~\ref{holderContinuityAssumption} and~\ref{marginAssumption} are also satisfied.  For $\delta \in (0,\expansivityConstant/2]$, define $h_{\expansivityConstant,\delta}:[0,1]\rightarrow [0,1]$ by
\begin{align*}
h_{\expansivityConstant,\delta}(z):=\begin{cases} \expansivityConstant\cdot (z-1/2)+1/2+\delta&\text{ if } z \in \bigl[0,1/2-\delta/\expansivityConstant\bigr]\\ 1/2 & \text{ if } z \in \bigl[1/2- \delta/\expansivityConstant,1/2+ \delta/\expansivityConstant\bigr] \\
\expansivityConstant\cdot (z-1/2)+ 1/2 - \delta&\text{ if } z \in \bigl[1/2+ \delta/\expansivityConstant,1\bigr],
\end{cases}
\end{align*}
and for $\delta > \expansivityConstant/2$, let $h_{\expansivityConstant,\delta}(\cdot) := 1/2$.  For $\sigma \in \Sigma$, we take $\sourceRegressionFunction^{\sigma} =h_{\expansivityConstant,\Delta} \circ \targetRegressionFunction^{\sigma}$, and $g_{\ell}:=h_{\expansivityConstant,0}$ for $\ell \in [L^*]$.  Note that these definitions ensure that each $g_\ell$ satisfies~\eqref{Eq:gell}, and $\bigl|\sourceRegressionFunction^{\sigma}(x) - g_\ell\bigl(\targetRegressionFunction^\sigma(x)\bigr)\bigr| \leq \|h_{\expansivityConstant,\Delta} - h_{\expansivityConstant,0}\|_{\infty} \leq \Delta$ for $x \in \X_{\ell}$, so Assumption~\ref{transferAssumption} holds.  

Finally, similar (but slightly simpler) arguments to those used in the proof of Lemma~\ref{lemma:thresholdEstimationLowerBound} verify that the assumptions of Lemma~\ref{lemma:assouadTypeLemmaForTransferClassification} hold with $\mathcal{P} = \mathcal{P}_{\theta^\sharp}$, so writing $c_{\theta,1} := a_4^{1+\marginExponent}/4$,
we conclude from Lemma~\ref{lemma:assouadTypeLemmaForTransferClassification} that 
\begin{align}
\label{Eq:nlarge}
\inf_{\classifierEst \in \setofDDClassifiers}\sup_{(P,Q) \in \mathcal{P}} \mathbb{E}\bigl\{\excessRisk(\classifierEst)\bigr\} &\geq  \frac{m\uTarget \epsilonTarget}{2} \nonumber \\
&\geq c_{\theta,1} \min\biggl\{\biggl(\frac{1}{\expansivityConstant^2 \cdot \numSource}\biggr)^{\frac{\holderExponent \sourceTailExponent (1+\marginExponent) }{\sourceTailExponent(2\holderExponent+\sourceDimension)+\marginExponent \holderExponent }} + \biggl(\frac{\Delta}{\expansivityConstant}\biggr)^{1+\marginExponent},B_{\numTarget}^{\mathrm{L}}\biggr\}
\end{align}
whenever $\numTarget \geq 1$.  But the left-hand side of~\eqref{Eq:nlarge} is decreasing in $\numTarget$, so the full result follows.
\end{proof}

\begin{proof}[Proof of the lower bound in Theorem~\ref{Thm:Main}]
This follows immediately from Lemmas~\ref{lemma:thresholdEstimationLowerBound} and~\ref{Lemma:EstimateEtaP}.
\end{proof}

\appendix

\section{Auxiliary results}

\subsection{Understanding the first part of Assumption~\ref{sourceDensityAssumption}}

The goal of this subsection is to provide several results to understand the first part of the tail assumption (Assumption~\ref{sourceDensityAssumption}).  We begin with settings where the super-level sets of our marginal distribution are regular \citep{audibert2007fast}, and then consider cases where this marginal satisfies a strong minimal mass assumption \citep{gadat2016classification}; in both cases we are able to relate the lower density $\omega_{\mu,d}$ of~\eqref{Eq:omega} to properties of the density of $\mu$.   Other settings for which we are able to verify the first part of Assumption~\ref{sourceDensityAssumption} include distributions satisfying a moment condition on a separable metric space and (mixtures of) log-concave distributions.  

\begin{definition}[Regular sets]
  \label{Def:Regular} Given $c_0, r_0 > 0$, a set $S\subseteq \R^d$ is said to be $(c_0,r_0)$-regular if $\Lebesgue\bigl(S \cap B_r(x)\bigr) \geq c_0 \cdot   \Lebesgue\bigl(B_r(x)\bigr)$ for all $r \in (0,r_0]$ and $x \in S$.
\end{definition}

\begin{example} Suppose that $S\subseteq \R^d$ is a convex set with $x_0 \in S$ such that $B_{\lambda_0}(x_0)\subseteq S \subseteq B_{\lambda_1}(x_0)$ for some $0<\lambda_0 \leq \lambda_1$. Then $S$ is $(c_0,r_0)$-regular with $c_0=\bigl(\lambda_0/(\lambda_0+\lambda_1)\bigr)^d$ and $r_0=\lambda_0$. Indeed, if we take $x\in S$ then by convexity we have $B_{\rho \cdot \lambda_0}(\rho\cdot x_0+(1-\rho)\cdot x)\subseteq S$ for any $\rho \in [0,1]$. Given $r \in (0,r_0]$, we may take $\rho_{x,r}:=r/(\lambda_0+\|x-x_0\|) \in (0, 1]$, so that $B_{\rho_{x,r} \cdot \lambda_0}\bigl(\rho_{x,r}\cdot x_0+(1-\rho_{x,r})\cdot x\bigr)\subseteq B_r(x)$. Hence,
\begin{align*}
\Lebesgue\bigl(S \cap B_r(x)\bigr) &\geq \Lebesgue\bigl(B_{\rho_{x,r} \cdot \lambda_0}(\rho_{x,r}\cdot x_0+(1-\rho_{x,r})\cdot x)\bigr) \\
&\geq  \left(\frac{\rho_{x,r} \cdot \lambda_0}{r}\right)^d  \cdot  \Lebesgue\bigl(B_r(x)\bigr) \geq  c_0 \cdot \Lebesgue\bigl(B_r(x)\bigr),
\end{align*}
as claimed.
\end{example}
We also remark that regular sets behave well under unions: if $S_0\subseteq \R^d$ is $(c_0,r_0)$-regular and $S_1\subseteq \R^d$ is $(c_1,r_1)$-regular, then $S_0 \cup S_1$ is $(c_0 \wedge c_1,r_0 \wedge r_1)$-regular.

\begin{lemma}\label{regularSuperLevelSetsLemma} Suppose that $\mu$ is a probability measure on $\R^d$ with density $f$ with respect to~$\Lebesgue$. Given any $\epsilon>0$ we let $S_{\epsilon}:=\{x \in \R^d: f(x) \geq \epsilon\}$. Suppose that there exist $c_0 \in (0,1]$ and $\epsilon_0>0$, such that for every $\epsilon \in (0,\epsilon_0]$, the super-level set $S_{\epsilon}$ is a $\bigl(c_0,\diam(S_{\epsilon})\bigr)$-regular set. Then 
\begin{align*}
\mu\bigl(\bigl\{ x \in \R^d : \omega_{\mu,d}(x) < \xi \bigr\}\bigr) \leq 2 \cdot \mu\bigl( S_{(c_0 \cdot V_d)^{-1} \cdot \xi}^c\bigr)
\end{align*}
for all $\xi \in \bigl(0,1/2 \wedge (c_0 \cdot V_d) \cdot \epsilon_0\bigr]$.
\end{lemma}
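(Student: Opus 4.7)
My plan is to split $\{\omega_{\mu,d}<\xi\}$ according to whether the point lies in $S_\epsilon$ or its complement, where $\epsilon := (c_0 V_d)^{-1}\xi$. The hypothesis $\xi \leq c_0 V_d \epsilon_0$ forces $\epsilon \leq \epsilon_0$, so the assumed $(c_0,\diam(S_\epsilon))$-regularity is in force. The trivial decomposition
\[
\mu\bigl(\{\omega_{\mu,d}<\xi\}\bigr) \leq \mu\bigl(\{\omega_{\mu,d}<\xi\}\cap S_\epsilon\bigr) + \mu(S_\epsilon^c)
\]
then reduces the task to bounding the first summand on the right by $\mu(S_\epsilon^c)$.

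The central observation will be that for any $x \in S_\epsilon$ and $r \in (0,\diam(S_\epsilon)]$, combining the pointwise bound $f \geq \epsilon$ on $S_\epsilon$ with the $(c_0,\diam(S_\epsilon))$-regularity gives $\mu(B_r(x)) \geq \epsilon \cdot c_0 \cdot V_d r^d = \xi r^d$. Consequently, when $\diam(S_\epsilon) \geq 1$, every $x \in S_\epsilon$ satisfies $\omega_{\mu,d}(x) \geq \xi$ by the very definition of the lower density, so the first summand already vanishes.

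The remaining case $\diam(S_\epsilon)<1$ is where I expect the only real subtlety. Here, any $x \in S_\epsilon$ with $\omega_{\mu,d}(x)<\xi$ must admit a witnessing radius $r \in \bigl(\diam(S_\epsilon),1\bigr)$ for which $\mu(B_r(x))<\xi r^d$; since $x \in S_\epsilon$ and $r>\diam(S_\epsilon)$ force $S_\epsilon \subseteq B_r(x)$, this yields $\mu(S_\epsilon) < \xi r^d \leq \xi \leq 1/2$, whence $\mu(S_\epsilon) \leq \mu(S_\epsilon^c)$. The crude bound $\mu(\{\omega_{\mu,d}<\xi\}\cap S_\epsilon)\leq \mu(S_\epsilon)$ then closes the argument.

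The main obstacle is thus the restriction $r \in (0,1)$ in the definition of $\omega_{\mu,d}$: without it, regularity on balls of every scale would immediately give the result; with it, one is forced into the above case split, and it is precisely the mild requirement $\xi \leq 1/2$ in the statement that rescues the small-diameter regime.
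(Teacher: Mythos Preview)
Your proof is correct and follows essentially the same approach as the paper. The only cosmetic difference is the order of the case split: the paper first disposes of the case $\mu(S_\epsilon^c)>1/2$ as trivial and then, assuming $\mu(S_\epsilon)\geq 1/2$, shows directly that every $x\in S_\epsilon$ satisfies $\omega_{\mu,d}(x)\geq\xi$ (using $\mu(B_r(x))\geq\mu(S_\epsilon)\geq 1/2\geq\xi r^d$ when $r>\diam(S_\epsilon)$); you instead split on whether $\diam(S_\epsilon)\geq 1$ and, in the small-diameter case, derive $\mu(S_\epsilon)<1/2$ from the existence of a bad point---this is just the contrapositive of the paper's step.
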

From Lemma~\ref{regularSuperLevelSetsLemma}, we see that whenever $\mu_Q$ has $d_0$-dimensional, regular support, with a density that is bounded away from zero on this support, we may take $d_Q = d_0$ and $\gamma_Q$ to be arbitrarily large in the first part~\eqref{Eq:targetMarginal} of Assumption~\ref{sourceDensityAssumption}.
\begin{proof}
Take $\xi \in \bigl(0,1/2 \wedge (c_0 \cdot V_d) \cdot \epsilon_0\bigr]$ and let $\epsilon= (c_0 \cdot V_d)^{-1} \cdot \xi \leq \epsilon_0$. We may assume that $\mu( S_{\epsilon}^c) \leq 1/2$, because otherwise the statement follows immediately from the fact that $\mu$ is a probability measure. Now take any $x \in S_{\epsilon}$ and any $r \in (0,1)$. If $r > \diam(S_{\epsilon})$, then $S_{\epsilon} \subseteq B_{r}(x)$,
so
\begin{align*}
\mu\bigl(B_r(x)\bigr) \geq \mu\left(S_{\epsilon}\right) \geq \frac{1}{2} \geq \xi \cdot r^d.
\end{align*}
On the other hand, if $r \in (0, \diam(S_{\epsilon})]$ then 
\begin{align*}
\mu\bigl(B_r(x)\bigr) =\int_{B_r(x)} f \geq \int_{B_r(x)\cap S_{\epsilon}} f  &\geq \epsilon \cdot \Lebesgue\bigl(B_r(x)\cap S_{\epsilon}\bigr) \geq \epsilon \cdot c_0 \cdot \Lebesgue\bigl(B_r(x)\bigr)\\
& = \epsilon \cdot c_0 \cdot V_d \cdot r^d = \xi \cdot r^d.
\end{align*}
Hence, in general, for $x \in S_{\epsilon}$ we have $\omega_{\mu,d}(x) \geq \xi$, so the result follows. \end{proof}
\begin{example}
\label{Ex:gammafamily}
To illustrate how Lemma~\ref{regularSuperLevelSetsLemma} may be applied to verify~\eqref{Eq:targetMarginal} in Assumption~\ref{sourceDensityAssumption}, consider the family of univariate densities $\{f_\gamma : \gamma > 0\}$ given by 
\begin{align*}
  f_{\gamma}(x) := \begin{cases} \gamma \bigl\{1 + (1-\gamma)x\bigr\}^{-1/(1-\gamma)}\one_{\{x \geq 0\}} &\text{ if }\gamma < 1 \\
e^{-x}\mathbbm{1}_{\{x \geq 0\}} &\text{ if } \gamma=1\\
\gamma \bigl\{1 - (\gamma-1)x\bigr\}^{1/(\gamma-1)}\one_{\{0 \leq x \leq 1/(\gamma-1)\}} &\text{ if }\gamma>1.
\end{cases}
\end{align*}
Writing $\mu_{\gamma}$ for the probability measure with density $f_{\gamma}$, we claim that for each $\gamma>0$, we have
\begin{align}\label{claimForConcreteGammaFamily}
\mu_{\gamma}\bigl( \bigl\{ x \in \R: \omega_{\mu_\gamma,1}(x)<\xi\bigr\} \bigr)\leq \biggl(\frac{2}{\gamma^\gamma}\vee 2^\gamma\biggr) \cdot \xi^{\gamma}
\end{align}
for all $\xi >0$. To see this, let
\[
  x_\gamma := \left\{ \begin{array}{ll} \frac{(\gamma/\xi)^{1-\gamma}-1}{1-\gamma} \vee 0 & \quad \mbox{if $\gamma < 1$} \\
                                                                                                     \log(1/\xi)\vee 0 & \quad \mbox{if $\gamma=1$} \\
                        \frac{1- (\xi/\gamma)^{\gamma-1}}{\gamma-1} \vee 0 & \quad \mbox{if $\gamma>1$.}
                      \end{array} \right.
\]
Then, for each $\gamma, \xi>0$, we have
\begin{align*}
  \mu_{\gamma}\bigl( \bigl\{ x \in \R: f_{\gamma}(x)<\xi\bigr\} \bigr)= \int_{x_\gamma}^\infty f_\gamma = (\xi/\gamma)^\gamma \wedge 1.                 
\end{align*}
Moreover, in each case, the super level set $S_{\epsilon}:=\{x \in \R: f_{\gamma}(x) \geq \epsilon\}$ is a compact interval for every $\epsilon > 0$, so is $\bigl(1/2,\diam(S_{\epsilon})\bigr)$-regular.  We deduce from Lemma~\ref{regularSuperLevelSetsLemma} that
\[
\mu_{\gamma}\bigl( \bigl\{ x \in \R: \omega_{\mu_\gamma,1}(x)<\xi\bigr\} \bigr)\leq \frac{2}{\gamma^\gamma} \cdot \xi^{\gamma}
\]
for all $\xi \in (0,1/2]$. Moreover, for $\xi > 1/2$, we have
\[
\mu_{\gamma}\bigl( \bigl\{ x \in \R: \omega_{\mu_\gamma,1}(x)<\xi\bigr\} \bigr)\leq 1 \leq 2^\gamma \cdot \xi^{\gamma},
\]
which establishes the claim~\eqref{claimForConcreteGammaFamily}.
\end{example}

Returning to more general settings, we recall \citep{gadat2016classification} that, given $c_0,r_0 > 0$, a probability measure $\mu$ on $\R^d$ that is absolutely continuous with respect to~$\Lebesgue$, having density $f$, is said to satisfy the $(c_0,r_0)$-\emph{strong minimal mass assumption} if $\mu\bigl(B_r(x)\bigr) \geq c_0 \cdot f(x) \cdot r^d$ for every $r \in (0,r_0]$ and $\Lebesgue$-almost every $x \in \R^d$.
\begin{lemma}
\label{Lemma:SMM}
Let $\mu$ be a probability measure on $\R^d$ that is absolutely continuous with respect to~$\Lebesgue$, having density $f$, and that satisfies the $(c_0,r_0)$-strong minimal mass assumption.  Then $\omega_{\mu,d}(\cdot) \geq c_0 \cdot (r_0^d \wedge 1) \cdot f(\cdot)$.
\end{lemma}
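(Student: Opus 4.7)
The plan is to unpack the definition of $\omega_{\mu,d}$ and split the infimum into two regimes according to whether the radius falls below or above the threshold $r_0$ at which the strong minimal mass assumption applies directly. Throughout, fix an $x \in \R^d$ at which the $(c_0, r_0)$-strong minimal mass inequality $\mu\bigl(B_r(x)\bigr) \geq c_0 \cdot f(x) \cdot r^d$ holds for every $r \in (0, r_0]$; this occurs on a set of full Lebesgue measure, which suffices for the stated conclusion.

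For the first regime, when $r \in (0, r_0 \wedge 1)$, the strong minimal mass assumption gives immediately
\[
\frac{\mu\bigl(B_r(x)\bigr)}{r^d} \geq c_0 \cdot f(x) \geq c_0 \cdot (r_0^d \wedge 1) \cdot f(x).
\]
For the second regime, when $r \in [r_0 \wedge 1, 1)$ (which is non-empty only if $r_0 < 1$), I would use the monotonicity $B_r(x) \supseteq B_{r_0}(x)$ together with the strong minimal mass assumption applied at radius $r_0$ itself:
\[
\frac{\mu\bigl(B_r(x)\bigr)}{r^d} \geq \frac{\mu\bigl(B_{r_0}(x)\bigr)}{r^d} \geq \frac{c_0 \cdot f(x) \cdot r_0^d}{r^d} \geq c_0 \cdot r_0^d \cdot f(x) = c_0 \cdot (r_0^d \wedge 1) \cdot f(x),
\]
where the last inequality uses $r < 1$ and the final identity uses $r_0 < 1$ in this regime.

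Taking the infimum over $r \in (0,1)$ and combining the two regimes yields $\omega_{\mu,d}(x) \geq c_0 \cdot (r_0^d \wedge 1) \cdot f(x)$ at each such $x$, establishing the claim. There is no substantive obstacle here — the result is really a bookkeeping exercise that reconciles the $r_0$-threshold in the strong minimal mass assumption with the fact that the definition of $\omega_{\mu,d}$ uses radii up to $1$.
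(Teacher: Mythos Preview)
Your proof is correct and follows essentially the same approach as the paper, which simply notes that the result is immediate upon considering the two cases $r_0 < 1$ and $r_0 \geq 1$. Your split of the infimum into the regimes $r \in (0, r_0 \wedge 1)$ and $r \in [r_0 \wedge 1, 1)$ is exactly the bookkeeping that makes this explicit.
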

\begin{proof}
This follows immediately by considering the two cases $r_0 < 1$ and $r_0 \geq 1$.
\end{proof}
We now move on to consider distributions on a general metric space $(\X,\rho)$, noting that the definition of $\omega_{\mu,d} \equiv \omega_{\mu,d}^\rho$ remains well-defined, provided that we reinterpret $B_r(x)$ as open balls in the metric space.  For a Borel measure $\nu$ on a separable metric space $(\X,\distX)$, and for~$s \in (0,\infty)$, we define
\begin{align*}
\Theta_s(\nu):= \sup\biggl\{ \frac{\nu\bigl(B_R(x)\bigr)}{\nu\bigl(B_r(x)\bigr)} \cdot \Bigl(\frac{r}{R}\Bigr)^s: x \in \supp(\nu) ,\hspace{2mm}r, R \in (0,\infty)\hspace{2mm}\text{ with }0<r\leq R \biggr\}.
\end{align*}
Moreover, in a slight abuse of notation, we let $\Theta_s(\X):= \inf\bigl\{ \Theta_s(\nu): \supp\left(\nu\right) = \X\bigr\}$, with the convention that $\inf \emptyset := \infty$.

\begin{example}
\label{Ex:Short}
  We have $\Theta_d(\mathcal{L}_d)=1$, so $\Theta_d(\R^d)\leq 1$. 
\end{example}

We shall make use of Vitali's covering lemma.

\begin{theorem}[Vitali's covering lemma: \citet{evans2015measure}, Theorem~1]\label{vitalisCoveringLemma} Let $\mathcal{X}$ be a separable metric space, let $\mathcal{X}_0 \subseteq \mathcal{X}$, and suppose that $\{r(x):x \in \X_0\}$ is a bounded collection of positive real numbers. Then there exists a countable set $\mathcal{X}_1 \subseteq \mathcal{X}_0$ such that $\{B_{r(x)}(x):x \in \mathcal{X}_1\}$ are disjoint, and satisfy
\begin{align*}
\bigcup_{x \in \mathcal{X}_0} B_{r(x)}(x) \subseteq \bigcup_{x \in \mathcal{X}_1} B_{5r(x)}(x).
\end{align*}
\end{theorem}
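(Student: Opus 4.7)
The plan is to use the classical greedy, scale-stratified construction. Write $R := \sup_{x \in \X_0} r(x) < \infty$ by hypothesis, and partition $\X_0$ into the scale classes
\[
\X_0^{(j)} := \bigl\{x \in \X_0 : R \cdot 2^{-(j+1)} < r(x) \leq R \cdot 2^{-j}\bigr\}, \qquad j = 0, 1, 2, \ldots.
\]
I would then build the subcollection $\X_1$ inductively in $j$. At stage $j$, let $\X_1^{(j)} \subseteq \X_0^{(j)}$ be a maximal subset (existence by Zorn's lemma) such that $\{B_{r(x)}(x) : x \in \X_1^{(0)} \cup \cdots \cup \X_1^{(j)}\}$ remains a pairwise disjoint collection. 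Set $\X_1 := \bigcup_{j \geq 0} \X_1^{(j)}$; the balls $\{B_{r(x)}(x) : x \in \X_1\}$ are disjoint by construction.

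Countability of $\X_1$ follows from separability of $(\X,\rho)$: any countable dense subset $D \subseteq \X$ meets each non-empty open ball in $\X_1$, and disjointness forces distinct balls to consume distinct elements of $D$, so $|\X_1| \leq |D| = \aleph_0$.

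For the covering property, fix $x \in \X_0$, and let $j$ be such that $x \in \X_0^{(j)}$. If $x \in \X_1^{(j)}$ we are done, since $B_{r(x)}(x) \subseteq B_{5r(x)}(x)$. Otherwise, by maximality of $\X_1^{(j)}$, the ball $B_{r(x)}(x)$ must intersect some $B_{r(y)}(y)$ with $y \in \X_1^{(k)}$ for some $k \leq j$. For that $y$ we have $r(y) > R \cdot 2^{-(k+1)} \geq R \cdot 2^{-(j+1)} \geq r(x)/2$, and $\rho(x,y) < r(x) + r(y)$. The triangle inequality then yields, for any $z \in B_{r(x)}(x)$,
\[
\rho(z,y) \leq \rho(z,x) + \rho(x,y) < 2r(x) + r(y) \leq 4 r(y) + r(y) = 5 r(y),
\]
so $B_{r(x)}(x) \subseteq B_{5r(y)}(y)$, as desired.

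The only real subtlety is that producing $\X_1^{(j)}$ as a maximal disjoint subfamily a priori invokes Zorn's lemma, but separability then forces the resulting family to be countable, sidestepping any measurability worry about $\X_1$. The $2$-adic stratification of radii is essential: it converts the inequality $\rho(x,y) < r(x) + r(y)$ into $\rho(x,y) < 3r(y)$ with $r(x) \leq 2r(y)$, which is exactly what the factor $5$ in the enlargement requires. The hypothesis that $\{r(x)\}$ is bounded is used only to make $R$ finite (so that the $j=0$ class is non-empty and the induction can start); no lower bound on $r(x)$ is needed because the stratification automatically dominates any small ball by a ball of at least comparable size chosen at an earlier stage.
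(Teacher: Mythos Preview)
The paper does not supply its own proof of this statement: it is quoted as a known result from \citet{evans2015measure} and used as a black box in the proof of Lemma~\ref{boundedSetTailAssumptionSepMetricSpace}. Your argument is the standard dyadic-stratification greedy proof and is correct; in particular the key inequality $r(x) < 2r(y)$ (from $r(x) \le R\cdot 2^{-j}$ and $r(y) > R\cdot 2^{-(j+1)}$ with $k \le j$) is exactly what drives the factor $5$, and the separability-based countability argument is sound. There is nothing to compare against in the paper itself.
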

Lemma~\ref{boundedSetTailAssumptionSepMetricSpace} below is a general result that reveals in particular that if $\mu_Q$ is supported on a bounded, $d_0$-dimensional subset of~$\mathbb{R}^d$, then we may take $d_Q = d_0$ and $\gamma_Q = 1$ in the first part~\eqref{Eq:targetMarginal} of Assumption~\ref{sourceDensityAssumption}.  We will also apply Lemma~\ref{boundedSetTailAssumptionSepMetricSpace} when we study distributions satisfying a moment condition~(Lemma~\ref{momentTailAssumptionSepMetricSpace}) as well as ones satisfying Weibull-type tails~(Lemma~\ref{exponentialTailsLemma}).
\begin{lemma}\label{boundedSetTailAssumptionSepMetricSpace}  Given a Borel probability measure $\mu$ on $(\X,\distX)$, as well as a Borel measurable set $A \subseteq \X$, $s \in (0,\infty)$ and $\xi>0$, we have 
\begin{align*}
\mu\bigl(\bigl\{ x \in A: \omega_{\mu,s}(x)<\xi\bigr\}\bigr) \leq 5^s \cdot \bigl\{\diam(A)+1\bigr\}^s \cdot \Theta_s(\X) \cdot \xi.
\end{align*}
\end{lemma}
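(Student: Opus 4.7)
Plan: The result is trivial when $A$ or $A_0 := \{x \in A : \omega_{\mu,s}(x) < \xi\}$ is empty or $\Theta_s(\X) = \infty$, so fix $x_0 \in A$ and, for arbitrary $\epsilon > 0$, a Borel measure $\nu$ on $\X$ with $\supp(\nu) = \X$ and $\Theta_s(\nu) \leq \Theta_s(\X) + \epsilon$; we let $\epsilon \downarrow 0$ at the end.

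The crux is to rescale the witness radii by the Vitali expansion factor before covering. For each $x \in A_0$, the hypothesis $\omega_{\mu,s}(x) < \xi$ and the definition~\eqref{Eq:omega} provide some $\tilde r(x) \in (0,1)$ with $\mu(B_{\tilde r(x)}(x)) < \xi\, \tilde r(x)^s$; set $r(x) := \tilde r(x)/5 \in (0, 1/5)$, so that
\begin{equation*}
  \mu\bigl(B_{5 r(x)}(x)\bigr) \;=\; \mu\bigl(B_{\tilde r(x)}(x)\bigr) \;<\; \xi\, \tilde r(x)^s \;=\; 5^s \xi\, r(x)^s. \tag{$\ast$}
\end{equation*}
Now apply Theorem~\ref{vitalisCoveringLemma} to the bounded family $\{r(x):x \in A_0\}$ to obtain a countable $A_1 \subseteq A_0$ with $\{B_{r(x)}(x)\}_{x \in A_1}$ pairwise disjoint and $A_0 \subseteq \bigcup_{x \in A_1} B_{5 r(x)}(x)$. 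Countable subadditivity and $(\ast)$ then give
\[
  \mu(A_0) \;\leq\; \sum_{x \in A_1} \mu\bigl(B_{5 r(x)}(x)\bigr) \;\leq\; 5^s \xi \sum_{x \in A_1} r(x)^s.
\]

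The remaining step is to control $\sum_{x \in A_1} r(x)^s$ by $\{\diam(A) + 1\}^s\, \Theta_s(\nu)$. Writing $R := \diam(A) + 1$, the inclusion $B_{r(x)}(x) \subseteq B_R(x_0)$ (valid since $\distX(x, x_0) + r(x) < \diam(A) + 1$) together with disjointness yields $\sum_{x \in A_1} \nu\bigl(B_{r(x)}(x)\bigr) \leq \nu\bigl(B_R(x_0)\bigr)$. On the other hand, choosing $R' \geq r(x)$ so that $B_{R'}(x) \supseteq B_R(x_0)$ and applying the defining inequality of $\Theta_s(\nu)$ at the centre $x$ produces a pointwise lower bound
\[
  \nu\bigl(B_{r(x)}(x)\bigr) \;\geq\; \frac{r(x)^s}{R'^{\,s}\, \Theta_s(\nu)}\, \nu\bigl(B_R(x_0)\bigr).
\]
Summing against the previous display and cancelling the positive factor $\nu\bigl(B_R(x_0)\bigr)$ (strictly positive because $x_0 \in \supp(\nu) = \X$) gives the desired bound on $\sum r(x)^s$. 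Substituting back and sending $\epsilon \downarrow 0$ yields the claimed inequality.

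The main technical delicacy is the geometric comparison in the final step: since $\Theta_s$ only compares $\nu$-masses of balls with a \emph{common} centre, passing from $\nu\bigl(B_{R'}(x)\bigr)$ to $\nu\bigl(B_R(x_0)\bigr)$ requires a careful choice of $R'$, and it is here that the prefactor $\{\diam(A)+1\}^s$ (rather than a larger power of $R$) must be pinned down; the Vitali expansion by $5$ has already been absorbed cleanly into the preparatory rescaling in $(\ast)$.
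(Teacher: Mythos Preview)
Your overall architecture matches the paper's proof: pick a near-optimal $\nu$, use Vitali's covering lemma with the witness radii shrunk by a factor $5$, bound $\mu(A_0)$ by $5^s\xi\sum r(x)^s$, then control $\sum r(x)^s$ via the $\Theta_s(\nu)$ comparison and let $\epsilon\downarrow 0$. The rescaling trick in $(\ast)$ is exactly equivalent to the paper's application of Vitali to the family $\{B_{r(x)/5}(x)\}$.

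However, your final comparison step does not deliver the stated constant. You fix a base point $x_0\in A$, set $R=\diam(A)+1$, and then choose $R'$ with $B_{R'}(x)\supseteq B_R(x_0)$ so that $\Theta_s(\nu)$ can be applied at the centre $x$. But the smallest such $R'$ is $R+\distX(x,x_0)$, which in general is only bounded by $2\diam(A)+1$; this yields $\sum r(x)^s\le (2\diam(A)+1)^s\Theta_s(\nu)$ and hence the bound $5^s(2\diam(A)+1)^s\Theta_s(\X)\xi$, not the claimed $5^s(\diam(A)+1)^s\Theta_s(\X)\xi$. You flag this as a ``technical delicacy'' but do not resolve it, and with a fixed reference centre $x_0$ it cannot be resolved.

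The paper avoids this loss by never introducing a fixed centre. Instead it takes as common reference the set $A^{(1)}:=\bigcup_{x\in A}B_1(x)$, observes that $A^{(1)}\subseteq B_{\diam(A)+1}(x_j)$ for \emph{every} $x_j$ in the Vitali subfamily, and applies the $\Theta_s(\nu)$ inequality at the centre $x_j$ with radii $r(x_j)/5$ and $\diam(A)+1$. Because the disjoint small balls $B_{r(x_j)/5}(x_j)$ all sit inside $A^{(1)}$, the $\nu(A^{(1)})$ factors cancel and one obtains $\sum r(x_j)^s\le 5^s(\diam(A)+1)^s\Theta_s(\nu)$ directly. In your rescaled notation the same trick works with $A^{(1/5)}:=\bigcup_{x\in A}B_{1/5}(x)\subseteq B_{\diam(A)+1/5}(x')$ for each $x'\in A_1$; this gives $\sum r(x)^s\le(\diam(A)+1/5)^s\Theta_s(\nu)$, which is even slightly sharper. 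The key point is that the large comparison ball must be centred at the \emph{same} point as the small one, with a common reference set sandwiched between.
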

\begin{proof}  For $\xi>0$, we let $A_{\xi}:=\bigl\{x \in A: \omega_{\mu,s}(x)<\xi\bigr\}$. By definition, for each $x\in A_{\xi}$, there exists $r(x) \in (0,1)$ such that $\mu\bigl(B_{r(x)}(x)\bigr) < \xi \cdot r(x)^s$. By applying Vitali's covering lemma to the set of balls $\{B_{r(x)/5}(x):x \in A_{\xi}\}$ we see that there is a countable set $\mathcal{J}$ and $\{x_j\}_{j \in \J}\subseteq A_{\xi}$ such that $\{B_{r(x_j)/5}(x_j):j \in \J\}$ are disjoint and 
\begin{align*}
A_{\xi}\subseteq \bigcup_{x \in A_{\xi}}B_{r(x)/5}(x) \subseteq \bigcup_{j \in \J}B_{r(x_j)}(x_j).
\end{align*}
Since the result is vacuously true if $\Theta_s(\mathcal{X}) = \infty$, we may assume that $\Theta_s(\mathcal{X}) < \infty$.  Fix $\epsilon>0$ and choose a Borel measure $\nu$ on $\X$ with $\supp(\nu)=\X$ and $\Theta_s(\nu) \leq \Theta_s(\X)+\epsilon$.  Define $A^{(1)} := \bigcup_{x \in A}B_1(x)$. Then $A^{(1)} \subseteq B_{\diam(A)+1}(x_j)$ for every $j \in \J$, so
\begin{align*}
\nu(A^{(1)})\leq \nu\left( B_{\diam(A)+1}(x_j)\right) \leq \bigl\{\Theta_s(\X)+\epsilon\bigr\}\cdot \frac{5^s\cdot \bigl\{ \diam(A)+1\bigr\}^s}{r(x_j)^s} \cdot \nu\left(B_{r(x_j)/5}(x_j)\right).
\end{align*}
Putting the above together we have
\begin{align*}
\mu(A_{\xi}) &\leq \mu\biggl(  \bigcup_{j \in \J}B_{r(x_j)}(x_j)\biggr) \leq \sum_{j \in \J} \mu\left(B_{r(x_j)}(x_j)\right) \leq \xi \cdot \sum_{j \in \J}r(x_j)^s\\
& \leq \xi \cdot 5^s \cdot \bigl\{\diam(A)+1\bigr\}^s \cdot \frac{\Theta_s(\X)+\epsilon}{\nu(A^{(1)})}\cdot \sum_{j \in \J}   \nu\bigl(B_{r(x_j)/5}(x_j)\bigr)\\
&= \xi \cdot 5^s \cdot \bigl\{\diam(A)+1\bigr\}^s \cdot \frac{\Theta_s(\X)+\epsilon}{\nu(A^{(1)})}\cdot    \nu\biggl(\bigcup_{j \in \J}B_{r(x_j)/5}(x_j)\biggr)\\
& \leq 5^s \cdot \bigl\{\diam(A)+1\bigr\}^s \cdot (\Theta_s(\X)+\epsilon) \cdot \xi,
\end{align*}
where the final inequality uses $\bigcup_{j \in \J}B_{r(x_j)/5}(x_j)\subseteq A^{(1)}$. Letting $\epsilon \rightarrow 0$ completes the proof of the lemma.
\end{proof}
As mentioned above, we now provide two applications of Lemma~\ref{boundedSetTailAssumptionSepMetricSpace} that reveal feasible choices of $d_Q$ and $\gamma_Q$ in the first part~\eqref{Eq:targetMarginal} of Assumption~\ref{sourceDensityAssumption} under different tail decay assumptions.
\begin{lemma}\label{momentTailAssumptionSepMetricSpace} Let $\mu$ be a Borel probability measure on a separable metric space $(\X,\distX)$. Suppose further that for some $x_0 \in \X$, $\rho>0$ and $M_{\rho} \geq 1$, we have 
\begin{align*}
\mu\bigl( B_t(x_0)^c\bigr) \leq M_{\rho} \cdot t^{-\rho}
\end{align*} for all $t>0$.  Then, for every $s,\xi >0$, we have
\begin{align}
\label{Eq:Conclusion}
\mu\bigl(\bigl\{ x \in \X: \omega_{\mu,s}(x)<\xi\bigr\}\bigr) \leq  \bigl\{15^s \cdot \Theta_s(\X) +1 \bigr\} \cdot M_{\rho}^{\frac{s}{s+\rho}}\cdot \xi^{\frac{\rho}{s+\rho}}. 
\end{align}
In particular, if $\int_{\X} \distX(x,x_0)^\rho \, d\mu(x) \leq M_\rho$, then the conclusion~\eqref{Eq:Conclusion} holds.
\end{lemma}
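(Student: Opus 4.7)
The plan is to split the set $\{x \in \X : \omega_{\mu,s}(x) < \xi\}$ according to whether or not $x$ lies in a ball $B_t(x_0)$ of some radius $t > 0$, handle each piece separately, and then optimize over $t$.

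For the ball piece, I would apply Lemma~\ref{boundedSetTailAssumptionSepMetricSpace} with $A = B_t(x_0)$, noting that $\diam(A) \leq 2t$, to obtain
\[
\mu\bigl(\bigl\{x \in B_t(x_0) : \omega_{\mu,s}(x) < \xi\bigr\}\bigr) \leq 5^s (2t + 1)^s \Theta_s(\X) \cdot \xi.
\]
For the complementary piece, I would just use monotonicity together with the hypothesized tail bound: $\mu\bigl(B_t(x_0)^c\bigr) \leq M_\rho t^{-\rho}$. Adding the two contributions gives
\[
\mu\bigl(\bigl\{x \in \X : \omega_{\mu,s}(x) < \xi\bigr\}\bigr) \leq 5^s (2t + 1)^s \Theta_s(\X) \cdot \xi + M_\rho t^{-\rho}.
\]

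To optimize, I would set $t := (M_\rho / \xi)^{1/(s+\rho)}$. In the regime $\xi \leq M_\rho$ this forces $t \geq 1$, so $(2t + 1)^s \leq (3t)^s$, and then both terms on the right-hand side evaluate to $M_\rho^{s/(s+\rho)} \xi^{\rho/(s+\rho)}$, yielding the claimed bound with the constant $15^s \Theta_s(\X) + 1$. In the complementary regime $\xi > M_\rho$, the trivial bound $\mu(\cdot) \leq 1 \leq M_\rho^{s/(s+\rho)} \xi^{\rho/(s+\rho)} \leq \{15^s \Theta_s(\X) + 1\} \cdot M_\rho^{s/(s+\rho)} \xi^{\rho/(s+\rho)}$ suffices.

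For the ``in particular'' clause, the hypothesis $\int_\X \distX(x,x_0)^\rho \, d\mu(x) \leq M_\rho$ implies by Markov's inequality that $\mu\bigl(B_t(x_0)^c\bigr) = \mu\bigl(\{x: \distX(x,x_0) \geq t\}\bigr) \leq M_\rho t^{-\rho}$, so the tail hypothesis of the main statement is met and the result follows. No serious obstacle is anticipated; the only mildly delicate point is choosing $t$ so that $(2t+1)^s$ is controlled by a clean multiple of $t^s$ (which is exactly why the constant $15^s$ rather than, say, $10^s$ appears), and separately verifying that the ``tail regime'' $\xi > M_\rho$ is absorbed by the same bound.
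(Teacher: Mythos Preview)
Your proposal is correct and matches the paper's approach essentially line for line: split into $B_t(x_0)$ and its complement, apply Lemma~\ref{boundedSetTailAssumptionSepMetricSpace} on the ball (using $\diam(B_t(x_0))\le 2t$ and $2t+1\le 3t$ once $t\ge 1$), use the tail hypothesis on the complement, and optimize with $t=(M_\rho/\xi)^{1/(s+\rho)}$; the paper handles the trivial regime as $\xi\ge 1$ rather than $\xi>M_\rho$, but since $M_\rho\ge 1$ this is immaterial. The ``in particular'' clause via Markov's inequality is exactly what the paper does as well.
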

\begin{remark}
\label{Rem:Moment}
We believe that many if not most applications of this result will be concerned with the setting where $\X = \mathbb{R}^d$, where $\mu$ has a density $f$ with respect to Lebesgue measure, and where $\int_{\R^d} \|x\|^\rho f(x) \, dx \leq M_\rho$.  With $X \sim f$, the desired conclusion, which follows immediately from~Lemma~\ref{momentTailAssumptionSepMetricSpace}, will often be that
\[
\mathbb{P}\bigl\{f(X) < \xi/V_d\bigr\} \leq (15^d +1) \cdot M_{\rho}^{\frac{d}{d+\rho}}\cdot \xi^{\frac{\rho}{d+\rho}}.
\]  
\end{remark}
\begin{proof} 
Since the conclusion is clear when $\xi \geq 1$, we fix $\xi \in (0,1)$.  By Lemma~\ref{boundedSetTailAssumptionSepMetricSpace} and Markov's inequality, for any $R \geq 1$ we have
\begin{align*}
\mu\bigl(\bigl\{ x \in \X: \omega_{\mu,s}(x)<\xi \bigr\}\bigr)&\leq \mu\bigl(\bigl\{ x \in B_R(x_0): \omega_{\mu,s}(x)<\xi\bigr\}\bigr)+\mu\bigl(B_R(x_0)^c\bigr)\\
& \leq   \Theta_s(\X)\cdot \left({15 \cdot R}\right)^s  \cdot \xi+M_{\rho} \cdot R^{-\rho}.
\end{align*}
Taking $R = \left(M_{\rho} / \xi \right)^{\frac{1}{s+\rho}} \geq 1$ completes the proof of the first statement of the lemma.  The second follows from the first, together with Markov's inequality.
\end{proof}
\begin{lemma}\label{exponentialTailsLemma} Let $A,a,q > 0$ and suppose that $\mu$ is a Borel probability measure on a separable metric space $(\X,\distX)$ with $\mu\bigl(B_t(x_0)^c\bigr) \leq A e^{-at^q}$ for all $t\geq 0$ and some $x_0 \in \X$. Then for all $s,\xi >0$ we have
\begin{align*}
  \mu\bigl(\{x \in \mathcal{X}: \omega_{\mu,s}(x)<\xi\}\bigr) \leq \biggl[5^s \cdot \biggl\{ \frac{2}{a^{1/q}} \cdot \log_+^{1/q}\biggl(\frac{A}{\xi}\biggr)+1\biggr\}^s \cdot \Theta_s(\X) + 1\biggr] \cdot \xi.
\end{align*}
\end{lemma}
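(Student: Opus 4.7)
The plan is to mimic the strategy used for Lemma~\ref{momentTailAssumptionSepMetricSpace}: split the set $\{x \in \X: \omega_{\mu,s}(x) < \xi\}$ into its intersection with a ball $B_R(x_0)$ of a well-chosen radius $R$ and the complement of that ball, control the intersection via Lemma~\ref{boundedSetTailAssumptionSepMetricSpace}, and control the complement via the Weibull-type tail hypothesis on $\mu$. The result is vacuous when $\Theta_s(\X) = \infty$, so I would assume $\Theta_s(\X) < \infty$ throughout.

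The key choice of radius is $R := a^{-1/q} \log_+^{1/q}(A/\xi)$. I would then verify the tail bound $\mu\bigl(B_R(x_0)^c\bigr) \leq \xi$ by considering the two ranges of the $\log_+$ function separately. When $A/\xi \geq e$, one has $aR^q = \log(A/\xi)$, so $A e^{-aR^q} = \xi$. When $A/\xi < e$, one has $R = a^{-1/q}$, so $A e^{-aR^q} = Ae^{-1} < \xi$ because $\xi > A/e$ in this range. In both cases $\mu\bigl(B_R(x_0)^c\bigr) \leq Ae^{-aR^q} \leq \xi$, which supplies the additive ``$+1$'' term in the stated bound (times $\xi$).

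For the intersection with $B_R(x_0)$, apply Lemma~\ref{boundedSetTailAssumptionSepMetricSpace} with $A := B_R(x_0)$, noting that $\diam(B_R(x_0)) \leq 2R$, to obtain
\begin{align*}
\mu\bigl(\bigl\{x \in B_R(x_0): \omega_{\mu,s}(x) < \xi\bigr\}\bigr) \leq 5^s \cdot (2R+1)^s \cdot \Theta_s(\X) \cdot \xi = 5^s \cdot \left\{\frac{2}{a^{1/q}} \log_+^{1/q}\!\left(\frac{A}{\xi}\right) + 1\right\}^s \cdot \Theta_s(\X) \cdot \xi.
\end{align*}
Adding the two contributions and factoring out $\xi$ gives exactly the claimed inequality. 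No step here looks like a genuine obstacle --- the only mild care is in handling the $\log_+$ definition across the two regimes $A/\xi \geq e$ and $A/\xi < e$ so that a single formula for $R$ yields the tail bound $\leq \xi$ uniformly in $\xi > 0$.
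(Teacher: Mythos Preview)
Your proposal is correct and follows essentially the same approach as the paper: choose the radius $R = a^{-1/q}\log_+^{1/q}(A/\xi)$, bound the part inside $B_R(x_0)$ via Lemma~\ref{boundedSetTailAssumptionSepMetricSpace} using $\diam\bigl(B_R(x_0)\bigr) \le 2R$, and bound the complement by the Weibull tail hypothesis to obtain the additive $\xi$. Your explicit case split on $A/\xi \ge e$ versus $A/\xi < e$ to justify $Ae^{-aR^q} \le \xi$ is in fact a bit more careful than the paper's presentation.
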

\begin{proof} Let $t:= a^{-1/q}\log_+^{1/q}(A/\xi)$.  We apply Lemma S4 with $A=B_t(x_0)$ as follows:
\begin{align*}
\mu\bigl(\{x \in \mathcal{X}: \omega_{\mu,s}(x)<\xi\}\bigr) &\leq \mu\bigl(\{x \in B_t(x_0): \omega_{\mu,s}(x)<\xi\}\bigr)+\mu\bigl(B_t(x_0)^c\bigr) \\
& \leq 5^s \cdot (2t+1)^s\cdot \Theta_s(\X)\cdot \xi+A e^{-at^q}\\
& \leq 5^s \cdot \biggl\{ \frac{2}{a^{1/q}} \cdot \log_+^{1/q}\biggl(\frac{A}{\xi}\biggr)+1\biggr\}^s \cdot \Theta_s(\X) \cdot \xi+ \xi,
\end{align*}
as required.
\end{proof}
Our final results in a general metric space setting concern mixtures and products:
\begin{proposition}\label{lemma:mixtureTails} Let $\X$ be a metric space, and let $\mu_1,\ldots,\mu_J$ be Borel probability measures on $\X$.  Further, let $\bar{\mu} = \sum_{j =1}^Jp_j \mu_{j}$, where $p_j \in (0,1]$ with $\sum_{j=1}^Jp_j=1$.  Then for all $s,\xi>0$, we have
\begin{align*}
\bar{\mu}\Bigl(\bigl\{ x \in \X: \omega_{\bar{\mu},s}(x)\leq \xi \bigr\}\Bigr) \leq  \sum_{j=1}^J p_j \cdot {\mu}_j\biggl(\biggl\{ x \in \X: \omega_{{\mu}_j,s}(x)\leq \frac{\xi}{p_j} \biggr\}\biggr).
\end{align*}
\end{proposition}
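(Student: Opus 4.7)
The plan is to exploit a simple pointwise domination between the lower density of the mixture and the lower densities of its components. Since $p_j \in (0,1]$ and $\bar\mu = \sum_{j} p_j \mu_j$, for every $x \in \X$ and every $r > 0$ we have $\bar\mu\bigl(B_r(x)\bigr) \geq p_j \mu_j\bigl(B_r(x)\bigr)$ for each $j \in [J]$. Dividing through by $r^s$ and taking the infimum over $r \in (0,1)$ in the definition~\eqref{Eq:omega}, this yields the pointwise inequality
\[
\omega_{\bar\mu,s}(x) \geq p_j \cdot \omega_{\mu_j,s}(x) \qquad \text{for every } j \in [J] \text{ and } x \in \X.
\]

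The second step is to pass to sublevel sets by contraposition: if $\omega_{\bar\mu,s}(x) \leq \xi$, then for each $j$ the above inequality forces $\omega_{\mu_j,s}(x) \leq \xi/p_j$. Hence, writing $E := \{x \in \X : \omega_{\bar\mu,s}(x) \leq \xi\}$ and $E_j := \{x \in \X : \omega_{\mu_j,s}(x) \leq \xi/p_j\}$, we obtain the set inclusion $E \subseteq E_j$ for every $j \in [J]$.

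Finally, decomposing $\bar\mu$ according to its mixture representation and using this inclusion componentwise gives
\[
\bar\mu(E) = \sum_{j=1}^J p_j \mu_j(E) \leq \sum_{j=1}^J p_j \mu_j(E_j),
\]
which is exactly the asserted bound. I do not anticipate any genuine obstacle here: the only thing to be careful about is that the inequality $\omega_{\bar\mu,s}(x) \geq p_j \omega_{\mu_j,s}(x)$ survives the simultaneous infimum over $r$, which it does because each summand $p_j \mu_j\bigl(B_r(x)\bigr)/r^s$ is a lower bound for $\bar\mu\bigl(B_r(x)\bigr)/r^s$ at the \emph{same} value of $r$.
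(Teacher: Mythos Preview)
Your proof is correct and follows essentially the same approach as the paper: both establish the pointwise inequality $\omega_{\bar\mu,s}(x) \geq p_j\,\omega_{\mu_j,s}(x)$ for each $j$ (the paper writes this as $\omega_{\bar\mu,s}(x)\geq \max_{j}\{p_j\omega_{\mu_j,s}(x)\}$), deduce the inclusion of sublevel sets, and then decompose $\bar\mu$ as $\sum_j p_j\mu_j$ to conclude.
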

\begin{proof} First note that for all $x \in \X$, we have
\begin{align*}
\omega_{\bar{\mu},s}(x)&=\inf_{r \in (0,1)} \sum_{j=1}^Jp_j\cdot \frac{\mu_j\bigl(B_r(x)\bigr)}{r^{s}} \\ & \geq  \sum_{j=1}^Jp_j\cdot\inf_{r \in (0,1)} \frac{\mu_j\left(B_r(x)\right)}{r^{s}} \geq \max_{j\in [J]} \left\lbrace p_j \cdot \omega_{{\mu}_j,s}(x)\right\rbrace. 
\end{align*}
Hence, we have
\begin{align*}
\bar{\mu}\Bigl(\bigl\{ x \in \X: \omega_{\bar{\mu},s}(x)\leq \xi \bigr\}\Bigr) &= \sum_{j=1}^Jp_j \cdot \mu_j\Bigl(\bigl\{ x \in \X: \omega_{\bar{\mu},s}(x)\leq \xi \bigr\}\Bigr)\\
 &\leq \sum_{j=1}^Jp_j \cdot \mu_j\biggl(\biggl\{ x \in \X: \omega_{{\mu}_j,s}(x)\leq \frac{\xi}{p_j} \biggr\}\biggr),
\end{align*}
as required.
\end{proof}
\begin{proposition}
\label{Prop:Products}
Let $(\X_1,\dist_1),\ldots,(\X_J,\dist_J)$ be metric spaces, and for $j \in [J]$, let $\mu_j$ and $\nu_j$ be Borel probability measures on $\X_j$.  Let $\X :=\times_{j=1}^J\X_j$, and suppose that $\dist$ is a metric on $\X$ with
  \[
   \dist(x,\tilde{x})\leq \mathrm{C_{met}} \cdot \max_{j=1 \in [J]}\dist_j(x_j,\tilde{x}_j)  
\]
for all $x=(x_j)_{j=1}^J$, $\tilde{x}=(\tilde{x}_j)_{j=1}^J \in \X$ and some $\mathrm{C_{met}} \geq 1$.  Let $\mu := \times_{j=1}^J \mu_j$ and $\nu := \times_{j=1}^J \nu_j$.  For each $j \in [J]$, suppose that $d_j \geq 0$, $\gamma_j>0$ and $\mathrm{C}_j \geq 1$ are such that for every $\xi>0$ we have
\begin{align*}
\nu_j\bigl(\bigl\{ x_j \in \X_j:  \omega_{\mu_j,d_j}^{\dist_j}(x_j)<\xi              \bigr\}\bigr) \leq \mathrm{C}_j \cdot \xi^{\gamma_j}.
\end{align*}
Then letting $\gamma_{\min} :=\min_{j \in [J]} \gamma_j$, $d := \sum_{j=1}^Jd_j$ and $J_0=|\{j \in [J]:\gamma_j= \gamma_{\min}\}|$, there exists $C\geq 1$, depending only on $\gamma_1,\ldots,\gamma_J$, $\mathrm{C}_1,\ldots,\mathrm{C}_J$ and $\mathrm{C_{met}}$ such that for all $\xi>0$,
\begin{align*}
\nu\bigl(\bigl\{ x \in \X:  \omega_{\mu,d}^{\dist}(x)<\xi              \bigr\}\bigr) \leq C  \cdot \log_+^{J_0-1}(1/\xi) \cdot \xi^{\gamma_{\min}}.
\end{align*}
\end{proposition}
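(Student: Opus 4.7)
The plan is to reduce the proposition to a tail estimate for a product of independent random variables, via a pointwise comparison of lower densities. By the metric hypothesis, $\prod_{j=1}^J B_{r/\mathrm{C_{met}}}^{\dist_j}(x_j) \subseteq B_r^{\dist}(x)$, and since $\mu$ is the product of the $\mu_j$, we obtain $\mu(B_r^{\dist}(x)) \geq \prod_j \mu_j(B_{r/\mathrm{C_{met}}}^{\dist_j}(x_j))$. Dividing by $r^d = \mathrm{C_{met}}^d \prod_j (r/\mathrm{C_{met}})^{d_j}$, taking the infimum over $r \in (0,1)$, and using $\mathrm{C_{met}} \geq 1$ (so that $r/\mathrm{C_{met}} \in (0,1)$), together with the elementary inequality $\inf_s \prod_j f_j(s) \geq \prod_j \inf_s f_j(s)$ for non-negative $f_j$, I would conclude the key pointwise lower bound
\[
\omega_{\mu,d}^{\dist}(x) \geq \mathrm{C_{met}}^{-d} \prod_{j=1}^J \omega_{\mu_j,d_j}^{\dist_j}(x_j).
\]
Setting $W_j := \omega_{\mu_j,d_j}^{\dist_j}$, it then remains to bound $\nu(\{x: \prod_j W_j(x_j) < \mathrm{C_{met}}^d \xi\})$.

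Since $\nu$ is a product measure, the random variables $W_j(X_j)$, with $X \sim \nu$, are independent, and each satisfies $\nu_j(\{W_j < t\}) \leq \mathrm{C}_j t^{\gamma_j}$ by hypothesis. The problem thus reduces to the following purely probabilistic claim: if $Y_1,\ldots,Y_J$ are independent non-negative random variables with $\Pr(Y_j < t) \leq \mathrm{C}_j t^{\gamma_j}$ for all $t > 0$, then $\Pr(\prod_j Y_j < \xi) \leq C \log_+^{J_0-1}(1/\xi) \xi^{\gamma_{\min}}$ for a constant $C$ depending only on $\{\mathrm{C}_j, \gamma_j\}_{j \in [J]}$. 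I would prove this by induction on $J$, with a trivial base case. For the inductive step, conditioning on $Y_J$ gives
\[
\Pr\Bigl(\prod_{j=1}^J Y_j < \xi\Bigr) = \int_0^\infty \Pr\Bigl(\prod_{j < J} Y_j < \xi/w\Bigr) \, dF_J(w),
\]
where $F_J$ denotes the distribution function of $Y_J$. Applying the inductive bound to the inner probability and substituting $u = w/\xi$ reduces the task to controlling an integral of the form $\int_1^\infty \log^{J_0'-1}(u) \, u^{-\gamma_{\min}'} \, dF_J(\xi u)$, where $\gamma_{\min}'$ denotes the minimum of the first $J-1$ rates and $J_0'$ its multiplicity.

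The main obstacle lies in correctly tracking the power of $\log_+(1/\xi)$, which requires a three-way case analysis according to whether $\gamma_J$ is greater than, equal to, or less than $\gamma_{\min}'$ (corresponding to $J_0 = J_0'$, $J_0 = J_0'+1$, and $J_0 = 1$ respectively). In each case the integral above is controlled by splitting the range at $u_0 := \mathrm{C}_J^{-1/\gamma_J}\xi^{-1}$: on $[1, u_0]$ one uses the polynomial bound $F_J(\xi u) \leq \mathrm{C}_J (\xi u)^{\gamma_J}$, while on $[u_0, \infty)$ one uses $F_J \leq 1$ together with an integration by parts against $\log^{J_0'-1}(u)\, u^{-\gamma_{\min}'}$, whose integrated tail produces precisely one additional log factor when $\gamma_J = \gamma_{\min}'$ and none otherwise. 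An alternative that bypasses this case analysis is to observe that $-\log Y_j$ is stochastically dominated by $(\log \mathrm{C}_j)/\gamma_j + E_j$ with $E_j \sim \mathrm{Exp}(\gamma_j)$ independent, and then to invoke the hypoexponential tail bound $\Pr(\sum_j E_j > T) \leq C'' T^{J_0-1} e^{-\gamma_{\min} T}$, derivable for example from a partial-fraction analysis of the Laplace transform $\prod_j \gamma_j/(\gamma_j-\lambda)$ near its dominant pole at $\lambda = \gamma_{\min}$. Finally, absorbing the factor $\mathrm{C_{met}}^{d\gamma_{\min}}$ into the constant yields the desired conclusion.
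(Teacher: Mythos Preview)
Your proposal is correct and, in its primary route, follows the paper's argument closely: the pointwise bound $\omega_{\mu,d}^{\dist}(x)\geq \mathrm{C_{met}}^{-d}\prod_j\omega_{\mu_j,d_j}^{\dist_j}(x_j)$ is exactly the paper's first step, and your induction-plus-conditioning on $Y_J$ is the continuous analogue of what the paper does, namely slicing the range of one factor into shells $[e^{\ell}\xi,e^{\ell+1}\xi)$ and summing a geometric-type series with the same two-case split on whether $\gamma_J$ equals or exceeds $\gamma_{\min}'$ (the paper orders the $\gamma_j$ so that the third case $\gamma_J<\gamma_{\min}'$ never arises). The only substantive difference is packaging: you integrate against $dF_J$ and appeal to integration by parts, whereas the paper discretises and sums; both yield the same log-power bookkeeping. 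Your hypoexponential alternative, by contrast, is a route the paper does not take: stochastically dominating $-\log Y_j$ by a shifted $\mathrm{Exp}(\gamma_j)$ and invoking the $T^{J_0-1}e^{-\gamma_{\min}T}$ tail of a sum of independent exponentials collapses the entire case analysis into one known asymptotic, which is cleaner conceptually but imports an external result rather than being self-contained.
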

\begin{proof} Without loss of generality, we may assume that $\gamma_1 \leq \gamma_2 \leq \ldots \leq \gamma_J$.  By induction, it suffices to consider the case $J=2$ and show that if, for some $\gamma_1 \leq \gamma_2$ and $K_0 \geq 0$, we have
\begin{align*}
\nu_1&\bigl(\bigl\{ x_1 \in \X_1:  \omega_{\mu_1,d_1}^{\dist_1}(x_1)<\xi              \bigr\}\bigr) \leq \mathrm{C}_1 \cdot \log_+^{K_0}(1/\xi) \cdot \xi^{\gamma_1}\\
\nu_2&\bigl(\bigl\{ x_2 \in \X_2:  \omega_{\mu_2,d_2}^{\dist_2}(x_2)<\xi              \bigr\}\bigr) \leq \mathrm{C}_2 \cdot \xi^{\gamma_2},
\end{align*}
for all $\xi>0$, then there exists $\tilde{C} \geq 1$ depending only upon  $\gamma_1,\gamma_2$, $\mathrm{C}_1,\mathrm{C}_2$ and $\mathrm{C_{met}}$ such that
\begin{align}\label{claimInProofOfProductsTailProp}
\nu\bigl(\bigl\{ x \in {\X}:  \omega_{{\mu},{d}}^{{\dist}}(x)<\xi              \bigr\}\bigr) \leq \left\{ \begin{array}{ll} \tilde{C}  \cdot \log_+^{K_0+1}(1/\xi) \cdot \xi^{\gamma_1} & \quad \mbox{if $\gamma_1=\gamma_2$} \\
\tilde{C}  \cdot \log_+^{K_0}(1/\xi) \cdot \xi^{\gamma_1} & \quad \mbox{if $\gamma_1<\gamma_2$}, \end{array} \right.
\end{align}
for all $\xi>0$. To prove the claim (\ref{claimInProofOfProductsTailProp}) we first observe that given $x=(x_1,x_2) \in \X$ and $r > 0$, we have
\begin{align*}
B_{r/\mathrm{C_{met}}}(x_1)\times B_{r/\mathrm{C_{met}} }(x_2) \subseteq B_r(x).
\end{align*}
Hence, for each $x=(x_1,x_2)\in \X$ we have
\begin{align*}
\omega^{{\dist}}_{\mu,d}(x) &= \inf_{r\in (0,1)}\frac{\mu\bigl(B_r(x)\bigr)}{r^{d}} = \inf_{r\in (0,1)}\frac{(\mu_1\times \mu_2)\bigl(B_r(x)\bigr)}{r^{d_1+d_2}}\\
& \geq \frac{1}{ \mathrm{C}_{\mathrm{met}}^{d}}\cdot \inf_{r\in (0,1)} \biggl\{ \frac{\mu_1\bigl(B_{r/\mathrm{C_{met}}}(x_1)\bigr)}{(r/\mathrm{C}_{\mathrm{met}})^{d_1}} \cdot \frac{\mu_2\bigl(B_{r/\mathrm{C_{met}}}(x_2)\bigr)}{(r/\mathrm{C}_{\mathrm{met}})^{d_2}} \biggr\} \geq \frac{\omega^{{\dist}_1}_{{\mu}_1,{d}_1}(x_1) \cdot \omega^{{\dist}_2}_{{\mu}_2,{d}_2}(x_2)}{ \mathrm{C}_{\mathrm{met}}^{d}} .
\end{align*}
Using the fact that $\omega^{{\dist}_1}_{{\mu}_1,{d}_1}(x_1)\leq 1$, it follows that for $\xi>0$, 
\begin{align*}
\bigl\{& x \in \X: \omega^{{\dist}}_{\mu,d}(x)<\xi \bigr\} \\ &\subseteq  \bigl\{ x \in \X: \omega^{{\dist}_1}_{{\mu}_1,{d}_1}(x_1)< \xi \bigr\} \cup  \bigcup_{\ell=0}^{\infty}  \bigl\{ x \in \X: e^{\ell}\cdot \xi \leq \omega^{{\dist}_1}_{{\mu}_1,{d}_1}(x_1)< e^{\ell+1}\cdot \xi\text{ and } \omega^{{\dist}}_{\mu,d}(x)<\xi \bigr\}\\ &\subseteq  \{ x \in \X: \omega^{{\dist}_1}_{{\mu}_1,{d}_1}(x_1)< \xi \} \cup \\
&\hspace{0.5cm}\bigcup_{\ell=0}^{\lfloor \log_+(1/\xi)\rfloor+1}  \{ x \in \X: e^{\ell}\cdot \xi \leq \omega^{{\dist}_1}_{{\mu}_1,{d}_1}(x_1)< e^{\ell+1}\cdot \xi\text{ and } \omega^{{\dist}_2}_{{\mu}_2,{d}_2}(x_2)< \mathrm{C}_{\mathrm{met}}^{d} \cdot e^{-\ell}  \}\\
& \subseteq \bigl(\bigl\{x_1 \in \X_1: \omega^{{\dist}_1}_{{\mu}_1,{d}_1}(x_1)< \xi\bigr\} \times \X_2\bigr) \hspace{2mm} \cup \\
&\hspace{0.5cm}\bigcup_{\ell=0}^{\lfloor \log_+(1/\xi)\rfloor+1} \! \! \! \bigl(\bigl\{ x_1 \in \X_1:  \omega^{{\dist}_1}_{{\mu}_1,{d}_1}(x_1)< e^{\ell+1}\cdot \xi\bigr\}\times \bigl\{ x_2 \in \X_2:  \omega^{{\dist}_2}_{{\mu}_2,{d}_2}(x_2)< \mathrm{C}_{\mathrm{met}}^{d} \cdot e^{-\ell}  \bigr\}\bigr).
\end{align*}
We deduce that 
\begin{align*}
\nu&\bigl(\bigl\{ x \in \X: \omega^{{\dist}}_{\mu,d}(x)<\xi \bigr\}\bigr)\\
&\leq \nu_1\bigl( \bigl\{x_1 \in \X_1: \omega^{{\dist}_1}_{{\mu}_1,{d}_1}(x_1)< \xi\bigr\}\bigr)+ \\
&\hspace{2mm}\sum_{\ell=0}^{\lfloor \log_+(1/\xi)\rfloor+1} \! \! \! \!\nu_1\bigl( \bigl\{ x_1 \in \X_1:  \omega^{{\dist}_1}_{{\mu}_1,{d}_1}(x_1)< e^{\ell+1}\xi\bigr\}\bigr) \cdot \nu_2\bigl(\bigl\{ x_2 \in \X_2:  \omega^{{\dist}_2}_{{\mu}_2,{d}_2}(x_2)< \mathrm{C}_{\mathrm{met}}^{d}e^{-\ell}  \bigr\}\bigr) \\
& \leq  C_1 \log_+^{K_0}\left(\frac{1}{\xi}\right) \cdot  \xi^{\gamma_1}+ C_1C_2 \! \sum_{\ell=0}^{\lfloor \log_+(1/\xi)\rfloor+1}  \! \log_+^{K_0}\left(\frac{1}{e^{\ell+1} \cdot \xi}\right) \cdot (e^{\ell+1} \cdot \xi)^{\gamma_1} \cdot  \bigl( \mathrm{C}_{\mathrm{met}}^{d} \cdot e^{-\ell} \bigr)^{\gamma_2} \\
& \leq C_1 C_2 \cdot  \log_+^{K_0}\left(\frac{1}{\xi}\right) \cdot  \xi^{\gamma_1}  \biggl( 1+e^{\gamma_1} \mathrm{C}_{\mathrm{met}}^{d\gamma_2} \sum_{\ell=0}^{\lfloor \log_+(1/\xi)\rfloor+1} e^{\ell(\gamma_1-\gamma_2)} \biggr).
\end{align*}
By considering separately the cases $\gamma_1 = \gamma_2$ and $\gamma_1 < \gamma_2$, we conclude that~\eqref{claimInProofOfProductsTailProp} holds with
\[
  \tilde{C} = C_1C_2\max\biggl\{1 + 2e^{\gamma_1}\mathrm{C}_{\mathrm{met}}^{d\gamma_2} \, , \, 1 + \frac{e^{\gamma_2}\mathrm{C}_{\mathrm{met}}^{d\gamma_2}}{e^{\gamma_2-\gamma_1}-1}\biggr\},
\]
as required.
\end{proof}

The following lemma essentially shows that Proposition~\ref{Prop:Products} cannot be improved beyond logarithmic factors.
\begin{lemma} Let $\mu_1$ be a Borel probability measure on $\R^{d_1}$ and $\mu_2$ be a Borel probability measure on $\R^{d_2}$. Suppose that there exists an open set $U \subseteq \R^{d_2}$ with $\mu_2(U)>0$ such that the restriction of $\mu_2$ to $U$ is absolutely continuous with respect to $\mathcal{L}_{d_2}$. Let $d:=d_1+d_2$ and let $\mu=\mu_1\times \mu_2$ denote the product measure on $\R^{d}$. Then there exist $c_A, c_B>0$, depending only on $\mu_2$, such that for any $\xi>0$,
\begin{align*}
\mu\bigl( \bigl\{ x \in \R^d: \omega_{\mu,d}(x)<\xi\bigr\} \bigr) \geq c_A \cdot \mu_1\bigl( \bigl\{ x_1 \in \R^{d_1}: \omega_{{\mu}_1,d_1}(x_1)< c_B\cdot \xi \bigr\}\bigr).
\end{align*}
\end{lemma}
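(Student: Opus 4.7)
The plan is to construct a Borel set $A_2 \subseteq \R^{d_2}$ and constants $c_A, C > 0$, all depending only on $\mu_2$, such that $\mu_2(A_2) \geq c_A$ and $\mu_2\bigl(B_r(x_2)\bigr) \leq C r^{d_2}$ for every $x_2 \in A_2$ and every $r \in (0,1)$. Setting $c_B := 1/C$ and $A_1 := \{x_1 \in \R^{d_1} : \omega_{\mu_1,d_1}(x_1) < c_B \xi\}$, the claim will follow from the inclusion $A_1 \times A_2 \subseteq \{x \in \R^d : \omega_{\mu,d}(x) < \xi\}$, since then
\[
\mu\bigl(\{x \in \R^d: \omega_{\mu,d}(x) < \xi\}\bigr) \geq \mu(A_1 \times A_2) = \mu_1(A_1) \mu_2(A_2) \geq c_A \mu_1(A_1).
\]

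The inclusion is straightforward: given $(x_1,x_2) \in A_1 \times A_2$, the definition of $\omega_{\mu_1,d_1}$ supplies some $r \in (0,1)$ with $\mu_1\bigl(B_r(x_1)\bigr) < c_B \xi r^{d_1}$, and since the Euclidean ball in $\R^d$ satisfies $B_r\bigl((x_1,x_2)\bigr) \subseteq B_r(x_1) \times B_r(x_2)$, the product structure of $\mu$ combined with the uniform bound on $A_2$ yields
\[
\frac{\mu\bigl(B_r((x_1,x_2))\bigr)}{r^d} \leq \frac{\mu_1\bigl(B_r(x_1)\bigr)\,\mu_2\bigl(B_r(x_2)\bigr)}{r^{d_1+d_2}} < c_B \xi \cdot C = \xi,
\]
so $\omega_{\mu,d}\bigl((x_1,x_2)\bigr) < \xi$.

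The main work is the construction of $A_2$, which requires a doubling-type bound uniform in both $x_2 \in A_2$ and $r \in (0,1)$. Let $f_2$ denote the density of $\mu_2|_U$ with respect to $\mathcal{L}_{d_2}$ and let $\mathrm{M}(y) := \sup_{r>0}(V_{d_2}r^{d_2})^{-1}\int_{B_r(y)} f_2\,\mathbbm{1}_U \, d\mathcal{L}_{d_2}$ denote the Hardy--Littlewood maximal function of $f_2\,\mathbbm{1}_U \in L^1(\mathcal{L}_{d_2})$; by the weak-$(1,1)$ maximal inequality, $\mathrm{M} < \infty$ Lebesgue-a.e., and hence $\mu_2$-a.e.\ on $U$. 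Since $U$ is open, the sets $U_{r_0} := \{y \in U : \inf_{z \in U^c}\|y-z\| \geq r_0\}$ increase to $U$ as $r_0 \downarrow 0$, so one can pick $r_0 > 0$ with $\mu_2(U_{r_0}) \geq \mu_2(U)/2$ and then $K > 0$ so that $A_2 := U_{r_0} \cap \{\mathrm{M} \leq K\}$ satisfies $\mu_2(A_2) \geq \mu_2(U)/4 =: c_A$. For $x_2 \in A_2$ and $r \in (0, r_0]$, one has $B_r(x_2) \subseteq U$, giving $\mu_2\bigl(B_r(x_2)\bigr) = \int_{B_r(x_2)} f_2 \, d\mathcal{L}_{d_2} \leq V_{d_2} K r^{d_2}$, while for $r \in (r_0, 1)$ the trivial bound $\mu_2\bigl(B_r(x_2)\bigr) \leq 1 \leq r_0^{-d_2} r^{d_2}$ suffices. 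Taking $C := \max\{V_{d_2}K,\, r_0^{-d_2}\}$ completes the construction. The main obstacle throughout is precisely this uniform doubling bound; the Hardy--Littlewood inequality handles small scales via the absolute continuity on $U$, and the trivial upper bound $\mu_2 \leq 1$ handles the macroscopic regime.
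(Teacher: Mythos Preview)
Your proof is correct and follows the same overall strategy as the paper: construct a set $A_2 \subseteq U$ of positive $\mu_2$-measure on which $\mu_2\bigl(B_r(x_2)\bigr) \leq C r^{d_2}$ uniformly in $r \in (0,1)$, then use the product-ball inclusion $B_r\bigl((x_1,x_2)\bigr) \subseteq B_r(x_1) \times B_r(x_2)$ to deduce $A_1 \times A_2 \subseteq \{\omega_{\mu,d} < \xi\}$.

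The difference lies in how the uniform doubling bound is obtained. The paper invokes the Lebesgue differentiation theorem to find, for each $z$, a scale $2^{-m}$ below which the local mass is controlled, and then exhausts $U_0$ by the sets $A_m$; this yields a bound only for $r \leq 2^{-m_0}$, so in the final step the paper must shrink the radius to $r_1 = 2^{-m_0} \wedge r_0$ and absorb the resulting factor $2^{m_0 d_1}$ into $c_B$. Your use of the Hardy--Littlewood maximal function gives the supremum over \emph{all} radii at once, so the small-scale bound holds for every $r \in (0,r_0]$; combined with the trivial macroscopic bound $\mu_2 \leq 1 \leq r_0^{-d_2} r^{d_2}$ for $r \in (r_0,1)$, you obtain a single constant $C$ valid on the whole range and can use the original $r$ directly in the inclusion step. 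This is slightly cleaner, and as a byproduct your $c_B = 1/C$ depends only on $\mu_2$ (and $d_2$), whereas the paper's $c_B = m_0^{-1} 2^{-m_0 d_1}$ also involves $d_1$.
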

\begin{proof}
By the Radon--Nikodym theorem, there exists a Lebesgue integrable function $f :\R^{d_2} \rightarrow [0,\infty)$ such that $\mu_2(A)= \int_A f$ for all Borel sets $A \subseteq U$. By the Lebesgue differentiation theorem \citep[e.g.,][Theorem 7.7]{rudin2006real} there exists a Borel set $U_0\subseteq U$ with $\mu_2(U_0)=\mu_2(U)>0$ such that for all $z \in U_0$ we have
\begin{align}\label{lebDiffConsequence}
\lim_{r \rightarrow 0}\frac{\int_{B_r(z)}f}{r^{d_2}}= V_{d_2}\cdot \lim_{r \rightarrow 0}\frac{\int_{B_r(z)}f}{\mathcal{L}_{d_2}\bigl(B_r(z)\bigr)}= V_{d_2} \cdot f(z).
\end{align}
We define an increasing family of Borel subsets $(A_{m})_{m \in \N}$ of $U_0$ by
\begin{align*}
A_{m}:=\biggl\{ z \in U_0:\hspace{2mm}B_{2^{-m}}(z) \subseteq U \hspace{2mm}\text{ and }\hspace{2mm} \int_{B_r(z)}f <m\cdot r^{d_2}\hspace{2mm}\text{ for all } r \leq 2^{-m}\biggr\},
\end{align*}
and claim that $U_0 = \bigcup_{m=1}^\infty A_m$. Indeed, given $z \in U_0$ there must exist $m_1 \in \N$ such that $B_{2^{-m_1}}(z) \subseteq U$ since $U$ is open. In addition, by \eqref{lebDiffConsequence} there exists $m_2 \in \N$ such that $\int_{B_r(z)}f < 2V_d\cdot f(z) \cdot {r^{d_2}}$ for all $r \leq 2^{-m_2}$. Hence, we have $z \in A_m$ with $m= m_1 \vee m_2 \vee \lceil 2V_d\cdot f(z) \rceil$, which proves the claim.  We may therefore take $m_0 \in \N$ with $\mu_2(A_{m_0})\geq \mu_2(U_0)/2 >0$ and set $c_A := \mu_2(A_{m_0})$. 

To complete the proof it suffices to show that with $c_B:=m_0^{-1}\cdot 2^{-m_0d_1}$ we have
\begin{align*}
\bigl\{ x_1 \in \R^{d_1}: \omega_{{\mu}_1,d_1}(x_1)< c_B\cdot \xi \bigr\} \times A_{m_0} \subseteq  \bigl\{ x \in \R^d: \omega_{\mu,d}(x)<\xi\bigr\}.
\end{align*}
Indeed, given $x_1 \in \R^{d_1}$ with $\omega_{{\mu}_1,d_1}(x_1)< c_B\cdot \xi$, there exists $r_0 \equiv r_0(x_1) \in (0,1)$ such that $\mu_1\bigl(B_{r_0}(x_1)\bigr)<c_B \cdot \xi \cdot r_0^{d_1}$. In addition, taking $x_2 \in A_{m_0}$ and $r_1 \equiv r_1(x_1) := 2^{-m_0}\wedge r_0$, we have $B_{r_1}(x_2) \subseteq B_{2^{-m_0}}(x_2) \subseteq U$ so $\mu_2\bigl(B_{r_1}(x_2)\bigr) = \int_{B_{r_1}(x_2)}f <m_0\cdot r_1^{d_2}$. Hence, letting $x=(x_1,x_2)$, we have $B_{r_1}(x) \subseteq  B_{r_1}(x_1) \times B_{r_1}(x_2)$, and so 
\begin{align*}
\omega_{\mu,d}(x) \leq \frac{\mu_1\bigl(B_{r_1}(x_1)\bigr)}{r_1^{d_1}}\cdot \frac{\mu_2\bigl(B_{r_1}(x_2)\bigr)}{r_1^{d_2}}\leq 2^{m_0d_1} \cdot \frac{\mu_1\bigl(B_{r_0}(x_1)\bigr)}{r_0^{d_1}} \cdot m_0< \xi,
\end{align*}
as required.
\end{proof}

We now return to the Euclidean setting.  Recall that a probability measure $\mu$ on $\R^d$ is \emph{log-concave} if $\mu\bigl(\lambda \cdot A + (1-\lambda) \cdot B\bigr) \geq \mu(A)^{\lambda}\mu(B)^{1-\lambda}$ for all Borel sets $A, B \subseteq \R^d$ and $\lambda \in [0,1]$.  Recall further that when $\mu$ has $d$-dimensional support, it is log-concave if and only if it has a log-concave density \citep[e.g.,][Theorem~2.8]{dharmadhikari1988unimodality}.  Our first result shows that $\omega_{\mu,d}$ inherits log-concavity from $\mu$.
\begin{lemma}
\label{Lemma:InheritLC}
If $\mu$ is a log-concave measure on a $d_0$-dimensional subset of $\mathbb{R}^d$, then $\omega_{\mu,d_0}$ is log-concave.
\end{lemma}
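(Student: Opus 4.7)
The plan is to verify log-concavity of $\omega_{\mu,d_0}$ directly from the definition, by exploiting two elementary observations. First, Euclidean balls satisfy the Minkowski identity $\lambda B_r(x) + (1-\lambda)B_r(y) = B_r(\lambda x + (1-\lambda)y)$ for every $r>0$ and $\lambda \in [0,1]$, which follows from $\lambda B_r(x) = \lambda x + B_{\lambda r}(0)$, the analogous identity for $y$, and $B_{\lambda r}(0) + B_{(1-\lambda)r}(0) = B_r(0)$. Second, the normaliser factors as $r^{d_0} = (r^{d_0})^{\lambda}(r^{d_0})^{1-\lambda}$, so after dividing we obtain a bound in which both sides are normalised densities of balls of the same radius.

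Concretely, I fix $\lambda \in [0,1]$ and $x,y \in \R^d$, and for each $r \in (0,1)$ apply log-concavity of $\mu$ with $A = B_r(x)$ and $B = B_r(y)$ to get
\[
\mu\bigl(B_r(\lambda x + (1-\lambda)y)\bigr) = \mu\bigl(\lambda B_r(x) + (1-\lambda) B_r(y)\bigr) \geq \mu\bigl(B_r(x)\bigr)^{\lambda}\, \mu\bigl(B_r(y)\bigr)^{1-\lambda}.
\]
Dividing through by $r^{d_0}$ and then bounding each factor below by its own infimum over $r \in (0,1)$ gives
\[
\frac{\mu\bigl(B_r(\lambda x + (1-\lambda)y)\bigr)}{r^{d_0}} \geq \biggl(\frac{\mu\bigl(B_r(x)\bigr)}{r^{d_0}}\biggr)^{\lambda}\biggl(\frac{\mu\bigl(B_r(y)\bigr)}{r^{d_0}}\biggr)^{1-\lambda} \geq \omega_{\mu,d_0}(x)^{\lambda}\, \omega_{\mu,d_0}(y)^{1-\lambda}.
\]
Taking the infimum over $r \in (0,1)$ on the left-hand side then yields the desired inequality $\omega_{\mu,d_0}(\lambda x + (1-\lambda)y) \geq \omega_{\mu,d_0}(x)^{\lambda}\,\omega_{\mu,d_0}(y)^{1-\lambda}$.

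There is essentially no obstacle here: the key subtlety is merely that one must avoid the (incorrect) attempt to pass the infimum inside a product of two separately parametrised functions of $r$; the argument above sidesteps this by applying the pointwise (in $r$) inequality on both factors before taking a single infimum. The cases where $\omega_{\mu,d_0}(x)$ or $\omega_{\mu,d_0}(y)$ vanishes, or where $\lambda \in \{0,1\}$, are trivial under the standard conventions.
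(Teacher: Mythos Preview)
Your argument is correct and takes a genuinely different route from the paper's proof. The paper first reduces to the full-dimensional case $d_0=d$, invokes the characterisation of log-concave measures via log-concave densities, and then appeals to the Pr\'ekopa--Leindler theorem (convolutions of log-concave functions are log-concave) to show that $x\mapsto \mu\bigl(B_r(x)\bigr)/r^d$ is log-concave for each fixed $r$; the lower-dimensional case $d_0<d$ is handled by a separate reduction to an auxiliary measure on $\R^{d_0}$.

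By contrast, you work directly with the measure-theoretic definition of log-concavity and the ball identity $\lambda B_r(x)+(1-\lambda)B_r(y)=B_r\bigl(\lambda x+(1-\lambda)y\bigr)$. This yields the pointwise-in-$r$ inequality in one line and requires no density representation, no Pr\'ekopa--Leindler, and no case split on $d_0$; the same argument covers all $d_0\in\{0,1,\ldots,d\}$ uniformly. What the paper's route buys is an explicit convolution formula that may be useful elsewhere, but for the statement at hand your approach is strictly more elementary and more self-contained.
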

\begin{proof}
  If $d_0 = 0$, then $\omega_{\mu,0}(x) = \mathbbm{1}_{\{x = 0\}}$, which is log-concave, so we may assume that $d_0 \geq 1$.  Now consider the case $d_0 = d$.  Let $f$ denote the (log-concave) density on $\R^d$ of $\mu$ with respect to $\mathcal{L}_d$.  For each $r \in (0,1)$, the function 
\[
x \mapsto \frac{\mu\bigl(B_r(x)\bigr)}{r^d} = \frac{1}{r^d}\int_{\mathbb{R}^d} f(y)\mathbbm{1}_{\{y-x \in B_r(0)\}} \, dy = \frac{1}{r^{d_0}} (f \ast \mathbbm{1}_{B_r(0)})(x)
\]
is log-concave on $\R^d$, because the convolution of two log-concave functions is log-concave; see, e.g.,~\citet{prekopa1973contributions,prekopa1980logarithmic} or \citet[][Corollary~2.4]{samworth2018recent}.  Since the infimum of a collection of log-concave functions is also log-concave, we deduce that $\omega_{\mu,d}$ is log-concave on~$\R^d$.  If $1 \leq d_0 < d$, then since log-concavity is preserved under affine transformations, we may assume without loss of generality that $\supp(\mu) = \bigl\{(x_1,\ldots,x_{d_0},0,\ldots,0) \in \R^d:x_j \in \R \text{ for all } j \in [d_0]\bigr\}$.  Moreover, we may then define a log-concave measure $\tilde{\mu}$ on $\R^{d_0}$ by $\tilde{\mu}(B) := \mu\bigl(B \times \{0\} \times \ldots \times \{0\}\bigr)$ for Borel subsets $B$ of $\R^{d_0}$.  The function $\omega_{\tilde{\mu},d_0}$ is log-concave by the argument above, and
\[
  \omega_{\mu,d_0}(x_1,\ldots,x_d) = \left\{ \begin{array}{ll} \omega_{\tilde{\mu},d_0}(x_1,\ldots,x_{d_0}) & \quad \mbox{if $x_{d_0+1} = \cdots = x_d = 0$} \\
0 & \quad \mbox{otherwise,}
\end{array} \right.
\]
so $\omega_{\mu,d_0}$ is log-concave.
\end{proof}
For log-concave $\mu$, we can let $\nu := \int_{\mathbb{R}^d} x \, d\mu(x) \in \R^d$ and $\Sigma := \int_{\mathbb{R}^d} (x-\nu)(x-\nu)^\top \, d\mu(x) \in \R^{d \times d}$ denote its mean and covariance matrix respectively; these are both finite by \citet[][Lemma~1]{cule2010theoretical}, and $\Sigma$ is positive definite by \citet[][Lemma~2.1]{dumbgen2011approximation}.

Let us now consider~\eqref{Eq:targetMarginal} of Assumption~\ref{sourceDensityAssumption} in the context of log-concave measures supported on a $d_0$-dimensional subset of $\R^d$.  When $d_0 = 0$, such a log-concave probability measure $\mu$ is a Dirac point mass on some $x_0 \in \mathbb{R}^d$, in which case $\omega_{\mu,0}(x) := \mathbbm{1}_{\{x=x_0\}}$, and $\mu\bigl(\{x \in \mathbb{R}^d: \omega_{\mu,0}(x) \leq \xi\}\bigr) = \mathbbm{1}_{\{\xi \geq 1\}}$, so we may take $\gamma_Q$ in~\eqref{Eq:targetMarginal} to be arbitrarily large for $d_0 = 0$.  Henceforth we will therefore consider $d_0 \in [d]$.  We will treat multivariate log-concave measures in Proposition~\ref{Prop:LCGenerald} below, but it turns out that a slightly sharper bound is available when $d_0 = 1$:
\begin{proposition}
\label{Prop:LCd1}
Let $\mu$ be a log-concave probability measure on $\R^d$ with univariate support and density $f$, let $\sigma^2$ denote the non-zero eigenvalue of its covariance matrix.  Then for all $\xi>0$,
\begin{align*}
\mu\bigl( \bigl\{ x \in \R^d: \omega_{\mu,1}(x)<\xi\bigr\} \bigr) \leq \max(16\sigma,2) \cdot \xi.
\end{align*}
\end{proposition}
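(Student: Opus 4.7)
The proof will reduce to the one-dimensional case and then combine the regular-super-level-set framework of Lemma~\ref{regularSuperLevelSetsLemma} with a log-concave tangent/secant argument that controls $\tilde\mu(\{f<\xi\})$ directly in terms of $\sigma\xi$. After an isometry we may assume that $\supp(\mu)\subseteq\R\times\{0\}^{d-1}$ and identify $\mu$ with a one-dimensional log-concave measure $\tilde\mu$ on $\R$ with density $f$ and variance $\sigma^{2}$. Points $x\in\R^{d}$ off the supporting line satisfy $\omega_{\mu,1}(x)=0$ (since $B_{r}(x)\cap\supp(\mu)=\emptyset$ for all sufficiently small $r$) but are $\mu$-null, while for $x=(x_{1},0,\ldots,0)\in\supp(\mu)$ the identity $\mu(B_{r}(x))=\tilde\mu((x_{1}-r,x_{1}+r))$ yields $\omega_{\mu,1}(x)=\omega_{\tilde\mu,1}(x_{1})$. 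Hence $\mu(\{\omega_{\mu,1}<\xi\})=\tilde\mu(\{\omega_{\tilde\mu,1}<\xi\})$, and we work henceforth with $\tilde\mu$ on $\R$.

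Since $f$ is log-concave, every super-level set $\{f\geq\epsilon\}$ is an interval, and an interval is readily checked to be $(1/2,\diam)$-regular in the sense of Definition~\ref{Def:Regular}: for $x\in[a,b]$ and $r\leq b-a$, at least one of $[x-r,x]$ or $[x,x+r]$ lies in $[a,b]$, so $\mathcal{L}_{1}([a,b]\cap B_{r}(x))\geq r=\tfrac{1}{2}\mathcal{L}_{1}(B_{r}(x))$. Applying Lemma~\ref{regularSuperLevelSetsLemma} with $c_{0}=1/2$, $d=1$ and $V_{1}=2$ (so $(c_{0}V_{1})^{-1}=1$) then gives
\[
\tilde\mu\bigl(\{\omega_{\tilde\mu,1}<\xi\}\bigr)\leq 2\,\tilde\mu\bigl(\{f<\xi\}\bigr)\qquad\text{for all }\xi\in(0,\,1/2\wedge m],
\]
where $m:=\|f\|_{\infty}$. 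It remains to prove the density bound $\tilde\mu(\{f<\xi\})\leq 8\sigma\xi$ on this range.

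Write $[a_{\xi},b_{\xi}]:=\{f\geq\xi\}$, set $L_{\xi}:=b_{\xi}-a_{\xi}$, and let $x_{0}\in[a_{\xi},b_{\xi}]$ be a mode. Concavity of $\log f$, together with $\log f(a_{\xi})=\log\xi$ and $\log f(x_{0})=\log m$, forces the left derivative $\alpha_{-}$ of $\log f$ at $a_{\xi}$ to dominate the secant slope: $\alpha_{-}\geq\log(m/\xi)/(x_{0}-a_{\xi})$. The tangent majorant $f(x)\leq\xi\exp(-\alpha_{-}(a_{\xi}-x))$ for $x\leq a_{\xi}$ (and its symmetric counterpart on the right) integrates to
\[
\tilde\mu\bigl(\{f<\xi\}\bigr)\leq\frac{\xi\,L_{\xi}}{\log(m/\xi)}.
\]
Running the reverse comparison on $[a_{\xi},b_{\xi}]$ (concavity of $\log f$ puts its graph above the two chords from $(a_{\xi},\log\xi)$ and $(b_{\xi},\log\xi)$ to $(x_{0},\log m)$) and integrating yields $\int_{a_{\xi}}^{b_{\xi}}f\geq L_{\xi}(m-\xi)/\log(m/\xi)$; since this integral is at most $1$, we obtain $L_{\xi}\leq 2\log(m/\xi)/m$ for $\xi\leq m/2$, and hence $\tilde\mu(\{f<\xi\})\leq 2\xi/m$. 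The Lovász--Vempala inequality $m\sigma\geq 1/\sqrt{12}$ (with uniform measures as the extremal case) now gives $\tilde\mu(\{f<\xi\})\leq 2\sqrt{12}\,\sigma\xi=4\sqrt{3}\,\sigma\xi<8\sigma\xi$. The residual regimes $\xi\in(m/2,m]$ and $\xi>1/2\wedge m$ are cleaned up by the trivial bound $\tilde\mu(\cdot)\leq 1$ combined with $m\sigma\geq 1/\sqrt{12}$, and the asserted inequality $\mu(\{\omega_{\mu,1}<\xi\})\leq\max(16\sigma,2)\xi$ follows in all cases.

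The main obstacle is removing the logarithmic factor $\log(1/(\sigma\xi))$ that a naive argument — bounding the tails of $\mu$ by exponential decay (Borell's lemma) and the mass inside a slab of width $\asymp\sigma\log(1/(\sigma\xi))$ by the uniform density estimate $\mathcal{L}_{1}(\{f<\xi\}\cap\text{slab})\cdot\xi$ — would contribute. The cancellation is made possible by the fact that the tangent bound on the tails contributes a factor $1/\log(m/\xi)$, while the high-density region $[a_{\xi},b_{\xi}]$ has width $\asymp\log(m/\xi)/m$; the two logs cancel exactly, leaving the clean rate $\sigma\xi$ that is essential for the proposition's statement.
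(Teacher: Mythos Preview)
Your proof is correct and follows the same overall architecture as the paper: reduce to $d=1$, observe that super-level sets of a log-concave density are intervals and hence $(1/2,\diam)$-regular, invoke Lemma~\ref{regularSuperLevelSetsLemma} with $c_0=1/2$ and $V_1=2$ to pass from $\{\omega_{\tilde\mu,1}<\xi\}$ to $\{f<\xi\}$, and then bound $\tilde\mu(\{f<\xi\})$ in terms of $\sigma\xi$ via a mode--variance inequality.

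The one genuine difference is in how the bound $\tilde\mu(\{f<\xi\})\lesssim\xi/m$ is obtained. The paper simply cites \citet[Lemma~5.6(a)]{lovasz2007geometry} for $\mu(S_\xi^c)\leq\xi/M$ and combines it with $M\geq 1/(8\sigma)$ from \citet[Lemma~5.5(b)]{lovasz2007geometry}. You instead give a self-contained derivation: the tangent majorant on each tail yields $\tilde\mu(\{f<\xi\})\leq\xi L_\xi/\log(m/\xi)$, while the chord minorant on $[a_\xi,b_\xi]$ forces $L_\xi\leq 2\log(m/\xi)/m$, and the logarithms cancel to give $2\xi/m$. You then invoke the sharper constant $m\sigma\geq 1/\sqrt{12}$. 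What your route buys is transparency---the reader sees exactly why no logarithmic loss occurs---at the cost of a few more lines; what the paper's route buys is brevity. Both land on the same bound $\max(16\sigma,2)\xi$, and the endpoint cases ($\xi>m/2$, $\xi>1/2$) are dispatched identically via the trivial bound together with $m\sigma\gtrsim 1$.
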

\begin{remark*}
  This proposition tells us that we can take $d_Q = 1$ and $\gamma_Q=1$ in~\eqref{Eq:targetMarginal} of Assumption~\ref{sourceDensityAssumption} whenever $\mu_Q$ is log-concave on $\R^d$ with univariate support.
 \end{remark*}
 \begin{proof} 
First consider the case $d=1$.  We may assume without loss of generality that $f$ is upper semi-continuous.  By \cite[Lemma~5.5(b)]{lovasz2007geometry}, we have $M:= \sup_{x \in \R}f(x) \geq 1/(8\sigma)$.  For $\epsilon>0$, let $S_{\epsilon}:=\{x \in \R :f(x)\geq \epsilon\}$ as in Lemma~\ref{regularSuperLevelSetsLemma}, which is a compact interval.  Then, for all $\epsilon>0$, the set $S_{\epsilon}$ is $\bigl(1/2,\diam(S_{\epsilon})\bigr)$-regular. By applying Lemma~\ref{regularSuperLevelSetsLemma} followed by \citet[Lemma~5.6(a)]{lovasz2007geometry}, we see that for all $\xi \in (0,1/2]$ we have
\begin{align*}
\mu\left(\{x \in \R: \omega_{\mu,1}(x)<\xi\}\right) &\leq 2 \cdot \mu(S_{\xi}^c) \leq \frac{2}{M}\cdot \xi\leq 16\sigma\cdot \xi.
\end{align*}
On the other hand, for $\xi > 1/2$ we have $
\mu\bigl(\{x \in \R: \omega_{\mu,1}(x)<\xi\}\bigr) \leq 1 \leq 2 \cdot \xi$.  The result extends to general $d$ as in the proof of Lemma~\ref{Lemma:InheritLC}.
\end{proof}
\begin{proposition}
  \label{Prop:LCGenerald}
Let $d_0 \in [d]$, and let $\mu$ be a log-concave probability measure supported on a $d_0$-dimensional subset of $\R^d$ with covariance matrix $\Sigma$.  Then for all $\xi >0$, we have 
\begin{align*}
  \mu\bigl(\{x \in \R^d: \omega_{\mu,d_0}(x)<\xi\}\bigr) \leq \biggl[5^{d_0} \cdot \biggl\{ 2\tr^{1/2}(\Sigma) \cdot \log_+\biggl(\frac{e}{\xi}\biggr)+1\biggr\}^{d_0} + 1\biggr] \cdot \xi.
\end{align*}
\end{proposition}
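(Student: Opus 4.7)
The plan is to invoke Lemma~\ref{exponentialTailsLemma} with $s = d_0$, $q = 1$, centre $x_0 = \nu := \int x \, d\mu(x)$ (well-defined and finite by \citet[Lemma~1]{cule2010theoretical}), and Weibull-type parameters $a = 1/\tr^{1/2}(\Sigma)$, $A = e$; combined with $\Theta_{d_0}(\R^{d_0}) \leq 1$ from Example~\ref{Ex:Short}, this will deliver the bound in exactly the form claimed. Before applying the lemma, I would first reduce to the case $\supp(\mu) \subseteq \R^{d_0} \times \{0\}^{d-d_0}$, exactly as in the proof of Lemma~\ref{Lemma:InheritLC}: log-concavity, $\tr(\Sigma)$, and lower densities are all affine-invariant, so we may pass to the pushforward log-concave measure on $\R^{d_0}$, for which $\Theta_{d_0}$ is controlled by $1$.

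The substantive task is to verify the sub-exponential tail bound
\[
\mu\bigl(B_t(\nu)^c\bigr) \leq e \cdot \exp\bigl(-t/\tr^{1/2}(\Sigma)\bigr) \qquad \text{for all } t \geq 0.
\]
Jensen's inequality gives $\E\|X - \nu\| \leq (\E\|X - \nu\|^2)^{1/2} = \tr^{1/2}(\Sigma)$ for $X \sim \mu$, so it suffices to upgrade this first-moment control to an exponential tail. My preferred route is via the one-sided exponential inequality $\Pr(Y > t \cdot \E Y) \leq e^{1-t}$ for $t \geq 1$, which holds for any non-negative log-concave random variable $Y$ (a form of Borell's inequality; see, e.g., \citet[Lemma~5.22]{lovasz2007geometry}), applied to $Y := \|X - \nu\|$. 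The log-concavity hypothesis on the radial variable would follow from a Pr\'ekopa--Leindler argument applied to the joint log-concavity of $(x,r) \mapsto f(x) \mathbbm{1}_{\{\|x-\nu\| \leq r\}}$, which in particular shows that the CDF $r \mapsto \mu(B_r(\nu))$ is log-concave on $[0,\infty)$.

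The main obstacle will be pinning down the sharp constants $(A,a) = (e, 1/\tr^{1/2}(\Sigma))$: a direct application of Markov's inequality to $\|X-\nu\|^2$ combined with Borell's lemma for log-concave measures yields a bound of the form $\mu(B_t(\nu)^c) \leq C \exp\bigl(-c t/\tr^{1/2}(\Sigma)\bigr)$, but only with $c < 1$, which is strictly too weak. Either a more careful execution of the Pr\'ekopa-based argument (passing from log-concavity of the CDF to the one-sided exponential bound), or an appeal to a sharper tail bound for radial variables of log-concave measures, will be needed to recover the constant $1$ in the exponent exactly.
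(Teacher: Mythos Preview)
Your high-level strategy matches the paper exactly: reduce to the full-dimensional case as in Lemma~\ref{Lemma:InheritLC}, establish $\mu\bigl(B_t(\nu)^c\bigr) \leq e^{1-t/\tr^{1/2}(\Sigma)}$ for all $t \geq 0$, and feed this into Lemma~\ref{exponentialTailsLemma} with $A = e$, $a = 1/\tr^{1/2}(\Sigma)$, $q = 1$ and $\Theta_{d_0}(\R^{d_0}) \leq 1$. The paper obtains the tail bound in one line by citing \citet[Lemma~5.17]{lovasz2007geometry} for $t > \tr^{1/2}(\Sigma)$ and noting that the right-hand side exceeds~$1$ otherwise; your own fallback of ``an appeal to a sharper tail bound for radial variables of log-concave measures'' is precisely this, and should be the main route rather than a contingency.

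Your primary route, by contrast, has a real gap that is not merely about constants. The Pr\'ekopa--Leindler step you describe yields only that the \emph{cumulative distribution function} $r \mapsto \mu\bigl(B_r(\nu)\bigr)$ is log-concave; it does not show that $Y := \|X - \nu\|$ has a log-concave \emph{density}, which is what the one-dimensional inequality $\Pr(Y > t\,\E Y) \leq e^{1-t}$ requires. These properties are genuinely different, and $Y$ need not be log-concave: for $X \sim \mathrm{Exp}(1)$ on $\R$ (log-concave with mean $\nu = 1$), the density of $|X-1|$ on $(0,1)$ is $y \mapsto 2e^{-1}\cosh(y)$, which is log-\emph{convex}. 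Nor does a log-concave CDF by itself force exponential tails --- for instance $F(y) = y/(1+y)$ on $[0,\infty)$ is a log-concave CDF with only polynomial tail decay --- so ``passing from log-concavity of the CDF to the one-sided exponential bound'' cannot succeed without further input from the ambient log-concave structure, which is exactly what Lov\'asz--Vempala's Lemma~5.17 encapsulates.
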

\begin{remark*}
From Proposition~\ref{Prop:LCGenerald} we see that we may take $d_Q = d_0$ and any $\gamma_Q<1$ in~\eqref{Eq:targetMarginal} of Assumption~\ref{sourceDensityAssumption} whenever~$\mu_Q$ is a log-concave probability measure supported on a $d_0$-dimensional subset of $\R^d$, with $d_0 \geq 2$.  A nice aspect of this bound is the fact that it depends on a certain average of the singular values of $\Sigma$, as opposed to the largest of these singular values, or the condition number.
\end{remark*}
\begin{proof}
First consider the case $d_0 = d$.  Without loss of generality, we may assume that $\int_{\R^d} x \, d\mu(x) = 0$.  By \citet[Lemma~5.17]{lovasz2007geometry}, we have $\mu(\{x \in \R^d:\|x\|>t\}) \leq e^{1-t/\tr^{1/2}(\Sigma)}$ for $t > \tr^{1/2}(\Sigma)$.  But for $t \in \bigl[0,\tr^{1/2}(\Sigma)\bigr]$, we have $\mu(\{x \in \R^d:\|x\|>t\}) \leq 1 \leq e^{1-t/\tr^{1/2}(\Sigma)}$, so the bound holds for all $t \geq 0$.  The result therefore follows from Lemma~\ref{exponentialTailsLemma} with $A = e, a = \tr^{-1/2}(\Sigma)$ and $q=1$.

Arguing as in the proof of Lemma~\ref{Lemma:InheritLC}, we can then extend this result to $1 \leq d_0 < d$.
\end{proof}
Our final result in this subsection concerns mixtures of log-concave distributions.
\begin{proposition}
\label{Lemma:MixtureLC}
Fix $d,J \in \mathbb{N}$ and $d_{0,1},\ldots,d_{0,J} \in [d]$.  Suppose that $\mu_1,\ldots,\mu_J$ are log-concave probability measures on $\R^d$, where $\mu_j$ has $d_{0,j}$-dimensional support, and let $\sigma_j^2$ denote the trace of the covariance matrix of $\mu_j$.  Let $p_1,\ldots,p_J \in (0,1]$ satisfy $\sum_{j=1}^J p_j = 1$, and define the mixture distribution $\bar{\mu} := \sum_{j =1}^Jp_j \mu_j$.  Define $\psi:[d] \times (0,\infty)^2 \rightarrow (0,\infty)$ by
\[
\psi(s,\sigma,\xi) := \left\{ \begin{array}{ll} \max(16\sigma,2) & \quad \mbox{for $s=1$} \\ 5^s \cdot \bigl\{ 2\sigma \cdot \log_+(e/\xi)+1\bigr\}^s + 1 & \quad \mbox{for $s \geq 2$.} \end{array} \right.
\]
Then, writing $d_0 := \max_{j \in [d]} d_{0,j}$, we have for every $\xi > 0$ that
\begin{align*}
  \bar{\mu}\Bigl(\bigl\{ x \in \R^d: \omega_{\bar{\mu},d_0}(x)\leq \xi \bigr\}\Bigr) \leq \sum_{j=1}^J \psi(d_{0,j},\sigma_j,\xi/p_j) \cdot \xi.
\end{align*}
\end{proposition}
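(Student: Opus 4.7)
The plan is to combine the mixture decomposition from Proposition~\ref{lemma:mixtureTails} with the single-component tail bounds from Propositions~\ref{Prop:LCd1} and~\ref{Prop:LCGenerald}. Concretely, applying Proposition~\ref{lemma:mixtureTails} on $\X = \R^d$ with $s = d_0$ yields
\[
\bar{\mu}\bigl(\bigl\{x \in \R^d : \omega_{\bar{\mu}, d_0}(x) \leq \xi\bigr\}\bigr) \leq \sum_{j=1}^J p_j \mu_j\bigl(\bigl\{x \in \R^d : \omega_{\mu_j, d_0}(x) \leq \xi/p_j\bigr\}\bigr),
\]
which reduces the problem to controlling, for each $j$, the $\mu_j$-mass of the sublevel sets of $\omega_{\mu_j, d_0}$, where the exponent $d_0$ may exceed the dimension $d_{0,j}$ of the support of $\mu_j$.

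The key observation that bridges these two exponents is simple monotonicity in the dimensional index. For $r \in (0,1)$ and $d_{0,j} \leq d_0$, we have $r^{d_0} \leq r^{d_{0,j}}$, so $\mu_j\bigl(B_r(x)\bigr)/r^{d_0} \geq \mu_j\bigl(B_r(x)\bigr)/r^{d_{0,j}}$; taking infima over $r \in (0,1)$ gives $\omega_{\mu_j, d_0}(x) \geq \omega_{\mu_j, d_{0,j}}(x)$ for every $x \in \R^d$. Consequently,
\[
\bigl\{x : \omega_{\mu_j, d_0}(x) \leq \xi/p_j\bigr\} \subseteq \bigl\{x : \omega_{\mu_j, d_{0,j}}(x) \leq \xi/p_j\bigr\},
\]
so it suffices to bound $\mu_j\bigl(\bigl\{x : \omega_{\mu_j, d_{0,j}}(x) \leq \xi/p_j\bigr\}\bigr)$, which is now a single-measure question about the log-concave $\mu_j$ at the intrinsic dimension of its own support.

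At this point I would invoke Proposition~\ref{Prop:LCd1} when $d_{0,j} = 1$ and Proposition~\ref{Prop:LCGenerald} when $d_{0,j} \geq 2$. Both are stated for the strict event $\{\omega < \xi'\}$ rather than $\{\omega \leq \xi'\}$, but the right-hand bound $\psi(d_{0,j}, \sigma_j, \xi') \cdot \xi'$ is continuous in $\xi'$ (indeed constant in $\xi'$ when $d_{0,j}=1$), so applying the bound at $\xi' + \varepsilon$ and letting $\varepsilon \downarrow 0$ via continuity of measure from above yields
\[
\mu_j\bigl(\bigl\{x : \omega_{\mu_j, d_{0,j}}(x) \leq \xi/p_j\bigr\}\bigr) \leq \psi(d_{0,j}, \sigma_j, \xi/p_j) \cdot (\xi/p_j).
\]

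Substituting this into the mixture decomposition and cancelling the factor of $p_j$ inside each summand produces the target bound $\sum_{j=1}^J \psi(d_{0,j}, \sigma_j, \xi/p_j) \cdot \xi$. There is no serious obstacle: the argument is essentially a three-line chaining of already-established results, the only genuinely new ingredients being the elementary dimensional monotonicity $\omega_{\mu_j, d_0} \geq \omega_{\mu_j, d_{0,j}}$ and the continuity-of-measure bookkeeping that converts $<$ into $\leq$.
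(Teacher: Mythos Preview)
Your proposal is correct and follows essentially the same approach as the paper: apply Proposition~\ref{lemma:mixtureTails}, use the monotonicity $\omega_{\mu_j,d_0} \geq \omega_{\mu_j,d_{0,j}}$ (which the paper phrases as ``$d_0 \mapsto \omega_{\mu,d_0}(x)$ is increasing''), and then invoke Propositions~\ref{Prop:LCd1} and~\ref{Prop:LCGenerald}. Your explicit handling of the $<$ versus $\leq$ discrepancy is a nice touch that the paper's proof leaves implicit.
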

\begin{remark*}
From Proposition~\ref{Lemma:MixtureLC}, we see that if $d_{0,j} = 1$ for all $j$, then the mixture of log-concave distributions satisfies~\eqref{Eq:targetMarginal} in Assumption~\ref{sourceDensityAssumption} with $d_Q = 1$ and $\gamma_Q = 1$.  If $d_{0,j} > 1$ for some $j$, then the mixture satisfies~\eqref{Eq:targetMarginal} with $d_Q = \max_{j \in [d]} d_{0,j}$ and any $\gamma_Q < 1$.
\end{remark*}
\begin{proof} By Proposition~\ref{lemma:mixtureTails}, the fact that $d_0 \mapsto \omega_{\mu,d_0}(x)$ is increasing, and Propositions~\ref{Prop:LCd1} and~\ref{Prop:LCGenerald}, we have that for every $\xi > 0$,
\begin{align*}
  \bar{\mu}\Bigl(\bigl\{ x \in \R^d: \omega_{\bar{\mu},d_0}(x)\leq \xi \bigr\}\Bigr) &\leq  \sum_{j=1}^Jp_j \cdot \mu_j\biggl(\biggl\{ x \in \R^d: \omega_{{\mu}_j,d_0}(x)\leq \frac{\xi}{p_j} \biggr\}\biggr) \\
                                                                                     &\leq  \sum_{j=1}^Jp_j \cdot \mu_j\biggl(\biggl\{ x \in \R^d: \omega_{{\mu}_j,d_{0,j}}(x)\leq \frac{\xi}{p_j} \biggr\}\biggr) \\
  &\leq \sum_{j=1}^J \psi(d_{0,j},\sigma,\xi/p_j) \cdot \xi,
\end{align*}
as required.
\end{proof}

\subsection{Understanding the second part of Assumption~\ref{sourceDensityAssumption}}

In this subsection, we begin by providing two examples to aid understanding of the second part of Assumption~\ref{sourceDensityAssumption}.  They illustrate an attraction of the condition, in that both examples involve Gaussian measures, so apply to unbounded feature spaces.  We then show that~\eqref{Eq:sourceMarginal} generalises the transfer-exponent assumption of \citet{kpotufe2018marginal}, and complete this subsection by providing an example of compactly-supported measures where Assumption~\ref{sourceDensityAssumption} allows us to obtain faster rates than would be the case if we were to replace it with the transfer-exponent assumption.

\begin{example}
\label{Ex:Gaussianscale}
Let $\mu_Q = N(0,1)$ and $\mu_P = N(0,\sigma^2)$ for some $\sigma > 0$.  Then, the density $\phi_\sigma$ of $\mu_P$ satisfies $\phi_\sigma''(x) \geq 0$ for $|x| \geq \sigma$, so for $|x| \geq \sigma+1$, we have
\[
  \omega_{\mu_P,1}(x) = \inf_{r \in (0,1)} \frac{1}{r} \int_{x-r}^{x+r} \phi_\sigma(y) \, dy = 2\phi_\sigma(x).
\]
Write $\xi_0 := 2\phi_\sigma(\sigma+1)$.  Since $\omega_{\mu_P,1}$ is also log-concave by Lemma~\ref{Lemma:InheritLC}, it follows that for $\xi \in (0,\xi_0)$,
\begin{align*}
  \mu_Q\bigl(\bigl\{x \in \mathbb{R}:\omega_{\mu_P,1}(x) < \xi\bigr\}\bigr) &= \mu_Q\bigl(\bigl\{x \in \mathbb{R}: 2\phi_\sigma(x) < \xi \bigr\}\bigr) = 2\bigl\{1 - \Phi(a)\bigr\},
\end{align*}
where $a := \sqrt{2\sigma^2 \log\bigl(\frac{\sqrt{2}}{\sqrt{\pi\sigma^2} \xi}\bigr)}$.  Thus for $\xi \in (0,\xi_0)$, we have
\[
\mu_Q\bigl(\bigl\{x \in \mathbb{R}:\omega_{\mu_P,1}(x) < \xi\bigr\}\bigr) \leq e^{-a^2/2} = \biggl(\frac{\sqrt{\pi\sigma^2}}{\sqrt{2}} \cdot \xi\biggr)^{\sigma^2}. 
\]
On the other hand, for $\xi \geq \xi_0$, we have
\[
  \mu_Q\bigl(\bigl\{x:\omega_{\mu_P,1}(x) < \xi\bigr\}\bigr) \leq 1 \leq \frac{\xi^{\sigma^2}}{\xi_0^{\sigma^2}}.
\]
We deduce that we may take any $\gamma_P = \sigma^2$ in the second part of Assumption~\ref{sourceDensityAssumption}.
\end{example}
\begin{example}
\label{Ex:GaussianLocation}
Now suppose that $\mu_Q = N(0,1)$ and $\mu_P = N(a,1)$ for some $a > 0$.  Then, since the standard normal density $\phi$ satisfies $\phi''(x) \geq 0$ for $|x| \geq 1$, we have for $|x-a| \geq 2$ that 
\[
 \omega_{\mu_P,1}(x) = \inf_{r \in (0,1)} \frac{1}{r} \int_{x-r}^{x+r} \phi(y-a) \, dy = 2\phi(x-a). 
\] 
Write $\xi_0 := 2\phi(2)$.  It follows that for $\xi \in (0,\xi_0)$,
\begin{align*}
  \mu_Q\bigl(\bigl\{x:\omega_{\mu_P,1}(x) < \xi\bigr\}\bigr) &\leq \mu_Q\bigl(\bigl\{x:2\phi(x-a) < \xi\bigr\}\bigr) = 1 - \Phi(a+b) + 1 - \Phi(b-a),
\end{align*}
where $b := \sqrt{2\log\bigl(\frac{\sqrt{2}}{\xi\sqrt{\pi}}\bigr)}$.  Thus for $\xi \leq \min\bigl\{\xi_0,2\phi(a)\bigr\} =: \xi_1$, we have
\begin{align*}
  \mu_Q\bigl(\bigl\{x:\omega_{\mu_P,1}(x) < \xi\bigr\}\bigr) &\leq \frac{1}{2}e^{-(b+a)^2/2} + \frac{1}{2}e^{-(b-a)^2/2} \leq e^{-b^2/2 + ab} \\
                                            &= \frac{\xi\sqrt{\pi}}{\sqrt{2}} \cdot \exp\biggl\{a\sqrt{2\log\biggl(\frac{\sqrt{2}}{\xi\sqrt{\pi}}\biggr)}\biggr\}. 
\end{align*}
On the other hand, if $\xi > \xi_1$, and $\gamma_P < 1$, then 
\[
\mu_Q\bigl(\bigl\{x:\omega_{\mu_P,1}(x) < \xi\bigr\}\bigr) \leq 1 \leq \frac{\xi^{\gamma_P}}{\xi_1^{\gamma_P}}.
\]  
We deduce that we may take any $\gamma_P < 1$ in the second part of Assumption~\ref{sourceDensityAssumption}.
\end{example}
We now relate Assumption~\ref{sourceDensityAssumption} to the existing literature.  We recall from \citet{kpotufe2018marginal} that a pair of distributions $(P,Q)$, each on $\R^d \times \{0,1\}$, is said to have transfer-exponent\footnote{In fact, our definition differs slightly from that of \citet{kpotufe2018marginal}, whose covariates take values in a bounded set $\mathcal{X}$, and who therefore take $r_0 = \mathrm{diam}(\mathcal{X})$.} $\kappa \in [0,\infty]$ if there exist $c_0 \in (0,1]$, $r_0 > 0$ and a Borel subset $A$ of $\R^d$ with $\mu_Q(A) = 1$, such that $\mu_P\bigl(B_r(x)\bigr) \geq c_0 \cdot \mu_Q\bigl(B_r(x)\bigr) \cdot (r/r_0)^\kappa$ for all $x \in A$ and $r \in (0,r_0]$.  Lemma~\ref{Lemma:TransferTransferExponents} below shows the second part of Assumption~\ref{sourceDensityAssumption} generalises the notion of a transfer-exponent, in that if a pair of distributions has transfer-exponent $\kappa$, and if the first part of Assumption~\ref{sourceDensityAssumption} holds with parameters $\gamma_Q$ and $d_Q$, then the second part of Assumption~\ref{sourceDensityAssumption} holds with $\gamma_P = \gamma_Q$ and any $d_P \geq d_Q + \kappa$.
\begin{lemma}
\label{Lemma:TransferTransferExponents}
Let $(P,Q)$ be a pair of distributions, each on $\R^d \times \{0,1\}$, that have transfer-exponent $\kappa$.  If~\eqref{Eq:targetMarginal} holds and $d_P \geq d_Q + \kappa$, then
\[
\mu_Q\bigl(\bigl\{x \in \R^d:\omega_{\mu_P,d_P}(x) < \xi\bigr\}\bigr) \leq C_{P,Q} \cdot \biggl(\frac{r_0^\kappa \vee 1}{c_0(r_0^{d_Q} \wedge 1)}\biggr)^{\gamma_Q} \cdot \xi^{\gamma_Q}
\]
for all $\xi > 0$.
\end{lemma}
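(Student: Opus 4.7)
The plan is to reduce the desired bound on $\mu_Q\bigl(\{\omega_{\mu_P,d_P}<\xi\}\bigr)$ to the hypothesis~\eqref{Eq:targetMarginal} via a pointwise comparison between the lower densities $\omega_{\mu_P,d_P}$ and $\omega_{\mu_Q,d_Q}$. Concretely, I will show that on the Borel set $A$ furnished by the definition of transfer-exponent, we have
\[
\omega_{\mu_P,d_P}(x) \;\geq\; K \cdot \omega_{\mu_Q,d_Q}(x) \qquad \text{for every } x \in A,
\]
where $K := \frac{c_0 (r_0^{d_Q} \wedge 1)}{r_0^\kappa \vee 1}$. Granting this, the conclusion is immediate: since $\mu_Q(A)=1$, we get $\{\omega_{\mu_P,d_P}<\xi\} \cap A \subseteq \{\omega_{\mu_Q,d_Q} < \xi/K\}$, and then~\eqref{Eq:targetMarginal} gives
\[
\mu_Q\bigl(\{\omega_{\mu_P,d_P}<\xi\}\bigr) \leq \mu_Q\bigl(\{\omega_{\mu_Q,d_Q} < \xi/K\}\bigr) \leq C_{P,Q} \cdot (\xi/K)^{\gamma_Q},
\]
which is exactly the stated bound.

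To establish the pointwise comparison, fix $x \in A$ and consider $\omega_{\mu_P,d_P}(x) = \inf_{r \in (0,1)} \mu_P(B_r(x))/r^{d_P}$. The essential step is to bound $\mu_P(B_r(x))/r^{d_P}$ from below by $K\cdot\mu_Q(B_r(x))/r^{d_Q}$, after which taking the infimum over $r \in (0,1)$ finishes the job. I will split according to whether $r \leq r_0$ or $r > r_0$. In the first case I use the defining inequality $\mu_P(B_r(x)) \geq c_0\mu_Q(B_r(x))(r/r_0)^\kappa$ directly, together with $r^{\kappa-d_P} \geq r^{-d_Q}$ (valid because $r < 1$ and $d_P - \kappa \geq d_Q$) to absorb the extra factor of $r^{d_P}$. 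In the second case, which only occurs if $r_0 < 1$, I use monotonicity $\mu_P(B_r(x)) \geq \mu_P(B_{r_0}(x)) \geq c_0 \mu_Q(B_{r_0}(x))$ and the fact that $\mu_Q(B_{r_0}(x)) \geq r_0^{d_Q} \omega_{\mu_Q,d_Q}(x)$ (since $r_0 \in (0,1)$), while $r^{d_P} \leq 1$.

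The only real obstacle is bookkeeping the factors $r_0^\kappa$ versus $1$ and $r_0^{d_Q}$ versus $1$ so that the constants consolidate into the single expression $\frac{r_0^\kappa \vee 1}{c_0 (r_0^{d_Q} \wedge 1)}$; this is why the two case splits (based on $r \lessgtr r_0$ and on $r_0 \lessgtr 1$) need to be carried out carefully. Once the comparison $\omega_{\mu_P,d_P} \geq K \omega_{\mu_Q,d_Q}$ on $A$ is in hand, the proof is a one-line application of~\eqref{Eq:targetMarginal}.
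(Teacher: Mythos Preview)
Your proposal is correct and follows essentially the same route as the paper: establish the pointwise comparison $\omega_{\mu_P,d_P}(x) \geq K\,\omega_{\mu_Q,d_Q}(x)$ on $A$ by splitting the infimum over $r\in(0,1)$ at $r_0\wedge 1$, then invoke~\eqref{Eq:targetMarginal}. One small wording point: in your second case ($r>r_0$, $r_0<1$) you do not actually bound $\mu_P(B_r(x))/r^{d_P}$ by $K\cdot\mu_Q(B_r(x))/r^{d_Q}$ as your ``essential step'' sentence promises, but rather directly by $K\,\omega_{\mu_Q,d_Q}(x)$; this is exactly what the paper does too, and it suffices since it already dominates the target lower bound.
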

\begin{remark}
The conclusion of the lemma tells us that~\eqref{Eq:sourceMarginal} in Assumption~\ref{sourceDensityAssumption} holds with $C_{P,Q}$ there replaced with $C_{P,Q} \cdot \Bigl(\frac{r_0^\kappa \vee 1}{c_0(r_0^{d_Q} \wedge 1)}\Bigr)^{\gamma_Q}$, and with $\gamma_P = \gamma_Q$.
\end{remark}
\begin{proof}
For any $\xi > 0$, we have
\begin{align*}
\mu_Q&\bigl(\bigl\{x\in \R^d:\omega_{\mu_P,d_P}(x) < \xi\bigr\}\bigr) \\
&\leq \mu_Q\biggl(\biggl\{x \in \R^d: \min\biggl(\inf_{r \in (0,r_0 \wedge 1)} \frac{\mu_Q\bigl(B_r(x)\bigr)}{r^{d_Q}} \cdot \frac{c_0 r^{d_Q + \kappa - d_P}}{r_0^\kappa} \, , \, c_0 \mu_Q\bigl(B_{r_0\wedge 1}(x)\bigr)\biggr) < \xi\biggr\}\biggr) \\
&\leq \mu_Q\biggl(\biggl\{x \in \R^d: \min\biggl(\frac{c_0}{r_0^\kappa} \cdot \omega_{\mu_Q,d_Q}(x)  \, , \, c_0 \cdot (r_0^{d_Q} \wedge 1) \cdot\omega_{\mu_Q,d_Q}(x) \biggr) < \xi\biggr\}\biggr) \\
&\leq C_{P,Q} \cdot \biggl(\frac{r_0^\kappa \vee 1}{c_0(r_0^{d_Q} \wedge 1)}\biggr)^{\gamma_Q} \cdot \xi^{\gamma_Q},
\end{align*}
as required.
\end{proof}

\begin{example}
\label{Ex:Prototype}
Suppose that $\targetMarginalDistribution$ is the uniform measure on $[0,1]^d$ and that $\sourceMarginalDistribution$ is the probability measure with density $f_{\sourceDistribution}:\R^d \rightarrow [0,\infty)$ given by
\begin{align*}
f_{\sourceDistribution}(x)=\begin{cases} {A \cdot \|x\|_{\infty}^{\kappa}}&\text{ for } x \in [0,1]^d\\
0&\text{ otherwise},
\end{cases}
\end{align*}
where $A:= 1/\int_{[0,1]^d} \|z\|_{\infty}^{\kappa} \, dz$, and where $\|\cdot\|_\infty$ denotes the $\ell_\infty$-norm on $\R^d$. This example corresponds to \citet[][Example 3]{kpotufe2018marginal} in $d$ dimensions.  We claim that Assumption~\ref{sourceDensityAssumption} holds with $\sourceDimension=\targetDimension=d$, $\sourceTailExponent=d/\kappa$ and arbitrarily large~$\targetTailExponent$.

To demonstrate the condition~\eqref{Eq:targetMarginal}, we first note that for any $x \in \supp(\targetMarginalDistribution)=[0,1]^d$ and any $r \in (0,1)$, the set $B_r(x) \cap [0,1]^d$ contains a hyper-cube of side length at least $r/(2\sqrt{d})$, which yields $\targetMarginalDistribution\bigl(B_r(x)\bigr) \geq \bigl(r/(2\sqrt{d})\bigr)^d$. Thus $\omega_{\targetMarginalDistribution,d}(x) \geq (2\sqrt{d})^{-d}$ for all $x \in \supp(\targetMarginalDistribution)$. It follows that given any $\targetTailExponent>0$, if we take $\TailConstant \geq (2\sqrt{d})^{d \targetTailExponent}$ then condition~\eqref{Eq:targetMarginal} holds with $d_Q=d$.

Next we turn to condition~\eqref{Eq:sourceMarginal}. We begin by showing that with $c :=  A \cdot 2^{-(2d+\kappa)} \cdot d^{-d/2}$ we have $\omega_{\sourceMarginalDistribution,d}(x) \geq c \cdot \|x\|_{\infty}^{\kappa}$ for all $x \in [0,1]^d$.  To this end, first consider the case where $x \in [0,1]^d \setminus [0,1/2]^d$, so that $f_{\sourceDistribution}(x) \geq A \cdot 2^{-\kappa}$. It follows that for any $r \in (0,1)$, the set $B_r(x) \cap \{{z} \in \R^d: f_{\sourceDistribution}({z}) \geq A \cdot 2^{-\kappa}\}$ contains a hyper-cube of side length at least $r/(4\sqrt{d})$, and hence  $\sourceMarginalDistribution\bigl(B_r(x)\bigr) \geq  A \cdot 2^{-\kappa} \cdot \bigl(r/(4\sqrt{d})\bigr)^d$. Thus $\omega_{\sourceMarginalDistribution,d}(x) \geq A \cdot 2^{-(2d+\kappa)} \cdot d^{-d/2} \geq c \cdot \|x\|_{\infty}^{\kappa}$ for all $x \in [0,1]^d \setminus [0,1/2]^d$.

Now suppose that $x \in [0,1/2]^d$.  For any $r \in (0,1)$, the set $B_r(x) \cap \{{z} \in \R^d: f_{\sourceDistribution}(z)\geq f_{\sourceDistribution}(x)\}$ contains a hyper-cube of side length at least $r/(2\sqrt{d})$ and so $\sourceMarginalDistribution\bigl(B_r(x)\bigr) \geq f_{\sourceDistribution}(x) \cdot \bigl(r/(2\sqrt{d})\bigr)^d=\bigl(A /(2\sqrt{d})^d\bigr)\cdot \|x\|_{\infty}^{\kappa}  \cdot r^d$. Hence $\omega_{\sourceMarginalDistribution,d}(x) \geq \bigl(A /(2\sqrt{d})^d\bigr)\cdot \|x\|_{\infty}^{\kappa} \geq c \cdot \|x\|_{\infty}^{\kappa}$ for all $x \in [0,1/2]^d$. 

We deduce that, given any $\xi>0$, 
\begin{align*}
\targetMarginalDistribution\bigl(\bigl\{ x \in \R^d: \omega_{\mu_\sourceDistribution,d}(x)<\xi\bigr\} \bigr) &\leq \Lebesgue(\bigl\{ x \in [0,1]^d: \|x\|_{\infty}< (\xi/c)^{1/\kappa} \bigr\} \bigr)=(\xi/c)^{d/\kappa},
\end{align*}
so~\eqref{Eq:sourceMarginal} holds with $d_P = d$ and $\gamma_P = d/\kappa$, provided that $\TailConstant \geq c^{-d/\kappa}$, and our claim about Assumption~\ref{sourceDensityAssumption} is established.

Now suppose that $\targetRegressionFunction:\R^d \rightarrow [0,1]$ is chosen such that Assumptions~\ref{marginAssumption} and~\ref{holderContinuityAssumption} hold for some $\alpha >0$ and $\beta \in (0,1]$. Suppose also, for simplicity, that $\sourceRegressionFunction=\targetRegressionFunction$ and $\numSource \geq \numTarget$. Then Theorem~\ref{Thm:Main} yields that there exists a data-dependent classifier $\hat{f}$ satisfying
\begin{align}
\E\bigl\{ \mathcal{E}(\hat{f})\bigr\} \leq C_{\theta}\cdot \left(\frac{\log_+(\numSource)}{\numSource}\right)^{\frac{\holderExponent(1+\marginExponent)}{(2\holderExponent+d)+\kappa \cdot (\marginExponent \holderExponent/d)}}.
\end{align}

Viewing this example from the perspective of the conditions in \cite{kpotufe2018marginal}, it can be shown that the transfer exponent is~$\kappa$.  Indeed, the transfer exponent can certainly be no larger, because for all $r > 0$, we have
\begin{align*}
  \sourceMarginalDistribution\bigl(B_r(0)\bigr) \leq \sourceMarginalDistribution\bigl([0,r \wedge 1]^d\bigr) &\leq  \Lebesgue([0,r \wedge 1]^d)\cdot \sup_{x \in [0,r \wedge 1]^d} f_{\sourceDistribution}(x) \\
  &= A(r \wedge 1)^{d+\kappa} \leq A d^{d/2}\cdot r^{\kappa} \cdot \targetMarginalDistribution\bigl(B_r(0)\bigr).
\end{align*}
\citet[Theorem 2]{kpotufe2018marginal} then guarantees that there exists a data-dependent classifier $\tilde{f}$ satisfying
\begin{align*}
\E\bigl\{ \mathcal{E}(\tilde{f})\bigr\} \leq \tilde{C}_\theta \cdot \biggl(\frac{\log_+(\numSource)}{\numSource}\biggr)^{\frac{\holderExponent(1+\marginExponent)}{(2\holderExponent+d)+\kappa}}.
\end{align*}
Hence the bound from Theorem~\ref{Thm:Main}, which relies on Assumption~\ref{sourceDensityAssumption}, gives a faster rate whenever ${\marginExponent \holderExponent}< d$. But ${\marginExponent \holderExponent}\leq{d}$ whenever $\targetRegressionFunction(x_0)=1/2$ for some $x_0 \in (0,1)^d$, by Lemma~\ref{Lemma:TBD} below.
\end{example}

\subsection{Constraint on the margin and H\"older exponents}

\begin{lemma}
\label{Lemma:TBD}
Suppose that a distribution $Q$ on $\R^d \times \{0,1\}$ satisfies Assumptions~\ref{marginAssumption} and~\ref{holderContinuityAssumption}.  If there exists $x_0 \in \R^d$ with $\targetRegressionFunction(x_0) = 1/2$ and $\omega_{\targetMarginalDistribution,\targetDimension}(x_0) > 0$, then $\marginExponent \holderExponent \leq \targetDimension$.
\end{lemma}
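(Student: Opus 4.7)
The plan is to use the three hypotheses together: the H\"older bound on $\targetRegressionFunction$ to force $|\targetRegressionFunction - 1/2|$ to be small on a ball around $x_0$, the lower-density condition at $x_0$ to guarantee $\targetMarginalDistribution$-mass on that ball, and then the margin condition to obtain a contradictory inequality between two powers of~$\zeta$ as $\zeta \searrow 0$.

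Concretely, first I would fix $x_0$ as in the hypothesis and observe that by Assumption~\ref{holderContinuityAssumption} together with $\targetRegressionFunction(x_0)=1/2$,
\[
|\targetRegressionFunction(x)-1/2| \leq \holderConstant \cdot \|x-x_0\|^{\holderExponent} \qquad \text{for all } x \in \R^{\ambientDimension}.
\]
Hence, for every $\zeta>0$, if $r \equiv r_\zeta := (\zeta/\holderConstant)^{1/\holderExponent}$, then $B_r(x_0) \subseteq \{x \in \R^{\ambientDimension}:|\targetRegressionFunction(x)-1/2|<\zeta\}$.

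Next, set $c := \omega_{\targetMarginalDistribution,\targetDimension}(x_0) > 0$. By the definition~\eqref{Eq:omega} of the lower density, for every $r \in (0,1)$ we have $\targetMarginalDistribution(B_r(x_0)) \geq c \cdot r^{\targetDimension}$. Combining this with the previous display, for every $\zeta \in (0,\holderConstant)$ (so that $r_\zeta < 1$),
\[
\targetMarginalDistribution\bigl(\bigl\{x \in \R^{\ambientDimension}: |\targetRegressionFunction(x)-1/2|<\zeta\bigr\}\bigr) \geq \targetMarginalDistribution\bigl(B_{r_\zeta}(x_0)\bigr) \geq c \cdot (\zeta/\holderConstant)^{\targetDimension/\holderExponent}.
\]

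Finally, I would combine this lower bound with the upper bound from Assumption~\ref{marginAssumption}, yielding
\[
c \cdot \holderConstant^{-\targetDimension/\holderExponent} \cdot \zeta^{\targetDimension/\holderExponent} \leq \marginConstant \cdot \zeta^{\marginExponent} \qquad \text{for all } \zeta \in (0,\holderConstant).
\]
Letting $\zeta \searrow 0$ is only consistent with this inequality if $\targetDimension/\holderExponent \geq \marginExponent$, i.e.\ $\marginExponent \holderExponent \leq \targetDimension$. There is no significant obstacle here; the only mildly delicate point is ensuring $r_\zeta<1$ so that the infimum in~\eqref{Eq:omega} genuinely furnishes the requested lower bound on $\targetMarginalDistribution(B_{r_\zeta}(x_0))$, which is handled by restricting to $\zeta<\holderConstant$ before taking the limit.
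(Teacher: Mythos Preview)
Your proof is correct and follows essentially the same approach as the paper's own proof: both use the H\"older condition to show $B_{(\zeta/\holderConstant)^{1/\holderExponent}}(x_0)$ lies inside the margin set, bound its $\targetMarginalDistribution$-mass below via the lower density, and compare the resulting power $\zeta^{\targetDimension/\holderExponent}$ with the margin upper bound $\marginConstant\zeta^{\marginExponent}$ as $\zeta\searrow 0$. Your observation that one must restrict to $\zeta<\holderConstant$ so that $r_\zeta<1$ is exactly the care taken in the paper as well.
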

\begin{proof}
Fix $\xi < \holderConstant$, and note that if $x \in B_{{(\xi/\holderConstant)}^{1/\holderExponent}}(x_0)$, then
\[
|\targetRegressionFunction(x) - 1/2|  = |\targetRegressionFunction(x) - \targetRegressionFunction(x_0)|\leq \holderConstant \|x -x_0\|^{\holderExponent} \leq \xi.
\]
We deduce that 
\begin{align*}
\marginConstant \cdot \xi^{\marginExponent} \geq \targetMarginalDistribution\bigl(\bigl\{x \in \R^d:\left|\targetRegressionFunction(x)-1/2\right|<\xi\bigr\}\bigr) &\geq \targetMarginalDistribution\bigl(B_{(\xi/\holderConstant)^{1/\holderExponent}}(x_0)\bigr) \\
&\geq \Bigl(\frac{\xi}{\holderConstant}\Bigr)^{\targetDimension/\holderExponent} \omega_{\targetMarginalDistribution,\targetDimension}(x_0),
\end{align*}
and the result follows.
\end{proof}

\section{Empirical results}
\label{Sec:Empirical}

In order to give a preliminary indication of the potential practical benefits of transfer learning and the ATL algorithm, we present the results of a small-scale simulation study.  The settings we considered were as follows: let $d =2$, $\mu_P = \mu_Q = U([0,1]^2)$, and, for $x = (x_1,x_2) \in [0,1]^2$, let $\eta_Q(x) = \{1 + \sin(4\pi x_1)\}/2$. The Bayes risk in this problem is $18.2\%$.  We considered two different transfer learning problems, by specifying two different decision tree partitions and transfer functions:
\begin{enumerate} 
\item Let $L^* = 1$, so that $\mathcal{X}^*_1 = \mathbb{R}^2$, and, for $z \in [0,1]$, let $g_1(z) = (1 + 4z)/5$.  Now set $\eta_P = g_1 \circ \eta_Q$.  In this case, $\Delta = 0$, $\phi = 0.8$, $d_Q = 2$, $\gamma_Q = \infty$, $d_P=2$, $\gamma_P = \infty$, $\alpha = \beta = 1$.
\item Let $L^* = 2$, and let $(\mathcal{X}^*_1,\mathcal{X}^*_2)  = \bigl(\mathbb{R} \times (-\infty,1/2), \mathbb{R} \times [1/2,\infty) \bigr)$. Now, for $z \in [0,1]$, let $g_1(z) = \max(0,z - 1/4)$ and $g_2(z) = \min(z+1/4,1)$.  Finally, set 
\[
\eta_P(x) = \left\{  \begin{array}{ll}
g_1 \bigl(\eta_Q(x)\bigr) &\mbox{for $x \in \mathcal{X}_1^*$} \\ 
g_2 \bigl(\eta_Q(x)\bigr) &\mbox{for $x \in \mathcal{X}_2^*$.} \end{array} \right.
\]
In this case, we have $\Delta = 0$, $\phi = 0.5$, $d_Q = 2$, $\gamma_Q = \infty$, $d_P=2$, $\gamma_P = \infty$, $\alpha = \beta = 1$.
\end{enumerate} 

For each setting and for each of 50 repetitions, we generated $n_Q = 100$ independent target data pairs from $Q$, generated $n_P \in \{0,100,200,500,1000\}$ independent source data pairs from $P$, and a further $n_{\mathrm{test}} = 1000$ independent test pairs from $Q$.  In Table~\ref{tab:results} we present, for our ATL method, the average percentage of the test data pairs that were incorrectly classified, along with the corresponding standard errors.  For comparison, we also present the corresponding errors for the algorithm that pools the source and target data, and then applies our ATL algorithm as if all of the data had come from $Q$.

We mention that, for computational reasons, we ran a Monte Carlo approximation to the ATL algorithm stated in the paper.  Specifically, instead of searching over all decision tree partitions, we only considered those with $L\in \{1,2\}$ leaves (regardless of $L^*$).  For $L=2$, we generated 100 random splits by choosing one of the two axes uniformly at random, and then chose a data point at which to split, again uniformly at random.  

\begin{table}[htbp!] \caption{\label{tab:results}Average error of the ATL algorithm and the pooled data algorithm.  The Bayes error rate is 18.2\%.}
\begin{center}
\begin{tabular}{ l r | c |  c }
& $n_P$ & ATL Error (Std. Error.) & Pooled Error (Std. Error.)\\
\hline 
Setting 1 & 0 & 30.0 (0.6) & NA (NA)\\
&100 & 27.4 (0.5) &  29.1 (0.4) \\
&200 & 25.6 (0.4) &  27.5 (0.5) \\
&500 & 23.4 (0.5) &  25.8 (0.4) \\
&1000 & 22.4 (0.4) &  24.0 (0.3) \\
\hline
Setting 2 & 0 & 30.3 (0.5) &NA  (NA) \\
&100 & 28.5 (0.5) & 29.4 (0.5) \\
&200 & 26.7 (0.5)  & 29.0 (0.5)  \\
&500 & 24.7 (0.4)  & 29.0 (0.5)  \\
&1000 & 24.2 (0.4)  & 27.6 (0.4)  \\  
\end{tabular}
\end{center}
\end{table}
Table~\ref{tab:results} reveals two notable features.  First, the performance of the ATL algorithm improves significantly as $n_P$ increases.  This illustrates the potential of transfer learning.  Second, the ATL algorithm is consistently better than the pooled data algorithm, with an excess risk ratio of around 0.6--0.9.  In other words, there is benefit to be had by attempting to learn the underlying structure as we do in the ATL method; one should not simply ignore the fact that the $P$ data were generated from a different distribution to $Q$.

As an alternative approach to approximating the decision tree partition~$\hat{h}$ in~\eqref{ermOverDecisionTreesChoice} for larger-scale problems, one could consider a greedy strategy that would proceed along similar lines to CART \citep{breiman1984classification}.  In other words, at stage $\ell_0 \in \mathbb{N}$ of our iterative process, where we have $\{\mathcal{X}_1,\ldots,\mathcal{X}_{\ell_0}\} \in \mathbb{T}_{\ell_0}$, we select $\ell \in [\ell_0]$, as well as $j \in [d]$ and $s$ belonging to the set of $j$th coordinates of $\sourceSample$ (or a random subset thereof) so that the refined partition $\{\mathcal{X}_1,\ldots,\mathcal{X}_{\ell-1},\mathcal{X}_\ell \cap H_{j,s},\mathcal{X}_{\ell} \setminus H_{j,s},\mathcal{X}_{\ell+1},\ldots,\mathcal{X}_{\ell_0}\} \in \mathbb{T}_{\ell_0+1}$ minimises the objective in~\eqref{ermOverDecisionTreesChoice} over this restricted class.  We defer detailed exploration of such a method to future work.

\bigskip

\textbf{Acknowledgements}: The research of TIC was supported by Engineering and Physical Sciences Research Council (EPSRC) New Investigator Award EP/V002694/1.  The research of RJS was supported by EPSRC Programme grant EP/N031938/1 and EPSRC Fellowship EP/P031447/1.  The authors are grateful for the constructive feedback from the anonymous reviewers, which helped to improve the paper.

\bibliographystyle{apalike}
\bibliography{mybib}

\begin{thebibliography}{}

\bibitem[Audibert and Tsybakov, 2007]{audibert2007fast}
Audibert, J.-Y. and Tsybakov, A.~B. (2007).
\newblock Fast learning rates for plug-in classifiers.
\newblock {\em Annals of Statistics}, 35(2):608--633.

\bibitem[Ben-David et~al., 2010a]{ben2010theory}
Ben-David, S., Blitzer, J., Crammer, K., Kulesza, A., Pereira, F., and Vaughan,
  J.~W. (2010a).
\newblock A theory of learning from different domains.
\newblock {\em Machine Learning}, 79(1-2):151--175.

\bibitem[Ben-David et~al., 2010b]{ben2010impossibility}
Ben-David, S., Lu, T., Luu, T., and P{\'a}l, D. (2010b).
\newblock Impossibility theorems for domain adaptation.
\newblock In {\em International Conference on Artificial Intelligence and
  Statistics}, pages 129--136.

\bibitem[Bernstein, 1924]{bernstein1924modification}
Bernstein, S. (1924).
\newblock On a modification of {C}hebyshev's inequality and of the error
  formula of {L}aplace.
\newblock {\em Ann. Sci. Inst. Sav. Ukraine, Sect. Math}, 1(4):38--49.

\bibitem[Biau and Devroye, 2013]{biau2013cellular}
Biau, G. and Devroye, L. (2013).
\newblock Cellular tree classifiers.
\newblock {\em Electronic Journal of Statistics}, 7:1875--1912.

\bibitem[Biau and Devroye, 2015]{biau2015lectures}
Biau, G. and Devroye, L. (2015).
\newblock {\em Lectures on the Nearest Neighbor Method}.
\newblock Springer.

\bibitem[Blanchard et~al., 2017]{blanchard2017domain}
Blanchard, G., Deshmukh, A.~A., Dogan, U., Lee, G., and Scott, C. (2017).
\newblock Domain generalization by marginal transfer learning.
\newblock {\em arXiv preprint arXiv:1711.07910}.

\bibitem[Blanchard et~al., 2016]{blanchard2016classification}
Blanchard, G., Flaska, M., Handy, G., Pozzi, S., and Scott, C. (2016).
\newblock Classification with asymmetric label noise: Consistency and maximal
  denoising.
\newblock {\em Electronic Journal of Statistics}, 10(2):2780--2824.

\bibitem[Breiman et~al., 1984]{breiman1984classification}
Breiman, L., Friedman, J., Stone, C.~J., and Olshen, R.~A. (1984).
\newblock {\em Classification and Regression Trees}.
\newblock CRC Press.

\bibitem[Cai and Wei, 2021]{cai2019transfer}
Cai, T.~T. and Wei, H. (2021).
\newblock Transfer learning for nonparametric classification: Minimax rate and
  adaptive classifier.
\newblock {\em Annals of Statistics}, 49:100--128.

\bibitem[Candela et~al., 2009]{candela2009dataset}
Candela, J.~Q., Sugiyama, M., Schwaighofer, A., and Lawrence, N.~D. (2009).
\newblock {\em Dataset Shift in Machine Learning}.
\newblock The MIT Press.

\bibitem[Cannings et~al., 2020]{cannings2020local}
Cannings, T.~I., Berrett, T.~B., and Samworth, R.~J. (2020).
\newblock Local nearest neighbour classification with applications to
  semi-supervised learning.
\newblock {\em Annals of Statistics}, 48:1789--1814.

\bibitem[{Cannings} et~al., 2020]{cannings2020}
{Cannings}, T.~I., {Fan}, Y., and {Samworth}, R.~J. (2020).
\newblock Classification with imperfect training labels.
\newblock {\em Biometrika}, 107(2):311--330.

\bibitem[Caruana, 1997]{caruana1997multitask}
Caruana, R. (1997).
\newblock Multitask learning.
\newblock {\em Machine Learning}, 28(1):41--75.

\bibitem[Chaudhuri and Dasgupta, 2014]{chaudhuri2014rates}
Chaudhuri, K. and Dasgupta, S. (2014).
\newblock Rates of convergence for nearest neighbor classification.
\newblock In {\em Advances in Neural Information Processing Systems}, pages
  3437--3445.

\bibitem[Christiansen et~al., 2020]{christiansen2020difficult}
Christiansen, R., Pfister, N., Jakobsen, M.~E., Gnecco, N., and Peters, J.
  (2020).
\newblock The difficult task of distribution generalization in nonlinear
  models.
\newblock {\em arXiv preprint arXiv:2006.07433}.

\bibitem[Cortes et~al., 2019]{cortes2019adaptation}
Cortes, C., Mohri, M., and Medina, A.~M. (2019).
\newblock Adaptation based on generalized discrepancy.
\newblock {\em Journal of Machine Learning Research}, 20(1):1--30.

\bibitem[Cover and Hart, 1967]{cover1967nearest}
Cover, T. and Hart, P. (1967).
\newblock Nearest neighbor pattern classification.
\newblock {\em IEEE Transactions on Information Theory}, 13(1):21--27.

\bibitem[Cule and Samworth, 2010]{cule2010theoretical}
Cule, M. and Samworth, R. (2010).
\newblock Theoretical properties of the log-concave maximum likelihood
  estimator of a multidimensional density.
\newblock {\em Electronic Journal of Statistics}, 4:254--270.

\bibitem[Dharmadhikari and Joag-Dev, 1988]{dharmadhikari1988unimodality}
Dharmadhikari, S. and Joag-Dev, K. (1988).
\newblock {\em Unimodality, Convexity, and Applications}.
\newblock Academic Press.

\bibitem[D{\"u}mbgen et~al., 2011]{dumbgen2011approximation}
D{\"u}mbgen, L., Samworth, R., and Schuhmacher, D. (2011).
\newblock Approximation by log-concave distributions, with applications to
  regression.
\newblock {\em Annals of Statistics}, 39(2):702--730.

\bibitem[Evans and Gariepy, 2015]{evans2015measure}
Evans, L.~C. and Gariepy, R.~F. (2015).
\newblock {\em Measure Theory and Fine Properties of Functions}.
\newblock CRC Press.

\bibitem[Fix and Hodges, 1951]{fix1951discriminatory}
Fix, E. and Hodges, J.~L. (1951).
\newblock {\em Discriminatory analysis: nonparametric discrimination,
  consistency properties}.
\newblock USAF School of Aviation Medicine.

\bibitem[Fr{\'{e}}nay and Verleysen, 2014]{Frenay1}
Fr{\'{e}}nay, B. and Verleysen, M. (2014).
\newblock Classification in the presence of label noise: {A} survey.
\newblock {\em {IEEE} Transactions on Neural Network Learning Systems},
  25(5):845--869.

\bibitem[Gadat et~al., 2016]{gadat2016classification}
Gadat, S., Klein, T., and Marteau, C. (2016).
\newblock Classification in general finite dimensional spaces with the
  $k$-nearest neighbor rule.
\newblock {\em Annals of Statistics}, 44(3):982--1009.

\bibitem[Germain et~al., 2015]{germain2015risk}
Germain, P., Lacasse, A., Laviolette, F., Marchand, M., and Roy, J.-F. (2015).
\newblock Risk bounds for the majority vote: From a {PAC}-{B}ayesian analysis
  to a learning algorithm.
\newblock {\em Journal of Machine Learning Research}, 16(1):787--860.

\bibitem[Gretton et~al., 2009]{gretton2009covariate}
Gretton, A., Smola, A., Huang, J., Schmittfull, M., Borgwardt, K., and
  Sch{\"o}lkopf, B. (2009).
\newblock Covariate shift by kernel mean matching.
\newblock {\em Dataset Shift in Machine Learning}, 3(4):5.

\bibitem[Hall et~al., 2008]{hall2008choice}
Hall, P., Park, B.~U., and Samworth, R.~J. (2008).
\newblock Choice of neighbor order in nearest-neighbor classification.
\newblock {\em Annals of Statistics}, 36(5):2135--2152.

\bibitem[Hanneke and Kpotufe, 2019]{hanneke2019value}
Hanneke, S. and Kpotufe, S. (2019).
\newblock On the value of target data in transfer learning.
\newblock In {\em Advances in Neural Information Processing Systems}, pages
  9867--9877.

\bibitem[Hardy and Littlewood, 1930]{hardy1930maximal}
Hardy, G.~H. and Littlewood, J.~E. (1930).
\newblock A maximal theorem with function-theoretic applications.
\newblock {\em Acta Mathematica}, 54:81--116.

\bibitem[Kim, 2020]{kimobtaining}
Kim, A. K.~H. (2020).
\newblock Obtaining minimax lower bounds: a review.
\newblock {\em Journal of the Korean Statistical Society}, 49:673--701.

\bibitem[Kpotufe and Martinet, 2018]{kpotufe2018marginal}
Kpotufe, S. and Martinet, G. (2018).
\newblock Marginal singularity, and the benefits of labels in covariate-shift.
\newblock In {\em Proceedings of the 31st Conference On Learning Theory},
  volume~75, pages 1882--1886.

\bibitem[Kulkarni and Posner, 1995]{kulkarni1995rates}
Kulkarni, S.~R. and Posner, S.~E. (1995).
\newblock Rates of convergence of nearest neighbor estimation under arbitrary
  sampling.
\newblock {\em IEEE Transactions on Information Theory}, 41(4):1028--1039.

\bibitem[Ledrappier and Young, 1985]{ledrappier1985metric}
Ledrappier, F. and Young, L.-S. (1985).
\newblock The metric entropy of diffeomorphisms: Part {I}: Characterization of
  measures satisfying {P}esin's entropy formula.
\newblock {\em Annals of Mathematics}, 122(3):509--539.

\bibitem[Lipton et~al., 2018]{lipton2018detecting}
Lipton, Z.~C., Wang, Y.-X., and Smola, A. (2018).
\newblock Detecting and correcting for label shift with black box predictors.
\newblock In {\em Proceedings of the 35th International Conference on Machine
  Learning}, volume~80.

\bibitem[Lov{\'a}sz and Vempala, 2007]{lovasz2007geometry}
Lov{\'a}sz, L. and Vempala, S. (2007).
\newblock The geometry of logconcave functions and sampling algorithms.
\newblock {\em Random Structures \& Algorithms}, 30(3):307--358.

\bibitem[Maity et~al., 2020]{maity2020minimax}
Maity, S., Sun, Y., and Banerjee, M. (2020).
\newblock Minimax optimal approaches to the label shift problem.
\newblock {\em arXiv preprint arXiv:2003.10443}.

\bibitem[Mammen and Tsybakov, 1999]{mammen1999}
Mammen, E. and Tsybakov, A.~B. (1999).
\newblock Smooth discrimination analysis.
\newblock {\em Annals of Statistics}, 27(6):1808--1829.

\bibitem[Mansour et~al., 2009]{mansour2009domain}
Mansour, Y., Mohri, M., and Rostamizadeh, A. (2009).
\newblock Domain adaptation: Learning bounds and algorithms.
\newblock In {\em Proceedings of the 22nd Annual Conference on Learning
  Theory}.

\bibitem[Maurer et~al., 2016]{maurer2016benefit}
Maurer, A., Pontil, M., and Romera-Paredes, B. (2016).
\newblock The benefit of multitask representation learning.
\newblock {\em Journal of Machine Learning Research}, 17(1):2853--2884.

\bibitem[McDiarmid, 1998]{mcdiarmid1998concentration}
McDiarmid, C. (1998).
\newblock Concentration.
\newblock In {\em Probabilistic Methods for Algorithmic Discrete Mathematics},
  pages 195--248. Springer.

\bibitem[McShane, 1934]{mcshane1934extension}
McShane, E.~J. (1934).
\newblock Extension of range of functions.
\newblock {\em Bulletin of the American Mathematical Society}, 40(12):837--842.

\bibitem[Menon et~al., 2018]{menon2018learning}
Menon, A.~K., Van~Rooyen, B., and Natarajan, N. (2018).
\newblock Learning from binary labels with instance-dependent noise.
\newblock {\em Machine Learning}, 107(8-10):1561--1595.

\bibitem[Mohri and Medina, 2012]{mohri2012new}
Mohri, M. and Medina, A.~M. (2012).
\newblock New analysis and algorithm for learning with drifting distributions.
\newblock In {\em International Conference on Algorithmic Learning Theory},
  pages 124--138. Springer.

\bibitem[Pan and Yang, 2009]{pan2009survey}
Pan, S.~J. and Yang, Q. (2009).
\newblock A survey on transfer learning.
\newblock {\em IEEE Transactions on Knowledge and Data Engineering},
  22(10):1345--1359.

\bibitem[Polonik, 1995]{polonik1995measuring}
Polonik, W. (1995).
\newblock Measuring mass concentrations and estimating density contour
  clusters-an excess mass approach.
\newblock {\em Annals of Statistics}, 23(3):855--881.

\bibitem[Pr{\'e}kopa, 1973]{prekopa1973contributions}
Pr{\'e}kopa, A. (1973).
\newblock Contributions to the theory of stochastic programming.
\newblock {\em Mathematical Programming}, 4(1):202--221.

\bibitem[Pr{\'e}kopa, 1980]{prekopa1980logarithmic}
Pr{\'e}kopa, A. (1980).
\newblock Logarithmic concave measures and related topics.
\newblock In {\em Stochastic Programming}, pages 63--82. Academic Press.

\bibitem[Reeve and Brown, 2017]{reeve2017minimax}
Reeve, H. W.~J. and Brown, G. (2017).
\newblock Minimax rates for cost-sensitive learning on manifolds with
  approximate nearest neighbours.
\newblock In {\em International Conference on Algorithmic Learning Theory},
  pages 11--56.

\bibitem[Reeve and Kab{\'a}n, 2019]{reeve2019classification}
Reeve, H. W.~J. and Kab{\'a}n, A. (2019).
\newblock Classification with unknown class-conditional label noise on
  non-compact feature spaces.
\newblock In {\em Proceedings of the 32nd Annual Conference on Learning
  Theory}, volume~99, pages 1--28.

\bibitem[Reeve and Kab\'{a}n, 2019]{reeve2019fastLabelNoise}
Reeve, H. W.~J. and Kab\'{a}n, A. (2019).
\newblock Fast rates for a k{NN} classifier robust to unknown asymmetric label
  noise.
\newblock In Chaudhuri, K. and Salakhutdinov, R., editors, {\em Proceedings of
  the 36th International Conference on Machine Learning}, volume~97, pages
  5401--5409.

\bibitem[Rudin, 2006]{rudin2006real}
Rudin, W. (2006).
\newblock {\em Real and complex analysis}.
\newblock Tata McGraw-Hill education.

\bibitem[Samworth, 2012]{samworth2012optimal}
Samworth, R.~J. (2012).
\newblock Optimal weighted nearest neighbour classifiers.
\newblock {\em Annals of Statistics}, 40(5):2733--2763.

\bibitem[Samworth, 2018]{samworth2018recent}
Samworth, R.~J. (2018).
\newblock Recent progress in log-concave density estimation.
\newblock {\em Statistical Science}, 33(4):493--509.

\bibitem[Scott, 2019]{scott2018generalized}
Scott, C. (2019).
\newblock A generalized {N}eyman-{P}earson criterion for optimal domain
  adaptation.
\newblock In {\em Proceedings of the 30th International Conference on
  Algorithmic Learning Theory}, volume~98, pages 1--24.

\bibitem[Scott and Nowak, 2006]{scott06ddt}
Scott, C. and Nowak, R.~D. (2006).
\newblock Minimax-optimal classification with dyadic decision trees.
\newblock {\em IEEE Trans. Inform. Theory}, 52(4):1335--1353.

\bibitem[Scott and Zhang, 2019]{scott2019learning}
Scott, C. and Zhang, J. (2019).
\newblock Learning from multiple corrupted sources, with application to
  learning from label proportions.
\newblock {\em arXiv preprint arXiv:1910.04665}.

\bibitem[Sinha et~al., 2018]{certifying2018sinha}
Sinha, A., Namkoong, H., and Duchi, J.~C. (2018).
\newblock Certifying some distributional robustness with principled adversarial
  training.
\newblock In {\em Sixth International Conference on Learning Representations
  (ICLR 2018)}.

\bibitem[Stone, 1977]{stone1977consistent}
Stone, C.~J. (1977).
\newblock Consistent nonparametric regression.
\newblock {\em Annals of Statistics}, 5(4):595--620.

\bibitem[Storkey, 2009]{storkey2009training}
Storkey, A. (2009).
\newblock When training and test sets are different: characterizing learning
  transfer.
\newblock In {\em Dataset shift in machine learning}, pages 3--28. MIT Press.

\bibitem[Sugiyama et~al., 2012]{sugiyama2012density}
Sugiyama, M., Suzuki, T., and Kanamori, T. (2012).
\newblock {\em Density Ratio Estimation in Machine Learning}.
\newblock Cambridge University Press.

\bibitem[Tsybakov, 2004]{tsybakov2004optimal}
Tsybakov, A.~B. (2004).
\newblock Optimal aggregation of classifiers in statistical learning.
\newblock {\em Annals of Statistics}, 32(1):135--166.

\bibitem[Tsybakov, 2009]{tsybakov2009introduction}
Tsybakov, A.~B. (2009).
\newblock {\em Introduction to {N}onparametric {E}stimation}.
\newblock Springer, Paris.

\bibitem[Wager and Walther, 2015]{wager2015adaptive}
Wager, S. and Walther, G. (2015).
\newblock Adaptive concentration of regression trees, with application to
  random forests.
\newblock {\em arXiv preprint arXiv:1503.06388}.

\bibitem[Weichwald and Peters, 2021]{weichwald2020distributional}
Weichwald, S. and Peters, J. (2021).
\newblock Distributional robustness as a guiding principle for causality in
  cognitive neuroscience.
\newblock {\em Journal of Cognitive Neuroscience, to appear}.

\bibitem[Weiss et~al., 2016]{weiss2016survey}
Weiss, K., Khoshgoftaar, T.~M., and Wang, D. (2016).
\newblock A survey of transfer learning.
\newblock {\em Journal of Big Data}, 3(1):9.

\bibitem[Yu, 1997]{yu1997assouad}
Yu, B. (1997).
\newblock Assouad, {F}ano, and {L}e {C}am.
\newblock In {\em Festschrift for Lucien Le Cam}, pages 423--435. Springer.

\bibitem[Zhang et~al., 2013]{zhang2013domain}
Zhang, K., Sch{\"o}lkopf, B., Muandet, K., and Wang, Z. (2013).
\newblock Domain adaptation under target and conditional shift.
\newblock In {\em Proceedings of the 30th International Conference on Machine
  Learning}, volume~28, pages 819--827.

\end{thebibliography}

\end{document}